\documentclass{article}

\usepackage{microtype}
\usepackage{graphicx}
\usepackage{subfigure}
\usepackage{booktabs} 
\usepackage{hyperref}


\usepackage[accepted]{arxiv_cut}

\usepackage{amsmath}
\usepackage{amssymb}
\usepackage{mathtools}
\usepackage{amsthm}
\usepackage{multirow}
\usepackage[capitalize,noabbrev]{cleveref}

\theoremstyle{plain}
\newtheorem{theorem}{Theorem}[section]
\newtheorem{proposition}[theorem]{Proposition}
\newtheorem{lemma}[theorem]{Lemma}

\theoremstyle{definition}
\newtheorem{definition}[theorem]{Definition}
\newtheorem{assumption}[theorem]{Assumption}
\newtheorem{hypothesis}[theorem]{Induction Hypothesis}
\newtheorem{parameter}[theorem]{Parameter Assumption}
\newtheorem{fact}[theorem]{Fact}
\newtheorem{claim}[theorem]{Claim}
\theoremstyle{remark}

\usepackage[textsize=tiny]{todonotes}
\newcommand{\openone}{\leavevmode\hbox{\small1\normalsize\kern-.33em1}}

\newcommand{\polylogk}{\mathrm{polylog}(k)}
\newcommand{\polyk}{\mathrm{poly}(k)}
\newcommand{\logit}{\mathbf{logit}}
\newcommand{\Beta}{\text{Beta}}
\newcommand{\Bern}{\text{Bern}}




\newcommand{\calA}{\mathcal{A}}

\newcommand{\calD}{\mathcal{D}}
\newcommand{\calE}{\mathcal{E}}

\newcommand{\calH}{\mathcal{H}}

\newcommand{\calM}{\mathcal{M}}

\newcommand{\calP}{\mathcal{P}}

\newcommand{\calV}{\mathcal{V}}

\newcommand{\calZ}{\mathcal{Z}}

\newcommand{\tcalZ}{\tilde{\calZ}}






\newcommand{\bbE}{\mathbb{E}}

\newcommand{\bbI}{\mathbb{I}}



\DeclareMathAlphabet{\mathbsf}{OT1}{cmss}{bx}{n}
\DeclareMathAlphabet{\mathssf}{OT1}{cmss}{m}{sl}

\DeclareSymbolFont{bsfletters}{OT1}{cmss}{bx}{n}  
\DeclareSymbolFont{ssfletters}{OT1}{cmss}{m}{n}
\DeclareMathSymbol{\bsfGamma}{0}{bsfletters}{'000}
\DeclareMathSymbol{\ssfGamma}{0}{ssfletters}{'000}
\DeclareMathSymbol{\bsfDelta}{0}{bsfletters}{'001}
\DeclareMathSymbol{\ssfDelta}{0}{ssfletters}{'001}
\DeclareMathSymbol{\bsfTheta}{0}{bsfletters}{'002}
\DeclareMathSymbol{\ssfTheta}{0}{ssfletters}{'002}
\DeclareMathSymbol{\bsfLambda}{0}{bsfletters}{'003}
\DeclareMathSymbol{\ssfLambda}{0}{ssfletters}{'003}
\DeclareMathSymbol{\bsfXi}{0}{bsfletters}{'004}
\DeclareMathSymbol{\ssfXi}{0}{ssfletters}{'004}
\DeclareMathSymbol{\bsfPi}{0}{bsfletters}{'005}
\DeclareMathSymbol{\ssfPi}{0}{ssfletters}{'005}
\DeclareMathSymbol{\bsfSigma}{0}{bsfletters}{'006}
\DeclareMathSymbol{\ssfSigma}{0}{ssfletters}{'006}
\DeclareMathSymbol{\bsfUpsilon}{0}{bsfletters}{'007}
\DeclareMathSymbol{\ssfUpsilon}{0}{ssfletters}{'007}
\DeclareMathSymbol{\bsfPhi}{0}{bsfletters}{'010}
\DeclareMathSymbol{\ssfPhi}{0}{ssfletters}{'010}
\DeclareMathSymbol{\bsfPsi}{0}{bsfletters}{'011}
\DeclareMathSymbol{\ssfPsi}{0}{ssfletters}{'011}
\DeclareMathSymbol{\bsfOmega}{0}{bsfletters}{'012}
\DeclareMathSymbol{\ssfOmega}{0}{ssfletters}{'012}







\newcommand{\trelu}{\overline{\operatorname{ReLU}}}




\DeclareMathOperator*{\argmax}{arg\,max}

\begin{document}

\twocolumn[
\icmltitle{Towards Understanding Why Data Augmentation Improves Generalization}

\begin{icmlauthorlist}
\icmlauthor{Jingyang Li}{nus}
\icmlauthor{Jiachun Pan}{nus}
\icmlauthor{Kim-Chuan Toh}{nus}
\icmlauthor{Pan Zhou}{smu}
\end{icmlauthorlist}

\icmlaffiliation{nus}{National University of Singapore}
\icmlaffiliation{smu}{Singapore Management University}
\icmlcorrespondingauthor{Pan Zhou}{panzhou@smu.edu.sg}

\vskip 0.3in
]

\printAffiliationsAndNotice{}

\begin{abstract}
Data augmentation is a cornerstone technique in deep learning, widely used to improve model generalization.  Traditional methods like random cropping and color jittering, as well as advanced techniques such as CutOut, Mixup, and CutMix, have achieved notable success across various domains. However, the mechanisms by which data augmentation improves generalization remain poorly understood, and existing theoretical analyses typically focus on individual techniques without a unified explanation. In this work, we present a unified theoretical framework that elucidates how data augmentation enhances generalization through two key effects: partial semantic feature removal and feature mixing. Partial semantic feature removal reduces the model’s reliance on individual feature, promoting diverse feature learning and better generalization. Feature mixing, by scaling down original semantic features and introducing noise, increases training complexity, driving the model to develop more robust features.   Advanced methods like CutMix integrate both effects, achieving complementary benefits. Our theoretical insights are further supported by experimental results, validating the effectiveness of this unified perspective.
\end{abstract}

\section{Introduction}
Deep learning has driven transformative progress across a wide range of fields, including computer vision \cite{he2016deep,dosovitskiy2020image}, natural language processing \cite{brown2020language}, and healthcare \cite{lee2020biobert}. Its success is largely underpinned by the ability of models to generalize effectively to unseen data \cite{zhang2021understanding}, a critical property that determines their reliability and impact in real-world applications. Enhancing generalization has thus become a central focus of deep learning research \cite{srivastava2014dropout,volpi2018generalizing,he2022masked}.  

Data augmentation has emerged as a powerful and widely adopted technique to improve generalization \cite{shorten2019survey}. By expanding the diversity of training datasets through modified versions of existing data, it enables models to learn more robust and versatile feature representations \cite{rebuffi2021data}. Traditional augmentation methods, such as geometric transformations (e.g., random cropping) and photometric transformations (e.g., color jittering) \cite{krizhevsky2012imagenet}, have been complemented by advanced techniques, including CutOut \cite{devries2017improved}, Mixup \cite{zhang2017mixup}, CutMix \cite{yun2019cutmix}, and SaliencyMix \cite{uddin2020saliencymix}. All these data augmentations  have shown their effectiveness on  enhancing neural network generalization performance.  

While the empirical benefits of data augmentation are well-documented, the theoretical understanding of how these techniques improve generalization remains incomplete. Existing analyses often focus on specific augmentation types in isolation. For example, geometric transformations have been explored by \citet{shen2022data}, Mixup by \citet{zhang2020does,zou2023benefits,chidambaram2023provably}, and CutOut and CutMix by \citet{oh2024provable}.  However, a unified theoretical framework that explains the shared principles and combined effects of these diverse techniques is still lacking. Developing such a framework is crucial for several reasons: it can uncover the common mechanisms underlying different augmentation strategies, enable systematic comparisons to reveal their relative strengths, and provide practical guidance on combining methods to maximize complementary benefits. 

To address this gap, we propose a unified theoretical framework that examines the effects of \textit{partial semantic feature removal} and \textit{feature mixing}, two fundamental principles underlying many data augmentation techniques, on the generalization of deep neural networks.  Among them, partial semantic feature removal involves discarding specific regions or features from input data, encouraging the model to learn diverse features from the remaining information. For instance, random cropping excludes features outside the cropped region, color jittering modifies color-related information, and CutOut masks a square patch of the image.  Regarding feature mixing, it  incorporates features from one image into another, reducing the prominence of original semantic features while introducing noisy features. This increases training complexity, compelling the model to learn more robust feature representations. 
Mixup blends pixel values from two images via linear interpolation, while CutMix replaces a rectangular region in one image with a patch from another, combining partial semantic feature removal and feature mixing for complementary benefits.

Our framework provides a unified explanation of how these two principles drive generalization improvements across techniques such as random cropping, color jittering, CutOut, Mixup, and CutMix. Specifically, it demonstrates how these methods promote learning of diverse and robust feature representations, enabling models to better handle unseen data.  Our main contributions are as follows:  

\begin{itemize}
	\item We present a theoretical framework that unifies the effects of partial semantic feature removal and feature mixing on generalization, using a 3-layer convolutional neural network (CNN) as a model example.  
	\item We theoretically prove that partial semantic feature removal enhances generalization by promoting diversity in feature learning, feature mixing enhances generalization by fostering robustness, and their combination yields complementary benefits.  
	\item We validate our theoretical findings through extensive experiments on benchmark datasets, offering new insights into the interplay between data augmentation techniques and model properties.  
\end{itemize}

By shedding light on the underlying principles of data augmentation, our work provides a deeper understanding of its role in improving generalization and offers guidance for designing more effective augmentation strategies.

\section{Related Work}
\paragraph{Data Augmentation} Traditional data augmentations have been analyzed from various perspectives: \citet{bishop1995training,dao2019kernel} interpret them as regularization, \citet{hanin2021data} investigate their optimization impact, \citet{chen2020group} view them as a mechanism for invariant learning, and \citet{rajput2019does} study their effect on decision margins. \citet{shen2022data} further examine geometric transformations through feature learning. For advanced methods, \citet{zhang2020does,carratino2022mixup} study Mixup as a regularization technique, while \citet{park2022unified} analyze Mixup and CutMix together. Additionally, \citet{chidambaram2023provably,zou2023benefits} investigate the feature learning dynamics of Mixup, and \citet{oh2024provable} analyze the feature learning effects of CutOut and CutMix.

\paragraph{Feature Learning Analysis}
Prior works on feature learning have provided valuable insights into how neural networks learn and represent data \citep{wen2021toward,wen2022mechanism,allen2022feature,allen-zhu2023towards,jelassi2022towards,huang2023understanding,chen2024does}. For instance, \citet{allen-zhu2023towards} investigated the mechanisms by which ensemble methods and knowledge distillation enhance model generalization.
Building on the analytical framework proposed by \citet{allen-zhu2023towards}, this work extends the scope by offering a unified analysis of data augmentation across traditional and advanced methods.

\section{Problem Setup} \label{sec:problem_setup}
In this section, we begin by outlining the data distribution assumptions adopted in this study for the \(k\)-class classification problem. Next, we present the formulation of supervised learning (SL) with data augmentation on a three-layer CNN. For simplicity, we use \(O\), \(\Omega\), and \(\Theta\) to hide constants related to \(k\), and \(\tilde{O}(\cdot)\), \(\tilde{\Omega}(\cdot)\), and \(\tilde{\Theta}(\cdot)\) to omit polylogarithmic factors. Additionally, we denote \(\operatorname{poly}(k)\) and \(\operatorname{polylog}(k)\) as \(\Theta(k^C)\) and \(\Theta(\log^C k)\), respectively, for some constant \(C > 0\). The notation \([n]\) represents the set \(\{1, 2, \ldots, n\}\).

\subsection{Data Distribution}
Building on the multi-view data assumption proposed by \citet{allen-zhu2023towards}, we assume that each semantic class is characterized by multiple distinct features—such as the noise and tail of an aircraft—that independently enable accurate classification. To validate this hypothesis, we utilize Grad-CAM \citep{selvaraju2017grad} to identify class-specific regions in images. As shown in Fig.~\ref{fig:gcam}, Grad-CAM highlights distinct and non-overlapping regions, such as the noise and tail of an aircraft, that contribute to its recognition. These observations corroborate the findings of \citet{allen-zhu2023towards}, reinforcing the existence of multiple independent discriminative features within each semantic class.

\begin{figure}[t]
\begin{center}
\setlength{\tabcolsep}{0.0pt}  
    \scalebox{1.}{\begin{tabular}{cccccc} 
    \includegraphics[width=0.16\linewidth]{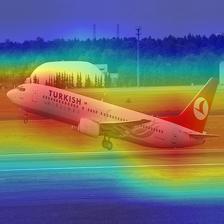}&
    \includegraphics[width=0.16\linewidth]{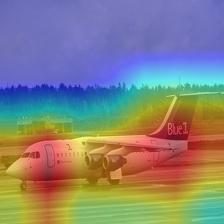}&
    \includegraphics[width=0.16\linewidth]{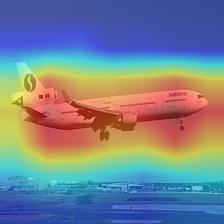}&
    \includegraphics[width=0.16\linewidth]{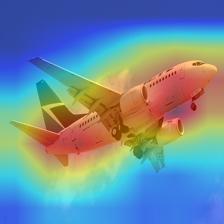}&
    \includegraphics[width=0.16\linewidth]{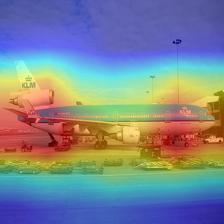}&
    \includegraphics[width=0.16\linewidth]{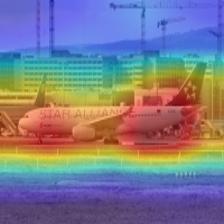}
    \vspace{-0.4em}\\
    \includegraphics[width=0.16\linewidth]{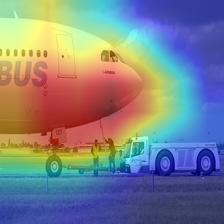}&
    \includegraphics[width=0.16\linewidth]{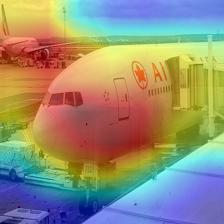}&
    \includegraphics[width=0.16\linewidth]{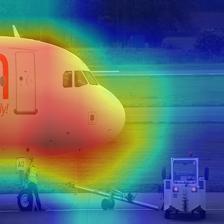}&
    \includegraphics[width=0.16\linewidth]{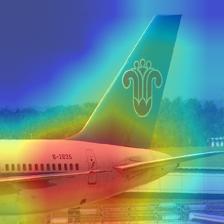}&
    \includegraphics[width=0.16\linewidth]{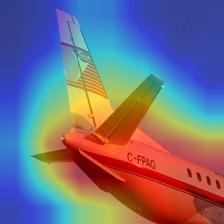}&
    \includegraphics[width=0.16\linewidth]{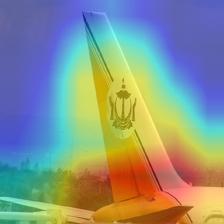}
    \vspace{-0.4em}\\
    \end{tabular}}
\end{center}
\vspace{-0.8em}
\caption{Visualization of pretrained ResNet-50 \citep{he2016deep} using Grad-CAM on airplane images from ImageNet.}
\vspace{-1em}
\label{fig:gcam}
\end{figure}

Under this assumption, each  sample \((X, y)\) consists of an input image \(X\), containing a set of \(P\) patches \(\{x_p \in \mathbb{R}^d\}_{p=1}^P\), and a class label \(y \in [k]\). Each semantic class \(i\) contains two discriminative features, \(v_{i,1}\) and \(v_{i,2} \in \mathbb{R}^d\), each capable of independently ensuring correct classification. While the analysis primarily focuses on two features per class, the approach generalizes to more features. We define \(\mathcal{V}\) as the set of all discriminative features across \(k\) classes:
\[
\setlength{\abovedisplayskip}{3pt}
\setlength{\belowdisplayskip}{3pt}
\setlength{\abovedisplayshortskip}{3pt}
\setlength{\belowdisplayshortskip}{3pt}
\mathcal{V} = \left\{ v_{i, 1}, v_{i, 2} \left| 
\begin{aligned}
& \|v_{i, 1}\|_2 = \|v_{i, 2}\|_2 = 1, \\
& v_{i, l} \perp v_{i^{\prime}, l^{\prime}} \text{ if } (i, l) \neq (i^{\prime}, l^{\prime})
\end{aligned}
\right.\right\}_{i=1}^k,
\]
where the constraints ensure the distinctiveness of features.  

Based on this framework, we define the multi-view and single-view distributions, \(\mathcal{D}_m\) and \(\mathcal{D}_s\), respectively. Samples from \(\mathcal{D}_m\) are characterized by two semantic features, such as the first-row images in Fig. \ref{fig:gcam}. While those from \(\mathcal{D}_s\) are associated with a single semantic feature, such as the second-row images in Fig. \ref{fig:gcam}. Additionally, we define the sparsity parameter \(s = \operatorname{polylog}(k)\) and a constant \(C_p\), both of which play a key role in the dataset structure.

\begin{definition}[Informal, Data Distribution \citep{allen-zhu2023towards}]
\label{def:multi-view}
The data distribution \(\mathcal{D}\) consists of samples drawn from the multi-view data distribution \(\mathcal{D}_m\) with probability \(1 - \mu\), and from the single-view data distribution \(\mathcal{D}_s\) with probability \(\mu = \frac{1}{\polyk}\). A sample \((X, y) \sim \mathcal{D}\) is generated by randomly selecting a label \(y \in [k]\) uniformly, and constructing the corresponding data \(X\) as follows:\\
\textbf{(a)} Sample a set of noisy features \(\mathcal{V}^{\prime}\) uniformly at random from \(\left\{v_{i, 1}, v_{i, 2}\right\}_{i \neq y}\), with each feature included with probability \(s / k\). The complete feature set of \(X\) is then defined as \(\mathcal{V}(X) = \mathcal{V}^{\prime} \cup \{v_{y, 1}, v_{y, 2}\}\), which combines the noisy features \(\mathcal{V}^{\prime}\) with the semantic features \(\{v_{y, 1}, v_{y, 2}\}\). \\
\textbf{(b)} For each feature \(v \in \mathcal{V}(X)\), select \(C_p\) disjoint patches from \([P]\), denoted as \(\mathcal{P}_v(X)\). For each  \(p \in \mathcal{P}_v(X)\), set
\[
x_p = z_p v + \text{``noises"} \in \mathbb{R}^d,
\]
where the coefficients \(z_p \geq 0\) satisfy:
\vspace{-1em}
\begin{itemize}
    \item \textbf{(b1)} For multi-view data \((X, y) \in \mathcal{D}_m\), we have \(\sum_{p \in \mathcal{P}_v(X)} z_p \in [1, O(1)]\) if \(v \in \{v_{y, 1}, v_{y, 2}\}\), and \(\sum_{p \in \mathcal{P}_v(X)} z_p \in [\Omega(1), 0.4]\) if \(v \in \mathcal{V}(X) \setminus \{v_{y, 1}, v_{y, 2}\}\).\vspace{-0.5em}
    \item \textbf{(b2)} For single-view data \((X, y) \in \mathcal{D}_s\), select \(l^* \in [2]\) uniformly at random to determine the semantic feature index. Then, \(\sum_{p \in \mathcal{P}_v(X)} z_p \in [1, O(1)]\) if \(v = v_{y, l^*}\), \(\sum_{p \in \mathcal{P}_v(X)} z_p \in [\rho, O(\rho)]\) (\(\rho = k^{-0.1}\)) if \(v = v_{y, 3-l^*}\), and \(\sum_{p \in \mathcal{P}_v(X)} z_p = \frac{1}{\polylogk}\) if \(v \in \mathcal{V}(X) \setminus \{v_{y, 1}, v_{y, 2}\}\).
\end{itemize}
\vspace{-1em}
\textbf{(c)} For purely noisy patches \(p \in [P] \setminus \bigcup_{v \in \mathcal{V}(X)} \mathcal{P}_v(X)\), set \(x_p = \text{``noises"}\).
\end{definition}

By definition, a multi-view sample \((X, y) \in \mathcal{D}_m\) includes patches that contain both semantic features \(v_{y, 1}\) and \(v_{y, 2}\), along with additional noisy features. Conversely, a single-view sample \((X, y) \in \mathcal{D}_s\) contains patches with only one semantic feature, either \(v_{y, 1}\) or \(v_{y, 2}\), combined with noise.

For a neural network capable of learning diverse semantic features during training, it can correctly classify both multi-view and single-view test samples. In contrast, if the network learns only partial semantic features, it will misclassify single-view samples lacking the learned features, leading to inferior generalization performance. Furthermore, a network that learns robust feature representations can correctly classify noisy images with a larger ``noises'' scale. Such robustness enhances the model's ability to generalize to unseen or noisy data, thereby improving generalization.

\subsection{Supervised Learning with Data Augmentation}
In this work, we focus on supervised learning (SL) with data augmentation for CNNs on a $k$-class classification problem.

\paragraph{Neural Network}
We consider a three-layer CNN with \(mk\) convolutional kernels \(\{w_{i,r}\}_{i \in [k], r \in [m]}\). The classification probability \(\logit_i(F, X)\) for class \(i \in [k]\) is defined as:
\begin{equation}
\setlength{\abovedisplayskip}{3pt}
\setlength{\belowdisplayskip}{3pt}
\setlength{\abovedisplayshortskip}{3pt}
\setlength{\belowdisplayshortskip}{3pt}
\logit_i(F, X) = \frac{\exp(F_i(X))}{\sum\nolimits_{j \in [k]} \exp(F_j(X))},
\end{equation}
where \(F(X) = (F_1(X), \cdots, F_k(X)) \in \mathbb{R}^d\) is defined by:
\begin{equation}
\setlength{\abovedisplayskip}{3pt}
\setlength{\belowdisplayskip}{3pt}
\setlength{\abovedisplayshortskip}{3pt}
\setlength{\belowdisplayshortskip}{3pt}
F_i(X) = \sum\nolimits_{r \in [m]} \sum\nolimits_{p \in [P]} \trelu(\langle w_{i,r}, x_p \rangle).
\end{equation}
Here, \(\trelu\)~\citep{allen-zhu2023towards} is a smoothed ReLU function that outputs zero for negative values, suppresses small positive values to reduce noise, and maintains a linear relationship for larger inputs. This design ensures that \(\trelu\) focuses on significant features while filtering out noise. Its detailed formulation is provided in Appendix~\ref{appsec:data_assum}.

This three-layer network incorporates fundamental components of neural networks: linear mapping, activation, and a softmax layer. Its analysis provides valuable insights into SL with data augmentation. Notably, several theoretical studies have used shallow networks, such as two-layer models, to gain understanding of deeper networks and the impact of data augmentation~\citep{li2017convergence, arora2019fine, zhang2021understanding, shen2022data, zou2023benefits, chidambaram2023provably, oh2024provable}. Moreover, this architecture matches the network structure analyzed in \citet{allen-zhu2023towards}, enabling direct comparisons between our results on SL with data augmentation and the vanilla SL results reported by \citet{allen-zhu2023towards}.

\paragraph{Supervised Learning} For SL on the \(k\)-class classification task, we adopt the classic cross-entropy loss. Given the training dataset \(\mathcal{Z}\), the loss  at iteration \(t\) is defined as:
\begin{equation} \label{eq:sl_loss}
\setlength{\abovedisplayskip}{3pt}
\setlength{\belowdisplayskip}{3pt}
\setlength{\abovedisplayshortskip}{3pt}
\setlength{\belowdisplayshortskip}{3pt}
L^{(t)} = \mathbb{E}_{(X, y) \sim \mathcal{Z}} \big[ - \log \logit_y(F^{(t)}, X) \big],
\end{equation}
where \(\logit_y(F^{(t)}, X)\) denotes the predicted probability for the true label \(y\) at iteration \(t\). Then we update the model parameters \(\{w_{i,r}\}_{i \in [k], r \in [m]}\)  via gradient descent (GD):
\begin{equation}\label{eq:gd}
\setlength{\abovedisplayskip}{3pt}
\setlength{\belowdisplayskip}{3pt}
\setlength{\abovedisplayshortskip}{3pt}
\setlength{\belowdisplayshortskip}{3pt}
w_{i,r}^{(t+1)} = w_{i,r}^{(t)} - \eta \nabla_{w_{i,r}} L^{(t)},
\end{equation} 
where \(\eta \geq 0\) is the learning rate. This iterative update process adjusts the convolutional kernels \(\{w_{i,r}\}\) to minimize the training loss \(L^{(t)}\). Through this optimization, the network progressively aligns the parameters \(\{w_{i,r}\}\) with the underlying data distribution, enabling the model to effectively capture class-specific semantic features.

\begin{figure*}[t]
\begin{center}
    \setlength{\tabcolsep}{0.0pt} 
    \scalebox{1.0}{\begin{tabular}{cccccccccccc}
        \includegraphics[width=0.095\linewidth]{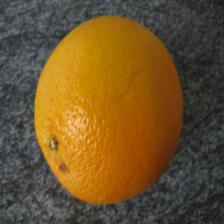} &
        \includegraphics[width=0.095\linewidth]{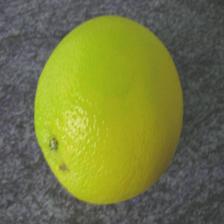} &
        \includegraphics[width=0.095\linewidth]{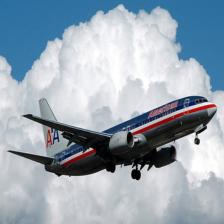} &
        \includegraphics[width=0.095\linewidth]{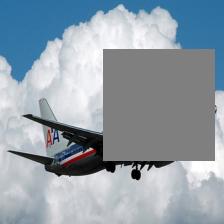} & \hspace{1em} &
        \includegraphics[width=0.095\linewidth]{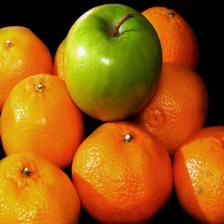} &
        \includegraphics[width=0.095\linewidth]{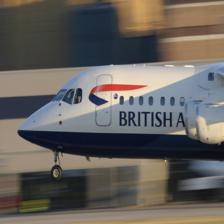} &
        \includegraphics[width=0.095\linewidth]{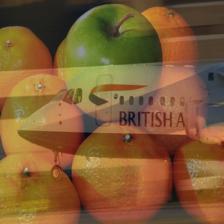} & \hspace{1em} &
        \includegraphics[width=0.095\linewidth]{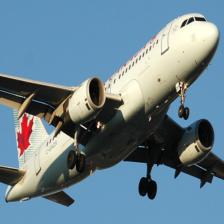} &
        \includegraphics[width=0.095\linewidth]{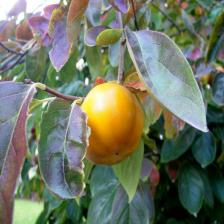} &
        \includegraphics[width=0.095\linewidth]{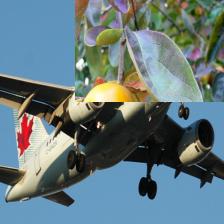} \\

        \multicolumn{4}{c}{\footnotesize{(a) Partial semantic feature removal}} & &
        \multicolumn{3}{c}{\footnotesize{(b) Feature mixing}} & &
        \multicolumn{3}{c}{\footnotesize{(c) Combined effect}}
    \end{tabular}}
\end{center}
\vspace{-1.0em}
\caption{Visualization of data augmentation effects on ImageNet images. (a) Partial semantic feature removal (\(\calA_1\)): original image (left) and augmented version (right) are shown for each pair. (b) Feature mixing (\(\calA_2\)): original images (left and middle) blended into the augmented version (right). (c) Combined effect (\(\calA_3\)): original images (left and middle) together create the augmented version (right).}
\label{fig:effect}
\vspace{-0.6em}
\end{figure*}

\paragraph{Data Augmentation} Adopting data augmentation \(\calA\) to training samples, the supervised training loss becomes
\begin{equation} \label{eq:aug_sl_loss}
\setlength{\abovedisplayskip}{3pt}
\setlength{\belowdisplayskip}{3pt}
\setlength{\abovedisplayshortskip}{3pt}
\setlength{\belowdisplayshortskip}{3pt}
L_\calA^{(t)} \!=\! \bbE_{(X,y)\sim \calZ} [- \log \logit_y(F^{(t)},\calA(X))].
\end{equation}
In this work, we focus on two key effects of data augmentation that underpin our theoretical analysis: \textit{partial semantic feature removal}, denoted as \(\mathcal{A}_1\), and \textit{feature mixing}, denoted as \(\mathcal{A}_2\). These effects represent fundamental principles for the success of  many augmentations.  

\noindent\textbf{Partial Semantic Feature Removal (\(\mathcal{A}_1\))}   
Data augmentations like random cropping, color jittering \cite{krizhevsky2012imagenet}, and CutOut \cite{devries2017improved}, probabilistically remove portions of semantic information from input images. For example, color jittering modifies an image's color properties, effectively eliminating color-related semantic features, while CutOut masks specific regions of the image and thus removes partial semantics. Fig.~\ref{fig:effect}(a) highlights the partial semantic feature removal effect.

\noindent\textbf{Feature Mixing (\(\mathcal{A}_2\))}
Augmentations like Mixup \cite{zhang2017mixup} integrate semantic features from multiple images, introducing noisy features while reducing the dominance of original features. Specifically, Mixup linearly interpolates two input images to create a mixed sample with blended features. This feature mixing effect is shown in Fig.~\ref{fig:effect}(b).

\noindent\textbf{Combined Effects (\(\mathcal{A}_3\))}  
Augmentations like CutMix~\cite{yun2019cutmix} combine the effects of \(\mathcal{A}_1\) and \(\mathcal{A}_2\), offering complementary benefits. Specifically, CutMix replaces a rectangular region of one image with a patch from another, simultaneously removing partial semantic features from the original image and introducing noisy features from the second. This dual effect is shown in Fig.~\ref{fig:effect}(c). Similarly, combining independent methods like Mixup and CutOut can also achieve the combined effects of \(\mathcal{A}_3\).

By categorizing data augmentations through these effects, we establish a unified perspective that lays the groundwork for analyzing their contributions to model generalization.  

\section{Main Results} \label{sec:main_results}
In this section, we formally define the effects of data augmentations on input images and present our main theoretical results. In Sec.~\ref{sec:a1}, we establish that SL with data augmentation \(\mathcal{A}_1\) promotes the learning of diverse features. In Sec.~\ref{sec:a2}, we show that SL with data augmentation \(\mathcal{A}_2\) fosters more robust feature representations. In Sec.~\ref{sec:a3}, we analyze the complementary benefits of SL with data augmentation \(\mathcal{A}_3\), which integrates the effects of \(\mathcal{A}_1\) and \(\mathcal{A}_2\) to achieve both diversity and robustness in feature learning.

\subsection{Diverse Feature Learning with \(\calA_1\)} \label{sec:a1}
Here, we analyze the feature learning process of SL with data augmentation \(\mathcal{A}_1\), and compare its  generalization performance with vanilla SL without  using  \(\mathcal{A}_1\). 

\noindent{\textbf{Assumptions.}} We first formally define the  partial semantic feature removal effect of \(\mathcal{A}_1\) on  feature patches in Definition \ref{def:multi-view}(b). These feature patches are expressed as \(x_p = z_p v + \text{``noises"} \in \mathbb{R}^d\), where \(z_p\) quantifies the semantic content, \(v\) denotes the semantic feature, and the term \(\text{``noises"}\) captures the perturbations inherent to the patch.

\begin{assumption} \label{assum:a1}
For an image \((X, y) \in \mathcal{Z}\), suppose that with a probability \(\pi_1\), the data augmentation \(\mathcal{A}_1\) can   remove one semantic feature, and  the augmented image satisfies: \\
\textbf{(a)} If \((X, y) \in \mathcal{Z}_s\), then \( \sum_{p \in \mathcal{P}_{v}(X)} z_p \in [C_1, O(1)] \) for \(v = v_{y, l^*}\), where \(v_{y, l^*}\) is the semantic feature  in \(X\). \\
\textbf{(b)} If \((X, y) \in \mathcal{Z}_m\), then \( \sum_{p \in \mathcal{P}_{v}(X)} z_p \in [C_1, O(1)] \) for \(v = v_{y, l}\) (\(l \in [2]\) is randomly chosen), while it holds that \(\sum_{p \in \mathcal{P}_{v}(X)} z_p \in [1, O(1)]\) for \(v = v_{y, 3-l}\). \\
Here, \(C_1 \in (0, 0.4)\) is a constant representing the scale of the semantic feature after applying data augmentation \(\mathcal{A}_1\).
\end{assumption}

\cref{assum:a1} captures the partial semantic feature removal effect of the data augmentation \(\mathcal{A}_1\), which has a probability \(\pi_1\) to significantly reduce the scale of one semantic feature in vanilla training sample \(X\). This reduction is evident from \cref{def:multi-view}, where the original scale of all semantic features satisfies \(\sum_{p \in \mathcal{P}_{v}(X)} z_p \in [1, O(1)]\). For the ``noises" part, its reduction or preservation does not impact our proof, so we does not require assumption on it.

Then, we outline the necessary assumptions regarding the dataset and model initialization.

\begin{assumption}\label{assum1}
    \textbf{(a)} The training dataset \(\mathcal{Z}\) follows the distribution \(\mathcal{D}\), and its size satisfies \(N := |\mathcal{Z}| = \polyk\). \\
    \textbf{(b)} Each convolution kernel \(w_{i,r}^{(0)}\) (\(i \in [k], r \in [m]\)) is initialized using a Gaussian distribution \(\mathcal{N}(0, \sigma_0^2 \mathbf{I})\), where \(\sigma_0^{q-2} = 1/k\) and \(q \geq 3\) is given in the definition of \(\trelu\).
\end{assumption}

Assumption~\ref{assum1}(a) assumes significantly more training samples than the number of classes, a standard consideration in SL analysis~\cite{zhang2021understanding, cao2022benign, allen-zhu2023towards}. The Gaussian initialization in Assumption~\ref{assum1}(b) is mild and consistent with standard practices.

\noindent{\textbf{Results.}} To monitor the feature learning process, we define 
\begin{equation}\label{afafas}
    \setlength{\abovedisplayskip}{3pt}
    \setlength{\belowdisplayskip}{3pt}
    \setlength{\abovedisplayshortskip}{3pt}
    \setlength{\belowdisplayshortskip}{3pt}
    \Phi_{i,l}^{(t)}: = \sum\nolimits_{r\in[m]} [\langle w_{i,r}^{(t)}, v_{i,l}\rangle]^{+}
\end{equation}
as a feature learning indicator of feature $v_{i,l}$ in class $i$ (\(i \in [k], l \in [2]\)). This metric represents the total positive correlation score between the $l$-th feature $v_{i,l}$ of class $i$ and all $m$ convolution kernels $w_{i,r}$ \((r \in [m])\) at the $t$-th iteration. A larger $\Phi_{i,l}^{(t)}$ indicates that the network better captures the feature $v_{i,l}$, thereby utilizing it more effectively for classification. Based on this, we  compare the feature learning process and generalization performance of SL with \(\calA_1\) and vanilla SL without  \(\calA_1\). See  its proof in \cref{appsec:proof_a1}.

\begin{theorem} \label{main:thm_a1}
Suppose Assumptions~\ref{assum:a1} and \ref{assum1} holds. For sufficiently large $k$ and \(m = \polylogk\), by setting \(\eta \leq 1 / \polyk\), after independently  running SL with  \(\calA_1\) and  vanilla SL without  \(\calA_1\) for both \(T = \polyk / \eta\) iterations, the following holds with probability at least \(1 - e^{-\Omega(\log^2k)}\):\\
\textbf{(a) Diverse Feature Learning:} Denote \(\Phi_{i}^{(T)}\) learned by SL with \(\calA_1\) and vanilla SL without \(\calA_1\)  as \(\Phi_{i}^{\calA_1}\) and \(\Phi_{i}^{\text{SL}}\) respectively. For every \(i \in [k]\),
under SL with \(\calA_1\), we have:
\begin{equation} \label{eq:a1_feat}
\setlength{\abovedisplayskip}{3pt}
\setlength{\belowdisplayskip}{3pt}
\setlength{\abovedisplayshortskip}{3pt}
\setlength{\belowdisplayshortskip}{3pt}
\Phi_{i,l}^{\calA_1} \geq \Omega(\log k), \quad (\forall l \in [2]).
\end{equation}
But for vanilla SL without  \(\calA_1\) , there is an \(l \in [2]\) such that:
\begin{equation} \label{eq:a1_vsl_feat}
\setlength{\abovedisplayskip}{3pt}
\setlength{\belowdisplayskip}{3pt}
\setlength{\abovedisplayshortskip}{3pt}
\setlength{\belowdisplayshortskip}{3pt}
\Phi_{i,l}^{\text{SL}} \geq \Omega(\log k), \quad \Phi_{i,3-l}^{\text{SL}} \leq 1 / \polylogk.
\end{equation}
\textbf{(b) Improved Generalization:} Denote \(F_i^{(T)}\) learned by SL with \(\calA_1\) and vanilla SL as \(F_i^{\calA_1}\) and \(F_i^{\text{SL}}\) for \(i\in[k]\) respectively. For a multi-view test sample \((X,y) \sim \calD_m\), both SL with \(\calA_1\) and vanilla SL achieve correct predictions. For a single-view test sample \((X,y) \sim \calD_s\), SL with \(\calA_1\) satisfies:
\begin{equation}\label{eq:a1_test}
\setlength{\abovedisplayskip}{3pt}
\setlength{\belowdisplayskip}{3pt}
\setlength{\abovedisplayshortskip}{3pt}
\setlength{\belowdisplayshortskip}{3pt}
F_y^{\calA_1}(X) \geq \max\nolimits_{j\neq y} F_j^{\calA_1}(X) + \Omega(\log k).
\end{equation}
Conversely, for vanilla SL, with probability near 50\%:
\begin{equation}\label{eq:a1_vsl_test}
\setlength{\abovedisplayskip}{3pt}
\setlength{\belowdisplayskip}{3pt}
\setlength{\abovedisplayshortskip}{3pt}
\setlength{\belowdisplayshortskip}{3pt}
\max\nolimits_{j\neq y} F_j^{\text{SL}}(X) \geq F_y^{\text{SL}}(X) + \frac{1}{\polylogk}.
\end{equation}
\end{theorem}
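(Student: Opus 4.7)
The plan is to extend the tier-based lottery-ticket analysis of \citet{allen-zhu2023towards} to track $\Phi_{i,l}^{(t)}$ under both augmentation schemes. The starting point is the gradient-descent update for $\langle w_{i,r}^{(t)}, v_{i,l}\rangle$, which decomposes into a signal term proportional to $(\bone[y=i]-\logit_i(F^{(t)}, \cdot))\sum_p z_p\,\trelu'(\langle w_{i,r}^{(t)}, x_p\rangle)$ on patches carrying $v_{i,l}$, plus cross terms from other semantic features and noise. Using the orthogonality of $\calV$ and the noise-suppressing behaviour of $\trelu$, the cross and noise terms stay $\tilde O(1/\polyk)$ throughout the $T=\polyk/\eta$ iterations, so the whole dynamics of $\Phi_{i,l}^{(t)}$ is driven by the signal term and an analogous decomposition controls off-diagonal correlations $\langle w_{i,r}^{(t)}, v_{j,l}\rangle$ for $j\neq i$.

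For vanilla SL, Gaussian initialization with $\sigma_0^{q-2}=1/k$ produces, for each class $i$, an almost-surely unique ``winning'' feature $l_i^{\star}\in[2]$ whose maximum-kernel correlation exceeds that of $3-l_i^{\star}$ by a constant factor. A tier-based induction then shows $\Phi_{i,l_i^{\star}}^{(t)}$ grows to $\Omega(\log k)$ in time $\tilde\Theta(1/\eta)$. Once this occurs, every multi-view sample $(X,y)$ with $y=i$ already satisfies $1-\logit_y(F^{(t)},X)\leq 1/\polyk$, so its contribution to the gradient of $\Phi_{i,3-l_i^{\star}}^{(t)}$ is negligible; the only remaining signal comes from single-view examples of class $i$ with $l^{\star}=3-l_i^{\star}$, but these have prevalence $\mu/(2k)=1/\polyk$, which is insufficient to push $\Phi_{i,3-l_i^{\star}}^{(t)}$ past $1/\polylogk$ within $T$ iterations. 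This yields \eqref{eq:a1_vsl_feat}; a test-time computation of $F^{\text{SL}}(X)$ on a single-view sample with $l^{\star}\neq l_y^{\star}$ (an event of probability $1/2$) then shows $F_y^{\text{SL}}(X)=\tilde O(1/\polylogk)+O(k^{-0.1}\log k)$, while some $F_j^{\text{SL}}(X)$ with $j\neq y$ picks up a contribution of order $(1/\polylogk)\cdot\log k$ from the learned feature of class $j$ appearing as noise in $X$, giving \eqref{eq:a1_vsl_test}.

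For SL with $\calA_1$, the key observation is that whenever the augmentation targets the winning feature $v_{y,l_y^{\star}}$ of a multi-view sample (an event of probability $\Theta(\pi_1)$), its scale shrinks to $[C_1,O(1)]$ with $C_1<0.4$, so the sample effectively carries its main signal through the losing direction $v_{y,3-l_y^{\star}}$, preventing $\logit_y$ from saturating. I would prove a coupled invariant ensuring $1-\logit_y(F^{(t)},\calA_1(X))\geq\Omega(1)$ on a $\Theta(\pi_1)$ fraction of augmented multi-view samples even after $\Phi_{y,l_y^{\star}}^{(t)}$ reaches $\Omega(\log k)$. This preserves a per-iteration gradient of order $\eta\,\pi_1/\polylogk$ on $\Phi_{y,3-l_y^{\star}}^{(t)}$, which accumulates to $\Omega(\log k)$ within $T$ iterations, establishing \eqref{eq:a1_feat}. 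Both features being $\Omega(\log k)$ then gives \eqref{eq:a1_test}: on any single-view test sample $(X,y)\sim\calD_s$, $F_y^{\calA_1}(X)\geq\Phi_{y,l^{\star}}^{\calA_1}-\tilde O(1)\geq\Omega(\log k)$, while the noise-only off-diagonal bounds force $F_j^{\calA_1}(X)\leq\tilde O(1)$ for every $j\neq y$; the multi-view case is an immediate consequence.

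The main obstacle is making the ``sustained gradient'' argument for $\calA_1$ rigorous, because it requires maintaining three coupled invariants simultaneously over $\polyk/\eta$ steps: an upper bound on $\Phi_{y,l_y^{\star}}^{(t)}$ that keeps the augmented cross-entropy loss bounded below on a $\Theta(\pi_1)$ fraction of samples, a lower bound on $\Phi_{y,3-l_y^{\star}}^{(t)}$ driven by this preserved signal, and the $\tilde O(1/\polylogk)$ off-diagonal bound on $\langle w_{i,r}^{(t)}, v_{j,l}\rangle$ for $j\neq i$. The delicate point is that these three bounds interact: tightening the upper bound on the dominant feature risks stalling its own growth tier, while loosening it risks collapsing the loss signal that feeds the losing feature. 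Handling the changing $\trelu$ activation pattern, the randomness of whether $\calA_1$ is applied and to which of the two features, and the fact that $\calA_1$ may re-order the two tiers on a given sample, is where most of the technical work will lie; the remaining steps reduce to the now-standard tier accounting.
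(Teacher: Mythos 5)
Your high-level intuition is correct: the mechanism driving Eq.~\eqref{eq:a1_feat} is that $\mathcal{A}_1$ injects into the training set, for every class $y$, samples where $v_{y,1}$ is preserved and $v_{y,2}$ is shrunk to scale $C_1<0.4$, and vice-versa, so that low training error forces \emph{both} $\Phi_{y,1}$ and $\Phi_{y,2}$ to be large. Where your proposal diverges from the paper is the proof mechanism for this. You propose a forward ``sustained-gradient'' argument: maintain a coupled invariant that $1-\logit_y(F^{(t)},\mathcal{A}_1(X))\geq\Omega(1)$ on a $\Theta(\pi_1)$ fraction of augmented samples while $\Phi_{y,3-l}^{(t)}$ is small, then accumulate the resulting gradient over $T$ steps. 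The paper instead avoids ever having to certify such an invariant. It proves a separate induction hypothesis (\cref{hyp}) that now suppresses noise on both multi-view and (augmented) single-view data, uses it to show \emph{both} $\Phi_{i,1}^{(t)}$ and $\Phi_{i,2}^{(t)}$ reach $\Theta(1)$ by a fixed time $T_0$ (Claims~\ref{claim:growth}--\ref{claim:feature}), telescopes the per-step increments $\Phi_{i,l}^{(t+1)}\geq\Phi_{i,l}^{(t)}+\tilde\Omega(\eta\pi_1/k)\,\mathbb{E}_{\tilde{\mathcal{Z}}_m}[1-\logit_i]$ against the $\tilde O(1)$ cap on $\Phi_{i,l}$ to get $\sum_{t\geq T_0}\mathbb{E}_{\tilde{\mathcal{Z}}_m}[1-\logit_y]\leq\tilde O(k/(\eta\pi_1))$ (Claim~\ref{claim:multi_error}), invokes monotonicity of the full-batch objective to conclude the \emph{final} error is $1/\polyk$, and only then extracts the structural constraint from the final state (Claim~\ref{claim:individual}): fitting $\tilde{\mathcal{Z}}_m^l$ forces $0.4\Phi_{i}^{(T)}-\Phi_{j,l}^{(T)}-C_1\Phi_{j,3-l}^{(T)}\leq-\Omega(\log k)$, which together with $C_1<0.4$ and non-negativity yields $\Phi_{j,l}^{(T)}\geq\Omega(\log k)$ for both $l$.

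This difference matters. Your framing in terms of a pre-existing ``winning feature $l_y^\star$'' that the augmentation must overcome is the wrong picture for the $\mathcal{A}_1$ dynamics: under the paper's proof, there is no winner to speak of --- $\Phi_{i,1}^{(t)}$ and $\Phi_{i,2}^{(t)}$ are both at $\Theta(1)$ by $T_0$ and grow together afterwards, so you never enter the asymmetric regime where $\Phi_{y,l_y^\star}\gtrsim\log k$ while $\Phi_{y,3-l_y^\star}$ is tiny. Consequently the invariant you want (loss bounded away from zero on augmented samples precisely while the losing $\Phi$ is small) is both unnecessary and hard to close: you would have to argue simultaneously that the dominant $\Phi$ grows fast enough to drive loss decay on clean samples yet not so fast that the augmented-sample loss collapses before the subordinate $\Phi$ catches up --- exactly the ``delicate interaction'' you flag as the main obstacle. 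The telescoping step eliminates this by decoupling ``total error is bounded'' from ``each $\Phi_{i,l}$ is large at the end,'' so you never have to trade one tier against another. A smaller technical point: your claimed per-step increment $\eta\pi_1/\polylogk$ presumes $\trelu'\approx1$, which is only valid after the polynomial-to-linear transition of $\trelu$; before that the accumulation is doubly-exponential, not linear, and the paper handles this via the initial-phase Claim~\ref{claim:growth}. I would recommend restructuring around the telescoping-plus-final-state decomposition that the paper uses.
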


Theorem~\ref{main:thm_a1}(a) demonstrates that after $T=\polyk/\eta$ training iterations, SL with data augmentation \(\calA_1\) enables the network $F^{(T)}$ to effectively capture diverse features. Specifically, for each class $i$, the network learns both semantic features \(v_{i,1}\) and \(v_{i,2}\), as indicated by Eq.~\eqref{eq:a1_feat}. This diverse feature learning ensures the network is not overly reliant on a single feature, promoting robustness across varied data distributions. In contrast,  vanilla SL only learns one of the two semantic features per class, as evidenced  in Eq.~\eqref{eq:a1_vsl_feat},  leading to an incomplete feature learning.

Theorem~\ref{main:thm_a1}(b) highlights the critical role of this diverse feature learning in achieving superior generalization performance. By capturing both semantic features, SL with \(\calA_1\) ensures nearly 100\% test accuracy on both multi-view test samples \((X, y) \sim \calD_m\) and single-view test samples \((X, y) \sim \calD_s\). In contrast, vanilla SL achieves good performance only on multi-view test samples, while its test accuracy on single-view data drops to approximately 50\%. This limitation arises because vanilla SL relies solely on the single learned feature, which is insufficient when only the unlearned feature presents in single-view samples.

The improved generalization performance of SL with \(\calA_1\) is directly attributed to its ability to learn diverse features during training. The augmentation strategy \(\calA_1\) facilitates this through its partial semantic feature removal effect, which transforms a subset of multi-view training samples into ``single-view data" that predominantly contain only one semantic feature.  These ``single-view data" compel the neural network to learn comprehensive semantic features for each class, as relying on a single feature is no longer sufficient to optimize the training objective. This process effectively avoids the ``lottery winning" phenomenon observed in vanilla SL \cite{allen-zhu2023towards}, where the network learns only one semantic feature per class due to random model initialization. See more discussion in \cref{appsec:sl}.

\subsection{Robust Feature Learning with \(\calA_2\)} \label{sec:a2}
Here, we analyze the feature learning process in SL using  augmentation \(\mathcal{A}_2\), and justify its improved generalization performance compared to vanilla SL without \(\mathcal{A}_2\). 

\noindent{\textbf{Assumptions.}}  We first formalize the  feature mixing effect of data augmentation \(\mathcal{A}_2\) on    input images.

\begin{assumption} \label{assum:a2}
For a given image \((X, y) \in \mathcal{Z}\), suppose that with probability \(\pi_2\),   augmentation \(\mathcal{A}_2\) mixes the feature patches from another  sample \(X'\) into \(X\), and  the augmented image satisfies:\\
\textbf{(a)} If \((X, y) \in \mathcal{Z}_s\), then \(\sum_{p \in \mathcal{P}_{v}(X)} z_p \in [1 - C_2, O(1)]\) for \(v = v_{y, l^*}\), where \(v_{y, l^*}\) is the semantic feature  in \(X\). Additionally, \(\sum_{p \in \mathcal{P}_{v}(X)} z_p \in [\tilde{\Omega}(1), C_3]\) for \(v \in \mathcal{V}(X) \setminus \{v_{y,1}, v_{y,2}\}\). \\
\textbf{(b)} If \((X, y) \in \mathcal{Z}_m\), then \(\sum_{p \in \mathcal{P}_{v}(X)} z_p \in [1 - C_2, O(1)]\) for \(v \in \{v_{y,1}, v_{y,2}\}\). Additionally, \(\sum_{p \in \mathcal{P}_{v}(X)} z_p \in [\Omega(1), 0.4 + C_3]\) for \(v \in \mathcal{V}(X) \setminus \{v_{y,1}, v_{y,2}\}\). \\
Here, \(C_2\) and \(C_3\) are two positive constants satisfying \(C_2 + C_3 < 0.6\), where \(C_2\) represents the scale reduction of semantic features, and \(C_3\) is the  scale increase of noisy features caused by  augmentation \(\mathcal{A}_2\).
\end{assumption}

\cref{assum:a2} describes how the feature mixing effect of data augmentation \(\mathcal{A}_2\) influences the semantic and noisy feature distributions in augmented images. Specifically, feature mixing reduces the scale of original semantic features (\(C_2\)) while increasing the scale of noisy features (\(C_3\)). We require \(C_2 + C_3 < 0.6\) to ensure that semantic information from the original sample remains larger than noisy features.

Below, we use Mixup~\cite{zhang2017mixup} as an example of \(\calA_2\).  In Mixup, two training samples \((X, y)\) and \((X', y')\) are blended into an augmented sample \((X_{\text{mix}}, y_{\text{mix}})\):
\begin{equation} \label{eq:mixup}
X_{\text{mix}} = \lambda X + (1 - \lambda) X', \quad y_{\text{mix}} = \lambda y + (1 - \lambda) y',
\end{equation}
where \(\lambda \sim \text{Beta}(\alpha, \alpha)\) is drawn from a Beta distribution with \(\alpha > 0\). Then  we analyze the   Mixup training loss in Lemma~\ref{lem:mixup_loss} whose proof is in \cref{appsec:aug_loss}.

\begin{lemma} \label{lem:mixup_loss}
For Mixup, its training loss 
\begin{equation*} \label{eq:sl_lossaasdsfds}
	\setlength{\abovedisplayskip}{3pt}
	\setlength{\belowdisplayskip}{3pt}
	\setlength{\abovedisplayshortskip}{3pt}
	\setlength{\belowdisplayshortskip}{3pt}
	L_{\text{mix}}^{(t)} = \mathbb{E}_{(X_{\text{mix}}, y_{\text{mix}}) \sim \mathcal{Z}} \big[ - \log \logit_{y_{\text{mix}}}(F^{(t)}, X_{\text{mix}}) \big],
\end{equation*}
is equivalent to 
\begin{equation*} \label{eq:sl_lossasfds}
	\setlength{\abovedisplayskip}{3pt}
	\setlength{\belowdisplayskip}{3pt}
	\setlength{\abovedisplayshortskip}{3pt}
	\setlength{\belowdisplayshortskip}{3pt}
	L^{(t)} = \mathbb{E}_{(X'_{\text{mix}}, y) \sim \mathcal{Z}} \big[ - \log \logit_{y}(F^{(t)}, X'_{\text{mix}}) \big],
\end{equation*}
where  \((X, y) \in \mathcal{Z}\) and \((X_{\text{mix}}, y_{\text{mix}})\) are defined in~\eqref{eq:mixup}, and  $X'_{\text{mix}}$ is augmented by~\eqref{eq:mixup} but  \(\lambda \sim \text{Beta}(\alpha+1, \alpha)\).
\end{lemma}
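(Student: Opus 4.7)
The plan is to reduce the symmetric Mixup objective to an equivalent one-sided expression by exploiting the symmetry of the pair $((X,y),(X',y'))$ and the self-similarity of the Beta density under multiplication by $\lambda$. First, I would expand the soft-label cross-entropy: writing the Mixup target $y_{\text{mix}}=\lambda y+(1-\lambda)y'$ as a one-hot-vector interpolation and unrolling $-\log\logit_{y_{\text{mix}}}$ against the logit vector yields
\begin{equation*}
-\log\logit_{y_{\text{mix}}}(F,X_{\text{mix}}) = -\lambda \log\logit_{y}(F,X_{\text{mix}}) - (1-\lambda)\log\logit_{y'}(F,X_{\text{mix}}).
\end{equation*}
Taking expectations, $L_{\text{mix}}^{(t)}$ splits into two terms, one weighted by $\lambda$ with label $y$ and one weighted by $1-\lambda$ with label $y'$.

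Next I would collapse the two terms into one. Because $(X,y)$ and $(X',y')$ are i.i.d.\ draws from $\calZ$ and $\lambda\sim\Beta(\alpha,\alpha)$ is symmetric about $1/2$ (so $1-\lambda \dequal \lambda$), performing the change of variables $\mu=1-\lambda$ and swapping the roles of $(X,y)\leftrightarrow(X',y')$ inside the second term gives
\begin{equation*}
\mathbb{E}\big[-(1-\lambda)\log\logit_{y'}(F,\lambda X+(1-\lambda)X')\big]
= \mathbb{E}\big[-\lambda\log\logit_{y}(F,\lambda X+(1-\lambda)X')\big].
\end{equation*}
Thus $L_{\text{mix}}^{(t)} = 2\,\mathbb{E}_{\lambda,(X,y),(X',y')}\big[-\lambda\log\logit_{y}(F^{(t)},\lambda X+(1-\lambda)X')\big]$.

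The final step is a Beta-density identity. The $\Beta(\alpha,\alpha)$ density is proportional to $\lambda^{\alpha-1}(1-\lambda)^{\alpha-1}$, so $\lambda$ times this density is proportional to $\lambda^{\alpha}(1-\lambda)^{\alpha-1}$, the un-normalised $\Beta(\alpha+1,\alpha)$ density. The normalising ratio $B(\alpha+1,\alpha)/B(\alpha,\alpha)=\alpha/(2\alpha)=1/2$ supplies exactly the factor that cancels the leading $2$. Substituting gives
\begin{equation*}
L_{\text{mix}}^{(t)} = \mathbb{E}_{\lambda\sim\Beta(\alpha+1,\alpha),\,(X,y),(X',y')}\big[-\log\logit_{y}(F^{(t)},\lambda X+(1-\lambda)X')\big],
\end{equation*}
which is $L^{(t)}$ as defined in the statement, with $X'_{\text{mix}}$ the mixed image using $\lambda\sim\Beta(\alpha+1,\alpha)$.

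I do not anticipate any real obstacle: the argument is a short calculation built from soft-label expansion, i.i.d.\ symmetry, and the Beta identity $\lambda\cdot f_{\Beta(\alpha,\alpha)}(\lambda) = \tfrac{1}{2}f_{\Beta(\alpha+1,\alpha)}(\lambda)$. The only point requiring a little care is making the symmetry swap rigorous, i.e., verifying that the two pairs are exchangeable so that relabelling inside the expectation is valid; this follows immediately since $(X,y)$ and $(X',y')$ are independent draws from the same distribution $\calZ$ and $\lambda$ is independent of both.
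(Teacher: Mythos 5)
Your proof is correct and follows essentially the same route as the paper: the same soft-label expansion of $-\log\logit_{y_{\text{mix}}}$ into a $\lambda$-weighted pair of hard-label cross-entropies, the same use of i.i.d.\ exchangeability of $(X,y)$ and $(X',y')$ together with the symmetry of $\Beta(\alpha,\alpha)$ to collapse the two terms, and the same reweighting of the $\Beta(\alpha,\alpha)$ density by $\lambda$ into $\Beta(\alpha+1,\alpha)$. The only (cosmetic) difference is that the paper introduces an auxiliary Bernoulli variable $B\sim\Bern(\lambda)$ and invokes Beta--Bernoulli conjugacy to perform the reweighting, whereas you write out the density-ratio identity $\lambda\, f_{\Beta(\alpha,\alpha)}(\lambda)=\tfrac12 f_{\Beta(\alpha+1,\alpha)}(\lambda)$ directly; these are the same calculation in two guises, so no gap remains.
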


{In this process, Mixup introduces feature patches from \(X'\) into \(X\), where \(X'\) with probability \(1 - \frac{1}{k}\) belongs to a class different from \(y\). As stated in Eq. \eqref{eq:mixup} and \cref{def:multi-view}(b), this blending reduces the scale of semantic features from \(X\), consistent with the scale reduction quantified by \(C_2\) in \cref{assum:a2}. At the same time, the mixed sample \(X'_{\text{mix}}\) gains additional noisy features from \(X'\), aligning with the increase in the noisy feature scale described by \(C_3\).}

{\noindent{\textbf{Results.}} Let \(\Phi_i^{(t)} := \sum_{l \in [2]} \Phi_{i,l}^{(t)}\), as defined in Eq.~\eqref{afafas}, denote the feature learning indicator for class \(i\) (\(i \in [k]\)). Consider a noisy dataset \(\mathcal{D}_{\text{noisy}}\) that follows the definition in \cref{def:multi-view} but differs by having a larger noise scale for purely noisy patches. See \cref{def:noisy_data} in \cref{appsec:data_assum} for details. We compare the performance of SL with \(\mathcal{A}_2\) to vanilla SL, with the full proof provided in \cref{appsec:proof_a2}. }

\begin{theorem} \label{main:thm_a2}
Suppose Assumptions~\ref{assum:a2} and \ref{assum1} holds. For sufficiently large $k$ and \(m = \polylogk\), by setting \(\eta \leq 1 / \polyk\), after independently running SL with  augmentation \(\calA_2\)  and vanilla L without \(\calA_2\)  for both \(T = \polyk / \eta\) iterations,  with probability at least \(1 - e^{-\Omega(\log^2k)}\), it holds \\
\textbf{(a) Robust Feature Learning:} For every \(i \in [k]\), denote \(\Phi_{i}^{(T)}\) learned by SL with \(\calA_2\) as \(\Phi_{i}^{\calA_2}\), we have:
\begin{equation} \label{eq:a2_feat}
\setlength{\abovedisplayskip}{3pt}
\setlength{\belowdisplayskip}{3pt}
\setlength{\abovedisplayshortskip}{3pt}
\setlength{\belowdisplayshortskip}{3pt}
\Phi_{i}^{\calA_2} \ge \frac{0.6}{0.6-C_2-C_3} \Phi_{i}^{\text{SL}}.
\end{equation}
\textbf{(b) Improved Generalization:} For \((X,y) \sim \calD\), SL with \(\calA_2\) has the same test accuracy as vanilla SL without  \(\calA_2\), as described in \cref{main:thm_a1}(b). For \((X,y) \sim \calD_{\text{noisy}}\), we always have the network trained by SL with \(\calA_2\) has higher probability than vanilla SL to make correct prediction. Further, if \(0.6-C_2-C_3 \le O(\frac{1}{\polylogk})\), it holds:
\begin{equation}
\setlength{\abovedisplayskip}{3pt}
\setlength{\belowdisplayskip}{3pt}
\setlength{\abovedisplayshortskip}{3pt}
\setlength{\belowdisplayshortskip}{3pt}
 F^{\calA_2}_y(X)\geq \max\nolimits_{j\neq y}F^{\calA_2}_j(X)+\Omega(\log k),
\end{equation}
while for vanilla SL, with probability \(1-o(1)\):
\begin{equation}
\setlength{\abovedisplayskip}{3pt}
\setlength{\belowdisplayskip}{3pt}
\setlength{\abovedisplayshortskip}{3pt}
\setlength{\belowdisplayshortskip}{3pt}
\max\nolimits_{j\neq y}F^{\text{SL}}_j(X) \geq F^{\text{SL}}_y(X)+\Omega(\log k).
\end{equation}
\end{theorem}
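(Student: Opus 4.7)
The plan is to lift the feature-learning dynamics of vanilla SL, as developed in \citet{allen-zhu2023towards}, to the augmented objective $L_{\calA_2}^{(t)}$, and then quantify how the feature-mixing effect rescales the terminal value of $\Phi_i^{(t)}$. By Lemma \ref{lem:mixup_loss}, the two-label Mixup loss collapses to a single-label cross-entropy on an effective sample $X'_{\mathrm{mix}}$, so the same gradient bookkeeping as in vanilla SL is available; the only change is that, by Assumption \ref{assum:a2}, the patch scales inside $X'_{\mathrm{mix}}$ on the semantic side are lower-bounded by $1-C_2$ (down from $1$) and on the cross-class side are upper-bounded by $0.4+C_3$ (up from $0.4$).

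First I would reproduce the per-iteration update for $\langle w_{i,r}^{(t)}, v_{i,l}\rangle$: whenever this inner product is positive, GD adds $\eta\,\bigl(1-\logit_y(F^{(t)},X)\bigr)\cdot\bigl(\sum_{p\in \calP_{v_{i,l}}(X)} z_p\bigr)^{q-1}$ up to lower-order cross terms, with an analogous sign-flipped contribution for off-class features carrying a $1/k$ softmax weight. Summing over $r$ and over the training set, the growth rate of $\Phi_{i,l}^{(t)}$ is driven by the effective semantic mass $\sum_p z_p\in[1-C_2,O(1)]$, while the cross-class interference is bounded by $0.4+C_3$ instead of $0.4$. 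Invoking the two-phase argument of \citet{allen-zhu2023towards}, training only terminates when, on every multi-view training sample, the logit margin $F_y(X)-\max_{j\neq y}F_j(X)$ reaches $\Omega(\log k)$ and $1-\logit_y$ drops below $1/\polyk$. In vanilla SL this terminal margin is produced by an effective signal gap of $1-0.4=0.6$; under $\calA_2$ the gap shrinks to $(1-C_2)-(0.4+C_3)=0.6-C_2-C_3$, and since the same target margin must still be reached, $\Phi_i$ must grow by the inverse ratio, giving \eqref{eq:a2_feat}. Crucially, $\calA_2$ does \emph{not} selectively suppress one of $v_{i,1},v_{i,2}$, so the ``lottery-winning'' phenomenon behind Theorem \ref{main:thm_a1} persists: both networks end up with exactly one $\Phi_{i,l}$ large and the other negligible, which is precisely why their clean-data test accuracies coincide in part (b).

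For the noisy-data claim in part (b), I would decompose $F_i(X)$ for $(X,y)\sim\calD_{\mathrm{noisy}}$ into a semantic contribution controlled by $\Phi_{i,l^*}$ and a noise contribution scaling with the enlarged purely-noisy patches. Since the logit margin is linear in $\Phi_i$ whereas the test-time noise contribution is independent of training, the $\calA_2$ margin exceeds the vanilla one by the same factor $0.6/(0.6-C_2-C_3)$ on every input, already yielding the ``always higher probability'' statement. In the regime $0.6-C_2-C_3\le O(1/\polylogk)$ the $\calA_2$ margin is still $\Omega(\log k)$, while the vanilla margin -- merely $\Theta(1)$ before the noise boost -- flips negative with probability $1-o(1)$, giving both displayed inequalities.

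The main obstacle will be the terminal-scale step: pinning $\Phi_i^{\calA_2}/\Phi_i^{\text{SL}}$ at exactly the multiplicative factor $0.6/(0.6-C_2-C_3)$, rather than merely showing ``larger''. This demands a careful coupling of the two GD trajectories, guaranteeing throughout training that (i) the on-class inner products $\langle w_{i,r}, v_{i,l}\rangle$ continue to dominate off-class ones despite the amplified $C_3$ noise-feature mass, and (ii) the network never starts to treat the injected cross-class semantic patches as genuine features of class $y$, which would corrupt both the training dynamics and the clean-test-accuracy matching in (b). Controlling this cross-class interference under the enlarged $C_3$, while preserving the $\trelu$ activation thresholds that filter noise, is the technically delicate piece.
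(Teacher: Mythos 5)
Your proposal follows essentially the same route as the paper's own proof: reduce Mixup to a single-label cross-entropy via Lemma~\ref{lem:mixup_loss}, push the patch-scale shifts $1 \to 1-C_2$ and $0.4 \to 0.4+C_3$ through the gradient bookkeeping of \citet{allen-zhu2023towards}, use the terminal margin constraint to obtain $(0.6-C_2-C_3)\Phi_i^{\calA_2} \ge \Omega(\log k)$ in parallel with $0.6\Phi_i^{\text{SL}} \ge \Omega(\log k)$, note that $\calA_2$ leaves the lottery-winning set intact so clean-data accuracies match, and finally decompose $F_i(X)$ on $\calD_{\text{noisy}}$ into a semantic part controlled by $\Phi$ and a purely-noise part. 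This is precisely what the paper does via Claims~\ref{claim:a2_multi_error} and~\ref{claim:a2_individual} and the final noisy-data analysis of Appendix~F.2.

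The one place your proposed technical route diverges is the step you flag as delicate: you suggest coupling the two GD trajectories to pin down the factor $\frac{0.6}{0.6-C_2-C_3}$. The paper does not couple trajectories; it simply juxtaposes the two constraint families (Eq.~\eqref{appeq:a2_vsl} for vanilla SL and Eq.~\eqref{appeq:a2_res} for $\calA_2$), implicitly treating the hidden constants in the two $\Omega(\log k)$ bounds as comparable. Your concern about whether the ratio holds exactly is therefore well taken --- the paper leaves that step equally implicit --- but a trajectory coupling is not the mechanism used. Two small slips: the vanilla signal margin on a $\calD_{\text{noisy}}$ sample is $\Theta(\log k)$, not $\Theta(1)$; it is the $\polylogk$-scale contribution from the enlarged purely-noisy patches (Eq.~\eqref{appeq:a2_high}) that can swamp it. And the noise contribution $\sum_{r,p}\trelu(\langle w_{i,r}^{(T)},\xi_p\rangle)$ is not literally ``independent of training'' since the weights differ across the two procedures; what the paper actually uses is that its scale is bounded identically (Eqs.~\eqref{appeq:a2_low}--\eqref{appeq:a2_high}) in both cases, so the monotone comparison of correct-prediction probabilities still goes through.
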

Theorem~\ref{main:thm_a2}(a) highlights the robust feature learning enabled by SL with \(\calA_2\). Eq.~\eqref{eq:a2_feat} shows that the feature learning indicator \(\Phi_i^{\calA_2}\) for SL with \(\calA_2\) is strictly larger than \(\Phi_i^{\text{SL}}\) for vanilla SL without  \(\calA_2\). This result indicates that \(\calA_2\) enhances the network's capacity to capture class-specific features, leading to more robust feature representations. Such robustness is crucial for achieving superior generalization, particularly in scenarios involving noisy data.

Theorem~\ref{main:thm_a2}(b) further establishes the link between robust feature learning and improved generalization performance. For clean test samples \((X, y) \sim \calD\), SL with \(\calA_2\) achieves test accuracy comparable to vanilla SL. However, for noisy test samples \((X, y) \sim \calD_{\text{noisy}}\), SL with \(\calA_2\) consistently outperforms vanilla SL, with a significant performance gap when the feature mixing of \(\calA_2\) results in the scales of semantic and noisy features becoming similar. This improvement arises from the enhanced robustness of the features learned by SL with \(\calA_2\), which enables the network to better distinguish meaningful semantic features from noise.

The superior ability of SL with \(\calA_2\) to learn robust features arises from its feature mixing effect. Specifically, \(\calA_2\) reduces the scale of the original semantic features while simultaneously increasing the scale of the noisy features. This transformation creates a more challenging training environment, where minimizing the training loss necessitates the learning of robust feature representations that can adapt to noisy inputs. Consequently, the network trained with \(\calA_2\) becomes better equipped to generalize effectively to unseen and noisy test samples, underscoring the efficacy of \(\calA_2\) in improving generalization performance.

\subsection{Diverse \& Robust Feature Learning via \(\calA_3\)} \label{sec:a3}
Here, we analyze the feature learning process in SL with data augmentation \(\mathcal{A}_3\) which contains the augmentation effects of \(\mathcal{A}_1\) and \(\mathcal{A}_2\), and justify  its superior  generalization performance compared to vanilla SL without using \(\mathcal{A}_3\).   

\noindent{\textbf{Assumptions.}}  We  formalize the effect assumption of data augmentation \(\mathcal{A}_3\) which combines the effects of \(\mathcal{A}_1\) and \(\mathcal{A}_2\), namely, incorporating both partial semantic feature removal and feature mixing.

\begin{assumption} \label{assum:a3}
For a given image \((X, y) \in \mathcal{Z}\), suppose that with probability \(\pi_3\), augmentation \(\mathcal{A}_3\) mixes feature patches from a training sample \(X'\) into \(X\) while partially removing one semantic feature in \(X\). As a result, the augmented image exhibits a combined feature scale property from \cref{assum:a1} and \cref{assum:a2}.
\end{assumption}

\cref{assum:a3} captures how the combined effects of partial semantic feature removal and feature mixing from \(\mathcal{A}_3\) affect the feature distribution of training samples. See more formal statements in  \cref{appassum:a3} of \cref{appsec:aug_loss}.

\noindent{\textbf{Results.}}  Now we   state our main results: the diverse and robust feature learning of SL with \(\calA_3\), and  its superior generalization performance than vanilla SL without \(\calA_3\).

\begin{theorem} \label{main:thm_a3}
Suppose Assumptions~\ref{assum:a3} and \ref{assum1} holds. For sufficiently large $k$ and \(m = \polylogk\), by setting \(\eta \leq 1 / \polyk\), after independently running SL with  augmentation \(\calA_3\) and  vanilla SL without \(\calA_3\) for both \(T = \polyk / \eta\) iterations,  with probability at least \(1 - e^{-\Omega(\log^2k)}\), it holds:\\
\textbf{(a) Diverse and Robust Feature Learning:} For every \(i \in [k]\), denote \(\Phi_{i}^{(T)}\) learned by SL with \(\calA_3\) and vanilla SL without \(\calA_3\) as \(\Phi_{i}^{\calA_3}\) and  $\Phi_{i}^{\text{SL}}$ respectively, we have:
\begin{align}
\setlength{\abovedisplayskip}{3pt}
\setlength{\belowdisplayskip}{3pt}
\setlength{\abovedisplayshortskip}{3pt}
\setlength{\belowdisplayshortskip}{3pt}
\Phi_{i,l}^{\calA_3} \geq & \Omega(\log k), \quad (\forall l \in [2]), \label{eq:a3_feat_d}\\
\Phi_{i}^{\calA_3} \ge & \frac{0.6}{0.1+C_1/2-C_2-C_3} \Phi_{i}^{\text{SL}}.\label{eq:a3_feat_r}
\end{align}
\textbf{(b) Improved Generalization:} For both multi-view and single-view test samples \((X,y) \sim \calD\), SL with \(\calA_3\) satisfies:
\begin{equation}\label{eq:a3_test}
\setlength{\abovedisplayskip}{3pt}
\setlength{\belowdisplayskip}{3pt}
\setlength{\abovedisplayshortskip}{3pt}
\setlength{\belowdisplayshortskip}{3pt}
F_y^{\calA_3}(X) \geq \max\nolimits_{j\neq y} F_j^{\calA_3}(X) + \Omega(\log k).
\end{equation}
For \((X,y) \sim \calD_{\text{noisy}}\), the network trained by SL with \(\calA_3\) has always higher probability than vanilla SL without  \(\calA_3\) to make correct prediction. Further, if \(0.1+C_1/2-C_2-C_3 \le O(\frac{1}{\polylogk})\),  it  holds:
\begin{equation}
\setlength{\abovedisplayskip}{3pt}
\setlength{\belowdisplayskip}{3pt}
\setlength{\abovedisplayshortskip}{3pt}
\setlength{\belowdisplayshortskip}{3pt}
 F^{\calA_3}_y(X)\geq \max\nolimits_{j\neq y}F^{\calA_3}_j(X)+\Omega(\log k).
\end{equation}
\end{theorem}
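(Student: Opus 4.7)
The strategy is to combine the feature learning analyses developed for Theorems~\ref{main:thm_a1} and~\ref{main:thm_a2}, since by Assumption~\ref{assum:a3}, data augmentation $\calA_3$ inherits both the partial semantic feature removal effect and the feature mixing effect. I would recast the gradient dynamics of SL with $\calA_3$ by decomposing the expected gradient into contributions from the ``$\calA_1$-like'' and ``$\calA_2$-like'' components of the augmented samples, then track the evolution of the feature correlations $\langle w_{i,r}^{(t)}, v_{i,l}\rangle$ and the noise inner products simultaneously, mirroring the tier-based analyses from the two preceding proofs.

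For the diverse feature learning claim~\eqref{eq:a3_feat_d}, the key observation is that a positive fraction of the augmented samples still exhibit one semantic feature scaled down to $C_1$, which mimics the single-view-like samples produced by $\calA_1$ alone. Replicating the argument from Theorem~\ref{main:thm_a1}(a), this population prevents the ``lottery-winning'' dynamic in which the network locks onto a single feature per class, and forces $\Phi_{i,l}^{\calA_3}$ to grow to $\Omega(\log k)$ for both $l\in[2]$. The extra feature-mixing component of $\calA_3$ does not impair this argument because the mixed-in noise from other samples affects only the feature scales, not the qualitative necessity of learning both $v_{i,1}$ and $v_{i,2}$. For the robust feature learning claim~\eqref{eq:a3_feat_r}, I would follow the scale-accounting technique from Theorem~\ref{main:thm_a2}: the effective scale of the semantic features is the average of the scaled-down feature (scale $C_1$) and the preserved feature (scale $1$), namely $(1+C_1)/2$, further decreased by $C_2$ from feature mixing, while noisy features receive a boost of $C_3$. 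The gap between the effective semantic and noisy scales is exactly $(1+C_1)/2 - C_2 - (0.4 + C_3) = 0.1 + C_1/2 - C_2 - C_3$, which replaces the $0.6-C_2-C_3$ denominator in the feature growth recursion used to prove Theorem~\ref{main:thm_a2}(a), and carrying this substitution through yields~\eqref{eq:a3_feat_r}.

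For part~(b), the diverse feature learning~\eqref{eq:a3_feat_d} already guarantees that both semantic features are available for every class, so I would directly invoke the classification-margin analysis from Theorem~\ref{main:thm_a1}(b) to establish~\eqref{eq:a3_test} on both clean multi-view and clean single-view test samples. For noisy test samples $(X,y)\sim\calD_{\text{noisy}}$, the robust feature learning estimate~\eqref{eq:a3_feat_r} implies that the semantic terms in $F_y(X)$ dominate the noise terms more decisively than under vanilla SL; applying the margin estimate from Theorem~\ref{main:thm_a2}(b) with the new denominator $0.1+C_1/2-C_2-C_3$ yields both the uniform probability dominance over vanilla SL and the $\Omega(\log k)$ margin once $0.1+C_1/2-C_2-C_3\le O(1/\polylogk)$.

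The main technical obstacle is ensuring that the two augmentation effects compound cleanly in the gradient dynamics rather than interfering with each other: one must verify that the tier-by-tier bookkeeping of positive/negative feature correlations and noise correlations from the Theorems~\ref{main:thm_a1} and~\ref{main:thm_a2} proofs remains valid under the joint augmentation, without double-counting scale modifications. The delicate point is that $\calA_3$ modifies the same sample with both effects simultaneously, so one has to argue that (i) the probability $\pi_3$ of an $\calA_3$ augmentation is large enough that neither effect is washed out, and (ii) the cross terms produced when a partially removed feature is further mixed with inter-class noise can be absorbed into the existing ``noise'' bookkeeping without altering the sign structure of the dominant gradient updates used in both earlier proofs.
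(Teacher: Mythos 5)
Your proposal is correct and takes essentially the same route as the paper: the paper extends Induction Hypothesis~\ref{hyp} and the multi-view error / single-view individual-error claims to $\tcalZ_3$ (Claims~\ref{claim:a3_multi_error} and~\ref{claim:a3_individual}), obtaining precisely your two bounds $\Phi_{i,l}^{\calA_3}\ge\Omega(\log k)$ and $(0.1+C_1/2-C_2-C_3)\Phi_i^{\calA_3}\ge\Omega(\log k)$ — the latter coefficient arising exactly as in your effective-scale computation, via averaging the inequality over the two choices $l\in[2]$ — and then plugs into the function-approximation formula as you describe. The one cosmetic difference is that the paper does not literally decompose the gradient into ``$\calA_1$-like'' and ``$\calA_2$-like'' contributions; instead Assumption~\ref{appassum:a3} directly specifies the combined post-augmentation scales $[C_1-C_2,O(1)]$, $[1-C_2,O(1)]$ and $[\Omega(1),0.4+C_3]$ for the two semantic features and the noisy features, so a single unified pass through the gradient bookkeeping suffices and the cross-term worry you flag at the end never materializes.
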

{\cref{main:thm_a3}, with its proof in \cref{appsec:proof_a3}, highlights the complementary benefits of SL with \(\mathcal{A}_3\). Specifically, \cref{main:thm_a3}(a) shows that \(\mathcal{A}_3\) facilitates diverse feature learning by enabling the network to capture both semantic features \(v_{i,1}\) and \(v_{i,2}\) for each class \(i\), as shown in Eq.~\eqref{eq:a3_feat_d}. Furthermore, \(\mathcal{A}_3\) enhances robustness in feature representations, as indicated by Eq.~\eqref{eq:a3_feat_r}, where the feature learning indicator \(\Phi_i^{\mathcal{A}_3}\) strictly exceeds \(\Phi_i^{\text{SL}}\). These results underscore the dual effects of \(\mathcal{A}_3\), which simultaneously improves both diversity and robustness in feature learning.}

{\cref{main:thm_a3}(b) establishes that compared with vanilla SL without \(\mathcal{A}_3\),  SL with \(\mathcal{A}_3\) achieves superior generalization performance on both clean data \((X, y) \sim \mathcal{D}\) and noisy data \((X, y) \sim \mathcal{D}_{\text{noisy}}\). As discussed in Sec.~\ref{sec:a1}, the improved generalization on \(\mathcal{D}\) is driven by the diverse feature learning enabled by the partial semantic feature removal effect in \(\mathcal{A}_3\), allowing correct predictions on single-view test samples containing only a single semantic feature. Meanwhile, as indicated in Sec.~\ref{sec:a2}, the enhanced performance on \(\mathcal{D}_{\text{noisy}}\) is attributed to robust feature learning facilitated by the feature mixing effect, enabling better distinction between semantic features and noise. These complementary effects collectively enable SL with \(\mathcal{A}_3\) to achieve consistently better generalization 
across different data distributions. } 

{Moreover, the combination of partial semantic feature removal and feature mixing in \(\mathcal{A}_3\) provides mutual benefits. Compared to \(\mathcal{A}_1\) in \cref{assum:a1}, the upper bound on \(C_1\) in \(\mathcal{A}_3\) is relaxed from \(0.4\) to \(0.4 + C_2 + C_3\), as detailed in \cref{appassum:a3}. This relaxation lowers the required scale of partial semantic feature removal for diverse feature learning, making it more feasible in practice. Furthermore, from Eq.~\eqref{eq:a3_feat_r} and Eq.~\eqref{eq:a2_feat}, it is evident that \(\Phi_i^{\mathcal{A}_3}\) has a larger scale than \(\Phi_i^{\mathcal{A}_2}\) for \(i \in [k]\), provided \(C_1 < 1\). This indicates that combining partial semantic feature removal with feature mixing enhances the learning of robust features beyond what feature mixing alone can achieve.}

{\cref{main:thm_a3} provides theoretical insights into advanced data augmentation strategies, such as CutMix~\cite{yun2019cutmix} and SaliencyMix~\cite{uddin2020saliencymix}. The SL training loss with CutMix, similar to Mixup (\cref{lem:mixup_loss}), can be reformulated with variations only on the input images, as detailed in \cref{appsec:aug_loss}. This demonstrates how CutMix integrates the effects of partial semantic feature removal and feature mixing to support the diverse and robust feature learning. SaliencyMix, as an improved variant of CutMix, further leverages the saliency map of \(X'\) to ensure that the selected region mixed into original image \(X\) contains semantic features rather than purely noisy patches. By \cref{appassum:a3}, this targeted approach increases the noisy feature scale \(C_3\), which, as shown in Eq.~\eqref{eq:a3_feat_r}, enhances the learning of robust feature representations. These results highlight how SaliencyMix amplifies the complementary effects of CutMix, leading to improved generalization performance.}

\vspace{-1em}
{\paragraph{Discussion} Compared with previous works on the feature learning analysis of data augmentation \cite{shen2022data,zou2023benefits,chidambaram2023provably,oh2024provable}, this work differs from two key aspects. (a) We provide an unified theoretical analysis on both traditional data augmentations (e.g., random cropping and color jittering) and advanced techniques (e.g., CutOut, Mixup, CutMix), while previous works focus either on geometric transformations (e.g., \citet{shen2022data} on random flipping) or a limited subset of advanced augmentation methods (e.g., \citet{zou2023benefits,chidambaram2023provably} on Mixup, \citet{oh2024provable} on CutOut and CutMix). (b) This work considers a \(k\)-class classification problem for analysis and provides both feature learning and test accuracy result, while previous works either  focus on binary classification problems \cite{shen2022data,zou2023benefits,oh2024provable} or provide only the feature learning analysis without test accuracy result \cite{chidambaram2023provably}.}

\section{Experiments} \label{sec:experiments}

In this section, we validate our theoretical findings through experiments on widely used image classification benchmarks using CNNs. Specifically, we evaluate the performance of VGG~\cite{simonyan2014very} and DenseNet~\cite{huang2017densely}, two representative CNN architectures, on CIFAR-100~\cite{krizhevsky2009learning} and Tiny-ImageNet~\cite{deng2009imagenet}.

We compare the performance of vanilla SL with SL incorporating various data augmentations, including those capturing partial semantic feature removal (\(\mathcal{A}_1\)), feature mixing (\(\mathcal{A}_2\)), and their combined effects (\(\mathcal{A}_3\)). The augmentations considered include random cropping, color jittering, CutOut~\cite{devries2017improved}, Mixup~\cite{zhang2017mixup}, CutMix~\cite{yun2019cutmix}, and SaliencyMix~\cite{uddin2020saliencymix}, as well as a combined strategy using random cropping and Mixup. The primary goal is to evaluate how the different effects of augmentations align with the theoretical insights and to quantify their impact on model generalization.

\subsection{CIFAR-100 Classification}
We evaluate our theoretical findings on CIFAR-100 \cite{krizhevsky2009learning}, a challenging image classification dataset with 100 classes and 60,000 images (50,000 for training and 10,000 for testing). We train VGG-16 and DenseNet-121 using SL with various data augmentations and compare their performance against vanilla SL.

The results in Table~\ref{tab:cifar100_results} demonstrate that SL with random cropping, color jittering, and CutOut achieves better test accuracy than vanilla SL, attributed to the partial semantic feature removal effect (\(\mathcal{A}_1\)) that promotes diverse feature learning. Similarly, SL with Mixup shows improved generalization due to the feature mixing effect (\(\mathcal{A}_2\)), which enhances robust feature representation learning. Augmentations like CutMix and SaliencyMix achieve the highest generalization performance, as they leverage the combined effects of \(\mathcal{A}_1\) and \(\mathcal{A}_2\) (\(\mathcal{A}_3\)), providing complementary benefits. Notably, the combination of random cropping and Mixup also exhibits combined effects, achieving generalization performance comparable to CutMix and SaliencyMix.

\begin{table}[ht]
\centering
\caption{Test accuracy (\%) of VGG-16 and DenseNet-121 on CIFAR-100 with different data augmentations.}
\label{tab:cifar100_results}
\resizebox{\columnwidth}{!}{%
\begin{tabular}{l|l|cc}
    \toprule
\multicolumn{2}{l|}{\multirow{1}{*}{\textbf{Data Augmentation} }}  & \textbf{VGG-16} & \textbf{DenseNet-121} \\
    \midrule
\multicolumn{2}{l|}{\multirow{1}{*}{Vanilla SL}}    & 63.69 & 73.75 \\
    \midrule
  &  SL + Color Jittering    &  64.20 & 74.34 \\
 $\mathcal{A}_1$ &	SL + CutOut  & 66.86 & 77.60 \\
 & SL + Random Cropping & 69.27 & 77.83 \\
    \midrule
$\mathcal{A}_2$	 &	SL + Mixup & 67.41 & 75.55 \\
    \midrule
 &	SL + CutMix & 71.53 & 79.24 \\
$\mathcal{A}_3$	 &	SL + SaliencyMix  & 71.69 & 79.38 \\
 & SL + Random Cropping + Mixup  & 71.39 & 79.59 \\
    \bottomrule
\end{tabular}
}
\end{table}

\subsection{Tiny-ImageNet Classification}
We further validate our theoretical findings on Tiny-ImageNet~\cite{deng2009imagenet}, a dataset with 200 classes containing 500 training images and 50 validation images per class. Similar to the CIFAR-100 setup, we train VGG-16 and DenseNet-121 with various data augmentations.

Table~\ref{tab:tinyimagenet_results} shows consistent trends with CIFAR-100 results. SL with random cropping, color jittering, and CutOut performs better than vanilla SL due to the partial semantic feature removal effect (\(\mathcal{A}_1\)), which enables diverse feature learning. Mixup demonstrates superior generalization from the feature mixing effect (\(\mathcal{A}_2\)), which improves robust feature representation. CutMix and SaliencyMix achieve the highest accuracy, as their combined effects (\(\mathcal{A}_3\)) leverage complementary benefits. The combination of random cropping and Mixup also shows comparable performance to CutMix and SaliencyMix, highlighting the effectiveness of combining \(\mathcal{A}_1\) and \(\mathcal{A}_2\) to achieve complementary effects.

\begin{table}[ht]
\centering
\caption{Test accuracy (\%) of VGG-16 and DenseNet-121 on Tiny-ImageNet with different data augmentations.}
\label{tab:tinyimagenet_results}
\resizebox{\columnwidth}{!}{%
\begin{tabular}{l|l|cc}
    \toprule
\multicolumn{2}{l|}{\multirow{1}{*}{\textbf{Data Augmentation} }}  & \textbf{VGG-16} & \textbf{DenseNet-121} \\
    \midrule
\multicolumn{2}{l|}{\multirow{1}{*}{Vanilla SL}}  & 50.65 & 62.28 \\
    \midrule
  &SL + Color Jittering & 51.12 & 62.92 \\
 $\mathcal{A}_1$ &SL	+ CutOut  & 54.56 & 64.63 \\
 & SL + Random Cropping  & 56.22 & 65.32 \\
    \midrule
$\mathcal{A}_2$	 &	SL + Mixup  & 54.93 & 64.93 \\
    \midrule
 &	SL + CutMix   & 59.13 & 68.20 \\
$\mathcal{A}_3$	 &	SL + SaliencyMix  & 59.33 & 68.31 \\
 &SL + Random Cropping + Mixup  & 58.92 & 68.05 \\
    \bottomrule
\end{tabular}
}
\end{table}

\section{Conclusion}
We presented a unified theoretical framework to analyze data augmentations, with a focus on partial semantic feature removal (\(\mathcal{A}_1\)), feature mixing (\(\mathcal{A}_2\)), and their combined effects (\(\mathcal{A}_3\)). Our analysis showed that \(\mathcal{A}_1\) improves generalization by encouraging diverse feature learning, \(\mathcal{A}_2\) enhances generalization by fostering robust feature representations, and \(\mathcal{A}_3\) achieves superior generalization by integrating the benefits of both, offering insights on  augmentation designs.   Experiments validated  these theoretical insights. 

\noindent\textbf{Limitations and Future Work.}  
(a) Our analysis is limited to CNN architectures on image classification tasks. Extending the framework to other architectures, such as Vision Transformers, and to tasks like object detection or segmentation remains an open challenge.  
(b) Our experiments are conducted on relatively small-scale datasets, such as CIFAR-100 and Tiny-ImageNet, due to computational constraints. Evaluating the framework on larger datasets and in real-world scenarios would further validate its scalability.

\bibliography{cut}

\begin{thebibliography}{42}
\providecommand{\natexlab}[1]{#1}
\providecommand{\url}[1]{\texttt{#1}}
\expandafter\ifx\csname urlstyle\endcsname\relax
  \providecommand{\doi}[1]{doi: #1}\else
  \providecommand{\doi}{doi: \begingroup \urlstyle{rm}\Url}\fi

\bibitem[Allen-Zhu \& Li(2022)Allen-Zhu and Li]{allen2022feature}
Allen-Zhu, Z. and Li, Y.
\newblock Feature purification: How adversarial training performs robust deep
  learning.
\newblock In \emph{2021 IEEE 62nd Annual Symposium on Foundations of Computer
  Science (FOCS)}, pp.\  977--988. IEEE, 2022.

\bibitem[Allen-Zhu \& Li(2023)Allen-Zhu and Li]{allen-zhu2023towards}
Allen-Zhu, Z. and Li, Y.
\newblock Towards understanding ensemble, knowledge distillation and
  self-distillation in deep learning.
\newblock In \emph{The Eleventh International Conference on Learning
  Representations}, 2023.

\bibitem[Arora et~al.(2019)Arora, Du, Hu, Li, and Wang]{arora2019fine}
Arora, S., Du, S., Hu, W., Li, Z., and Wang, R.
\newblock Fine-grained analysis of optimization and generalization for
  overparameterized two-layer neural networks.
\newblock In \emph{International Conference on Machine Learning}, pp.\
  322--332. PMLR, 2019.

\bibitem[Bishop(1995)]{bishop1995training}
Bishop, C.~M.
\newblock Training with noise is equivalent to tikhonov regularization.
\newblock \emph{Neural computation}, 7\penalty0 (1):\penalty0 108--116, 1995.

\bibitem[Brown(2020)]{brown2020language}
Brown, T.~B.
\newblock Language models are few-shot learners.
\newblock \emph{arXiv preprint arXiv:2005.14165}, 2020.

\bibitem[Cao et~al.(2022)Cao, Chen, Belkin, and Gu]{cao2022benign}
Cao, Y., Chen, Z., Belkin, M., and Gu, Q.
\newblock Benign overfitting in two-layer convolutional neural networks.
\newblock \emph{Advances in neural information processing systems},
  35:\penalty0 25237--25250, 2022.

\bibitem[Carratino et~al.(2022)Carratino, Ciss{\'e}, Jenatton, and
  Vert]{carratino2022mixup}
Carratino, L., Ciss{\'e}, M., Jenatton, R., and Vert, J.-P.
\newblock On mixup regularization.
\newblock \emph{Journal of Machine Learning Research}, 23\penalty0
  (325):\penalty0 1--31, 2022.

\bibitem[Chen et~al.(2020)Chen, Dobriban, and Lee]{chen2020group}
Chen, S., Dobriban, E., and Lee, J.~H.
\newblock A group-theoretic framework for data augmentation.
\newblock \emph{Journal of Machine Learning Research}, 21\penalty0
  (245):\penalty0 1--71, 2020.

\bibitem[Chen et~al.(2024)Chen, Zhang, Kou, Chen, Hsieh, and Gu]{chen2024does}
Chen, Z., Zhang, J., Kou, Y., Chen, X., Hsieh, C.-J., and Gu, Q.
\newblock Why does sharpness-aware minimization generalize better than sgd?
\newblock \emph{Advances in neural information processing systems}, 36, 2024.

\bibitem[Chidambaram et~al.(2023)Chidambaram, Wang, Wu, and
  Ge]{chidambaram2023provably}
Chidambaram, M., Wang, X., Wu, C., and Ge, R.
\newblock Provably learning diverse features in multi-view data with midpoint
  mixup.
\newblock In \emph{International Conference on Machine Learning}, pp.\
  5563--5599. PMLR, 2023.

\bibitem[Dao et~al.(2019)Dao, Gu, Ratner, Smith, De~Sa, and
  R{\'e}]{dao2019kernel}
Dao, T., Gu, A., Ratner, A., Smith, V., De~Sa, C., and R{\'e}, C.
\newblock A kernel theory of modern data augmentation.
\newblock In \emph{International conference on machine learning}, pp.\
  1528--1537. PMLR, 2019.

\bibitem[Deng et~al.(2009)Deng, Dong, Socher, Li, Li, and
  Fei-Fei]{deng2009imagenet}
Deng, J., Dong, W., Socher, R., Li, L.-J., Li, K., and Fei-Fei, L.
\newblock Imagenet: A large-scale hierarchical image database.
\newblock In \emph{2009 IEEE conference on computer vision and pattern
  recognition}, pp.\  248--255. Ieee, 2009.

\bibitem[DeVries \& Taylor(2017)DeVries and Taylor]{devries2017improved}
DeVries, T. and Taylor, G.~W.
\newblock Improved regularization of convolutional neural networks with cutout.
\newblock \emph{arXiv preprint arXiv:1708.04552}, 2017.

\bibitem[Dosovitskiy et~al.(2020)Dosovitskiy, Beyer, Kolesnikov, Weissenborn,
  Zhai, Unterthiner, Dehghani, Minderer, Heigold, Gelly,
  et~al.]{dosovitskiy2020image}
Dosovitskiy, A., Beyer, L., Kolesnikov, A., Weissenborn, D., Zhai, X.,
  Unterthiner, T., Dehghani, M., Minderer, M., Heigold, G., Gelly, S., et~al.
\newblock An image is worth 16x16 words: Transformers for image recognition at
  scale.
\newblock \emph{arXiv preprint arXiv:2010.11929}, 2020.

\bibitem[Hanin \& Sun(2021)Hanin and Sun]{hanin2021data}
Hanin, B. and Sun, Y.
\newblock How data augmentation affects optimization for linear regression.
\newblock \emph{Advances in neural information processing systems},
  34:\penalty0 8095--8105, 2021.

\bibitem[He et~al.(2016)He, Zhang, Ren, and Sun]{he2016deep}
He, K., Zhang, X., Ren, S., and Sun, J.
\newblock Deep residual learning for image recognition.
\newblock In \emph{Proceedings of the IEEE conference on computer vision and
  pattern recognition}, pp.\  770--778, 2016.

\bibitem[He et~al.(2022)He, Chen, Xie, Li, Doll{\'a}r, and
  Girshick]{he2022masked}
He, K., Chen, X., Xie, S., Li, Y., Doll{\'a}r, P., and Girshick, R.
\newblock Masked autoencoders are scalable vision learners.
\newblock In \emph{Proceedings of the IEEE/CVF conference on computer vision
  and pattern recognition}, pp.\  16000--16009, 2022.

\bibitem[Huang et~al.(2017)Huang, Liu, Van Der~Maaten, and
  Weinberger]{huang2017densely}
Huang, G., Liu, Z., Van Der~Maaten, L., and Weinberger, K.~Q.
\newblock Densely connected convolutional networks.
\newblock In \emph{Proceedings of the IEEE conference on computer vision and
  pattern recognition}, pp.\  4700--4708, 2017.

\bibitem[Huang et~al.(2023)Huang, Shi, Cai, and Suzuki]{huang2023understanding}
Huang, W., Shi, Y., Cai, Z., and Suzuki, T.
\newblock Understanding convergence and generalization in federated learning
  through feature learning theory.
\newblock In \emph{The Twelfth International Conference on Learning
  Representations}, 2023.

\bibitem[Jelassi \& Li(2022)Jelassi and Li]{jelassi2022towards}
Jelassi, S. and Li, Y.
\newblock Towards understanding how momentum improves generalization in deep
  learning.
\newblock In \emph{International Conference on Machine Learning}, pp.\
  9965--10040. PMLR, 2022.

\bibitem[Krizhevsky et~al.(2009)Krizhevsky, Hinton,
  et~al.]{krizhevsky2009learning}
Krizhevsky, A., Hinton, G., et~al.
\newblock Learning multiple layers of features from tiny images.
\newblock 2009.

\bibitem[Krizhevsky et~al.(2012)Krizhevsky, Sutskever, and
  Hinton]{krizhevsky2012imagenet}
Krizhevsky, A., Sutskever, I., and Hinton, G.~E.
\newblock Imagenet classification with deep convolutional neural networks.
\newblock \emph{Advances in neural information processing systems}, 25, 2012.

\bibitem[Lee et~al.(2020)Lee, Yoon, Kim, Kim, Kim, So, and
  Kang]{lee2020biobert}
Lee, J., Yoon, W., Kim, S., Kim, D., Kim, S., So, C.~H., and Kang, J.
\newblock Biobert: a pre-trained biomedical language representation model for
  biomedical text mining.
\newblock \emph{Bioinformatics}, 36\penalty0 (4):\penalty0 1234--1240, 2020.

\bibitem[Li \& Yuan(2017)Li and Yuan]{li2017convergence}
Li, Y. and Yuan, Y.
\newblock Convergence analysis of two-layer neural networks with relu
  activation.
\newblock \emph{Advances in neural information processing systems}, 30, 2017.

\bibitem[Oh \& Yun(2024)Oh and Yun]{oh2024provable}
Oh, J. and Yun, C.
\newblock Provable benefit of cutout and cutmix for feature learning.
\newblock \emph{arXiv preprint arXiv:2410.23672}, 2024.

\bibitem[Park et~al.(2022)Park, Yun, and Chun]{park2022unified}
Park, C., Yun, S., and Chun, S.
\newblock A unified analysis of mixed sample data augmentation: A loss function
  perspective.
\newblock \emph{Advances in Neural Information Processing Systems},
  35:\penalty0 35504--35518, 2022.

\bibitem[Rajput et~al.(2019)Rajput, Feng, Charles, Loh, and
  Papailiopoulos]{rajput2019does}
Rajput, S., Feng, Z., Charles, Z., Loh, P.-L., and Papailiopoulos, D.
\newblock Does data augmentation lead to positive margin?
\newblock In \emph{International Conference on Machine Learning}, pp.\
  5321--5330. PMLR, 2019.

\bibitem[Rebuffi et~al.(2021)Rebuffi, Gowal, Calian, Stimberg, Wiles, and
  Mann]{rebuffi2021data}
Rebuffi, S.-A., Gowal, S., Calian, D.~A., Stimberg, F., Wiles, O., and Mann,
  T.~A.
\newblock Data augmentation can improve robustness.
\newblock \emph{Advances in Neural Information Processing Systems},
  34:\penalty0 29935--29948, 2021.

\bibitem[Selvaraju et~al.(2017)Selvaraju, Cogswell, Das, Vedantam, Parikh, and
  Batra]{selvaraju2017grad}
Selvaraju, R.~R., Cogswell, M., Das, A., Vedantam, R., Parikh, D., and Batra,
  D.
\newblock Grad-cam: Visual explanations from deep networks via gradient-based
  localization.
\newblock In \emph{Proceedings of the IEEE international conference on computer
  vision}, pp.\  618--626, 2017.

\bibitem[Shen et~al.(2022)Shen, Bubeck, and Gunasekar]{shen2022data}
Shen, R., Bubeck, S., and Gunasekar, S.
\newblock Data augmentation as feature manipulation.
\newblock In \emph{International conference on machine learning}, pp.\
  19773--19808. PMLR, 2022.

\bibitem[Shorten \& Khoshgoftaar(2019)Shorten and
  Khoshgoftaar]{shorten2019survey}
Shorten, C. and Khoshgoftaar, T.~M.
\newblock A survey on image data augmentation for deep learning.
\newblock \emph{Journal of big data}, 6\penalty0 (1):\penalty0 1--48, 2019.

\bibitem[Simonyan \& Zisserman(2014)Simonyan and Zisserman]{simonyan2014very}
Simonyan, K. and Zisserman, A.
\newblock Very deep convolutional networks for large-scale image recognition.
\newblock \emph{arXiv preprint arXiv:1409.1556}, 2014.

\bibitem[Srivastava et~al.(2014)Srivastava, Hinton, Krizhevsky, Sutskever, and
  Salakhutdinov]{srivastava2014dropout}
Srivastava, N., Hinton, G., Krizhevsky, A., Sutskever, I., and Salakhutdinov,
  R.
\newblock Dropout: a simple way to prevent neural networks from overfitting.
\newblock \emph{The journal of machine learning research}, 15\penalty0
  (1):\penalty0 1929--1958, 2014.

\bibitem[Uddin et~al.(2020)Uddin, Monira, Shin, Chung, Bae,
  et~al.]{uddin2020saliencymix}
Uddin, A., Monira, M., Shin, W., Chung, T., Bae, S.-H., et~al.
\newblock Saliencymix: A saliency guided data augmentation strategy for better
  regularization.
\newblock \emph{arXiv preprint arXiv:2006.01791}, 2020.

\bibitem[Volpi et~al.(2018)Volpi, Namkoong, Sener, Duchi, Murino, and
  Savarese]{volpi2018generalizing}
Volpi, R., Namkoong, H., Sener, O., Duchi, J.~C., Murino, V., and Savarese, S.
\newblock Generalizing to unseen domains via adversarial data augmentation.
\newblock \emph{Advances in neural information processing systems}, 31, 2018.

\bibitem[Wen \& Li(2021)Wen and Li]{wen2021toward}
Wen, Z. and Li, Y.
\newblock Toward understanding the feature learning process of self-supervised
  contrastive learning.
\newblock In \emph{International Conference on Machine Learning}, pp.\
  11112--11122. PMLR, 2021.

\bibitem[Wen \& Li(2022)Wen and Li]{wen2022mechanism}
Wen, Z. and Li, Y.
\newblock The mechanism of prediction head in non-contrastive self-supervised
  learning.
\newblock \emph{Advances in Neural Information Processing Systems},
  35:\penalty0 24794--24809, 2022.

\bibitem[Yun et~al.(2019)Yun, Han, Oh, Chun, Choe, and Yoo]{yun2019cutmix}
Yun, S., Han, D., Oh, S.~J., Chun, S., Choe, J., and Yoo, Y.
\newblock Cutmix: Regularization strategy to train strong classifiers with
  localizable features.
\newblock In \emph{Proceedings of the IEEE/CVF international conference on
  computer vision}, pp.\  6023--6032, 2019.

\bibitem[Zhang et~al.(2021)Zhang, Bengio, Hardt, Recht, and
  Vinyals]{zhang2021understanding}
Zhang, C., Bengio, S., Hardt, M., Recht, B., and Vinyals, O.
\newblock Understanding deep learning (still) requires rethinking
  generalization.
\newblock \emph{Communications of the ACM}, 64\penalty0 (3):\penalty0 107--115,
  2021.

\bibitem[Zhang(2017)]{zhang2017mixup}
Zhang, H.
\newblock mixup: Beyond empirical risk minimization.
\newblock \emph{arXiv preprint arXiv:1710.09412}, 2017.

\bibitem[Zhang et~al.(2020)Zhang, Deng, Kawaguchi, Ghorbani, and
  Zou]{zhang2020does}
Zhang, L., Deng, Z., Kawaguchi, K., Ghorbani, A., and Zou, J.
\newblock How does mixup help with robustness and generalization?
\newblock \emph{arXiv preprint arXiv:2010.04819}, 2020.

\bibitem[Zou et~al.(2023)Zou, Cao, Li, and Gu]{zou2023benefits}
Zou, D., Cao, Y., Li, Y., and Gu, Q.
\newblock The benefits of mixup for feature learning.
\newblock In \emph{International Conference on Machine Learning}, pp.\
  43423--43479. PMLR, 2023.

\end{thebibliography}
\bibliographystyle{icml2025}

\newpage
\appendix
\onecolumn

\section{Analysis of Augmented Training Loss} \label{appsec:aug_analysis}
In this section, we formally define the data distribution and relevant assumptions underlying our analysis. We then examine the effects of data augmentation, focusing on partial semantic feature removal and feature mixing, and derive the formulation of the augmented supervised training loss.

\subsection{Data Distribution and Assumption} \label{appsec:data_assum}
To formally define the data distribution, we introduce the global constant \(C_p\), the noisy feature parameter \(s = \operatorname{polylog}(k)\), \(\gamma = \frac{1}{k^{1.5}}\), and the random noise parameter \(\sigma_p = \frac{1}{\sqrt{d} \operatorname{polylog}(k)}\). A noisy feature implies that a sample \(X\) from class \(y\) primarily exhibits the semantic feature \(v_{y,l}\) (with \(l \in [2]\)), while also including minor scaled features \(v_{j,l}\) (with \(j \neq y, l \in [2]\)) from other classes.

\begin{definition}[Data distributions for single-view \(\mathcal{D}_s\) and multi-view data \(\mathcal{D}_m\) ~\cite{allen-zhu2023towards}]
\label{def:data2}
Data distribution \(\mathcal{D}\) consists of samples from multi-view distribution \(\mathcal{D}_m\) with probability \(1-\mu\) and from single-view distribution \(\mathcal{D}_s\) with probability \(\mu\). A sample \((X, y) \sim \mathcal{D}\) is generated by randomly and uniformly selecting a label \(y \in [k]\) and creating data \(X\) as follows:
\begin{itemize}
    \item[1)] Sample a set of features \(\mathcal{V}'\) uniformly at random from \(\{v_{i,1}, v_{i,2}\}_{i \neq y}\), each with probability \(\frac{s}{k}\).
    \item[2)] Define \(\mathcal{V}(X) = \mathcal{V}' \cup \{v_{y,1}, v_{y,2}\}\) as the set of feature vectors used in \(X\).
    \item[3)] For each \(v \in \mathcal{V}(X)\), select \(C_p\) disjoint patches in \([P]\), denoted as \(\mathcal{P}_v(X)\) (the distribution of these patches can be arbitrary). Let \(\mathcal{P}(X) = \cup_{v \in \mathcal{V}(X)} \mathcal{P}_v(X)\).
    \item[4)] If \(\mathcal{D} = \mathcal{D}_s\) is the single-view distribution, select a value \(l^* = l^*(X) \in [2]\) uniformly at random.
    \item[5)] For each \(p \in \mathcal{P}_v(X)\) with \(v \in \mathcal{V}(X)\), given feature noise \(\alpha_{p,v'} \in [0, \gamma]\), set feature patches
    \[
    x_p = z_p v + \sum_{v' \in \mathcal{V}} \alpha_{p,v'} v' + \xi_p,
    \]
    where \(\xi_p \sim \mathcal{N}(0, \sigma_p^2 \mathbf{I})\) is independent Gaussian noise. The coefficients \(z_p \geq 0\) satisfy:
    \begin{itemize}
        \item For multi-view data \((X, y) \in \mathcal{D}_m\), \(\sum_{p \in \mathcal{P}_v(X)} z_p \in [1, O(1)]\) for \(v \in \{v_{y,1}, v_{y,2}\}\), and the marginal distribution of \(\sum_{p \in \mathcal{P}_v(X)} z_p\) is left-close. For \(v \in \mathcal{V}(X) \setminus \{v_{y,1}, v_{y,2}\}\), \(\sum_{p \in \mathcal{P}_v(X)} z_p \in [\Omega(1), 0.4]\), and the marginal distribution is right-close.
        \item For single-view data \((X, y) \in \mathcal{D}_s\), \(\sum_{p \in \mathcal{P}_v(X)} z_p \in [1, O(1)]\) for \(v = v_{y,l^*}\). For \(v = v_{y,3-l^*}\), \(\sum_{p \in \mathcal{P}_v(X)} z_p \in [\rho, O(\rho)]\) with \(\rho = k^{-0.1}\). For \(v \in \mathcal{V}(X) \setminus \{v_{y,1}, v_{y,2}\}\), \(\sum_{p \in \mathcal{P}_v(X)} z_p \in [\Omega(\Gamma), \Gamma]\) with \(\Gamma = \frac{1}{\operatorname{polylog}(k)}\).
    \end{itemize}
    \item[6)] For each \(p \in [P] \setminus \mathcal{P}(X)\), set purely noise patches as
    \[
    x_p = \sum_{v' \in \mathcal{V}} \alpha_{p,v'} v' + \xi_p,
    \]
    where \(\xi_p \sim \mathcal{N}(0, \frac{\gamma^2 k^2}{d} \mathbf{I})\) is independent Gaussian noise, and \(\alpha_{p,v'} \in [0, \gamma]\) are the feature noise coefficients.
\end{itemize}
\end{definition}

Based on the definition of the data distribution \(\mathcal{D}\), we define the training dataset \(\mathcal{Z}\) as follows:
\begin{definition}[Training dataset]
We assume that the data distribution \(\mathcal{D}\) consists of samples from \(\mathcal{D}_m\) with probability \(1-\mu\) and samples from \(\mathcal{D}_s\) with probability \(\mu = \frac{1}{\operatorname{poly}(k)}\). Given \(N\) training samples, the training dataset is denoted as \(\mathcal{Z} = \mathcal{Z}_m \cup \mathcal{Z}_s\), where \(\mathcal{Z}_m\) and \(\mathcal{Z}_s\) represent the multi-view and single-view datasets, respectively. We write \((X, y) \sim \mathcal{Z}\) to denote a sample \((X, y)\) drawn uniformly at random from the empirical dataset. Furthermore, the number of multi-view and single-view samples in the training dataset are denoted as \(N_m = |\mathcal{Z}_m|\) and \(N_s = |\mathcal{Z}_s|\), respectively.
\end{definition}
Below, we formally define the noisy data distribution \(\calD_{\text{noisy}}\):
\begin{definition}[Noisy data distribution \(\calD_{\text{noisy}}\)]
\label{def:noisy_data}
The noisy data distribution \(\calD_{\text{noisy}}\) shares the same distribution for semantic and noisy feature patches as in \cref{def:data2} (\(x_p \text{ for } p \in \calP(X)\)), but introduces a higher noise level for purely noise patches (\(x_p \text{ for } p \in [P]\setminus\calP(X)\)). Specifically, for \((X, y) \sim \calD_{\text{noisy}}\), the purely noise patches \(x_p\) for \(p \in [P]\setminus\calP(X)\) are defined as:
\[
x_p = \sum_{v' \in \mathcal{V}} \alpha_{p,v'} v' + \xi_p,
\]
where \(\alpha_{p,v'} \in [0, \gamma]\) are the feature noise coefficients (of the same scale as in \cref{def:data2}), and \(\xi_p \sim \mathcal{N}(0, \sigma_n \mathbf{I})\) represents Gaussian noise with an increased scale \(\sigma_n = \frac{\polylogk}{d}\).
\end{definition}

Now we introduce the smoothed ReLU function \(\trelu\)~\citep{allen-zhu2023towards} in detail: for an integer \(q \geq 3\) and a threshold \(\varrho = \frac{1}{\operatorname{polylog}(k)}\),
\begin{equation} \label{appeq:trelu}
\trelu(z) =
\begin{cases} 
0, & \text{if } z \leq 0, \\
\frac{z^q}{q \varrho^{q-1}}, & \text{if } z \in [0, \varrho], \\
z - \left(1 - \frac{1}{q}\right) \varrho, & \text{if } z \geq \varrho.
\end{cases}
\end{equation}
This configuration ensures a linear relationship for large \(z\) values, while significantly reducing the impact of low-magnitude noises for small \(z\), thereby enhancing the separation of true features from noise.

Since we follow the proof framework of~\citet{allen-zhu2023towards}, we adopt the same parameter assumptions as in their work, with the exception of an additional probability parameter assumption on \(\pi_1\), \(\pi_2\), and \(\pi_3\), which is introduced by our assumptions on data augmentation \(\mathcal{A}_1\), \(\mathcal{A}_2\), and \(\mathcal{A}_3\), respectively. Below, we provide the detailed parameter assumptions:

\begin{parameter}
\label{assp:para}
    We assume that 
    \begin{itemize}
        \item $q\geq 3$ and $\sigma_0^{q-2}=1/k$, where $\sigma_0$ gives the initialization magnitude. 
        \item $\gamma\leq \tilde{O}(\frac{\sigma_0}{k})$ and $\gamma^q \leq \tilde{\Theta}(\frac{1}{k^{q-1}mP})$, where $\gamma$ controls the feature noise.
        \item The size of single-view labeled training data $N_{s} = \tilde{o}(k/\rho)$ and $N_{s}\leq \frac{k^2}{s} \rho ^{q-1}$.
        \item $N\geq N_{s}\cdot \polyk$, $\sqrt{d}\geq \eta T\cdot \polyk$.
        \item $m = \polylogk$.
        \item \(\pi_1, \pi_2, \pi_3 \ge \frac{1}{\polylogk}\).
    \end{itemize}
\end{parameter}

\subsection{Data Augmentation and Loss} \label{appsec:aug_loss}

\paragraph{Partial Semantic Feature Removal Effect of \(\mathcal{A}_1\)} 
As described in \cref{assum:a1}, the data augmentation \(\mathcal{A}_1\) has a probability \(\pi_1\) of significantly reducing the scale of one semantic feature in the original training sample \(X\). This assumption is supported by the observation, illustrated in Figure~\ref{fig:gcam}, that different semantic features in a multi-view image are spatially distinct. Consequently, the likelihood of data augmentation techniques such as random cropping or CutOut removing both features simultaneously is substantially lower than the likelihood of removing only one. To simplify the theoretical analysis, we therefore assume that \(\mathcal{A}_1\) primarily targets a single semantic feature for removal in each instance.

\paragraph{Loss Function of SL with \(\calA_1\)}
Recall that \(\logit_i(F, X) := \frac{e^{F_i(X)}}{\sum_{j \in [k]} e^{F_j(X)}}\), and denote the cross-entropy loss as \(L(F; X, y) = -\log \logit_y(F, X)\). After applying the data augmentation \(\mathcal{A}_1\) to the training samples, the augmented supervised loss function \(L_{\mathcal{A}_1}\) at training step \(t\) becomes:

\begin{equation} \label{appeq:a1_loss}
\begin{aligned}
    L_{\calA_1}^{(t)}
    =\,& \bbE_{(X,y)\sim\calZ}\left[L(F^{(t)}; \calA_1(X), y)\right] \\ =\,&
    \frac{N-N_s}{N}\, \bbE_{(X,y)\sim\calZ_m}\left[L(F^{(t)}; \calA_1(X), y)\right] + \frac{N_s}{N}\, \bbE_{(X,y)\sim\calZ_s}\left[L(F^{(t)}; \calA_1(X), y)\right]
    \\ =\,& \frac{(N-N_s)(1-\pi_1)}{N}\, \bbE_{(X,y)\sim\calZ_m '}\left[L(F^{(t)}; X, y)\right] + \frac{N_s(1-\pi_1)}{N}\, \bbE_{(X,y)\sim\calZ_s '}\left[L(F^{(t)}; X, y)\right] \\ & + \frac{(N-N_s)\pi_1}{N} \, \bbE_{(X,y)\sim\tcalZ_m}\left[L(F^{(t)}; X, y)\right] + \frac{N_s\pi_1}{N}\, \bbE_{(X,y)\sim\tcalZ_s}\left[L(F^{(t)}; X, y)\right].
\end{aligned}
\end{equation}
Here, \(\mathcal{Z}_m'\) represents the subset of multi-view data in \(\mathcal{Z}_m\) where \(\mathcal{A}_1\) manipulates only the ``noises" without affecting the feature distribution. In contrast, \(\tilde{\mathcal{Z}}_m = \tilde{\mathcal{Z}}_m^1 \cup \tilde{\mathcal{Z}}_m^2\) contains the multi-view data in \(\mathcal{Z}_m\) where \(\mathcal{A}_1\) significantly reduces one of the semantic features, as described in \cref{assum:a1}. Specifically, \(\tilde{\mathcal{Z}}_m^1\) denotes the subset where \(v_{y,2}\) is largely removed, and \(\tilde{\mathcal{Z}}_m^2\) denotes the subset where \(v_{y,1}\) is largely removed.

Similarly, \(\mathcal{Z}_s'\) represents the subset of single-view data in \(\mathcal{Z}_s\) where \(\mathcal{A}_1\) manipulates only the ``noises", and \(\tilde{\mathcal{Z}}_s\) contains the single-view data in \(\mathcal{Z}_s\) where \(\mathcal{A}_1\) significantly reduces the single semantic feature present. The augmented training dataset is then defined as \(\tilde{\mathcal{Z}}_1 = \mathcal{Z}_m' \cup \tilde{\mathcal{Z}}_m \cup \mathcal{Z}_s' \cup \tilde{\mathcal{Z}}_s\).

\paragraph{Feature Mixing Effect of \(\calA_2\)}
As outlined in \cref{assum:a2}, the data augmentation \(\mathcal{A}_2\) has a probability \(\pi_2\) of decreasing the scale of original semantic features while increasing the scale of noisy features. To illustrate this feature mixing effect of \(\mathcal{A}_2\), we use Mixup as an example and provide a proof for \cref{lem:mixup_loss}.

In Mixup, two data samples \((X_i, y_i), (X_j, y_j) \in \mathcal{Z}\) (where \(i \neq j \in [N]\)) are randomly selected to generate an augmented sample \((\tilde{X}_{i,j}(\lambda), \tilde{y}_{i,j}(\lambda))\), defined as:
\begin{equation} \label{appeq:data_mixup}
\begin{aligned}
\tilde{X}_{i,j}(\lambda) &= \lambda X_i + (1 - \lambda) X_j, \\
\tilde{y}_{i,j}(\lambda) &= \lambda y_i + (1 -\lambda) y_j,
\end{aligned}
\end{equation}
where \(\lambda \sim \text{Beta}(\alpha, \alpha)\) is a mixing coefficient sampled from a Beta distribution with \(\alpha > 0\).

Denoting the cross-entropy loss as \(L(F; X, y) = -\log \logit_y(F, X)\), the loss function for SL with Mixup can be formulated as:
\begin{equation}
L_{\text{Mixup}} = \frac{1}{N^2} \sum_{i,j=1}^{N} \mathbb{E}_{\lambda \sim \Beta(\alpha, \alpha) }\mathbb{E}_{M(\lambda)} \, [L(F; \tilde{X}_{i,j}(\lambda), \tilde{y}_{i,j}(\lambda))].
\end{equation}
Now, we reformulate this loss function using a similar approach to that of \citet{carratino2022mixup, zhang2020does}:
\begin{align*}
L_{\text{Mixup}} =& \frac{1}{N^2} \sum_{i,j=1}^{N} \mathbb{E}_{\lambda \sim \Beta(\alpha, \alpha)}[\lambda \cdot L(F; \tilde{X}_{i,j}(\lambda), y_i) + (1-\lambda) \cdot L(F; \tilde{X}_{i,j}(\lambda), y_j)] \\
=& \frac{1}{N^2} \sum_{i,j=1}^{N} \mathbb{E}_{\lambda \sim \Beta(\alpha, \alpha)}\mathbb{E}_{B \sim \Bern(\lambda)} [B \cdot L(F; \tilde{X}_{i,j}(\lambda), y_i) + (1-B)\cdot  L(F; \tilde{X}_{i,j}(\lambda), y_j)] \\
=& \frac{1}{N^2} \sum_{i,j=1}^{N} \big\{ \frac{1}{2} \mathbb{E}_{\lambda \sim \Beta(\alpha+1, \alpha)} [L(F; \tilde{X}_{i,j}(\lambda), y_i)] + \frac{1}{2} \mathbb{E}_{\lambda \sim \Beta(\alpha, \alpha+1)} [L(F; \tilde{X}_{i,j}(\lambda), y_i)]\big\}.
\end{align*}

The above equation holds because \(\lambda \sim \Beta(\alpha, \alpha)\) follows a Beta distribution, and \(B | \lambda \sim \Bern(\lambda)\) follows a Bernoulli distribution. By conjugacy, we can exchange them in order and have \(
B \sim \Bern ( \frac{1}{2} ),\, \lambda | B \sim \Beta(\alpha + B, \alpha + 1 - B)\).  

Using the fact that \(1 - \Beta(\alpha, \alpha + 1)\) and \(\Beta(\alpha + 1, \alpha)\) share the same distribution, and observing that \(\tilde{X}_{ij}(1 - \lambda) = \tilde{X}_{ji}(\lambda)\), we derive:
\begin{align*}
\sum_{i,j=1}^N \mathbb{E}_{\lambda \sim \Beta(\alpha,\alpha+1)} [L(F; \tilde{X}_{i,j}(\lambda), y_j)]
= \sum_{i,j=1}^N \mathbb{E}_{\lambda \sim \Beta(\alpha+1,\alpha)} [ L(F; \tilde{X}_{i,j}(\lambda), y_i)].
\end{align*}
Therefore, we have 
\begin{equation}\label{appeq:mixup_loss}
\begin{aligned}
L_{\text{Mixup}} &= \frac{1}{N} \sum_{i=1}^{N} \mathbb{E}_{\lambda \sim \Beta(\alpha + 1, \alpha)} \mathbb{E}_{X' \sim \calZ_X } \, [L(F; \lambda X_i + (1 - \lambda) X', y_i)] \\
&= \mathbb{E}_{(X,y) \sim \calZ } \mathbb{E}_{X' \sim \calZ_X } \mathbb{E}_{\lambda \sim \Beta(\alpha + 1, \alpha)} [L(F; \lambda X + (1 - \lambda) X', y)],
\end{aligned}
\end{equation}
where \(\mathcal{Z}_X\) denotes the empirical distribution of training images. Equation~\eqref{appeq:mixup_loss} directly leads to \cref{lem:mixup_loss}. Based on the reformulated augmented loss in \eqref{appeq:mixup_loss}, we can characterize the feature mixing effect of \(\mathcal{A}_2\) as described in \cref{assum:a2}.

\paragraph{Loss function of SL with \(\calA_2\)} After applying the data augmentation \(\mathcal{A}_2\) to the training samples, the augmented supervised loss function \(L_{\mathcal{A}_2}\) at training step \(t\) is given by:
\begin{equation}\label{appeq:a2_loss}
\begin{aligned}
    L_{\calA_2}^{(t)}
    =\,& \bbE_{(X,y)\sim\calZ}\left[L(F^{(t)}; \calA_2(X), y)\right] \\ =\,&
    \frac{N-N_s}{N}\, \bbE_{(X,y)\sim\calZ_m}\left[L(F^{(t)}; \calA_2(X), y)\right] + \frac{N_s}{N}\, \bbE_{(X,y)\sim\calZ_s}\left[L(F^{(t)}; \calA_2(X), y)\right]
    \\ =\,& \frac{(N-N_s)(1-\pi_2)}{N}\, \bbE_{(X,y)\sim\calZ_m '}\left[L(F^{(t)}; X, y)\right] + \frac{N_s(1-\pi_2)}{N}\, \bbE_{(X,y)\sim\calZ_s '}\left[L(F^{(t)}; X, y)\right] \\ & + \frac{(N-N_s)\pi_2}{N} \, \bbE_{(X,y)\sim\tcalZ_m}\left[L(F^{(t)}; X, y)\right] + \frac{N_s\pi_2}{N}\, \bbE_{(X,y)\sim\tcalZ_s}\left[L(F^{(t)}; X, y)\right].
\end{aligned}
\end{equation}
Here, with a slight abuse of notation, we continue to use \(\mathcal{Z}_m'\) and \(\mathcal{Z}_s'\) to represent the training images where no features from other images are mixed in (only ``noises" are mixed in). Similarly, we use \(\tilde{\mathcal{Z}}_m\) and \(\tilde{\mathcal{Z}}_s\) to denote the training images with features from other images mixed in. The augmented training dataset is then defined as \(\tilde{\mathcal{Z}}_2 = \mathcal{Z}_m' \cup \tilde{\mathcal{Z}}_m \cup \mathcal{Z}_s' \cup \tilde{\mathcal{Z}}_s\).

\paragraph{Combined Effects of \(\calA_3\)}
As outlined in \cref{assum:a3}, the data augmentation \(\mathcal{A}_3\) combines the effects of partial semantic feature removal and feature mixing. Specifically, with a probability \(\pi_3\), \(\mathcal{A}_3\) partially removes one semantic feature, while simultaneously decreases the scale of original semantic features and increases the scale of noisy features. Below, we present the complete version of \cref{assum:a3}.

\begin{assumption} \label{appassum:a3}
For a given image \((X, y) \in \mathcal{Z}\), suppose the data augmentation \(\mathcal{A}_3\) has a probability \(\pi_3\) of mixing feature patches from another training sample \(X'\) into \(X\), while simultaneously partially removing one semantic feature. Under this condition, with probability \(\pi_3\), the augmented image satisfies the following: \\
\textbf{(a)} If \((X, y) \in \mathcal{Z}_s\), then \(\sum_{p \in \mathcal{P}_{v}(X)} z_p \in [C_1 - C_2, O(1)]\) for \(v = v_{y, l^*}\), where \(v_{y, l^*}\) is the semantic feature contained in \(X\). Additionally, \(\sum_{p \in \mathcal{P}_{v}(X)} z_p \in [\tilde{\Omega}(1), C_3]\) for \(v \in \mathcal{V}(X) \setminus \{v_{y,1}, v_{y,2}\}\). \\
\textbf{(b)} If \((X, y) \in \mathcal{Z}_m\), then \(\sum_{p \in \mathcal{P}_{v}(X)} z_p \in [C_1 - C_2, O(1)]\) for \(v = v_{y, l}\) (\(l \in [2]\) is randomly chosen), while it holds that \(\sum_{p \in \mathcal{P}_{v}(X)} z_p \in [1 - C_2, O(1)]\) for \(v = v_{y, 3-l}\). Additionally, \(\sum_{p \in \mathcal{P}_{v}(X)} z_p \in [\Omega(1), 0.4 + C_3]\) for \(v \in \mathcal{V}(X) \setminus \{v_{y,1}, v_{y,2}\}\).\\
Here, \(C_1\) denotes the scale of the semantic feature after partial semantic feature removal, \(C_2\) represents the scale reduction of semantic features due to feature mixing, and \(C_3\) signifies the scale increase of noisy features after feature mixing. The constants \(C_1\), \(C_2\), and \(C_3\) satisfy the following conditions: \(C_1 > C_2 + C_3\), \(C_2 + C_3 < 0.1 + \frac{C_1}{2}\), and \(C_1 < 0.4 + C_2 + C_3 \).
\end{assumption}
\cref{appassum:a3} encapsulates the combined effects of partial semantic feature removal and feature mixing introduced by the data augmentation \(\mathcal{A}_3\), highlighting their influence on the feature distribution within the training samples. The conditions \(C_1 > C_2 + C_3\) and \(C_2 + C_3 < 0.1 + \frac{C_1}{2}\) ensure that the scale of class-specific semantic features remains larger than the scale of noisy features after applying \(\mathcal{A}_3\). Additionally, the condition \(C_1 < 0.4 + C_2 + C_3\) is analogous to the requirement for \(C_1\) in \cref{assum:a1}, guaranteeing that one semantic feature is significantly reduced, thereby encouraging the network to learn comprehensive semantic features to optimize the objective. Compared to \cref{assum:a1}, the upper bound for \(C_1\) is relaxed due to the inclusion of the feature mixing effect in \(\mathcal{A}_3\).

To illustrate this combined effect of \(\mathcal{A}_3\), we use CutMix as an example. CutMix randomly selects two data samples \((X_i, y_i),(X_j,y_j) \in \calZ (i\neq j \in [N])\) and generate an augmented sample \((\tilde{X}_{i,j}(\lambda), \tilde{y}_{i,j}(\lambda))\) defined as follows:
\begin{equation}
\begin{aligned}
\tilde{X}_{i,j}(\lambda) &= M(\lambda) \odot X_i + (1 - M(\lambda)) \odot X_j, \\
\tilde{y}_{i,j}(\lambda) &= \lambda y_i + (1 -\lambda) y_j,
\end{aligned}
\end{equation}
where \(\lambda\) is a random variable subject to the Beta distribution \(\lambda \sim \Beta(\alpha,\alpha)\), and \(M(\lambda)\) denotes a binary mask indicating which patches are to remove and fill in from two images. Specifically, in CutMix, \(M(\lambda)\) is a rectangle region randomly positioned whose width and height are \(\sqrt{1-\lambda}\) times of the original image, and \(\tilde{X}_{i,j}(\lambda)\) is \(X_i\) removes this square region with the same region of \(X_j\) filled in.

Then, we can formulate the loss function for SL with CutMix as
\begin{equation}
L_{\text{CutMix}} = \frac{1}{N^2} \sum_{i,j=1}^{N} \mathbb{E}_{\lambda \sim \Beta(\alpha, \beta) }\mathbb{E}_{M(\lambda)} \, [L(F; \tilde{X}_{i,j}(\lambda), \tilde{y}_{i,j}(\lambda))].
\end{equation}
Similarly to the analysis of \(L_{\text{Mixup}}\) in the above paragraph \emph{Feature Mixing Effect of \(\mathcal{A}_2\)}, we can reformulate \(L_{\text{CutMix}}\) as:
\begin{equation}\label{appeq:cutmix_loss}
L_{\text{CutMix}} = \mathbb{E}_{(X, y) \sim \mathcal{Z}} \mathbb{E}_{X' \sim \mathcal{Z}_X} \mathbb{E}_{\lambda \sim \Beta(\alpha + 1, \alpha)} \mathbb{E}_{M(\lambda)} 
\, [L(F; M(\lambda) \odot X + (1 - M(\lambda)) \odot X', y)],
\end{equation}
where \(\mathcal{Z}_X\) denotes the empirical distribution of training images. Based on the reformulated augmented loss in \eqref{appeq:cutmix_loss}, we can characterize the combined partial semantic feature removal and feature mixing effect of \(\mathcal{A}_3\) as described in \cref{assum:a3}.

\paragraph{Loss function of SL with \(\calA_3\)}
After applying the data augmentation \(\mathcal{A}_3\) to the training samples, the augmented supervised loss function \(L_{\mathcal{A}_3}\) at training step \(t\) becomes:

\begin{equation} \label{appeq:a3_loss}
\begin{aligned}
    L_{\calA_3}^{(t)}
    =\,& \bbE_{(X,y)\sim\calZ}\left[L(F^{(t)}; \calA_3(X), y)\right] \\ =\,&
    \frac{N-N_s}{N}\, \bbE_{(X,y)\sim\calZ_m}\left[L(F^{(t)}; \calA_3(X), y)\right] + \frac{N_s}{N}\, \bbE_{(X,y)\sim\calZ_s}\left[L(F^{(t)}; \calA_3(X), y)\right]
    \\ =\,& \frac{(N-N_s)(1-\pi_3)}{N}\, \bbE_{(X,y)\sim\calZ_m '}\left[L(F^{(t)}; X, y)\right] + \frac{N_s(1-\pi_3)}{N}\, \bbE_{(X,y)\sim\calZ_s '}\left[L(F^{(t)}; X, y)\right] \\ & + \frac{(N-N_s)\pi_3}{N} \, \bbE_{(X,y)\sim\tcalZ_m}\left[L(F^{(t)}; X, y)\right] + \frac{N_s\pi_3}{N}\, \bbE_{(X,y)\sim\tcalZ_s}\left[L(F^{(t)}; X, y)\right].
\end{aligned}
\end{equation}
Here, \(\mathcal{Z}_m'\) represents the subset of multi-view data in \(\mathcal{Z}_m\) where \(\mathcal{A}_3\) manipulates only the ``noises" without significantly affecting the feature distribution. In contrast, \(\tilde{\mathcal{Z}}_m = \tilde{\mathcal{Z}}_m^1 \cup \tilde{\mathcal{Z}}_m^2\) contains the multi-view data in \(\mathcal{Z}_m\) where \(\mathcal{A}_3\) combines feature mixing with partial removal of semantic features, as described in \cref{appassum:a3}. Specifically, \(\tilde{\mathcal{Z}}_m^1\) denotes the subset where \(v_{y,2}\) is largely removed, and \(\tilde{\mathcal{Z}}_m^2\) denotes the subset where \(v_{y,1}\) is largely removed.

Similarly, \(\mathcal{Z}_s'\) represents the subset of single-view data in \(\mathcal{Z}_s\) where \(\mathcal{A}_3\) manipulates only the ``noises" without significantly affecting the semantic feature. In contrast, \(\tilde{\mathcal{Z}}_s\) contains the single-view data in \(\mathcal{Z}_s\) where \(\mathcal{A}_3\) significantly reduces the single semantic feature present while mixing in noisy features. The augmented training dataset is then defined as \(\tilde{\mathcal{Z}}_3 = \mathcal{Z}_m' \cup \tilde{\mathcal{Z}}_m \cup \mathcal{Z}_s' \cup \tilde{\mathcal{Z}}_s\).

\section{Results on Vanilla SL}
\label{appsec:sl}
In this section, we first recall the results of vanilla SL that were derived in~\citet{allen-zhu2023towards}. Before showing their main results, we first introduce some necessary notations. For every $i\in [k]$ and $l\in [2]$, define $\Phi_{i,l}^{(t)}: = \sum_{r\in[m]} [\langle w_{i,r}^{(t)}, v_{i,l}\rangle]^{+}$ and $\Phi_{i}^{(t)}:= \sum_{l\in[2]} \Phi_{i,l}^{(t)}$. Define
\begin{align*}
    \Lambda_{i}^{(t)} := \max_{r\in[m],l\in[2]} [\langle w_{i,r}^{(t)}, v_{i,l}\rangle]^{+}\quad \mbox{and}\quad \Lambda_{i,l}^{(t)} := \max_{r\in[m]} [\langle w_{i,r}^{(t)}, v_{i,l} \rangle]^{+},
\end{align*}
where $\Lambda_{i,l}$ indicates the largest correlation between the feature vector $v_{i,l}$ and all neurons $ w_{i,r}\; (r\in[m])$ from class $i$.  

Then, suppose we denote by $S_{i,l} := \mathbb{E}_{(X,y) \sim \mathcal{Z}_m} [ \bbI_{y=i} \sum_{p \in P_{v_{i,l}}(X)} z_p^q ]$, we can define the ``view lottery winning" set as follows:
\[
\mathcal{M} := \left\{ (i, l) \in [k] \times [2] \,\middle|\, \Lambda_{i,l}^{(0)} \geq \Lambda_{i, 3-l}^{(0)} \left( \frac{S_{i, 3-l}}{S_{i,l}} \right)^{\frac{1}{q-2}} \left( 1 + \frac{1}{\log^2(m)} \right) \right\}.
\]

The intuition behind $\calM$ is that, subject to training dataset distribution and model initialization, if $(i,l)\in\calM$, then the feature $v_{i,l}$ will be learned by the model during vanilla supervised training and the feature $v_{i,3-l}$ will be missed. 



The ``lottery winning" set $\calM$ satisfies the following property (refer to the Proposition C.2. of~\citet{allen-zhu2023towards}):
\begin{proposition}
    Suppose $m\leq\polyk$. For every $i\in[k]$, $\Pr[(i,1)\in\calM~\mbox{or } (i,2)\in\calM]\geq 1-o(1)$.
\end{proposition}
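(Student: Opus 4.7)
The plan is to reduce the disjunction to an anti-concentration statement about $\log(\Lambda_{i,1}^{(0)}/\Lambda_{i,2}^{(0)})$ and then bound the density of this log-ratio by $O(\log m)$ using a standard extreme-value calculation. First, setting $\tau := (S_{i,2}/S_{i,1})^{1/(q-2)}$, a direct rearrangement of the definition of $\calM$ shows that $(i,1)\in\calM$ is equivalent to $\Lambda_{i,1}^{(0)}/\Lambda_{i,2}^{(0)} \geq \tau(1+1/\log^2 m)$, while $(i,2)\in\calM$ is equivalent to $\Lambda_{i,1}^{(0)}/\Lambda_{i,2}^{(0)} \leq \tau/(1+1/\log^2 m)$. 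Hence the failure event $\{(i,1)\notin\calM\}\cap\{(i,2)\notin\calM\}$ coincides with $\log(\Lambda_{i,1}^{(0)}/\Lambda_{i,2}^{(0)})$ lying in an interval $I$ centered at $\log\tau$ of length $2\log(1+1/\log^2 m) = \Theta(1/\log^2 m)$, and the task reduces to upper bounding $\Pr[\log(\Lambda_{i,1}^{(0)}/\Lambda_{i,2}^{(0)}) \in I]$ by $o(1)$.

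Second, I would exploit that $\Lambda_{i,1}^{(0)}$ and $\Lambda_{i,2}^{(0)}$ are independent. Because $v_{i,1}\perp v_{i,2}$ and each $w_{i,r}^{(0)}\sim\calN(0,\sigma_0^2\bI)$ independently across $r$, the projections $\langle w_{i,r}^{(0)}, v_{i,l}\rangle$ for $l\in\{1,2\}$ are independent $\calN(0,\sigma_0^2)$ random variables. Consequently $\Lambda_{i,1}^{(0)}$ and $\Lambda_{i,2}^{(0)}$ are independent maxima of $m$ i.i.d.\ (positive-part) half-normals of scale $\sigma_0$. Conditioning on $\Lambda_{i,2}^{(0)}$ and using independence,
$$
\Pr\bigl[\log(\Lambda_{i,1}^{(0)}/\Lambda_{i,2}^{(0)}) \in I\bigr] \le |I| \cdot \sup_x f_{\log \Lambda_{i,1}^{(0)}}(x),
$$
so it suffices to bound the density of $\log \Lambda_{i,1}^{(0)}$ by $O(\log m)$.

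Third, for the density bound I would compute explicitly: the cdf of $\Lambda_{i,1}^{(0)}/\sigma_0$ at $x>0$ is $(2\Phi(x)-1)^m$, so the density of $\Lambda_{i,1}^{(0)}/\sigma_0$ is $2m\varphi(x)(2\Phi(x)-1)^{m-1}$. A standard Mills-ratio expansion shows that in the extremal bulk $x = \Theta(\sqrt{\log m})$, this density is $\Theta(\sqrt{\log m})$; multiplying by the Jacobian $x$ of $x\mapsto \log x$ gives density $O(\log m)$ for $\log \Lambda_{i,1}^{(0)}$. Plugging into the previous display yields
$$
\Pr\bigl[\log(\Lambda_{i,1}^{(0)}/\Lambda_{i,2}^{(0)}) \in I\bigr] = O(\log m)\cdot \Theta(1/\log^2 m) = O(1/\log m) = o(1),
$$
which is exactly the probability of the failure event and completes the argument under $m\le \polyk$.

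The main obstacle will be making the density bound rigorous \emph{uniformly} rather than only in the bulk, since very small values of $\Lambda_{i,1}^{(0)}$ create an apparent blow-up in the density of $\log\Lambda_{i,1}^{(0)}$ at $-\infty$. The clean workaround is a two-sided truncation: one first shows via a direct max-of-Gaussian concentration bound that $\Lambda_{i,2}^{(0)} \in [\tfrac{1}{2}\sigma_0\sqrt{\log m},\, 3\sigma_0\sqrt{\log m}]$ with probability $1-o(1)$, and that the same range holds for $\Lambda_{i,1}^{(0)}$. Restricted to this good event the relevant window $\log\tau + I + \log\Lambda_{i,2}^{(0)}$ lies in the bulk where the Mills-ratio expansion is valid, so the $O(\log m)$ density bound applies and the tails contribute only $o(1)$.
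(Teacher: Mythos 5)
The paper does not actually prove this proposition: it is quoted verbatim from Proposition~C.2 of Allen-Zhu and Li (2023) and used in Appendix~B as a black-box ingredient. So there is no in-paper proof to compare against, and the question is whether your self-contained argument is sound. It is, and the route you follow --- pass to the log-ratio, use independence of $\Lambda_{i,1}^{(0)}$ and $\Lambda_{i,2}^{(0)}$, and beat an interval of length $\Theta(1/\log^2 m)$ by an $O(\log m)$ density bound on $\log\Lambda_{i,l}^{(0)}$ --- is the natural anti-concentration argument; I would expect it to coincide in spirit with the cited source.

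A few corrections. Your explicit CDF is off: since $\Lambda_{i,l}^{(0)}=\max_{r}[\langle w_{i,r}^{(0)},v_{i,l}\rangle]^{+}$ and $\Pr\bigl[[z]^{+}\le x\bigr]=\Phi(x)$ for $z\sim\calN(0,1)$ and $x\ge 0$, the CDF of $\Lambda_{i,l}^{(0)}/\sigma_0$ is $\Phi(x)^m$ on $[0,\infty)$ with an atom of mass $2^{-m}$ at zero, not $(2\Phi(x)-1)^m$ (that is the max of the $|z_r|$). The order of magnitude of the density is unchanged. You also need one line to dispatch the atom: on $\{\Lambda_{i,1}^{(0)}=0\}\cup\{\Lambda_{i,2}^{(0)}=0\}$ (probability at most $2^{1-m}$) the log-ratio is degenerate, but there the other index is trivially in $\calM$ since $\Lambda_{i,3-l}^{(0)}\ge 0$, so these events contribute nothing to the failure set.

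Your worry about a blow-up of the density of $\log\Lambda_{i,1}^{(0)}$ at $-\infty$ is unfounded, so the truncation, though harmless, is not needed. Writing $g(x)=x\cdot m\varphi(x)\Phi(x)^{m-1}$, the density of $\log(\Lambda_{i,1}^{(0)}/\sigma_0)$ (on the positive part) is $g(e^{y})$, which tends to $0$ as $y\to-\infty$ because the change-of-variables factor $e^{y}\to 0$. Moreover $g$ is log-concave on $(0,\infty)$: its log-derivative $x^{-1}-x+(m-1)\varphi(x)/\Phi(x)$ is strictly decreasing, so $g$ is unimodal, and at the mode $x^*\approx\sqrt{2\log m}$ one has $g(x^*)=\Theta(\log m)$. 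Hence the density of $\log\Lambda_{i,1}^{(0)}$ is uniformly $O(\log m)$, and the bound $O(\log m)\cdot\Theta(1/\log^2 m)=O(1/\log m)$ goes through directly without splitting off a good event. One last caveat: $O(1/\log m)=o(1)$ requires $m\to\infty$, which the stated hypothesis $m\le\polyk$ alone does not force; this is a defect of the quoted statement rather than of your argument, and Parameter Assumption~\ref{assp:para} in the paper fixes $m=\polylogk$, so it is moot.
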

This means that with probability close to 1, one of the two semantic features per class belongs to ``lottery winning" set $\calM$.

Based on Theorem 1 of \citet{allen-zhu2023towards}, after training for $T$ iterations with the vanilla supervised training loss $L^{(t)}$
\begin{align*}
L^{(t)}= \bbE_{(X,y)\sim\calZ}\left[-\log\logit_y(F^{(t)},X)\right],
\end{align*}
the training accuracy on training samples is perfect and $L^{(T)}$ approaches zero, i.e., for every $(X,y)\in\calZ$, 
\begin{align*}
    \forall i\neq y: F_y^{(T)}(X)\geq F_i^{(T)}(X)+\Omega(\log k),
\end{align*}
and we have \(L^{(T)} \le \frac{1}{\polyk}\).
Besides, it satisfies that $0.4\Phi_i^{(T)} - \Phi_j^{(T)}\leq - \Omega(\log k)$ for every pair $i,j\in[k]$. This means that at least one of $\Phi_{i,1}^{(T)}$ or $\Phi_{i,2}^{(T)}$ for all $i\in[k]$ increase to a large scale of $\Theta(\log(k))$, which means at least one of $v_{i,1}$ and $v_{i,2}$ for all $i\in [k]$ is learned after supervised training for $T$ iterations. Thus, all multi-view training data are classified correctly. For single-view training data without the learned features, they are classified correctly by memorizing the noises in the data during the supervised training process.  
Then for the test accuracy, for the multi-view data point $(X,y)\sim\calD_m$, with the probability at least $1-e^{-\Omega(\log^2 k)}$, it has 
\begin{align*}
    \logit_y(F^{(T)},X)\geq 1-\tilde{O} \left(\frac{1}{s^2}\right),
\end{align*}
and 
\begin{align*}
    \Pr_{(X,y)\sim \calD_m}\left[ F_y^{(T)}(X)\geq \max_{j\neq y}F_j^{(T)}(X)+\Omega(\log k)\right]\geq 1-e^{-\Omega(\log^2k)}.
\end{align*} 
This means that the test accuracy of multi-view data is good. However, for the single-view data $(X,y)\sim\calD_s$, whenever $(i,l^*)\in\calM$, we have $\Phi_{i, 3-l^*}^{(T)}\ll \frac{1}{\polylogk}$ and 
\begin{align*}
    \Pr_{(X,y)\sim \calD_s}\left[ F_y^{(T)}(X)\geq \max_{j\neq y}F_j^{(T)}(X)-\frac{1}{\polylogk}\right]\leq \frac{1}{2}(1+o(1)),
\end{align*}
which means that the test accuracy on single-view data is nearly $50\%$.

The results in~\citet{allen-zhu2023towards} fully indicate the feature learning process of vanilla SL. The main reason for its imperfect performance is that, due to ``lottery winning", it only captures one of the two features for each semantic class during the supervised training process. Therefore for single-view data without this feature, it has low test accuracy.

\section{Induction Hypothesis} \label{appsec:induction}
In this section, to prove our theorems, similar to~\citet{allen-zhu2023towards}, we present an induction hypothesis for every training iteration $t$ of SL with data augmentation $\calA_1$ and $\calA_3$. We introduce the following induction hypothesis:

\begin{hypothesis}
\label{hyp}
For every $l\in[2]$, for every $r\in[m]$, for every $(X,y)\in\tilde{\calZ}_1, \tilde{\calZ}_3$ and $i\in[k]$, 
\begin{itemize}
    \item [(a)] For every $p\in\calP_{v_{i,l}}(X)$, we have: $\langle w_{i,r}^{(t)}, x_p\rangle = \langle w_{i,r}^{(t)}, v_{i,l}\rangle z_p \pm \tilde{o}(\sigma_0) $.
    \item[(b)] For every $p \in \calP(X)\setminus (\calP_{v_{i,1}}(X)\cup \calP_{v_{i,2}}(X))$, we have: $|\langle w_{i,r}^{(t)}, x_p\rangle| \leq \tilde{O}(\sigma_0)$.
    \item[(c)] For every $p\in[P]\setminus \calP(X)$, we have $|\langle w_{i,r}^{(t)}, x_p\rangle| \leq \tilde{O}(\sigma_0\gamma k)$.
\end{itemize}
Moreover, we have for every $i\in[k]$, every $l\in[2]$, 
\begin{itemize}
    \item [(d)] $\Phi_{i,l}^{(t)}  \geq \tilde{\Omega}(\sigma_0)$ and $\Phi_{i,l}^{(t)}   \leq \tilde{O}(1)$.
    \item [(e)] for every $r\in[m]$, it holds that $\langle w_{i,r}^{(t)}, v_{i,l}\rangle \geq -\tilde{O}(\sigma_0)$.
\end{itemize}
\end{hypothesis}
The intuition behind Induction Hypothesis~\ref{hyp} is that SL with data augmentation incorporating the partial semantic feature removal effect can effectively filter out both feature noises and background noises for multi-view and single-view data. This contrasts with Induction Hypothesis C.3 of \citet{allen-zhu2023towards}, where vanilla SL is limited to filtering out noises only for multi-view data. For SL with data augmentation \(\mathcal{A}_2\), which focuses on the feature mixing effect, the induction hypothesis remains the same as that of vanilla SL described in Induction Hypothesis C.3 of \citet{allen-zhu2023towards} and is therefore omitted here for simplicity.

\section{Gradient Calculations and Function Approximation}

\label{appsec:func}

\paragraph{Gradient Calculation.} We present the gradients calculations for the cross-entropy loss \(L(F;X,y)=-\log \logit_y(F,X)\) and function approximations of \(F\), results in this section are followed from \citet{allen-zhu2023towards}.
\begin{fact}\label{app:grad_cal}
    Given data point $(X,y)\sim\tilde{\calZ}_1, \tcalZ_2,\tcalZ_3$, for every $i\in[k], r\in[m]$,
    \begin{align}
        -\nabla_{w_{i,r}} L(F;X,y)& = (1-\logit_i(F,X)) \sum_{p\in[P]} \trelu'(\langle w_{i,r}, x_p\rangle ) x_p,\quad~\mbox{when } i=y,\\
        -\nabla_{w_{i,r}} L(F;X,y)& = - \logit_i(F,X) \sum_{p\in[P]} \trelu'(\langle w_{i,r}, x_p\rangle ) x_p,\quad~\mbox{when } i\neq y.
    \end{align}
\end{fact}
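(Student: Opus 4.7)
The statement is a direct chain-rule computation for the softmax cross-entropy loss, so the plan is a short, mechanical derivation rather than a substantive argument. The plan is to expand $L(F;X,y)=-\log\logit_y(F,X)$ into its log-sum-exp form, differentiate with respect to the score vector, and then push the derivative through the network using the definition of $F_i$.

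First I would write
\begin{equation*}
L(F;X,y) = -F_y(X) + \log\sum_{j\in[k]} \exp(F_j(X)),
\end{equation*}
and observe that $F_j(X) = \sum_{r'\in[m]}\sum_{p\in[P]}\trelu(\langle w_{j,r'},x_p\rangle)$ depends on $w_{i,r}$ only through the $j=i$ term. Hence for each fixed $(i,r)$,
\begin{equation*}
\nabla_{w_{i,r}} F_j(X) = \openone\{j=i\}\sum_{p\in[P]} \trelu'(\langle w_{i,r},x_p\rangle)\,x_p,
\end{equation*}
using that $\nabla_{w_{i,r}}\langle w_{i,r},x_p\rangle = x_p$ and the chain rule through $\trelu$.

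Next I would differentiate the log-sum-exp term using $\partial_{F_j}\log\sum_{l}e^{F_l}=\logit_j(F,X)$, obtaining
\begin{equation*}
\nabla_{w_{i,r}} L(F;X,y) = -\nabla_{w_{i,r}}F_y(X) + \sum_{j\in[k]}\logit_j(F,X)\,\nabla_{w_{i,r}}F_j(X).
\end{equation*}
Substituting the expression for $\nabla_{w_{i,r}}F_j(X)$ collapses the sum over $j$ to the single index $j=i$, giving
\begin{equation*}
\nabla_{w_{i,r}} L(F;X,y) = \bigl(\logit_i(F,X) - \openone\{i=y\}\bigr)\sum_{p\in[P]}\trelu'(\langle w_{i,r},x_p\rangle)\,x_p.
\end{equation*}
Splitting on $i=y$ versus $i\neq y$ and negating yields the two claimed formulas.

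There is essentially no obstacle; the only points deserving a sentence of care are (i) confirming that $F_j(X)$ is independent of $w_{i,r}$ for $j\neq i$, which is immediate from the architecture $F_j(X)=\sum_{r'}\sum_p\trelu(\langle w_{j,r'},x_p\rangle)$, and (ii) using the standard softmax derivative identity $\partial_{F_j}\log\sum_{l}e^{F_l}=\logit_j(F,X)$. Nothing in this calculation depends on the data distribution or the augmentation used, so the statement holds uniformly over $(X,y)\in\tilde{\calZ}_1,\tilde{\calZ}_2,\tilde{\calZ}_3$.
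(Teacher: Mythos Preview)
Your proposal is correct and is the standard chain-rule derivation for softmax cross-entropy. The paper does not actually prove this Fact; it simply states it and attributes the computation to \citet{allen-zhu2023towards}, so your argument is exactly the routine calculation the paper omits.
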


\begin{definition}
    For each data point $X$, we define a value $V_{i,r,l}(X)$ as
    \begin{align*}
        V_{i,r,l}(X):=\bbI_{v_{i,l}\in\calV(X)}\sum_{p\in\calP_{v_{i,l}}(X)} \trelu'(\langle w_{i,r}, x_p\rangle)z_p.
    \end{align*}
\end{definition}

\begin{definition}
    We also define small error terms which will be frequently used:
    \begin{align*}
        \calE_1:=\tilde{O}(\sigma_0^{q-1})\gamma s \qquad & \calE_{2,i,r}(X):= O(\gamma(V_{i,r,1}(X) + V_{i,r,2}(X)))\\
        \calE_3:= \tilde{O}(\sigma_0\gamma k)^{q-1}\gamma P \qquad & \calE_{4,j,l}(X): =\tilde{O}(\sigma_0^{q-1})\bbI_{v_{j,l}\in\calV(X)}.
    \end{align*}
\end{definition}

Then we have the following bounds for positive gradients, i.e., when $i=y$:
\begin{claim}[positive gradients]
\label{claim:pos}
    Suppose Induction Hypothesis~\ref{hyp} holds at iteration $t$. For every $(X,y)\in\tilde{\calZ}_1, \tcalZ_2,\tcalZ_3$, every $r\in[m]$, every $l\in[2]$, and $i=y$, we have
    \begin{itemize}
        \item [(a)] $\langle -\nabla_{w_{i,r}} L(F^{(t)};X,y), v_{i,l}\rangle \geq \left(V_{i,r,l}(X)-\tilde{O}(\sigma_p P)\right)(1-\logit_i(F^{(t)},X)).$
        \item [(b)] $\langle -\nabla_{w_{i,r}} L(F^{(t)};X,y), v_{i,l}\rangle \leq \left(V_{i,r,l}(X)+\calE_1+\calE_3\right)(1-\logit_i(F^{(t)},X)).$
        \item [(c)] For every $j\in[k]\setminus \{i\}$, 
        \begin{align*}
            |\langle -\nabla_{w_{i,r}} L(F^{(t)};X,y), v_{j,l}\rangle| \leq \left(\calE_1+\calE_{2,i,r}(X)+\calE_3+\calE_{4,j,l}(X)\right)(1-\logit_i(F^{(t)},X)).
        \end{align*}
    \end{itemize}
\end{claim}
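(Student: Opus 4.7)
By Fact~\ref{app:grad_cal}, for $i=y$ the positive gradient factors as $-\nabla_{w_{i,r}}L(F^{(t)};X,y)=(1-\logit_i(F^{(t)},X))\sum_{p\in[P]}\trelu'(\langle w_{i,r}^{(t)},x_p\rangle)x_p$, so every inner product in (a)--(c) decomposes as the common nonnegative factor times a sum $\sum_{p}\trelu'(\langle w_{i,r}^{(t)},x_p\rangle)\langle x_p,v\rangle$. The plan is to partition $[P]$ into the three patch classes singled out by \cref{hyp} --- namely $\calP_{v_{i,l}}(X)$, $\calP(X)\setminus(\calP_{v_{i,1}}(X)\cup\calP_{v_{i,2}}(X))$, and $[P]\setminus\calP(X)$ --- to bound $\trelu'$ and $\langle x_p,v\rangle$ separately on each class, and to collect the residuals into $\calE_1$, $\calE_{2,i,r}(X)$, $\calE_3$, and $\calE_{4,j,l}(X)$.

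\textbf{Per-patch estimates.} From \eqref{appeq:trelu}, $\trelu'$ vanishes on $(-\infty,0]$ and is bounded by $(z^{+}/\varrho)^{q-1}$ on $[0,\varrho]$. Combined with \cref{hyp}(b)--(c), this yields $\trelu'(\langle w_{i,r}^{(t)},x_p\rangle)\leq\tilde{O}(\sigma_0^{q-1})$ on the second patch class and $\tilde{O}((\sigma_0\gamma k)^{q-1})$ on the third. For the first class, \cref{hyp}(a) lets me write $\langle w_{i,r}^{(t)},x_p\rangle=\langle w_{i,r}^{(t)},v_{i,l}\rangle z_p\pm\tilde{o}(\sigma_0)$ and replace $\trelu'(\langle w_{i,r}^{(t)},x_p\rangle)$ by $\trelu'(\langle w_{i,r}^{(t)},v_{i,l}\rangle z_p)$ up to a negligible additive error. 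Using \cref{def:data2} and orthonormality of $\calV$, $\langle x_p,v\rangle=z_p+\alpha_{p,v}+\langle\xi_p,v\rangle$ on target-feature patches and only $\alpha_{p,v}+\langle\xi_p,v\rangle$ elsewhere, with $|\alpha_{p,v}|\leq\gamma$ and standard Gaussian tails giving $|\langle\xi_p,v\rangle|\leq\tilde{O}(\sigma_p)$ (or $\tilde{O}(\gamma k/\sqrt{d})$ on the pure-noise patches) with probability at least $1-e^{-\Omega(\log^2 k)}$.

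\textbf{Assembling the bounds.} For (a), the first-class sum contributes exactly $\sum_{p\in\calP_{v_{i,l}}(X)}\trelu'(\langle w_{i,r}^{(t)},v_{i,l}\rangle z_p)z_p=V_{i,r,l}(X)$; aggregating the Gaussian noise over all $P$ patches produces the $\tilde{O}(\sigma_p P)$ deficit; contributions from the other two classes are nonnegative (since $\trelu'\geq 0$) and therefore only strengthen the lower bound. For (b), the same first-class contribution is upper bounded by $V_{i,r,l}(X)$ plus a feature-noise residual $O(\gamma V_{i,r,l}(X))$ absorbed into $\calE_1$ via the bound $\gamma\leq\tilde{O}(\sigma_0/k)$ from \cref{assp:para}; the second- and third-class sums produce $\calE_1$ and $\calE_3$ respectively after bounding $|\langle x_p,v\rangle|\leq\gamma s+\tilde{O}(\sigma_p)$ and summing over $P$ patches. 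For (c), there is no $V_{i,r,l}$ contribution because $v=v_{j,l}$ with $j\neq i$ is orthogonal to $v_{i,l}$; the first-class patches nevertheless yield $O(\gamma V_{i,r,l}(X))$ via $|\alpha_{p,v_{j,l}}|\leq\gamma$, giving exactly $\calE_{2,i,r}(X)$; the subclass $\calP_{v_{j,l}}(X)\subset\calP(X)$ (present only when $v_{j,l}\in\calV(X)$) produces $\calE_{4,j,l}(X)$ since on it $|\langle x_p,v_{j,l}\rangle|=\Theta(z_p)$ combines with $\trelu'\leq\tilde{O}(\sigma_0^{q-1})$; the remaining second-class and pure-noise patches contribute $\calE_1$ and $\calE_3$. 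Multiplying by the nonnegative factor $(1-\logit_i(F^{(t)},X))$ then yields the three claimed inequalities.

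\textbf{Main obstacle.} The delicate step is the substitution $\trelu'(\langle w_{i,r}^{(t)},x_p\rangle)\to\trelu'(\langle w_{i,r}^{(t)},v_{i,l}\rangle z_p)$ on the first patch class: because $\trelu'$ behaves like a degree-$(q-1)$ polynomial near zero, a $\pm\tilde{o}(\sigma_0)$ perturbation of its argument can dominate $\trelu'$ itself when $\langle w_{i,r}^{(t)},v_{i,l}\rangle z_p=\Theta(\sigma_0)$. I will handle this by splitting into the subregime $\langle w_{i,r}^{(t)},v_{i,l}\rangle z_p=\tilde{O}(\sigma_0)$ --- where the entire first-class contribution is $\tilde{O}(\sigma_0^{q-1})$ and is absorbed into $\calE_1$ --- and the complementary subregime, where the perturbation amounts to a $(1+o(1))$ multiplicative correction. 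The same case split delivers the Gaussian-noise aggregation needed for (a)'s lower bound, since the small Gaussian terms on the ``heavy'' patches are dominated by $V_{i,r,l}(X)$.
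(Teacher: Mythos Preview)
Your approach is correct and matches what the paper does---which is to say, the paper gives no proof at all and simply states that ``the proof of positive and negative gradients is identical to the proof in~\citet{allen-zhu2023towards}, as we already move the effect of data augmentation $\calA_1,\calA_2,\calA_3$ to the change of training dataset distribution.'' Your partition-and-bound argument is exactly the strategy one finds in that reference.

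Two small points worth tightening. First, your stated three-way partition $\calP_{v_{i,l}}(X)$, $\calP(X)\setminus(\calP_{v_{i,1}}(X)\cup\calP_{v_{i,2}}(X))$, $[P]\setminus\calP(X)$ is not actually a partition of $[P]$: it omits the patches in $\calP_{v_{i,3-l}}(X)$, on which $\trelu'$ can be $\Theta(1)$ but $\langle x_p,v_{i,l}\rangle$ is only $\alpha_{p,v_{i,l}}+\langle\xi_p,v_{i,l}\rangle$ by orthogonality. You need to handle this class explicitly; its feature-noise contribution is precisely what gets absorbed (together with that of $\calP_{v_{i,l}}(X)$) into $\calE_{2,i,r}(X)=O(\gamma(V_{i,r,1}+V_{i,r,2}))$ in part~(c).

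Second, the ``main obstacle'' you flag is not actually an obstacle. By definition $V_{i,r,l}(X)=\sum_{p\in\calP_{v_{i,l}}(X)}\trelu'(\langle w_{i,r},x_p\rangle)z_p$ uses $x_p$ inside $\trelu'$, not $v_{i,l}z_p$. So the first-class contribution $\sum_p\trelu'(\langle w_{i,r},x_p\rangle)z_p$ equals $V_{i,r,l}(X)$ \emph{exactly}, with no substitution and hence no delicate perturbation argument required. The case split you propose is unnecessary here.
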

We also have the following claim about the negative gradients (i.e., $i\neq y$). The proof of positive and negative gradients is identical to the proof in~\citet{allen-zhu2023towards}, as we already move the effect of data augmentation \(\calA_1,\calA_2,\calA_3\) to the change of training dataset distribution \(\calZ\) to \(\tcalZ_1,\tcalZ_2,\tcalZ_3\).
\begin{claim}[negative gradients]
\label{claim:neg}
    Suppose Induction Hypothesis~\ref{hyp} holds at iteration $t$. For every $(X,y)\sim\tcalZ_1,\tcalZ_2,\tcalZ_3$, every $r\in[m]$, every $l\in[2]$, and $i\in[k]\setminus \{y\}$, we have 
    \begin{itemize}
        \item[(a)] $\langle -\nabla_{w_{i,r}} L(F^{(t)};X,y), v_{i,l}\rangle \geq -\logit_i(F^{(t)},X)\left(\calE_1+\calE_3 + V_{i,r,l}(X)\right).$
        \item[(b)] For every $j\in[k]$: $\langle -\nabla_{w_{i,r}} L(F^{(t)};X,y), v_{j,l}\rangle \leq \logit_i(F^{(t)},X)\tilde{O}(\sigma_p P).$
        \item[(c)] For every $j\in[k]\setminus \{i\}$: $\langle -\nabla_{w_{i,r}} L(F^{(t)};X,y), v_{j,l}\rangle \geq -\logit_i(F^{(t)},X)\left(\calE_1+\calE_3 + \calE_{4,j,l}(X)\right).$
    \end{itemize}
\end{claim}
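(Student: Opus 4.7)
The plan is to start from the gradient formula in Fact~\ref{app:grad_cal}, which for $i\neq y$ gives, for any test vector $u$,
\[
\langle -\nabla_{w_{i,r}} L(F^{(t)};X,y), u\rangle = -\logit_i(F^{(t)},X) \sum_{p\in[P]} \trelu'(\langle w_{i,r}^{(t)}, x_p\rangle)\,\langle x_p, u\rangle.
\]
Since $\logit_i\geq 0$, a lower bound on $\langle -\nabla, u\rangle$ is equivalent to an upper bound on $S_u := \sum_p \trelu'(\cdot)\langle x_p, u\rangle$, and vice versa. The strategy is therefore to partition $[P]$ into three pieces -- patches of the target feature, other feature patches in $\calP(X)$, and purely noise patches in $[P]\setminus \calP(X)$ -- and on each piece bound $\trelu'(\langle w_{i,r}^{(t)}, x_p\rangle)$ via the three parts of Induction Hypothesis~\ref{hyp}, while expanding $\langle x_p, u\rangle$ through $x_p = z_p v + \sum_{v'} \alpha_{p,v'} v' + \xi_p$ and Gaussian concentration of $\xi_p$.

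For part~(a), take $u=v_{i,l}$. Because $i\neq y$, the feature $v_{i,l}$ enters $X$ only as a noise feature; if $v_{i,l}\in\calV(X)$, patches $p\in\calP_{v_{i,l}}(X)$ contribute exactly $V_{i,r,l}(X)$ up to $O(\gamma)$ cross terms. For other feature patches $p\in\calP(X)\setminus\calP_{v_{i,l}}(X)$, hypothesis~(b) gives $|\langle w_{i,r}^{(t)},x_p\rangle|\leq \tilde O(\sigma_0)$, hence $\trelu'(\cdot)\leq (\tilde O(\sigma_0)/\varrho)^{q-1}=\tilde O(\sigma_0^{q-1})$, and $|\langle x_p, v_{i,l}\rangle|\leq \gamma + \tilde O(\sigma_p)$; summing over the $\tilde O(s)$ such patches yields the $\calE_1=\tilde O(\sigma_0^{q-1})\gamma s$ error. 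For $p\in[P]\setminus\calP(X)$, hypothesis~(c) bounds $\trelu'(\cdot)\leq \tilde O(\sigma_0\gamma k)^{q-1}$ and $|\langle x_p, v_{i,l}\rangle|\leq O(\gamma)$, so summing over $\leq P$ patches gives the $\calE_3$ error. Combining, $S_{v_{i,l}}\leq V_{i,r,l}(X)+\calE_1+\calE_3$; multiplying by $-\logit_i$ yields the claimed lower bound.

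For part~(b), take $u=v_{j,l}$ for arbitrary $j$ and bound $S_{v_{j,l}}$ from below. All feature contributions $\langle x_p, v_{j,l}\rangle = z_p\bbI_{p\in\calP_{v_{j,l}}(X)} + \alpha_{p,v_{j,l}}\geq 0$, so the only negative contribution comes from the Gaussian components $\langle \xi_p, v_{j,l}\rangle$. Using $\trelu'\leq 1$ everywhere together with the concentration $|\langle\xi_p, v_{j,l}\rangle|\leq \tilde O(\sigma_p)$ (with a similar bound on noise patches after rescaling), $S_{v_{j,l}}\geq -\tilde O(\sigma_p P)$; multiplying by $-\logit_i$ yields the upper bound. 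For part~(c), take $u=v_{j,l}$ with $j\neq i$ and upper bound $S_{v_{j,l}}$. The only contribution absent from part~(a) comes from $p\in\calP_{v_{j,l}}(X)$ when $v_{j,l}\in\calV(X)$; since $j\neq i$, these patches still lie in $\calP(X)\setminus(\calP_{v_{i,1}}(X)\cup\calP_{v_{i,2}}(X))$, so hypothesis~(b) gives $\trelu'(\cdot)\leq \tilde O(\sigma_0^{q-1})$, and $\sum_{p\in\calP_{v_{j,l}}(X)} z_p = O(1)$ produces the $\calE_{4,j,l}(X)$ term; the remaining error terms $\calE_1,\calE_3$ come from the same decomposition as in part~(a).

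The main obstacle is not the algebra but the probabilistic bookkeeping: the Gaussian bounds $|\langle\xi_p,v\rangle|\leq \tilde O(\sigma_p)$ must hold simultaneously for all patches $p\in[P]$, all features $v\in\calV$, all kernels $r\in[m]$, and all iterations $t\leq T$, so that the patch-sum estimates can be applied inside the inductive argument. This is handled by standard Gaussian concentration combined with a union bound of size $\polyk$, absorbed into the $1-e^{-\Omega(\log^2 k)}$ probability budget that governs the entire analysis. Once this uniform event is fixed, the remark preceding the claim applies: because the augmentation effect has already been pushed onto the data distributions $\tcalZ_1,\tcalZ_2,\tcalZ_3$, the patch structure inherits exactly the form required by the Allen-Zhu--Li argument, so the same decomposition and term-by-term bounding that prove Claim~\ref{claim:pos} carry over, with the sign flip $1-\logit_i\mapsto -\logit_i$ swapping which direction of inequality we seek on $S_u$.
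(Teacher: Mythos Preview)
Your proposal is correct and follows essentially the same approach that the paper indicates: the paper itself does not give a self-contained proof of this claim but simply notes that ``the proof of positive and negative gradients is identical to the proof in~\citet{allen-zhu2023towards}'', since the augmentation effect has already been absorbed into the distributions $\tcalZ_1,\tcalZ_2,\tcalZ_3$. Your plan---split $[P]$ into $\calP_{v_{i,l}}(X)$, the remaining feature patches, and the pure noise patches; bound $\trelu'$ on each piece via parts (a)--(c) of Induction Hypothesis~\ref{hyp}; expand $\langle x_p,v_{j,l}\rangle$ through the patch decomposition and Gaussian concentration; then flip the sign via $-\logit_i$---is exactly that argument, and your identification of which piece produces each of $V_{i,r,l}$, $\calE_1$, $\calE_3$, $\calE_{4,j,l}$ and the $\tilde O(\sigma_p P)$ term is accurate.
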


\paragraph{Function Approximation.} For the neural network prediction function $F_i^{(t)}$ ($i\in [k]$), we can derive an approximation of it:
\begin{claim}[function approximation]
\label{claim:func}
    Suppose \cref{hyp} holds at iteration $t$ and supposes $s\leq\tilde{O}(\frac{1}{\sigma_0^q m})$ and $\gamma\leq \tilde{O}(\frac{1}{\sigma_0 k (mP)^{1/q}})$. Let $Z_{i,l}^{(t)}(X):=\bbI_{v_{i,l}\in\calV(X)}\left(\sum_{p\in\calP_{v_{i,l}}(X)}z_p\right)$, we have:
    \begin{itemize}
        \item for every $t$, every $(X,y)\in\tcalZ_1,\tcalZ_2,\tcalZ_3$ and $i\in[k]$, we have 
        \begin{align*}
            F_i^{(t)}(X) = \sum_{l\in[2]} \left(\Phi_{i,l}^{(t)}\times Z_{i,l}^{(t)}(X)\right) \pm O\left(\frac{1}{\polylogk}\right).
        \end{align*}
        \item for every $(X,y)\sim\calD$, with probability at least $1-e^{-\Omega(\log^2 k)}$, it satisfies for every $i\in[k]$,
        \begin{align*}
            F_i^{(t)}(X)=\sum_{l\in[2]} \left(\Phi_{i,l}^{(t)}\times Z_{i,l}^{(t)}(X)\right) \pm O\left(\frac{1}{\polylogk}\right).
        \end{align*}
    \end{itemize}
\end{claim}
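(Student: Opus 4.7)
The approach is to decompose $F_i^{(t)}(X) = \sum_{r \in [m]} \sum_{p \in [P]} \trelu(\langle w_{i,r}^{(t)}, x_p\rangle)$ by patch type and apply Induction Hypothesis~\ref{hyp} termwise. Partition $[P]$ into $A := \bigcup_{l\in[2]:\, v_{i,l}\in\calV(X)} \calP_{v_{i,l}}(X)$ (patches carrying a class-$i$ feature), $B := \calP(X) \setminus A$ (patches carrying a feature of some other class), and $C := [P] \setminus \calP(X)$ (pure-noise patches). The $A$-sum will produce the main term $\sum_l \Phi_{i,l}^{(t)} Z_{i,l}^{(t)}(X)$, while the $B$- and $C$-sums will each be controlled by $O(1/\polylogk)$.

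For the $A$-sum, fix $l$ with $v_{i,l}\in \calV(X)$ and $p \in \calP_{v_{i,l}}(X)$. Induction Hypothesis~\ref{hyp}(a) gives $\langle w_{i,r}^{(t)}, x_p\rangle = \langle w_{i,r}^{(t)}, v_{i,l}\rangle\, z_p \pm \tilde{o}(\sigma_0)$. Because $\trelu$ is $1$-Lipschitz and agrees with $[\cdot]^+$ up to $(1-1/q)\varrho$ on its linear regime, and since Induction Hypothesis~\ref{hyp}(e) yields $\langle w_{i,r}^{(t)}, v_{i,l}\rangle \geq -\tilde{O}(\sigma_0)$ so that the negative and small-positive cases contribute only $O(\varrho)$, one obtains $\trelu(\langle w_{i,r}^{(t)}, x_p\rangle) = [\langle w_{i,r}^{(t)}, v_{i,l}\rangle]^+ z_p \pm O(\varrho)$. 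Summing over $r \in [m]$ and $p \in \calP_{v_{i,l}}(X)$ (a constant number of patches) and then over $l$ reproduces $\sum_l \Phi_{i,l}^{(t)} Z_{i,l}^{(t)}(X)$ with total error $O(m\varrho) = O(1/\polylogk)$ under the standing parameter regime.

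For the $B$- and $C$-sums, the bounds $|\langle w_{i,r}^{(t)}, x_p\rangle| \leq \tilde{O}(\sigma_0)$ from (b) and $|\langle w_{i,r}^{(t)}, x_p\rangle| \leq \tilde{O}(\sigma_0 \gamma k)$ from (c) place these inner products well inside the polynomial regime of $\trelu$, so $\trelu(\langle w_{i,r}^{(t)}, x_p\rangle) \leq |\langle w_{i,r}^{(t)}, x_p\rangle|^q/(q\varrho^{q-1})$. Summing over the at most $|\calV(X)| C_p = \tilde{O}(s)$ patches in $B$ and the $m$ kernels, the assumption $s \leq \tilde{O}(1/(\sigma_0^q m))$ reduces the contribution to $O(1/\polylogk)$; analogously, summing over the at most $P$ patches in $C$, the assumption $\gamma \leq \tilde{O}(1/(\sigma_0 k (mP)^{1/q}))$ yields $m P \cdot \tilde{O}(\sigma_0 \gamma k)^q/\varrho^{q-1} = O(1/\polylogk)$. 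Combining the three pieces proves the first bullet.

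For the second bullet, the only extra step is to verify that (a)--(c) of Induction Hypothesis~\ref{hyp} continue to hold for a fresh test sample $(X,y) \sim \calD$ with probability $1 - e^{-\Omega(\log^2 k)}$; once this is established, the first-bullet argument applies verbatim. Decomposing each patch as $x_p = z_p v + \sum_{v'\in\calV}\alpha_{p,v'} v' + \xi_p$, the feature terms are deterministic and already controlled by the training-time bounds on $\langle w_{i,r}^{(t)}, v_{j,l}\rangle$ (using Induction Hypothesis~\ref{hyp}(d)--(e) and the smallness of $\gamma$). The only random piece is $\langle w_{i,r}^{(t)}, \xi_p\rangle$, and since $\xi_p$ is independent of the training data and hence of $w_{i,r}^{(t)}$, Gaussian concentration gives $|\langle w_{i,r}^{(t)}, \xi_p\rangle| \leq \tilde{O}(\sigma_p \|w_{i,r}^{(t)}\|_2)$ except with probability $e^{-\Omega(\log^2 k)}$; a union bound over $r \in [m]$ and $p \in [P]$ (both polynomially bounded in $k$) recovers (a)--(c) at the claimed probability. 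The main obstacle is the careful parameter bookkeeping that makes the $\varrho^{q-1}$ suppression by $\trelu$ actually dominate the combinatorial factors $m$, $s$, $P$ in the $B$- and $C$-sum bounds; everything else is a clean application of the induction hypothesis together with Gaussian concentration for the test-time noise.
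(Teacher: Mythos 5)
The paper does not give its own proof of Claim~\ref{claim:func}; it explicitly delegates to \citet{allen-zhu2023towards} ("results in this section are followed from..."). Your reconstruction follows the same decomposition-by-patch-type argument that that reference uses, and it is essentially correct: splitting $[P]$ into class-$i$ feature patches, other-feature patches, and pure-noise patches; using IH(a) plus the $1$-Lipschitz property and the $\trelu(z)=[z]^+\pm O(\varrho)$ approximation to extract the main term $\sum_l \Phi_{i,l}^{(t)} Z_{i,l}^{(t)}(X)$; and then using the polynomial regime of $\trelu$ together with the cardinality/parameter bounds to crush the $B$- and $C$-sums. One small caution on the $A$-sum: the per-term error is $O(\varrho)+\tilde{o}(\sigma_0)$, and the $O(\varrho)$ piece dominates; it is worth noting explicitly (as you implicitly rely on) that $\sigma_0 \ll \varrho$, since $\sigma_0$ is polynomially small in $k$ while $\varrho$ is only polylogarithmically small.

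The one genuine gap is in the second bullet. You invoke Gaussian concentration to get $|\langle w_{i,r}^{(t)}, \xi_p\rangle| \leq \tilde{O}(\sigma_p \|w_{i,r}^{(t)}\|_2)$, but you never close the loop by bounding $\|w_{i,r}^{(t)}\|_2$, which is the quantity that actually determines whether the test-time noise inner product is of the same order as the training-time bounds $\tilde{o}(\sigma_0)$ (resp.\ $\tilde{O}(\sigma_0)$, $\tilde{O}(\sigma_0\gamma k)$) required by IH(a)--(c). This is where Parameter Assumption~\ref{assp:para}'s requirement $\sqrt{d}\geq \eta T\cdot \polyk$ enters: it controls the drift of $w_{i,r}^{(t)}$ from its initialization so that $\|w_{i,r}^{(t)}\|_2$ stays of order $\sigma_0\sqrt{d}$ (up to polylog factors), which in turn makes $\sigma_p \|w_{i,r}^{(t)}\|_2$ small enough. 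A further, smaller imprecision: the pure-noise patches have a different noise variance ($\gamma^2 k^2/d$) from the feature patches ($\sigma_p^2$) under Definition~\ref{def:data2}, so the single "$\sigma_p$" you write should be replaced by the appropriate variance in case (c). These are exactly the bookkeeping steps your final paragraph gestures at but does not carry out; with them supplied, the argument goes through.
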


\section{Proof for SL with $\calA_1$}
\label{appsec:proof_a1}
Here we first prove the theorem for SL with $\calA_1$. We first introduce useful claims as consequences of the Induction Hypothesis~\ref{hyp}.

\subsection{Useful Claims} \label{app_phase2_growth}
In this subsection, we introduce several claims as consequences of Induction Hypothesis \ref{hyp}. They shall be useful in our later proof of the induction hypothesis.

The first claim is about the lambda growth, it shows that after a certain amount training iterations, \(\Phi_{i,l}\) (\(i\in [k], l\in [2]\)) grows to the scale of \(\Theta(1)\), simultaneously. Below we first give naive upper bounds on the \(\logit\) function based on function approximation result in Claim \ref{claim:func}.

\begin{claim}[approximation of logits]
\label{claim:logit}
Suppose Induction Hypothesis~\ref{hyp} holds at iteration $t$, and supposes $s\leq\tilde{O}(\frac{1}{\sigma_0^q m})$ and $\gamma\leq \tilde{O}(\frac{1}{\sigma_0 k (mP)^{1/q}})$, then 
\begin{itemize}
    \item for every $(X,y)\in\calZ_m',\tcalZ_m$ and \(i \in [k] \setminus \{y\}\): $\logit_i(F^{(t)}, X) = O\left(\frac{e^{0.4\Phi_i^{(t)}}}{e^{0.4\Phi_i^{(t)}} + k}\right)$.
    \item for every $(X,y)\in \calZ_s',\tcalZ_s$ and $i\in[k] \setminus \{y\}$: $\logit_i(F^{(t)}, X) = O\left(\frac{1}{k}\right)$.
    \item for every $(X,y)\in\calZ_m',\tcalZ_m$: $\logit_y(F^{(t)}, X) = O\left(\frac{e^{O(\Phi_{y}^{(t)})}}{e^{O(\Phi_{y}^{(t)})} + k}\right)$.
    \item for every $(X,y)\in\calZ_s',\tcalZ_s$: 
    $\logit_y(F^{(t)}, X) = O\left(\frac{e^{O(\Phi_{y,l^*}^{(t)})}}{e^{O(\Phi_{y,l^*}^{(t)})} + k}\right)$.
\end{itemize}
\end{claim}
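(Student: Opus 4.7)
The plan is to start from the softmax definition $\logit_i(F^{(t)},X) = e^{F_i^{(t)}(X)} / \sum_{j\in[k]} e^{F_j^{(t)}(X)}$ and bound the numerator and denominator separately using the function approximation from \cref{claim:func}. First, I would invoke \cref{claim:func} to write $F_j^{(t)}(X) = \sum_{l\in[2]} \Phi_{j,l}^{(t)} Z_{j,l}^{(t)}(X) \pm O(1/\polylogk)$ for every $j \in [k]$. To lower-bound the denominator, I would argue that most classes contribute $\Theta(1)$: since $\mathcal{V}(X) \setminus \{v_{y,1}, v_{y,2}\}$ contains at most $O(s) = O(\polylogk)$ noisy features (by \cref{def:data2}, each included with probability $s/k$, and a concentration bound yields $|\mathcal{V}(X)| = O(s)$ w.h.p.), the set $S(X) = \{j \neq y : v_{j,1}, v_{j,2} \notin \mathcal{V}(X)\}$ has cardinality at least $k - O(s) \geq k/2$. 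For each such $j \in S(X)$, we have $Z_{j,l}^{(t)}(X) = 0$ and hence $F_j^{(t)}(X) = \pm O(1/\polylogk)$, so $e^{F_j^{(t)}(X)} = \Theta(1)$. Summing gives $\sum_{j} e^{F_j^{(t)}(X)} \geq \Omega(k)$.

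Next, I would upper bound $F_i^{(t)}(X)$ case by case using the $Z_{i,l}^{(t)}(X)$ scales. For $(X,y) \in \mathcal{Z}_m' \cup \tilde{\mathcal{Z}}_m$ and $i \neq y$, the only nonzero $Z_{i,l}^{(t)}(X)$ come from noisy features with $Z_{i,l}^{(t)}(X) \leq 0.4$, giving $F_i^{(t)}(X) \leq 0.4 \Phi_i^{(t)} + o(1)$. For $(X,y) \in \mathcal{Z}_s' \cup \tilde{\mathcal{Z}}_s$ and $i \neq y$, noisy feature coefficients satisfy $Z_{i,l}^{(t)}(X) \leq \Gamma = 1/\polylogk$, and since $\Phi_i^{(t)} \leq \tilde{O}(1)$ by \cref{hyp}(d), we get $F_i^{(t)}(X) = O(1)$ uniformly. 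For $i = y$ on multi-view data, $Z_{y,l}^{(t)}(X) = O(1)$ yields $F_y^{(t)}(X) \leq O(\Phi_y^{(t)}) + o(1)$; for single-view data (both $\mathcal{Z}_s'$ and $\tilde{\mathcal{Z}}_s$), $Z_{y,l^*}^{(t)}(X) = O(1)$ while $Z_{y,3-l^*}^{(t)}(X) = O(\rho)$ with $\rho = k^{-0.1}$, so the $(3-l^*)$-contribution is $O(\rho \cdot \polylogk) = o(1)$, leaving $F_y^{(t)}(X) \leq O(\Phi_{y,l^*}^{(t)}) + o(1)$.

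Combining, for $i \neq y$ we refine the denominator to $\sum_j e^{F_j^{(t)}(X)} \geq \Omega(k) + e^{F_i^{(t)}(X)}$, giving
\begin{equation*}
\logit_i(F^{(t)},X) \leq \frac{e^{F_i^{(t)}(X)}}{\Omega(k) + e^{F_i^{(t)}(X)}} = O\!\left(\frac{e^{F_i^{(t)}(X)}}{k + e^{F_i^{(t)}(X)}}\right),
\end{equation*}
which yields the multi-view bound $O(e^{0.4\Phi_i^{(t)}}/(e^{0.4\Phi_i^{(t)}} + k))$ and (using $F_i^{(t)}(X) = O(1)$ in the single-view case) the bound $O(1/k)$. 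The two bounds for $\logit_y$ follow identically by replacing $F_i^{(t)}(X)$ with the corresponding upper bound for $F_y^{(t)}(X)$.

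The main technical care needed is in handling the augmented samples in $\tilde{\mathcal{Z}}_m$ and $\tilde{\mathcal{Z}}_s$: by \cref{assum:a1}, the $Z$-scales are only decreased relative to \cref{def:data2} (with the partially removed feature having $Z \in [C_1, O(1)]$, still $O(1)$), so every upper bound on $F_i^{(t)}(X)$ used above carries over without change. The only subtle point is ensuring the ``$O(s)$ noisy features'' concentration for $|\mathcal{V}(X)|$ also applies to augmented samples---this is immediate since $\mathcal{A}_1$ does not introduce new features, only removes or attenuates existing ones. Once this is observed, the entire argument applies uniformly to both $\mathcal{Z}_m' \cup \tilde{\mathcal{Z}}_m$ and $\mathcal{Z}_s' \cup \tilde{\mathcal{Z}}_s$, and the claim follows.
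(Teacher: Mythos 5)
Your proposal is correct and follows essentially the same route as the paper's proof: apply \cref{claim:func} to approximate each $F_j^{(t)}(X)$, upper bound $F_i^{(t)}(X)$ case by case using the $Z$-scales implied by \cref{def:data2} and \cref{assum:a1}, and plug into the softmax. The only place you do more work than necessary is the denominator lower bound: instead of the concentration argument that $\Omega(k)$ classes $j$ have no feature in $\mathcal{V}(X)$ and hence $F_j^{(t)}(X)=\pm o(1)$, the paper simply records $F_j^{(t)}(X)\geq 0$ for every $j$ (immediate from $\trelu(\cdot)\geq 0$), giving $\sum_{j\in[k]} e^{F_j^{(t)}(X)}\geq k$ deterministically for every training sample with no high-probability qualifier on $|\mathcal{V}(X)|$ required.
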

\begin{proof}
    From Claim \ref{claim:func}, we know 
    \begin{align*}
    F_i^{(t)}(X) = \sum_{l\in[2]} \left(\Phi_{i,l}^{(t)}\times Z_{i,l}^{(t)}(X)\right) \pm O\left(\frac{1}{\polylogk}\right).
    \end{align*}
    According to data distribution \cref{def:data2} and \cref{assum:a1} posed on $\calA_1$, we know for \((X,y) \in \calZ_m',\tcalZ_m\) and \(i \in [k] \setminus \{y\}\), 
    \begin{align*}
    0\leq F_i^{(t)}(X) \leq \Phi_i^{(t)} \cdot 0.4 + O\left(\frac{1}{\polylogk}\right).
    \end{align*}
    For \((X,y)\in\calZ_s',\tcalZ_s\) and \(i\in [k] \setminus \{y\}\), 
    \begin{align*}
    0\leq F_i^{(t)}(X) \leq \Phi_{i}^{(t)} \cdot \Gamma + O\left(\frac{1}{\polylogk}\right) \leq O(1).
    \end{align*}
    For \((X,y) \in \calZ_m',\tcalZ_m\),
    \begin{align*}
    0\leq F_y^{(t)}(X) \leq \Phi_y^{(t)} \cdot O(1) + O\left(\frac{1}{\polylogk}\right).
    \end{align*}
    For \((X,y) \in \calZ_s',\tcalZ_s\),
    \begin{align*}
    0\leq F_y^{(t)}(X) \leq \Phi_{y,l^*}^{(t)} \cdot O(1) + O\left(\frac{1}{\polylogk}\right),
    \end{align*}
    where \(v_{y,l^*}\) is the only semantic feature \(X\) contain.
    Recall \(\logit_i (F^{(t)}, X) = \frac{e^{F_i^{(t)}(X)}}{\sum_{j=1}^k e^{ F_j^{(t)}(X)}}\), we have all the results in the claim.
\end{proof}

The following claim illustrates the initial growth of every \(\Phi_{i,l} (i\in [k], l\in [2])\) to the scale of \(\Theta(1)\).
\begin{claim}[lambda growth]
\label{claim:growth}
    Suppose Induction Hypothesis~\ref{hyp} holds at iteration $t$, then for every $i\in[k]$, suppose $\Phi_{i,l}^{(t)} \leq O(1)$, then it satisfies
    \begin{align*}
        \Phi_{i,l}^{(t+1)} = \Phi_{i,l}^{(t)}+\tilde{\Theta}\left(\frac{\eta}{k}\right)\trelu'(\Phi_{i,l}^{(t)}).
    \end{align*}
\end{claim}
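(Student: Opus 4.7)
The plan is to directly unroll the gradient descent step for $\langle w_{i,r}^{(t)}, v_{i,l}\rangle$, isolate its dominant positive-label piece, sum over $r \in [m]$, and then identify the sum with $\trelu'(\Phi_{i,l}^{(t)})$. First I would write
\begin{equation*}
\langle w_{i,r}^{(t+1)}, v_{i,l}\rangle - \langle w_{i,r}^{(t)}, v_{i,l}\rangle = \eta\,\langle -\nabla_{w_{i,r}} L_{\calA_1}^{(t)}, v_{i,l}\rangle
\end{equation*}
and decompose $-\nabla_{w_{i,r}} L_{\calA_1}^{(t)}$ via \eqref{appeq:a1_loss} into a weighted average over the four subsets $\calZ_m', \tcalZ_m, \calZ_s', \tcalZ_s$; for every sample in any of these subsets, I split the contribution into a $y=i$ piece, handled by \cref{claim:pos}, and $y \neq i$ pieces, handled by \cref{claim:neg}(c).

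Next I would show that the $y=i$ contribution furnishes the leading term. Since $|\{(X,y)\in\calZ:y=i\}| = \Theta(N/k)$ and $\Phi_{i,l}^{(t)} \leq O(1)$, \cref{claim:logit} yields $1 - \logit_i(F^{(t)}, X) = \Theta(1)$ on every such sample, so combining the two-sided bound in \cref{claim:pos}(a,b) gives
\begin{equation*}
\langle -\nabla_{w_{i,r}} L_{\calA_1}^{(t)}, v_{i,l}\rangle = \frac{\Theta(1)}{k}\,\bbE_{(X,y)\in\calZ,\, y=i}\!\bigl[V_{i,r,l}(X)\bigr] \;\pm\; \tilde{o}\!\left(\frac{1}{k}\right)\trelu'\!\bigl(\langle w_{i,r}^{(t)}, v_{i,l}\rangle\bigr),
\end{equation*}
once the error terms $\calE_1, \calE_3, \calE_{2,i,r}(X), \calE_{4,j,l}(X)$ from \cref{claim:pos} are absorbed using \cref{assp:para}, and the $y\neq i$ pieces from \cref{claim:neg}(c) are controlled via $\logit_i(F^{(t)}, X) = O\!\bigl(e^{0.4\Phi_i^{(t)}}/(e^{0.4\Phi_i^{(t)}}+k)\bigr) = O(1/k)$. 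I would then invoke \cref{hyp}(a) to replace each $\trelu'(\langle w_{i,r}^{(t)}, x_p\rangle)$ inside $V_{i,r,l}(X)$ by $\trelu'(\langle w_{i,r}^{(t)}, v_{i,l}\rangle z_p)$ up to a $\tilde{o}(\sigma_0)$ shift, and use the feature-mass bounds $\sum_{p\in\calP_{v_{i,l}}(X)} z_p \in [\tilde{\Omega}(1), O(1)]$, which hold across all four subsets by \cref{def:multi-view}(b) and \cref{assum:a1}, to conclude $\bbE_{y=i}[V_{i,r,l}(X)] = \tilde{\Theta}(1)\,\trelu'(\langle w_{i,r}^{(t)}, v_{i,l}\rangle)$.

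Summing over $r\in[m]$ and taking positive parts, which is compatible with the sign of each per-neuron update by \cref{hyp}(d,e), produces
\begin{equation*}
\Phi_{i,l}^{(t+1)} - \Phi_{i,l}^{(t)} = \tilde{\Theta}\!\left(\frac{\eta}{k}\right) \sum_{r\in[m]} \trelu'\!\bigl(\langle w_{i,r}^{(t)}, v_{i,l}\rangle\bigr).
\end{equation*}
The final step --- and what I expect to be the main obstacle --- is to identify this per-neuron sum with $\tilde{\Theta}(\trelu'(\Phi_{i,l}^{(t)}))$, because the piecewise form of $\trelu'$ in \eqref{appeq:trelu} behaves very differently on its subthreshold regime $[0,\varrho]$ and its saturated regime $[\varrho,\infty)$. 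On the subthreshold side I would apply the power-mean inequality $\sum_r a_r^{q-1} \in \bigl[m^{-(q-2)}(\sum_r a_r)^{q-1},\;(\sum_r a_r)^{q-1}\bigr]$ to the positive correlations $a_r = [\langle w_{i,r}^{(t)}, v_{i,l}\rangle]^+$, losing only $m^{q-2} = \polylogk$ factors since $m=\polylogk$; on the saturated side I would observe that once any $\langle w_{i,r}^{(t)}, v_{i,l}\rangle \geq \varrho$ we have $\trelu'(\Phi_{i,l}^{(t)}) = 1$, while $\sum_r \trelu'(\cdot) \in [1, m] = [1,\polylogk]$. Combining both regimes yields the desired $\tilde{\Theta}$ identification, giving $\Phi_{i,l}^{(t+1)} = \Phi_{i,l}^{(t)} + \tilde{\Theta}(\eta/k)\trelu'(\Phi_{i,l}^{(t)})$.
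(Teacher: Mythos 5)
Your proposal follows the same skeleton as the paper's proof: expand the gradient-descent step via the decomposition of $L_{\calA_1}^{(t)}$ over $\calZ_m',\tcalZ_m,\calZ_s',\tcalZ_s$, invoke \cref{claim:pos}/\cref{claim:neg} with \cref{claim:logit} to isolate the dominant $y=i$ piece and absorb the $\calE$ error terms, and invoke \cref{hyp}(a) to rewrite $V_{i,r,l}(X)$ in terms of $\trelu'(\langle w_{i,r}^{(t)}, v_{i,l}\rangle z_p)$. The one genuine difference is the final ``per-neuron to $\Phi$'' step: the paper sidesteps the issue by analyzing only the $\argmax$ neuron $r^* = \argmax_r \langle w_{i,r}^{(t)}, v_{i,l}\rangle$, using $m = \polylogk$ to get $\langle w_{i,r^*}^{(t)}, v_{i,l}\rangle \ge \tilde\Omega(\Phi_{i,l}^{(t)})$ and hence $\trelu'(\langle w_{i,r^*}^{(t)}, v_{i,l}\rangle) \ge \tilde\Omega(\trelu'(\Phi_{i,l}^{(t)}))$; you instead keep the full sum $\sum_r \trelu'(\langle w_{i,r}^{(t)}, v_{i,l}\rangle)$ and compare it to $\trelu'(\Phi_{i,l}^{(t)})$ with a power-mean inequality. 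Both lose the same $\mathrm{poly}(m) = \polylogk$ factors, so they are equivalent here; your version is a bit cleaner for the upper bound while the paper's is cleaner for the lower bound (since only the argmax neuron must be tracked). Two small imprecisions worth correcting: the feature-mass bound $\sum_{p\in\calP_{v_{i,l}}(X)} z_p \in [\tilde\Omega(1), O(1)]$ does \emph{not} hold ``across all four subsets'' --- for single-view data the off-feature $v_{y,3-l^*}$ has mass $O(\rho) = O(k^{-0.1})$ --- but these samples carry weight $N_s/N \le 1/\polyk$, so the expectation is still dominated by the multi-view contribution; and the pointwise identity $\bbE_{y=i}[V_{i,r,l}(X)] = \tilde\Theta(1)\,\trelu'(\langle w_{i,r}^{(t)}, v_{i,l}\rangle)$ cannot hold for neurons with near-zero or slightly negative correlation (where the right side vanishes but the left side is merely $\tilde{o}(\sigma_0^{q-1})$); those neurons need to be swept into the additive error rather than matched multiplicatively, which your power-mean step quietly relies on.
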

\begin{proof}
    For any \(w_{i,r}\) and \(v_{i,l}\) (\(i\in[k],r\in[m],l\in[2]\)), we have
    \begin{align*}
        \langle w_{i,r}^{(t+1)}, v_{i,l}\rangle 
        =& \langle w_{i,r}^{(t)}, v_{i,l}\rangle - \eta \bbE_{(X,y)\sim\tcalZ_1}\big[\langle \nabla_{w_{i,r}} L(F^{(t)};X,y) , v_{i,l}\rangle\big] \\
        =& \langle w_{i,r}^{(t)}, v_{i,l}\rangle + \eta \bbE_{(X,y)\sim\tcalZ_1}\big[\bbI_{y=i}(1-\logit_i(F^{(t)},X)) \sum_{p\in[P]} \trelu'(\langle w_{i,r}, x_p\rangle )\langle x_p, v_{i,l}\rangle \\
        & - \bbI_{y\neq i} \logit_i(F^{(t)},X) \sum_{p\in[P]} \trelu'(\langle w_{i,r}, x_p\rangle ) \langle x_p, v_{i,l}\rangle\big].
    \end{align*}
    From Claim \ref{claim:pos} and Claim \ref{claim:neg}, we know
    \begin{align*}
    \langle w_{i,r}^{(t+1)}, v_{i,l}\rangle \geq \langle w_{i,r}^{(t)}, v_{i,l} \rangle + \eta \mathbb{E}_{(X,y) \in \tcalZ_1} [& \mathbb{I}_{y=i} ( V_{i,r,l}(X) - \tilde{O}(\sigma_p P) ) ( 1 - \logit_i(F^{(t)}, X) )\\ & - \mathbb{I}_{y \neq i} \left( \mathcal{E}_1 + \mathcal{E}_3 + V_{i,r,l}(X) \right) \logit_i(F^{(t)}, X)].
    \end{align*}
    Consider \(r = \argmax_{r\in [m]} \{\langle w_{i,r}^{(t)}, v_{i,l}\rangle\}\), then as \(m = \polylogk\), we know \(\langle w_{i,r}^{(t)}, v_{i,l}\rangle \ge \tilde{\Omega}(\Phi_{i,l}^{(t)})\).
    Recall \(V_{i,r,l}(X)=\bbI_{v_{i,l}\in\calV(X)}\sum_{p\in\calP_{v_{i,l}}(X)} \trelu'(\langle w_{i,r}^{(t)}, x_p\rangle)z_p\), according to Induction Hypothesis \ref{hyp}(a), we have
    \begin{align*}
        V_{i,r,l}(X)= \bbI_{v_{i,l}\in\calV(X)}\sum_{p\in\calP_{v_{i,l}}(X)} \trelu'(\langle w_{i,r}^{(t)}, v_{i,l}\rangle z_p + \tilde{o}(\sigma_0)) z_p.
    \end{align*}
\begin{itemize}
    \item When \(i=y\), at least for \((X,y)\in \calZ_m'\), we have \(\sum_{p\in \calP_{v_{i,l}}(X)} z_p \ge 1\), and together with \(|\calP_{v_{i,l}}| \le C_p\), we know \(V_{i,r,l} \ge \Omega(1) \cdot \trelu'(\langle w_{i,r}^{(t)}, v_{i,l}\rangle)\).
    \item When \(i \neq y\) and when \(v_{i,l} \in \calV(X)\), we can use \(\sum_{p\in \calP_{v_{i,l}}(X)} z_p \le 0.4\) to derive that \(V_{i,r,l}\le 0.4\cdot \trelu'(\langle w_{i,r}^{(t)}, v_{i,l}\rangle)\).
\end{itemize}
    Moreover, when \(\Phi_{i,l}^{(t)} \le O(1)\), by Claim \ref{claim:logit}, we have \(\logit_i(F^{(t)}, X) \le O(\frac{1}{k})\). Then we can derive that
    \begin{align*}
    \langle w_{i,r}^{(t+1)}, v_{i,l} \rangle \geq \langle w_{i,r}^{(t)}, v_{i,l} \rangle + &  \eta \mathbb{E}_{(X,y) \in \tcalZ_1}[ \mathbb{I}_{y=i} \cdot \Omega(1) - 0.4\cdot \mathbb{I}_{y \neq i} \mathbb{I}_{v_{i,l} \in \calV(X)} \cdot \frac{1}{k} ] \cdot \trelu' \langle w_{i,r}^{(t)}, v_{i,l} \rangle \\ - & \eta \tilde{O}( \frac{\sigma_p P + \mathcal{E}_1 + \mathcal{E}_3}{k}).
    \end{align*}
    Finally, recall that $\Pr(v_{i,l}\in\calV(X)|i\neq y)=\frac{s}{k}\ll o(1)$, we have that
    \begin{align*}
        \langle w_{i,r}^{(t+1)}, v_{i,l}\rangle 
        &\geq \langle w_{i,r}^{(t)}, v_{i,l}\rangle +\tilde{\Omega}\left(\frac{\eta }{k}\right)\trelu'(\langle w_{i,r}^{(t)}, v_{i,l}\rangle).
    \end{align*}
    Similarly, using Claim \ref{claim:pos} and \ref{claim:neg}, we can derive:
    \begin{align*}
    \langle w_{i,r}^{(t+1)}, v_{i,l} \rangle \leq \langle w_{i,r}^{(t)}, v_{i,l} \rangle + \eta \mathbb{E}_{(X,y) \in \tcalZ_1} \big[ & \mathbb{I}_{y=i} ( V_{i,r,l}(X) + \mathcal{E}_1 + \mathcal{E}_3 ) ( 1 - \logit_i(F^{(t)}, X) ) \\ &- \mathbb{I}_{y \neq i} ( \tilde{O}(\sigma_p P) ) \logit_i(F^{(t)}, X) \big].
    \end{align*}
    With similar analyses to the upper bound, we can derive the lower bound
    \begin{align*}
    \langle w_{i,r}^{(t+1)}, v_{i,l}\rangle 
    &\leq \langle w_{i,r}^{(t)}, v_{i,l}\rangle +\tilde{O}\left(\frac{\eta }{k}\right)\trelu'(\langle w_{i,r}^{(t)}, v_{i,l}\rangle).
\end{align*}
\end{proof}

With lambda growth analysis in Claim~\ref{claim:growth}, similar to Claim D.11 in~\citet{allen-zhu2023towards}, we could obtain the following result:
\begin{claim}
\label{claim:feature}
    Define iteration threshold $T_0:=\tilde{\Theta}\left(\frac{k}{\eta\sigma_0^{q-2}}\right)$, then for every $i\in[k], l\in[2]$ and $t\geq T$, it satisfies that $\Phi_{i, l}^{(t)}=\Theta(1)$.
\end{claim}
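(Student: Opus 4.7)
The plan is to iterate the per-step recursion provided by Claim~\ref{claim:growth} for every pair $(i,l)\in[k]\times[2]$ and show that, starting from Gaussian initialization, $\Phi_{i,l}^{(t)}$ climbs to the constant regime in exactly $T_0=\tilde{\Theta}(k/(\eta\sigma_0^{q-2}))$ steps. The decisive structural input from SL with $\calA_1$ is that Claim~\ref{claim:growth} gives the same $\tilde{\Theta}(\eta/k)$ lower bound on the increment for \emph{both} $l=1$ and $l=2$: the augmentation produces subsets $\tcalZ_m^1,\tcalZ_m^2,\calZ_m'$ where the coefficient $\sum_{p\in\calP_{v_{y,l}}(X)}z_p$ is at least $C_1$ for each $l\in[2]$, so every semantic feature of every class receives a positive push (no lottery-winning bottleneck as in vanilla SL).

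\textbf{Step 1: initialization scale.} First I would use standard Gaussian maximum concentration: with $w_{i,r}^{(0)}\sim\mathcal{N}(0,\sigma_0^2\bI)$ and $m=\polylogk$, one has $\Lambda_{i,l}^{(0)}=\tilde{\Theta}(\sigma_0)$ and therefore $\Phi_{i,l}^{(0)}=\tilde{\Theta}(\sigma_0)$ for every $(i,l)$, with probability at least $1-e^{-\Omega(\log^2 k)}$. Induction Hypothesis~\ref{hyp} is verified at $t=0$ by the initialization bounds on $|\langle w_{i,r}^{(0)},x_p\rangle|$ (standard sub-Gaussian tail arguments on inner products with $v$'s and noise patches).

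\textbf{Step 2: tensor-power recursion, split into two phases.} Treating the increment formula $\Phi_{i,l}^{(t+1)}=\Phi_{i,l}^{(t)}+\tilde{\Theta}(\eta/k)\,\trelu'(\Phi_{i,l}^{(t)})$ as a scalar ODE-like recursion, I would split the analysis at the $\trelu$ knee $\varrho=1/\polylogk$:
\begin{itemize}
    \item \emph{Polynomial phase} $\Phi_{i,l}^{(t)}\in[\tilde{\Theta}(\sigma_0),\varrho]$: here $\trelu'(z)=z^{q-1}/\varrho^{q-1}$, so the recursion behaves like the discrete tensor-power iteration. Using a continuous-time comparison and integrating $d\phi/\phi^{q-1}$, reaching $\phi=\varrho$ takes $\tilde{\Theta}(k\varrho^{q-1}/(\eta\sigma_0^{q-2}))=\tilde{\Theta}(k/(\eta\sigma_0^{q-2}))$ iterations (the $\polylogk$ factors from $\varrho^{q-1}$ absorb into the $\tilde{\Theta}$).
    \item \emph{Linear phase} $\Phi_{i,l}^{(t)}\in[\varrho,\Theta(1)]$: now $\trelu'(z)\in[\tilde{\Omega}(1),1]$, giving linear growth and requiring only $\tilde{O}(k/\eta)$ further steps, which is dominated by the polynomial-phase time.
\end{itemize}
Summing both phases gives $T_0=\tilde{\Theta}(k/(\eta\sigma_0^{q-2}))$, matching the claim.

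\textbf{Step 3: upper bound and maintenance of the induction hypothesis.} For the matching $\Phi_{i,l}^{(t)}\le O(1)$ side of $\Theta(1)$, I would apply the upper-bound half of Claim~\ref{claim:growth} together with a saturation argument: once $\Phi_{i,l}^{(t)}\asymp 1$, Claim~\ref{claim:logit} forces $1-\logit_y(F^{(t)},X)$ to shrink on multi-view data so that the per-step increment to $\Phi_{i,l}$ becomes $o(\eta/k)$, preventing overshoot beyond a constant. A small caveat is that Claim~\ref{claim:growth} is stated under the hypothesis $\Phi_{i,l}^{(t)}\le O(1)$, which is exactly what we are trying to maintain; I would therefore wrap the argument in the usual induction-on-$t$ template, verifying at each step that parts (a)--(e) of \cref{hyp} continue to hold (using that the noise-patch and off-class inner products remain $\tilde O(\sigma_0)$-controlled by the gradient estimates in Claims~\ref{claim:pos}--\ref{claim:neg}).

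\textbf{Main obstacle.} The one genuinely delicate point is the simultaneity across $l\in[2]$: I must ensure that neither feature saturates so quickly that it suppresses the gradient on the other before both cross $\varrho$. Here the partial-semantic-removal probability $\pi_1\ge 1/\polylogk$ is essential, because it guarantees a $\tilde{\Omega}(1)$ fraction of training samples in which the lagging feature still carries scale $\Omega(C_1)$; this keeps the multiplicative constant in $V_{i,r,l}(X)$ bounded below even when the other feature $v_{i,3-l}$ has already entered the linear regime and is driving $\logit_y$ close to $1$. I would handle this by carefully tracking the lower envelope $\min_l \Phi_{i,l}^{(t)}$ rather than each coordinate separately, using that the gradient signal it receives is never suppressed by more than a $(1-\logit_y)\ge\tilde{\Omega}(1/k)$ factor during the polynomial phase. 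All other pieces (initial-scale concentration, discrete-to-continuous comparison, and induction-hypothesis maintenance) are routine once this balance is set up.
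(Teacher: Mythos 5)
Your reconstruction takes the route the paper intends. The paper supplies no proof of this claim; it points to Claim D.11 of \citet{allen-zhu2023towards} and to Claim~\ref{claim:growth}, and the standard way to fill that in is exactly what you do: start from $\Phi_{i,l}^{(0)}=\tilde{\Theta}(\sigma_0)$, iterate the recursion $\Phi_{i,l}^{(t+1)}=\Phi_{i,l}^{(t)}+\tilde{\Theta}(\eta/k)\trelu'(\Phi_{i,l}^{(t)})$, split at the knee $\varrho$, and integrate $\phi^{1-q}\,d\phi$ through the polynomial phase to get $T_0=\tilde{\Theta}(k\varrho^{q-1}/(\eta\sigma_0^{q-2}))=\tilde{\Theta}(k/(\eta\sigma_0^{q-2}))$, with the linear phase contributing a dominated $\tilde{O}(k/\eta)$. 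The upper bound and the wrap-around maintenance of \cref{hyp} are supplied in the paper by Lemma~\ref{lemma:correlation} and Lemma~\ref{lemma:hyp}, as you note. The decisive new ingredient over vanilla SL is, as you identify, that Claim~\ref{claim:growth} gives the same $\tilde{\Theta}(\eta/k)$ rate for both $l\in[2]$.

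Two remarks on the ``main obstacle'' paragraph, where the intuition is right but the wording drifts. First, the augmented samples that protect the lagging feature $v_{i,l}$ are those in $\tcalZ_m$ on which the \emph{advancing} feature $v_{i,3-l}$ has been cut down to scale $\geq C_1$ while the lagging one keeps full scale $\geq 1$; your sentence attributes the $\Omega(C_1)$ scale to the lagging feature, which is the opposite of the useful property. Second, a blanket fallback of $1-\logit_y\geq\tilde{\Omega}(1/k)$ is not enough: plugged into the polynomial-phase recursion it would cost an extra factor of $k$ and blow past $T_0$. What keeps the rate at $\tilde{\Theta}(\eta/k)$ is that on those augmented samples $F_y\approx C_1\Phi_{i,3-l}^{(t)}+\Phi_{i,l}^{(t)}$, so $1-\logit_y$ stays $\tilde{\Omega}(1)$ there as long as the induction cap on $\Phi_{i,3-l}^{(t)}$ holds, and the $\pi_1\geq 1/\polylogk$ fraction of such samples costs only a polylog factor that $\tilde{\Theta}$ absorbs. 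Since the paper delegates this entire argument to \citet{allen-zhu2023towards} rather than spelling it out, your level of detail is acceptable; the two points above are just where a fully written proof would need to be tightened.
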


Next, we present the convergence of multi-view data from \(T_0\) till the end.
\begin{claim}[multi-view error till the end]
\label{claim:multi_error}
    Suppose that Induction Hypothesis~\ref{hyp} holds for every iteration $t\le T$, then 
    \begin{align*}
         \sum_{t=T_0}^{T} \bbE_{(X,y)\sim\calZ_{m}'}[1-\logit_y(F^{(t)}, X)] &\leq \tilde{O}\left(\frac{k}{\eta}\right),\\
         \sum_{t=T_0}^{T} \bbE_{(X,y)\sim\tilde{\calZ}_{m}}[1-\logit_y(F^{(t)}, X)] &\leq \tilde{O}\left(\frac{k}{\eta\pi_1}\right).
    \end{align*}
\end{claim}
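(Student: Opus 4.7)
The plan is to run a potential-function argument: lower bound the one-step growth of $\Phi_{y,l}^{(t)}$ by the expected cross-entropy error on the appropriate data sub-distribution, then telescope from $t=T_0$ to $T$ and invoke the a priori upper bound $\Phi_{y,l}^{(t)} \leq \tilde{O}(1)$ from Induction Hypothesis~\ref{hyp}(d). This converts ``large expected error'' into ``large growth of $\Phi$'', and the bounded growth then caps the accumulated error.

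First I would expand $\Phi_{y,l}^{(t+1)} - \Phi_{y,l}^{(t)} = \sum_r \bigl([\langle w_{y,r}^{(t+1)}, v_{y,l}\rangle]^+ - [\langle w_{y,r}^{(t)}, v_{y,l}\rangle]^+\bigr)$ by plugging in the GD update, take expectation over $\tilde{\calZ}_1$, and split into contributions from matched-label data (Claim~\ref{claim:pos}) and from mismatched-label data (Claim~\ref{claim:neg}). Since $t \geq T_0$, Claim~\ref{claim:feature} gives $\Phi_{y,l}^{(t)} = \Theta(1)$, so $\trelu'$ evaluated on the relevant inputs is $\Theta(1)$; combined with Induction Hypothesis~\ref{hyp}(a), this forces $\sum_r V_{y,r,l}(X) \geq \tilde{\Omega}(1)$ whenever $v_{y,l}$ appears in $X$ at full scale $\sum_p z_p \geq 1$. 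The mismatched-label terms are controlled via $\logit_y(F^{(t)}, X) \leq O(1/k)$ on such data (Claim~\ref{claim:logit}) together with the smallness of $\mathcal{E}_1, \mathcal{E}_3, \sigma_p P$ guaranteed by Parameter Assumption~\ref{assp:para}.

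For the first inequality, the relevant subpopulation is $\calZ_m'$, where both semantic features have full scale by Definition~\ref{def:data2}(5) and which carries training-distribution weight $\Theta(1)$. Plugging these facts into the growth analysis yields, for each $l \in [2]$,
\begin{align*}
\Phi_{y,l}^{(t+1)} - \Phi_{y,l}^{(t)} \geq \tilde{\Omega}\!\left(\tfrac{\eta}{k}\right) \bbE_{(X,y)\sim\calZ_m'}\bigl[1 - \logit_y(F^{(t)}, X)\bigr] - \tilde{o}\!\left(\tfrac{\eta}{k}\right).
\end{align*}
Telescoping from $T_0$ to $T-1$ and using $\Phi_{y,l}^{(T)} - \Phi_{y,l}^{(T_0)} \leq \tilde{O}(1)$ gives $\sum_t \bbE_{\calZ_m'}[1 - \logit_y] \leq \tilde{O}(k/\eta)$ after absorbing the accumulated $\tilde{o}(\eta/k)$ slack (which totals at most $\tilde{o}(\polyk/k)$ since $T \leq \polyk/\eta$). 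For the second inequality, I would split $\tilde{\calZ}_m = \tilde{\calZ}_m^1 \cup \tilde{\calZ}_m^2$ as in Appendix~\ref{appsec:aug_loss}: in $\tilde{\calZ}_m^1$ only $v_{y,1}$ is largely removed, while $v_{y,2}$ retains full scale by Assumption~\ref{assum:a1}(b). Running the same growth argument for $\Phi_{y,2}$ with the $\Theta(\pi_1)$ training weight of $\tilde{\calZ}_m^1$ gives $\sum_t \bbE_{\tilde{\calZ}_m^1}[1-\logit_y] \leq \tilde{O}(k/(\eta \pi_1))$, and a symmetric argument using $\Phi_{y,1}$ and $\tilde{\calZ}_m^2$ completes the bound.

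The hard part is the per-step growth bookkeeping: showing that the negative cross-terms from mismatched-label data and the residual errors $\mathcal{E}_1, \mathcal{E}_3, \mathcal{E}_{2,i,r}, \mathcal{E}_{4,j,l}$ in Claims~\ref{claim:pos}--\ref{claim:neg} are strictly dominated by the positive contribution throughout $[T_0, T]$. This relies on the logit control of Claim~\ref{claim:logit} in the pre-saturation regime $\Phi_y \leq \tilde{O}(1)$ together with the scale relations $\gamma \leq \tilde{O}(\sigma_0/k)$ and $\sigma_p = 1/(\sqrt{d}\polylog k)$ from Parameter Assumption~\ref{assp:para}; without these a constant-factor gap in the logit could not be cleanly converted into an $\tilde{\Omega}(\eta/k)$ growth rate for $\Phi_{y,l}$.
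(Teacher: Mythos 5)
Your approach matches the paper's: telescope the one-step growth of $\Phi_{i,l}$ from $T_0$ to $T$ against the a-priori cap $\Phi_{i,l}^{(T)} \leq \tilde O(1)$ from Induction Hypothesis~\ref{hyp}(d), using Claims~\ref{claim:pos}, \ref{claim:neg}, \ref{claim:logit} and the linear-regime property of $\trelu$ once $\Phi_{i,l}^{(t)}=\Theta(1)$. Your handling of $\tcalZ_m$ (track the intact feature on each sub-population $\tcalZ_m^1, \tcalZ_m^2$ and combine by symmetry) is exactly the decomposition the paper makes in deriving \eqref{eq:multi_error3}.

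However, the slack accounting in your telescope needs repair. You state a per-step residual of $-\tilde o(\eta/k)$ and then telescope over $T\le \polyk/\eta$ iterations to get a cumulative slack of $\tilde o(\polyk/k)$, which you claim can be absorbed. But after dividing by $\tilde\Omega(\eta/k)$ to isolate $\sum_t \bbE_{\calZ_m'}[1-\logit_y]$, this contributes an extra $\tilde o(\polyk/\eta)$, which is \emph{not} bounded by the target $\tilde O(k/\eta)$ unless you have tighter control on $\polyk$. The paper's proof avoids a separate additive slack entirely: the error terms $\tilde O(\sigma_p P)$, $\calE_1$, $\calE_3$ are subtracted from $V_{i,r,l}(X)\geq 0.9$ and absorbed into the leading coefficient (giving $0.89$ rather than $0.9$), while the mismatched-label contribution carries the factor $\tfrac{s}{k}\logit_i$, which is itself $O(\tfrac{s}{k}(1-\logit_y))$ and hence of the same order (with a small coefficient) as the positive term being tracked. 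This yields the clean recursion \eqref{eq:multi_error6} with no residual slack, and the telescoping then closes directly. You should replace the additive $\tilde o(\eta/k)$ slack with this absorption into the coefficients, using the Parameter Assumption bounds $\sqrt{d}\ge\eta T\polyk$ and $\gamma\le\tilde O(\sigma_0/k)$ to justify that all correction terms are dominated by $V_{i,r,l}(X)$.
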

\begin{proof}
Fix \(i\in[k]\) and \(l\in [2]\), we know for any \(w_{i,r}\) (\(r\in[m]\)), we have
\begin{equation}\label{eq:multi_error1}
    \begin{aligned}
        \langle w_{i,r}^{(t+1)}, v_{i,l}\rangle 
        =& \langle w_{i,r}^{(t)}, v_{i,l}\rangle - \eta \bbE_{(X,y)\sim\tcalZ_1}\big[\langle \nabla_{w_{i,r}} L(F^{(t)};X,y) , v_{i,l}\rangle\big] \\
        \\=& \langle w_{i,r}^{(t)}, v_{i,l}\rangle - \frac{\eta(N-N_s)(1-\pi_1)}{N} \bbE_{(X,y)\sim\calZ_m'}[\langle \nabla_{w_{i,r}} L(F^{(t)};X,y), v_{i,l}\rangle]\\ 
        &- \frac{\eta(N-N_s)\pi_1}{N} \bbE_{(X,y)\sim\tcalZ_m}[\langle \nabla_{w_{i,r}} L(F^{(t)};X,y), v_{i,l}\rangle] \\
        &- \frac{\eta N_s(1-\pi_1)}{N} \bbE_{(X,y)\sim\calZ_s'}[\langle \nabla_{w_{i,r}} L(F^{(t)};X,y), v_{i,l}\rangle] \\
        &- \frac{\eta N_s\pi_1}{N} \bbE_{(X,y)\sim\tcalZ_s}[\langle \nabla_{w_{i,r}} L(F^{(t)};X,y), v_{i,l}\rangle]
    \end{aligned}
\end{equation}
    Take \(r = \argmax_{r\in [m]} \{\langle w_{i,r}^{(t)}, v_{i,l}\rangle\}\), then by \(m=\polylogk\) we know \(\langle w_{i,r}^{(t)}, v_{i,l}\rangle \ge \tilde{\Omega}(\Phi_{i,l}^{(t)})=\tilde{\Omega}(1)\) for \(t\ge T_0\). By Claim \ref{claim:pos} and Claim \ref{claim:neg}, we have 
    \begin{align*}
    & -\bbE_{(X,y)\sim\calZ^*}[\langle \nabla_{w_{i,r}} L(F^{(t)};X,y), v_{i,l}\rangle] \\ =& \bbE_{(X,y)\sim\calZ^*}\big[\bbI_{y=i}(1-\logit_i(F^{(t)},X)) \sum_{p\in[P]} \trelu'(\langle w_{i,r}, x_p\rangle )\langle x_p, v_{i,l}\rangle \\ & \qquad \qquad - \bbI_{y\neq i} \logit_i(F^{(t)},X) \sum_{p\in[P]} \trelu'(\langle w_{i,r}, x_p\rangle ) \langle x_p, v_{i,l}\rangle\big] \\
    \ge&\mathbb{E}_{(X,y) \sim \calZ^*} [ \mathbb{I}_{y=i} ( V_{i,r,l}(X) - \tilde{O}(\sigma_p P) ) ( 1 - \logit_i(F^{(t)}, X) )\\ &\qquad \qquad - \mathbb{I}_{y \neq i} \left( \mathcal{E}_1 + \mathcal{E}_3 + V_{i,r,l}(X) \right) \logit_i(F^{(t)}, X)],
    \end{align*}
    where \(\calZ^*\) can be \(\calZ_m',\tcalZ_m, \calZ_s',\tcalZ_s\). 
    Recall \(V_{i,r,l}(X)= \bbI_{v_{i,l}\in\calV(X)}\sum_{p\in\calP_{v_{i,l}}(X)} \trelu'(\langle w_{i,r}^{(t)}, x_p\rangle)z_p\), according to \cref{hyp}(a), we have
    \begin{align*}
    V_{i,r,l}(X) = \bbI_{v_{i,l}\in\calV(X)} \sum\nolimits_{p\in\calP_{v_{i,l}}(X)} \trelu'(\langle w_{i,r}^{(t)}, v_{i,l}\rangle z_p + \tilde{o}(\sigma_0)) z_p.
    \end{align*}
    Since when \(t\ge T_0\), \(\langle w_{i,r}^{(t)}, v_{i,l}\rangle \ge \tilde{\Omega}(1) \gg \varrho\), and \(|\calP_{v_{i,l}}(X)|\leq O(1)\), for most of \( p\in \calP_{v_{i,l}}(X) \) must be already in the linear regime of \(\trelu\), which means we have
    \begin{align*}
        0.9 \sum\nolimits_{p\in\calP_{v_{i,l}}(X) }z_p \leq V_{i,r,l}(X) \leq \sum\nolimits_{p\in\calP_{v_{i,l}}(X) }z_p.
    \end{align*}
    Thus, for $(X,y)\sim\calZ_m'$, when $y=i$, we have $V_{i,r,l}(X)\geq 0.9$, when $y\neq i$ and $v_{i,l}\in\calV(X)$, we have $V_{i,r,l}(X)\leq 0.4$. For $(X,y)\sim\tcalZ_m^l$, when $y=i$, we have $V_{i,r,l}(X)\geq 0.9$, when $y\neq i$ and $v_{i,l} \in \calV(X)$, we have $V_{i,r,l}(X)\leq 0.4$. For $(X,y)\sim\tcalZ_m^{3-l}$, when $y=i$, we have $V_{i,r,l}(X) \ge 0.9 \cdot C_1$, when $y\neq i$ and $v_{i,l}\in\calV(X)$, we have $V_{i,r,l}(X)\leq 0.4$. For $(X,y)\sim\calZ_s'$, when $y=i$, we have $V_{i,r,l}(X)\ge 0.9$ if \(v_{i,l}\) is the class-specific feature contained in \(X\) else $V_{i,r,l}(X)\le O(\rho) \ll o(1)$, when $y\neq i$ and $v_{i,l}\in\calP(X)$, we have $V_{i,r,l}(X)\leq O(\Gamma) \ll o(1)$. For $(X,y)\sim\tcalZ_s$, when $y=i$, we have $V_{i,r,l}(X)\geq 0.9 \cdot C_1$ if \(v_{i,l}\) is the class-specific feature contained in \(X\) else $V_{i,r,l}(X)\le O(\rho) \ll o(1)$, when $y\neq i$ and $v_{i,l}\in\calP(X)$, we have $V_{i,r,l}(X)\leq O(\Gamma) \ll o(1)$.

    Recall that $\Pr(v_{i,l}\in\calP(X)|i\neq y)=\frac{s}{k}\ll o(1)$, we can derive that for \(\calZ_m'\)
\begin{equation}\label{eq:multi_error2}
    \begin{aligned}
    & -\bbE_{(X,y)\sim\calZ_m'}[\langle \nabla_{w_{i,r}} L(F^{(t)};X,y), v_{i,l}\rangle]\\
    \ge&\mathbb{E}_{(X,y) \sim \calZ_m'} [ 0.89 \cdot \mathbb{I}_{y=i} ( 1 - \logit_i(F^{(t)}, X) ) - 0.41 \cdot \frac{s}{k} \mathbb{I}_{y \neq i}  \logit_i(F^{(t)}, X)] \\
    \ge& \tilde{\Omega}(\frac{1}{k}) \mathbb{E}_{(X,y) \sim \calZ_m'} [ 1 - \logit_i(F^{(t)}, X)].
    \end{aligned}
\end{equation}
    Then for \(\tcalZ_m\), we have
\begin{equation}\label{eq:multi_error3}
    \begin{aligned}
    & -\bbE_{(X,y)\sim\tcalZ_m}[\langle \nabla_{w_{i,r}} L(F^{(t)};X,y), v_{i,l}\rangle]\\
    =& -\frac{1}{2}\bbE_{(X,y)\sim\tcalZ_m^l}[\langle \nabla_{w_{i,r}} L(F^{(t)};X,y), v_{i,l}\rangle] -\frac{1}{2}\bbE_{(X,y)\sim\tcalZ_m^{3-l}}[\langle \nabla_{w_{i,r}} L(F^{(t)};X,y), v_{i,l}\rangle]\\
    \ge&\frac{1}{2}\mathbb{E}_{(X,y) \sim \tcalZ_m^l} [ 0.89 \cdot \mathbb{I}_{y=i} ( 1 - \logit_i(F^{(t)}, X) ) - 0.41 \cdot \frac{s}{k} \mathbb{I}_{y \neq i}  \logit_i(F^{(t)}, X)] \\ 
    & + \frac{1}{2}\mathbb{E}_{(X,y) \sim \tcalZ_m^{3-l}} [ - \tilde{O}(\sigma_p P) \mathbb{I}_{y=i} ( 1 - \logit_i(F^{(t)}, X) ) - 0.41 \cdot \frac{s}{k} \mathbb{I}_{y \neq i}  \logit_i(F^{(t)}, X)] \\
    \ge&\mathbb{E}_{(X,y) \sim \tcalZ_m^l} [ 0.44 \cdot \mathbb{I}_{y=i} ( 1 - \logit_i(F^{(t)}, X) ) - 0.41 \cdot \frac{s}{k} \mathbb{I}_{y \neq i}  \logit_i(F^{(t)}, X)] \\ 
    \ge& \tilde{\Omega}(\frac{1}{k}) \mathbb{E}_{(X,y) \sim \tcalZ_m^l} [ 1 - \logit_i(F^{(t)}, X)].
    \end{aligned}
\end{equation}
    The second last step is due to \(\calA_1\) has equal probability to partial remove feature \(v_{i,l}\) or \(v_{i,3-l}\) from \((X,y)\sim \calZ_m\) to generate a sample in \(\tcalZ_m^l\) or \(\tcalZ_m^{3-l}\), so there is symmetry between \(\tcalZ_m^l\) and \(\tcalZ_m^{3-l}\).

    For similar reason, we can also derive that for \(\calZ_s'\) and \(\tcalZ_s\), we have
\begin{equation}\label{eq:multi_error4}
    \begin{aligned}
    & -\bbE_{(X,y)\sim\calZ_s'}[\langle \nabla_{w_{i,r}} L(F^{(t)};X,y), v_{i,l}\rangle]\\
    \ge& \frac{1}{2} \mathbb{E}_{(X,y) \sim {\calZ_s'}^l} [ 0.89 \cdot \mathbb{I}_{y=i} ( 1 - \logit_i(F^{(t)}, X) ) - O(\Gamma) \frac{s}{k} \mathbb{I}_{y \neq i}  \logit_i(F^{(t)}, X)] \\
    & + \frac{1}{2}\mathbb{E}_{(X,y) \sim {\calZ_s'}^{3-l}} [ - \tilde{O}(\sigma_p P) \mathbb{I}_{y=i} ( 1 - \logit_i(F^{(t)}, X) ) - O(\Gamma) \frac{s}{k} \mathbb{I}_{y \neq i}  \logit_i(F^{(t)}, X)] \\
    \ge& \mathbb{E}_{(X,y) \sim {\calZ_s'}^l} [ 0.44 \cdot \mathbb{I}_{y=i} ( 1 - \logit_i(F^{(t)}, X) ) - O(\Gamma) \frac{s}{k} \mathbb{I}_{y \neq i}  \logit_i(F^{(t)}, X)] \\ 
    \ge& \tilde{\Omega}(\frac{1}{k}) \mathbb{E}_{(X,y) \sim {\calZ_s'}^l} [ 1 - \logit_i(F^{(t)}, X)] \ge 0, 
    \end{aligned}
\end{equation}
and
\begin{equation}\label{eq:multi_error4_1}
    \begin{aligned}
    & -\bbE_{(X,y)\sim\tcalZ_s}[\langle \nabla_{w_{i,r}} L(F^{(t)};X,y), v_{i,l}\rangle]\\
    \ge& \frac{1}{2} \mathbb{E}_{(X,y) \sim \tcalZ_s^l} [ 0.89 C_1 \cdot \mathbb{I}_{y=i} ( 1 - \logit_i(F^{(t)}, X) ) - O(\Gamma) \frac{s}{k} \mathbb{I}_{y \neq i}  \logit_i(F^{(t)}, X)] \\
    & + \frac{1}{2}\mathbb{E}_{(X,y) \sim \tcalZ_s^{3-l}} [ - \tilde{O}(\sigma_p P) \mathbb{I}_{y=i} ( 1 - \logit_i(F^{(t)}, X) ) - O(\Gamma) \frac{s}{k} \mathbb{I}_{y \neq i}  \logit_i(F^{(t)}, X)] \\
    \ge&\mathbb{E}_{(X,y) \sim \tcalZ_s^l} [ \Omega(1) \cdot \mathbb{I}_{y=i} ( 1 - \logit_i(F^{(t)}, X) ) - O(\Gamma) \frac{s}{k} \mathbb{I}_{y \neq i}  \logit_i(F^{(t)}, X)] \\ 
    \ge& \tilde{\Omega}(\frac{1}{k}) \mathbb{E}_{(X,y) \sim {\tcalZ_s}^l} [ 1 - \logit_i(F^{(t)}, X)] \ge 0.
    \end{aligned}
\end{equation}
    Here we use \({\calZ_s'}^l\) (\(\tcalZ_s^l\)) to denote \((X,y) \sim \calZ_s'\) (\(\tcalZ_s\)) that has \(v_{y,l}\) as the only semantic feature, and \({\calZ_s'}^{3-l}\) (\(\tcalZ_s^{3-l}\)) to denote \((X,y) \sim \calZ_s'\) (\(\tcalZ_s\)) that has \(v_{y,3-l}\) as the only semantic feature. Now we take \eqref{eq:multi_error2}, \eqref{eq:multi_error3}, \eqref{eq:multi_error4}, \eqref{eq:multi_error4_1} into \eqref{eq:multi_error1}, we have 
\begin{equation}\label{eq:multi_error5}
    \begin{aligned}
        \langle w_{i,r}^{(t+1)}, v_{i,l}\rangle 
        \ge& \langle w_{i,r}^{(t)}, v_{i,l}\rangle + \frac{(N-N_s)(1-\pi_1)}{N} \tilde{\Omega}(\frac{\eta}{k}) \mathbb{E}_{(X,y) \sim \calZ_m'} [ 1 - \logit_i(F^{(t)}, X)] \\ 
        &+ \frac{(N-N_s)\pi_1}{N} \tilde{\Omega}(\frac{\eta}{k}) \mathbb{E}_{(X,y) \sim \tcalZ_m} [ 1 - \logit_i(F^{(t)}, X)].
    \end{aligned}
\end{equation}
Since we have \(\pi_1\ge \frac{1}{\polylogk}\), \(m=\polylogk\), and \(N \ge N_s \polyk\), when summing up all \(r\in [m]\), we have 
\begin{equation}\label{eq:multi_error6}
    \begin{aligned}
    \Phi_{i,l}^{(t+1)} \ge & \Phi_{i,l}^{(t)} + \tilde{\Omega}(\frac{\eta}{k}) \mathbb{E}_{(X,y) \sim \calZ_m'} [ 1 - \logit_i(F^{(t)}, X)] \\ 
    &+ \tilde{\Omega}(\frac{\eta\pi_1}{k}) \mathbb{E}_{(X,y) \sim \tcalZ_m} [ 1 - \logit_i(F^{(t)}, X)].
    \end{aligned}
\end{equation}
After telescoping \eqref{eq:multi_error6} from \(T_0\) to \(T\) and using \(\Phi_{i,l}\le \tilde{O}(1)\) from the Induction Hypothesis~\ref{hyp}(d), we have
\begin{align*}
\sum_{t=T_0}^{T} \bbE_{(X,y)\sim\calZ_{m}'}[1-\logit_y(F^{(t)}, X)] &\leq \tilde{O}\left(\frac{k}{\eta}\right),\\
\sum_{t=T_0}^{T} \bbE_{(X,y)\sim\tilde{\calZ}_{m}}[1-\logit_y(F^{(t)}, X)] &\leq \tilde{O}\left(\frac{k}{\eta\pi_1}\right). 
\end{align*}
\end{proof}


Next, we present a claim  that complements to the multi-view individual error bound established in Claim D.16 of \citet{allen-zhu2023towards}. The following claim states that when training error on \(\tcalZ_m\) is small enough, the model has high probability to correctly classify any individual single-view data.
\begin{claim}[single-view individual error]
\label{claim:individual}
When $\mathbb{E}_{(X,y) \sim \tcalZ_m} \left[1 - \logit_y \left( F^{(t)}, X \right) \right] \leq \frac{1}{k^4}$ is sufficiently small, we have for any \(i,j\in[k],l,l'\in [2]\),
\[
0.8 \Phi_{i,l}^{(t)} - (1+C_1)\Phi_{j,l'}^{(t)} \le -\Omega(\log(k)), \quad (0.2+C_1) \Phi_{i,l}^{(t)} \ge \Omega(\log(k)),
\]
and therefore for every \((X,y)\in \calZ_s',\tcalZ_s\), and every \((X,y) \sim \calD_s\),
\[
F_y^{(t)}(X)\geq \max_{j\neq y}F_j^{(t)}(X)+\Omega(\log k).
\]
\end{claim}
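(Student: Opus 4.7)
The strategy is to first promote the averaged error bound on \(\tcalZ_m\) to a per-sample gap between \(F_y\) and \(F_j\), then convert it into per-coordinate lower bounds \(\Phi_{i,l}^{(t)}\geq\Omega(\log k)\) using the partial-removal structure of \(\tcalZ_m\), and finally push those lower bounds over to single-view data by exploiting the much smaller noise scale \(\Gamma=\tfrac{1}{\polylogk}\) that appears in \(\calD_s\). For the first step, since \(|\tcalZ_m|\leq\polyk\) and each sample carries empirical mass at least \(1/|\tcalZ_m|\), the hypothesis \(\bbE_{(X,y)\sim\tcalZ_m}[1-\logit_y(F^{(t)},X)]\leq k^{-4}\) forces \(1-\logit_y(F^{(t)},X)\leq O(k^{-3})\) pointwise on \(\tcalZ_m\). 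Combined with \(F_j^{(t)}(X)\geq 0\) and \(\sum_j\logit_j=1\), this yields \(F_y^{(t)}(X)-F_j^{(t)}(X)\geq\Omega(\log k)\), and in particular \(F_y^{(t)}(X)\geq\Omega(\log k)\), for every \((X,y)\in\tcalZ_m\) and every \(j\neq y\).

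For the second step, fix \(i\in[k]\) and \(l\in[2]\) and pick a sample \((X,i)\in\tcalZ_m\) whose reduced feature is \(v_{i,3-l}\). By Assumption~\ref{assum:a1}, the scale \(Z_{i,3-l}(X)\) drops to \(O(C_1)\) while \(Z_{i,l}(X)\) remains \(O(1)\), so \cref{claim:func} upper-bounds \(F_i^{(t)}(X)\) by \(C_u\Phi_{i,l}^{(t)}+C_u C_1\Phi_{i,3-l}^{(t)}+O(1/\polylogk)\) for an absolute constant \(C_u\). Induction Hypothesis~\ref{hyp}(d) gives \(\Phi_{i,3-l}^{(t)}\leq\tilde O(1)\), so the second summand is \(O(1)\ll\Omega(\log k)\), and combining with the pointwise gap from the first step produces \(\Phi_{i,l}^{(t)}\geq\Omega(\log k)\). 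Setting \(i=j\) and \(l=l'\) in the target inequality \(0.8\Phi_{i,l}^{(t)}-(1+C_1)\Phi_{j,l'}^{(t)}\leq-\Omega(\log k)\) reduces to \((0.2+C_1)\Phi_{i,l}^{(t)}\geq\Omega(\log k)\), which is immediate from this lower bound; for arbitrary \(i,j,l,l'\) the inequality then follows by combining the lower bound on \(\Phi_{j,l'}^{(t)}\) with the \(\tilde O(1)\) ceiling on \(\Phi_{i,l}^{(t)}\), since \((1+C_1)\,\Omega(\log k)\) strictly exceeds \(0.8\,\tilde O(1)+\Omega(\log k)\) for large \(k\).

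For the third step, consider any \((X,y)\in\calZ_s'\cup\tcalZ_s\) or \((X,y)\sim\calD_s\). In every such case the off-class noisy patches contribute with scale at most \(\Gamma=\tfrac{1}{\polylogk}\) (by \cref{def:data2}), so \cref{claim:func} together with the \(\tilde O(1)\) ceiling give \(F_j^{(t)}(X)\leq O(1/\polylogk)\) for every \(j\neq y\). The correct-class semantic patch \(v_{y,l^*}\) survives with scale at least \(C_1\) even after \(\calA_1\) (and at least \(1\) without \(\calA_1\)), so \(F_y^{(t)}(X)\geq C_1\Phi_{y,l^*}^{(t)}-O(1/\polylogk)\geq\Omega(\log k)\) by the second step. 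The desired gap \(F_y^{(t)}(X)\geq\max_{j\neq y}F_j^{(t)}(X)+\Omega(\log k)\) follows.

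The main obstacle is the per-coordinate extraction in the second step. The pointwise gap from the first step only bounds a linear combination of \(\Phi_{i,1}^{(t)}\) and \(\Phi_{i,2}^{(t)}\), and splitting it into a per-coordinate lower bound is precisely where the partial-semantic-feature-removal effect of \(\calA_1\) enters: shrinking the reduced feature's coefficient to \(O(C_1)\) against the \(\tilde O(1)\) induction ceiling on that feature's \(\Phi\) leaves an \(\Omega(\log k)\) residual for the non-reduced feature's \(\Phi\). Without this decoupling the two \(\Phi\)'s could jointly satisfy the sum bound while one of them stayed polylogarithmically small, which is exactly the lottery-winning failure mode of vanilla SL recalled in \cref{appsec:sl}.
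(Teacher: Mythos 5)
There are two genuine gaps, each corresponding to a place where the paper's proof uses a structurally different argument.

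\textbf{Step 1: the pointwise extraction does not follow from naive averaging.} You argue that since $|\tcalZ_m|\leq\polyk$ each sample carries mass $\geq 1/|\tcalZ_m|$, so $\bbE_{\tcalZ_m}[1-\logit_y]\leq k^{-4}$ forces $1-\logit_y\leq O(k^{-3})$ pointwise. This only works if $|\tcalZ_m|\leq k$, but \cref{assp:para} puts no such ceiling on $N$ (indeed it requires $N\geq N_s\cdot\polyk$, so $N$ can easily exceed $k^4$), in which case the Markov-style bound is vacuous. The paper circumvents this by a distributional counting argument: it introduces $\tcalZ_*^l$ and $\mathcal{H}(X)$, observes that for each pair $i\neq j$ the probability of drawing $(X,y)\in\tcalZ_*^l$ with $y=j$ and $i\in\mathcal{H}(X)$ is $\tilde\Omega(s^2/k^3)$, and thereby shows each term $\min(\tfrac1k, e^{0.4\Phi_i-\Phi_{j,l}-C_1\Phi_{j,3-l}})$ is individually $\leq\tilde O(k^3\psi/s^2)$. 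That is a per-pair bound on fixed quantities $\Phi$, not a per-sample bound on $1-\logit_y$, and it is what delivers $0.4\Phi_i-\Phi_{j,l}-C_1\Phi_{j,3-l}\leq-\Omega(\log k)$.

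\textbf{Step 2: the per-coordinate extraction breaks at the $\tilde O(1)$ ceiling.} You want $\Phi_{i,l}+C_1\Phi_{i,3-l}\geq\Omega(\log k)$ to imply $\Phi_{i,l}\geq\Omega(\log k)$ by controlling the second summand via $\Phi_{i,3-l}\leq\tilde O(1)$. But $\tilde O(1)=O(\polylogk)$ in this paper's convention, so $C_1\Phi_{i,3-l}$ can be $\Theta(\log^C k)$ for $C>1$, which dominates $\Omega(\log k)$ and destroys the conclusion. The same mismatch ($(1+C_1)\Omega(\log k)$ versus $0.8\tilde O(1)$) invalidates your extension to arbitrary $i,j,l,l'$. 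The paper avoids this trap because it carries the \emph{negative} contribution $-0.4\Phi_i$ (the wrong-class logit scale) through the computation: applying the derived bound with $i=j$ gives $0.4\Phi_{j,l}+0.4\Phi_{j,3-l}-\Phi_{j,l}-C_1\Phi_{j,3-l}\leq-\Omega(\log k)$, i.e.\ $-0.6\Phi_{j,l}+(0.4-C_1)\Phi_{j,3-l}\leq-\Omega(\log k)$. Because $C_1<0.4$ the coefficient $(0.4-C_1)$ is positive, so dropping that non-negative term gives $\Phi_{j,l}\geq\Omega(\log k)$ directly, without needing any upper bound on $\Phi_{j,3-l}$. Your approach discards the $-0.4\Phi_i$ term by only lower-bounding $F_y$, which is exactly where you lose the cancellation.

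Step 3 is essentially sound modulo the bounds from steps 1--2 (though $F_j\leq O(1/\polylogk)$ is stronger than what follows; the paper only uses $F_j\leq O(1)$), and your framing of the role of $\calA_1$ correctly explains the intuition. But steps 1 and 2 as written do not constitute a proof.
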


\begin{proof}
Denote by \(\tcalZ_*^l\) for the set of sample $(X,y) \in \tcalZ_m^l$ such that \(\sum_{p \in P_{v_{y,l}}(X)} z_p \leq 1 + \frac{1}{100 \log(k)}\) and \(\sum_{p \in P_{v_{y,3-l}}(X)} z_p \leq C_1 + \frac{1}{100 \log(k)}\). For a sample $(X,y) \in \tcalZ_*^l$, denote by $\mathcal{H}(X)$ as the set of all $i \in [k] \setminus \{y\}$ such that
\(\sum_{l \in [2]} \sum_{p \in P_{v_{i,l}}(X)} z_p \geq 0.8 - \frac{1}{100 \log(k)}\).

Now, suppose $1 - \logit_y( F^{(t)}, X) = \mathcal{E}(X)$, with $\min(1,\beta) \leq 2(1 - \frac{1}{1 + \beta})$, we have
\[
\min ( 1, \sum\nolimits_{i \in [k] \setminus \{y\}} e^{F_i^{(t)}(X) - F_y^{(t)}(X)} ) \leq 2 \mathcal{E}(X)
\]

By Claim \ref{claim:func}, \cref{claim:logit} and our definition of $\mathcal{H}(X)$, this implies that
\[
\min ( 1, \sum\nolimits_{i \in \mathcal{H}(X)} e^{0.4 \Phi_i^{(t)} - \Phi_{y,l}^{(t)} - C_1 \Phi_{y,3-l}^{(t)}}) \leq 4 \mathcal{E}(X).
\]
If we denote by $\psi = \mathbb{E}_{(X,y) \sim \tcalZ_m} [ 1 - \logit_y \left( F^{(t)}, X \right) ]$, then
\begin{align*}
&\mathbb{E}_{(X,y) \sim \tcalZ_m^l} \left[ \min ( 1, \sum\nolimits_{i \in \mathcal{H}(X)} e^{0.4 \Phi_i^{(t)} - \Phi_{y,l}^{(t)} - C_1 \Phi_{y,3-l}^{(t)}} ) \right] \leq O(\psi), \\
\implies &\mathbb{E}_{(X,y) \sim \tcalZ_m^l} \left[ \sum\nolimits_{i \in \mathcal{H}(X)} \min (\frac{1}{k}, e^{0.4 \Phi_i^{(t)} - \Phi_{y,l}^{(t)} - C_1 \Phi_{y,3-l}^{(t)}}) \right] \leq O(\psi).
\end{align*}

Notice that we can rewrite the LHS so that
\begin{align*}
&\mathbb{E}_{(X,y) \sim \tcalZ_m^l} \left[ \sum\nolimits_{j \in [k]} \bbI_{j=y} \sum\nolimits_{i \in [k]}  \bbI_{i \in \mathcal{H}(X)} \min ( \frac{1}{k}, e^{0.4 \Phi_i^{(t)} - \Phi_{y,l}^{(t)} - C_1 \Phi_{y,3-l}^{(t)}} ) \right] \leq O (\psi), \\
\implies &\sum\nolimits_{j \in [k]} \sum\nolimits_{i\in [k]} \bbI_{i\neq y} \mathbb{E}_{(X,y) \sim \tcalZ_m^l} \left[ \bbI_{j=y} \bbI_{i \in \mathcal{H}(X)}\right] \min ( \frac{1}{k}, e^{0.4 \Phi_i^{(t)} - \Phi_{j,l}^{(t)} - C_1 \Phi_{j,3-l}^{(t)}} )  \leq O(\psi).
\end{align*}
Note for every $i \neq j \in [k]$, the probability of generating a sample $(X,y) \in \tcalZ_*^l$ with $y = j$ and $i \in \mathcal{H}(X)$ is at least $\tilde{\Omega}(\frac{1}{k} \cdot \frac{s^2}{k^2})$. This implies
\[
 \sum\nolimits_{i \in [k] \setminus \{j\}} \min ( \frac{1}{k}, e^{0.4 \Phi_i^{(t)} - \Phi_{j,l}^{(t)} - C_1 \Phi_{j,3-l}^{(t)}} ) \leq \tilde{O} \left( \frac{k^3}{s^2} \psi \right).
\]
Then, with $1 - \frac{1}{1+\beta} \leq \min(1,\beta)$, we have for every $(X,y) \in \tcalZ_m^l$ ($l\in [2]$),

\begin{equation}\label{eq:single_error1}
\begin{aligned}
1 - \logit_y ( F^{(t)}, X ) \le & \min(1, \sum\nolimits_{i \in [k] \setminus \{y\}} 2 e^{0.4 \Phi_i^{(t)} - \Phi_{y,l}^{(t)} - C_1 \Phi_{y,3-l}^{(t)}}) \\ \leq &  k \cdot \sum\nolimits_{i \in [k] \setminus \{y\}} \min( \frac{1}{k}, e^{0.4 \Phi_i^{(t)} - \Phi_{y,l}^{(t)} - C_1 \Phi_{y,3-l}^{(t)}}) \leq \tilde{O} \left( \frac{k^4}{s^2} \psi \right).
\end{aligned}
\end{equation}

Thus, we can see that when \(\psi \le \frac{1}{k^4}\) is sufficiently small, we have for any \(i\in [k]\setminus \{y\} \text{ and } l\in [2]\),
\[
e^{0.4 \Phi_i^{(t)} - \Phi_{y,l}^{(t)} - C_1 \Phi_{y,3-l}^{(t)}} \le \frac{1}{k} \implies 0.4 \Phi_i^{(t)} - \Phi_{y,l}^{(t)} - C_1 \Phi_{y,3-l}^{(t)} \le -\Omega(\log(k)).
\]
By symmetry and non-negativity of \(\Phi_{i,l}^{(t)}\), we know for any \(i,j\in[k],l\in [2]\), we have:
\begin{equation}\label{eq:single_error2}
0.4 \Phi_{i,1}^{(t)} + 0.4 \Phi_{i,2}^{(t)} - \Phi_{j,l}^{(t)} - C_1 \Phi_{j,3-l}^{(t)} \le -\Omega(\log(k)).
\end{equation}
Since we have \(C_1 \in (0,0.4)\), this implies for any \(i,j\in[k],l,l'\in [2]\)
\begin{equation}\label{eq:single_error3}
0.8 \Phi_{i,l}^{(t)} - (1+C_1)\Phi_{j,l'}^{(t)} \le -\Omega(\log(k)) \implies (0.2+C_1) \Phi_{i,l}^{(t)} \ge \Omega(\log(k)).
\end{equation}
Since \eqref{eq:single_error3} holds for any \(i\in[k],l\in [2]\) at iteration \(t\) such that $\mathbb{E}_{(X,y) \sim \tcalZ_m} \left[1 - \logit_y \left( F^{(t)}, X \right) \right] \leq \frac{1}{k^4}$, for \((X,y) \sim \calZ_s'\) (suppose \(v_{y,l^*}\) is its only semantic feature), by Claim \ref{claim:func} we have
\begin{equation}\label{eq:single_error4}
\begin{aligned}
F_y^{(t)}(X) &\ge 1 \cdot \Phi_{y,l^*}^{(t)} - O\left(\frac{1}{\polylogk}\right) \ge \Omega(\log(k)), \\
F_j^{(t)}(X) &\le O(\Gamma) \cdot \Phi_{j,l}^{(t)} + O\left(\frac{1}{\polylogk}\right) \le O(1) \text{ for } j\neq y.
\end{aligned}
\end{equation}
Similarly, for \((X,y) \sim \tcalZ_s\) (suppose \(v_{y,l^*}\) as the only semantic feature), by \cref{claim:func} we have
\begin{equation}\label{eq:single_error5}
\begin{aligned}
F_y^{(t)}(X) &\ge C_1 \cdot \Phi_{y,l^*}^{(t)} - O\left(\frac{1}{\polylogk}\right) \ge \Omega(\log(k)), \\
F_j^{(t)}(X) &\le O(\Gamma) \cdot \Phi_{j,l}^{(t)} + O\left(\frac{1}{\polylogk}\right) \le O(1) \text{ for } j\neq y.
\end{aligned}
\end{equation}

Therefore, we have for \((X,y) \sim \calZ_s', \tcalZ_s\)
\begin{equation}\label{eq:single_error6}
\begin{aligned}
F_y^{(t)}(X)\geq \max_{j\neq y}F_j^{(t)}(X)+\Omega(\log k).
\end{aligned}
\end{equation}
\end{proof}
Let \(T_1\) be the first iteration that $\mathbb{E}_{(X,y) \sim \tcalZ_m} \left[1 - \logit_y \left( F^{(t)}, X \right) \right] \leq \frac{1}{k^4}$, then we know Eq. \eqref{eq:single_error3} always holds for \(t\ge T_1\), since the training objective \(L_1\) is \(O(1)\)-Lipschitz smooth and we are using full gradient descent, which means the objective value is monotonically non-increasing during the training process.

\subsection{Main Lemmas to Prove Induction Hypothesis~\ref{hyp}}
In this subsection, we show the lemmas that when combined together, shall prove the Induction Hypothesis~\ref{hyp} holds for every iteration \(t\). 

\subsubsection{Correlation Growth}
\begin{lemma}
\label{lemma:correlation}
    Suppose \cref{assp:para} holds and suppose Induction Hypothesis~\ref{hyp} holds for all iteration $< t$. Then, letting $\Phi_{i,l}^{(t)}=\sum_{r\in[m]}[\langle w_{i,r}^{(t)}, v_{i,l}\rangle]$, we have for every $i\in[k]$,
    \begin{align*}
        \Phi_{i,l} ^{(t)}\leq \tilde{O}(1).
    \end{align*}
\end{lemma}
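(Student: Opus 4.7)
The plan is to upper-bound the per-step increment of $\Phi_{i,l}^{(t)}$ using the gradient estimates of Claims~\ref{claim:pos}--\ref{claim:neg}, and then telescope this bound over $t$, splitting the trajectory into the initial ``lambda growth'' phase ($t<T_0$) and the ``post-threshold'' phase ($t\geq T_0$) handled by the error-summation estimate of Claim~\ref{claim:multi_error}. Concretely, for each $r\in[m]$ I would write
\begin{equation*}
\langle w_{i,r}^{(t+1)},v_{i,l}\rangle-\langle w_{i,r}^{(t)},v_{i,l}\rangle
=\eta\,\bbE_{(X,y)\sim\tcalZ_1}\bigl[\langle-\nabla_{w_{i,r}}L(F^{(t)};X,y),v_{i,l}\rangle\bigr],
\end{equation*}
and split the expectation by the four strata $\calZ_m',\tcalZ_m,\calZ_s',\tcalZ_s$ and by whether $y=i$ or $y\neq i$. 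The positive-gradient stratum ($y=i$) is bounded using Claim~\ref{claim:pos}(a)--(b), giving a contribution $\leq (V_{i,r,l}(X)+\calE_1+\calE_3)(1-\logit_i(F^{(t)},X))$, while the negative-gradient stratum ($y\neq i$) is bounded using Claim~\ref{claim:neg}(b), which is already tiny ($\tilde{O}(\sigma_p P)\cdot\logit_i$). Summing over $r\in[m]$, using $V_{i,r,l}(X)\leq \sum_{p\in\calP_{v_{i,l}}(X)}z_p = O(1)$ uniformly across strata (this is where \cref{assum:a1} feeds in: the partial-removal scale $C_1\in(0,0.4)$ still gives an $O(1)$ upper bound), and absorbing the small error terms $\calE_1,\calE_3,\sigma_p P$ into $\tilde{O}(\eta/\polyk)$, yields
\begin{equation*}
\Phi_{i,l}^{(t+1)}-\Phi_{i,l}^{(t)}\;\leq\;\tilde{O}\!\left(\tfrac{\eta}{k}\right)\bbE_{(X,y)\sim\tcalZ_1}\bigl[\bbI_{y=i}\bigl(1-\logit_i(F^{(t)},X)\bigr)\bigr]+\tilde{O}\!\left(\tfrac{\eta}{\polyk}\right).
\end{equation*}

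For the first phase $t<T_0$, I invoke Claim~\ref{claim:growth} directly: the one-step recurrence $\Phi_{i,l}^{(t+1)}=\Phi_{i,l}^{(t)}+\tilde{\Theta}(\eta/k)\trelu'(\Phi_{i,l}^{(t)})$ implies $\Phi_{i,l}^{(T_0)}\leq O(1)$, because once $\Phi_{i,l}^{(t)}$ enters the linear regime of $\trelu$ the growth per step is at most $\tilde{O}(\eta/k)$, and by definition of $T_0=\tilde\Theta(k/(\eta\sigma_0^{q-2}))$ the time spent in the super-linear regime cannot push $\Phi_{i,l}$ past an absolute constant (this is the content of Claim~\ref{claim:feature}, essentially rerunning its upper-bound side). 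For the second phase $t\geq T_0$, I telescope the displayed increment bound from $T_0$ up to the current iteration $t-1$ (the Induction Hypothesis is available on all earlier steps, so Claim~\ref{claim:multi_error} applies) to obtain
\begin{equation*}
\Phi_{i,l}^{(t)}\;\leq\;\Phi_{i,l}^{(T_0)}+\tilde{O}\!\left(\tfrac{\eta}{k}\right)\!\!\!\sum_{s=T_0}^{t-1}\!\tfrac{1}{k}\Bigl(\bbE_{\calZ_m'}[1-\logit_y]+\bbE_{\tcalZ_m}[1-\logit_y]+\bbE_{\calZ_s'}[1-\logit_y]+\bbE_{\tcalZ_s}[1-\logit_y]\Bigr)+\tilde{o}(1),
\end{equation*}
where the extra $1/k$ factor is $\Pr[y=i]=1/k$. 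Plugging in Claim~\ref{claim:multi_error}'s bounds $\tilde{O}(k/\eta)$ and $\tilde{O}(k/(\eta\pi_1))$ (and an analogous routine bound for the single-view strata, whose total sizes are $O(N_s/N)$ times smaller by \cref{assp:para}) each term becomes $\tilde{O}(1)/\polylogk$-absorbable, yielding $\Phi_{i,l}^{(t)}\leq\tilde{O}(1)$.

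The main obstacle I anticipate is avoiding a circularity: Claim~\ref{claim:multi_error} itself is derived using \cref{hyp}, which we are here trying to close. The resolution is that we are inducting over $t$, so Claim~\ref{claim:multi_error} is applied only to iterations $<t$ where \cref{hyp} is assumed; the $\tilde{O}(k/\eta)$ bound then gives a uniform ceiling on $\Phi_{i,l}^{(t)}$ that is \emph{independent} of $t$, which is what closes the induction. A secondary bookkeeping issue is that the $\pi_1$-weighted upper bound $\tilde{O}(k/(\eta\pi_1))$ on the $\tcalZ_m$-error and the $\pi_1$ prefactor in $L_{\calA_1}^{(t)}$ (cf.\ Eq.~\eqref{appeq:a1_loss}) must cancel cleanly; this works because the gradient contribution from $\tcalZ_m$ is also weighted by $\pi_1$, so the net increment scales with $\pi_1\cdot(1/\pi_1)=1$ after telescoping. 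The single-view strata contribute only $\tilde{O}(N_s/N)$ times smaller, which is absorbed by \cref{assp:para}.
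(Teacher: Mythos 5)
Your proposal takes a genuinely different route from the paper, but it contains a quantitative gap that I don't see how to close.

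The paper's proof does \emph{not} telescope against \cref{claim:multi_error}. Instead it runs a self-limiting argument: take $(i,l)=\argmax \Phi_{i,l}^{(t)}$, invoke \cref{claim:individual} to get the balance $(1+C_1)\Phi_{j,l'}^{(t)}\ge 0.8\Phi_{i,l}^{(t)}+\Omega(\log k)$ for all $(j,l')$, and then observe that \emph{if} $\Phi^{(t)}\ge\polylogk$ the gap between $F_y$ and $\max_{j\ne y}F_j$ on every stratum is $\Omega(\polylogk)$, making $1-\logit_y$ super-polynomially small ($k^{-\Omega(\log k)}$). The per-step increment is then $\le \eta m/k^{\Omega(\log k)}+\dots$, so over $T=\polyk/\eta$ steps $\Phi$ cannot climb more than $o(1)$ above $\polylogk$. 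The balance from \cref{claim:individual} is the essential ingredient — it is what makes all the logits sharp simultaneously once any one $\Phi_{i,l}$ is large — and your proposal does not use it.

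Your telescoping route has a factor-of-$k$ problem. After summing \cref{claim:pos} over $r\in[m]$ the correct per-step bound is
\[
\Phi_{i,l}^{(t+1)}-\Phi_{i,l}^{(t)}\ \le\ \eta\,\bbE_{\tcalZ_1}\Bigl[\bbI_{y=i}\Bigl(\sum_r V_{i,r,l}\Bigr)(1-\logit_i)\Bigr]+\text{tiny},
\]
and the $\Pr[y=i]=1/k$ factor is already carried by the indicator inside the expectation. Writing the prefactor as $\tilde O(\eta/k)$ \emph{and} keeping $\bbI_{y=i}$ (or later re-inserting a $1/k$ inside the sum) double-counts that probability. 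Correcting this, the telescoping gives
$\Phi_{i,l}^{(T)}-\Phi_{i,l}^{(T_0)}\le \tilde O(\eta m)\sum_t\bbE_{\tcalZ_1}\bigl[\bbI_{y=i}(1-\logit_i)\bigr]$.
\cref{claim:multi_error} only controls the label-averaged error $\sum_t\bbE[1-\logit_y]\le\tilde O(k/\eta)$; the crude comparison $\bbE[\bbI_{y=i}(1-\logit_i)]\le\bbE[1-\logit_y]$ then yields $\tilde O(\eta m)\cdot\tilde O(k/\eta)=\tilde O(mk)$, which is off by $k$. You would instead need a \emph{per-class} error-sum bound $\sum_t\bbE[1-\logit_y\mid y=i]\le\tilde O(k/\eta)$; that bound comes from telescoping the \emph{lower} bound on the increment together with $\Phi_{i,l}^{(t)}\le\tilde O(1)$, which is exactly the thing being proved. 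If you try to bootstrap this within the induction, the upper bound on $\sum_rV_{i,r,l}$ (crudely $O(m)$) exceeds the lower bound used in the error-sum derivation (which only uses the $r^*=\argmax_r$ contribution, $V_{i,r^*,l}\ge 0.9$) by a factor of $m=\polylogk$, so the inductive constant grows by $\Theta(m)$ per pass and does not close. In short: the telescoping argument is self-referential in a way that loses a multiplicative factor, whereas the paper's self-limiting argument via \cref{claim:individual} gives an increment that is \emph{super-polynomially} small once $\Phi$ is large, which is immune to this accumulation.
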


\begin{proof}
Denote by \(\Phi^{(t)} := \max_{i \in [k], l \in [2]} \Phi_{i,l}^{(t)}\), suppose we are now at some iteration $t \geq T_1$. Let \( (i, l) = \argmax_{i \in [k], l \in [2]}\{\Phi_{i,l}^{(t)}\} \). Then by Claim \ref{claim:individual}, we know \( (1+C_1) \Phi_{j,l'} \ge 0.8 \Phi_{i,l} + \Omega(\log(k))\) for any \(j\in [k], l'\in [2]\).

By Claim \ref{claim:pos} and Claim \ref{claim:neg}, we have
\begin{align*}
    \langle w_{i,r}^{(t+1)}, v_{i,l}\rangle \leq \langle w_{i,r}^{(t)}, v_{i,l} \rangle + \eta \mathbb{E}_{(X,y) \sim \tcalZ_1} [& \mathbb{I}_{y=i} (V_{i,r,l}(X) + \mathcal{E}_1 + \mathcal{E}_3) (1 - \logit_i(F^{(t)}, X))\\ & - \mathbb{I}_{y \neq i} \tilde{O}(\sigma_p P) \logit_i(F^{(t)}, X)].
\end{align*}

If \(\Phi^{(t)} \ge \polylogk\), then we have
\begin{itemize}
\item for every $(X,y) \in \calZ_m'$ with $y = i$, recall from Claim \ref{claim:func} that \(F_j^{(t)}(X) = \sum_{l \in [2]} ( \Phi_{j,l}^{(t)} \times \mathbb{I}_{v_{j,l} \in \mathcal{V}(X)} ( \sum_{p \in P_{v_j,l}(X)} z_p ) ) \pm O\left(\frac{1}{\polylogk}\right)\).
By our choice of the distribution, this implies
\begin{itemize}
    \item \(
    F_j^{(t)}(X) \leq 0.8\Phi^{(t)} +  O\left(\frac{1}{\polylogk}\right) \text{ for } j \neq i, \text{ and}\)
    \item \( F_i^{(t)}(X) \geq \Phi^{(t)} - O\left(\frac{1}{\polylogk}\right) \).
\end{itemize}
In this case, since \(\Phi^{(t)} \ge \polylogk\), we have \(1 - \logit_y(F^{(t)}, X) \leq \frac{1}{k^{\Omega(\log k)}}\).

\item for every $(X,y) \in \tcalZ_m$ with $y = i$, again by Claim \ref{claim:func}, we have
\begin{itemize}
    \item \(
    F_j^{(t)}(X) \leq 0.8\Phi^{(t)} +  O\left(\frac{1}{\polylogk}\right) \text{ for } j \neq i, \text{ and}\)
    \item \( F_i^{(t)}(X) \geq C_1 \Phi^{(t)}_{i,l} + \Phi^{(t)}_{i,3-l} \geq 0.8 \Phi^{(t)} + \Omega(\log(k))\).
\end{itemize}
In this case, we have \(1 - \logit_y(F^{(t)}, X) \leq \frac{1}{\polyk}\).

\item for every $(X,y) \in \calZ_s'$ with $y = i$, again by Claim \ref{claim:func}, we have
\begin{itemize}
    \item \(
    F_j^{(t)}(X) \leq O(\Gamma)\Phi^{(t)} +  O\left(\frac{1}{\polylogk}\right) \text{ for } j \neq i, \text{ and}\)
    \item \( F_i^{(t)}(X) \geq  0.8 \Phi^{(t)} + \Omega(\log(k))\).
\end{itemize}
In this case, we have \(1 - \logit_y(F^{(t)}, X) \leq \frac{1}{k^{\Omega(\log k)}}\).

\item for every $(X,y) \in \tcalZ_s$ with $y = i$, again by Claim \ref{claim:func}, we have 
\begin{itemize}
    \item \(
    F_j^{(t)}(X) \leq O(\Gamma)\Phi^{(t)} +  O\left(\frac{1}{\polylogk}\right) \text{ for } j \neq i, \text{ and}\)
    \item \( F_i^{(t)}(X) \geq C_1 \Phi^{(t)} + \Omega(\log(k))\).
\end{itemize}
In this case, we have \(1 - \logit_y(F^{(t)}, X) \leq \frac{1}{k^{\Omega(\log k)}}\).
\end{itemize}
Thus, when summing up over all $r\in[m]$, we have 
\begin{align*}
    \Phi_{i,l} ^{(t+1)} \leq \Phi_{i,l} ^{(t)} + \frac{\eta m}{k^{\Omega(\log k)}} + \frac{\eta m \pi_1}{\polyk} + \frac{\eta m N_s}{k^{\Omega(\log k)}N}.
\end{align*}
Therefore, after $T$ iterations of training, we still have $\Phi_{i, l}^{(T)} \leq \tilde{O}(1)$.
\end{proof}

\subsubsection{Off-Diagonal Correlations are Small}
\begin{lemma}\label{lemma:off_diagonal}
    Suppose \cref{assp:para} holds and suppose Induction Hypothesis~\ref{hyp} holds for all iteration $< t$. Then,  
    \begin{align*}
        \forall i\in[k], \forall r\in[m], \forall j\in[k]\setminus \{i\}, \quad |\langle w_{i,r}^{(t)}, v_{j,l}\rangle | \leq \tilde{O}(\sigma_0). 
    \end{align*}
\end{lemma}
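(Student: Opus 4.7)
The plan is to telescope the gradient update for $\langle w_{i,r}^{(t)}, v_{j,l}\rangle$ from initialization and bound every contribution by $\tilde O(\sigma_0)$. At $t=0$, Gaussian concentration applied to $w_{i,r}^{(0)}\sim\calN(0,\sigma_0^2 \bI)$, combined with $\|v_{j,l}\|_2=1$, yields $|\langle w_{i,r}^{(0)}, v_{j,l}\rangle|\le\tilde O(\sigma_0)$ with probability at least $1-e^{-\Omega(\log^2 k)}$, so it suffices to control the cumulative gradient increment
\begin{equation*}
\sum_{\tau=0}^{t-1}\eta\, \bbE_{(X,y)\sim\tcalZ_1}\bigl[\langle -\nabla_{w_{i,r}}L(F^{(\tau)};X,y),v_{j,l}\rangle\bigr].
\end{equation*}

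I would then split the expectation into the positive part ($y=i$), bounded by \cref{claim:pos}(c), and the negative part ($y\neq i$), bounded by \cref{claim:neg}(b)(c), giving a per-iteration absolute increment bounded by
\begin{align*}
\eta\,\bbE\Bigl[&\bbI_{y=i}\bigl(\calE_1+\calE_3+\calE_{2,i,r}(X)+\calE_{4,j,l}(X)\bigr)(1-\logit_i(F^{(\tau)},X))\\
&+\bbI_{y\neq i}\bigl(\calE_1+\calE_3+\calE_{4,j,l}(X)+\tilde O(\sigma_p P)\bigr)\logit_i(F^{(\tau)},X)\Bigr].
\end{align*}
Each of the ``deterministic'' errors $\calE_1=\tilde O(\sigma_0^{q-1})\gamma s$, $\calE_3=\tilde O(\sigma_0\gamma k)^{q-1}\gamma P$, and $\tilde O(\sigma_p P)$ is tiny enough under \cref{assp:para} that, after multiplying by $\eta T=\polyk$ and using $\sigma_0^{q-2}=1/k$, the cumulative contribution is $\tilde O(\sigma_0)$. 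The indicator error $\calE_{4,j,l}(X)=\tilde O(\sigma_0^{q-1})\bbI_{v_{j,l}\in\calV(X)}$ is active with probability only $O(1/k)$ when $y=j$ and $O(s/k)$ when $y\notin\{i,j\}$; combined with the bound $\logit_i(F^{(\tau)},X)\le O(1/k)$ on the negative branch from \cref{claim:logit}, and with the telescoped sum $\sum_\tau(1-\logit_y(F^{(\tau)},X))$ from \cref{claim:multi_error} on the positive branch, its cumulative contribution is absorbed into $\tilde O(\sigma_0)$.

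The main obstacle is the coupled error $\calE_{2,i,r}(X)=O(\gamma(V_{i,r,1}(X)+V_{i,r,2}(X)))$, which appears only on the positive branch and depends on the very diagonal correlations we are simultaneously controlling via \cref{lemma:correlation}. Using \cref{hyp}(a) together with \cref{lemma:correlation} to bound $V_{i,r,l}(X)\le\tilde O(1)$, the cumulative contribution of $\calE_{2,i,r}$ reduces to $\tilde O(\gamma)\cdot\sum_{\tau=0}^{t-1}\eta\,\bbE_{(X,y)\sim\tcalZ_1,\,y=i}[1-\logit_i(F^{(\tau)},X)]$. Applying \cref{claim:multi_error} bounds the multi-view part of this sum by $\tilde O(k/\pi_1)$, while the single-view part is negligible because $N_s/N\le 1/\polyk$ by \cref{assp:para}. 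Since $\pi_1\ge 1/\polylogk$ and $\gamma\le\tilde O(\sigma_0/k)$, the whole contribution is at most $\tilde O(\sigma_0)$. Summing the initialization bound with the four error contributions yields $|\langle w_{i,r}^{(t)},v_{j,l}\rangle|\le\tilde O(\sigma_0)$, completing the induction step.
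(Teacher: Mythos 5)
Your overall strategy — telescope the gradient from initialization, split into positive and negative branches via \cref{claim:pos}(c) and \cref{claim:neg}(b)(c), and control each error term — matches the paper's. However, there is a genuine gap in how you control the accumulated contributions, and it appears twice.

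First, for the ``deterministic'' errors $\calE_1$, $\calE_3$, and $\tilde O(\sigma_p P)$, you claim that ``multiplying by $\eta T = \polyk$'' yields $\tilde O(\sigma_0)$. This is a $T$-dependent bound, and it does not hold for an arbitrary polynomial $T = \polyk/\eta$: for example with $q=3$, $\gamma\le\tilde O(\sigma_0/k)$ and $\sigma_0^{q-2}=1/k$ give $\calE_1 = \tilde O(\sigma_0^{q-1})\gamma s = \tilde O(s/k^4)$, so even after dividing by the extra factor of $k$ coming from the $O(1/k)$ logit, $\frac{\eta T}{k}\calE_1$ can be $\polyk\cdot s/k^5$, which is arbitrarily large compared to $\sigma_0 = 1/k$ once $\polyk$ exceeds $k^4$. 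The paper's proof is structured precisely to avoid this: it splits into $t\le T_0 := \tilde\Theta\bigl(\frac{k}{\eta\sigma_0^{q-2}}\bigr)$, where the $O(1/k)$ logit bound combined with the \emph{specific} size of $T_0$ gives $\tilde O(\frac{\eta T_0}{k})\calE_1 = \tilde O(\frac{\calE_1}{\sigma_0^{q-2}}) = \tilde O(\sigma_0\gamma s)\le\tilde O(\sigma_0)$, and $t>T_0$, where it invokes $\sum_{t\ge T_0}\eta\,\bbE[1-\logit_y]\le\tilde O(k)$ so that the total contribution is $\tilde O(\gamma+\calE_1+\cdots)$ independent of $T$.

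Second, and more fundamentally, you cite \cref{claim:multi_error} to bound ``$\sum_{\tau=0}^{t-1}\eta\,\bbE_{\tcalZ_1,y=i}[1-\logit_i(F^{(\tau)},X)]$'' for $\calE_{2,i,r}$ (and implicitly for $\calE_{4,j,l}$). But that claim only controls the sum over $t\ge T_0$, not from $\tau=0$. During the first $T_0$ iterations $1-\logit_y$ is $\Theta(1)$, so the uncovered portion of your sum is $\Theta(\eta T_0) = \tilde\Theta(k/\sigma_0^{q-2})$, which is $\tilde\Theta(k^2)$ for $q=3$ — much larger than the $\tilde O(k/\pi_1)$ you quote. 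The paper handles the early phase differently: it uses the fact that $\trelu'$ is still in its polynomial regime for $t\le T_0$, so that $\calE_{2,i,r}\le\tilde O(\gamma(\Phi_i^{(t)})^{q-1})$, and then invokes \cref{claim:growth} to show $\sum_{t\le T_0}\eta(\Phi_i^{(t)})^{q-1}\le\tilde O(k)$, yielding $\tilde O(\gamma k)\le\tilde O(\sigma_0)$. Your bound $V_{i,r,l}\le\tilde O(1)$ is too crude here: it discards the decay of $\trelu'$ near initialization, which is exactly what compensates for the length of the early phase. Adding the explicit $t\le T_0$ versus $t> T_0$ case split, with the $\trelu'$-polynomial estimate in the first phase, would close the gap.
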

\begin{proof}
Let us denote by $R_i^{(t)} := \max_{r \in [m], j \in [k] \setminus \{i\}} |\langle w_{i,r}^{(t)}, v_{j,l} \rangle|$. By Claim \ref{claim:pos} and Claim \ref{claim:neg},
\begin{align*}
R_i^{(t+1)} \leq & R_i^{(t)} + \eta \mathbb{E}_{(X,y) \sim \tcalZ_1} \left[ \mathbb{I}_{y=i} ( \mathcal{E}_{2,i,r}(X) + \mathcal{E}_1 + \mathcal{E}_3 + \mathcal{E}_{4,j,l}(X)) (1 - \logit_i(F^{(t)}, X)) \right] \\
& + \eta \mathbb{E}_{(X,y) \sim \tcalZ_1} \left[ \mathbb{I}_{y \neq i} \left( \mathcal{E}_1 + \mathcal{E}_3 + \mathcal{E}_{4,j,l}(X) \right) \logit_i(F^{(t)}, X) \right].
\end{align*}
\textbf{Consider \(t\le T_0\).} During this initial stage, by Claim \ref{claim:logit} we know \(\logit_i(F^{(t)}, X) = O(\frac{1}{k})\) (\(\forall i \in [k]\)) for any \((X,y)\sim \tcalZ_1\). We also have \(\mathcal{E}_{2,i,r}(X) \leq \tilde{O}(\gamma(\Phi_i^{(t)})^{q-1}), \text{ and have } \mathcal{E}_{4,j,l}(X) \leq \tilde{O}(\sigma_0)^{q-1} \mathbb{I}_{v_{j,l} \in \mathcal{V}(X)}\) by definition. Recall when \(t\le T_0\), by Claim \ref{claim:growth}, we have \( \Phi_{i,l}^{(t+1)} = \Phi_{i,l}^{(t)}+\tilde{\Theta}\left(\frac{\eta}{k}\right)\trelu'(\Phi_{i,l}^{(t)}) \), so \(\sum_{t\le T_0} \eta (\Phi_i^{(t)})^{q-1} \le \tilde{O}(k)\). Also, \(\Pr(v_{i,l}\in\calP(X)|i\neq y)=\frac{s}{k}\). 
Therefore, for every \( t\le T_0 = \tilde{\Theta} (\frac{k}{\eta \sigma_0^{q-2}})\), we have 
\begin{align*}
R_i^{(t)} \leq R_i^{(0)} + \tilde{O}(\sigma_0) + \tilde{O}\left( \frac{\eta}{k} T_0 \right) \left( (\sigma_0^{q-1}) \gamma s + (\sigma_0 \gamma k)^{q-1} \gamma P + (\sigma_0)^{q-1} \frac{s}{k} \right) \leq \tilde{O}(\sigma_0).
\end{align*}

\textbf{Consider \(t > T_0\).} During this stage, 
we have the naive bound on  \(\mathcal{E}_{2,i,r}(X) \le O(\gamma)\), so again by Claim \ref{claim:pos} and Claim \ref{claim:neg}, we have
\begin{align*}
R_i^{(t+1)} \leq R_i^{(t)} + \frac{\eta}{k} \mathbb{E}_{(X,y) \sim \tcalZ_1} \big[ ( \gamma + (\sigma_0^{q-1}) \gamma s + (\sigma_0 \gamma k)^{q-1} \gamma P + (\sigma_0)^{q-1} \frac{s}{k}) (1 - \logit_i(F^{(t)}, X)) \big].
\end{align*}
Then, as we can obtain from the deduction of \cref{claim:multi_error} that 
\begin{equation} \label{eq:single_multi_error}
\sum_{t=T_0}^T\mathbb{E}_{(X,y) \sim \tcalZ_1} \left[1 - \logit_y ( F^{(t)}, X ) \right] \le \tilde{O}\left(\frac{k}{\eta}\right),
\end{equation}
we know \(R_i^{(t)} \leq \tilde{O}(\sigma_0)\) for any \(t>T_0\).
\end{proof}

\subsubsection{Noise Correlation is Small}
\begin{lemma}\label{lemma:noise}
    Suppose \cref{assp:para} holds and suppose Induction Hypothesis~\ref{hyp} holds for all iteration $< t$. For every $l\in[2]$,  for every $r\in[m]$, for every $(X,y)\in\tcalZ_1$ and $i\in[k]$:
    \begin{itemize}
        \item [(a)] For every $p\in\calP_{v_{i,l}}(X)$, we have: $|\langle w_{i,r}^{(t)}, \xi_p\rangle| \leq \tilde{o}(\sigma_0)$.
        \item [(b)] For every $p\in\calP(X)\setminus(\calP_{v_{i,1}}(X)\cup \calP_{v_{i,2}}(X))$, we have: $|\langle w_{i,r}^{(t)}, \xi_p\rangle| \leq \tilde{O}(\sigma_0)$.
        \item [(c)] For every $p\in[P]\setminus\calP(X)$, we have: $|\langle w_{i,r}^{(t)}, \xi_p\rangle| \leq \tilde{O}(\sigma_0\gamma k)$.
    \end{itemize}
\end{lemma}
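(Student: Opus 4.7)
I would follow the same gradient-descent perturbation template used for the preceding lemmas in this appendix (off-diagonal correlations, correlation growth), namely: bound the quantity at initialization, then track its evolution across gradient steps using \cref{claim:pos,claim:neg}, splitting the analysis into the initial phase $t \le T_0$ and the post-growth phase $t > T_0$, and finally invoking \cref{claim:multi_error} (equivalently the deduction \eqref{eq:single_multi_error}) to close the accumulated error.

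\textbf{Initialization.} Since $w_{i,r}^{(0)} \sim \calN(0, \sigma_0^2 \bI)$ is independent of every Gaussian noise $\xi_p$, standard Gaussian concentration gives $|\langle w_{i,r}^{(0)}, \xi_p\rangle| \le \tilde O(\sigma_0 \|\xi_p\|_2)$ with overwhelming probability. Under \cref{assp:para} we have $\|\xi_p\|_2 = \tilde O(\sigma_p\sqrt d) = \tilde O(1/\polylogk) = \tilde o(1)$ for every feature patch (cases (a) and (b)), and $\|\xi_p\|_2 = \tilde O(\gamma k)$ for every purely noisy patch (case (c)). This already matches the three target bounds at $t=0$ with room to spare.

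\textbf{Per-step update and cross-term bookkeeping.} Taking the inner product of the GD step with $\xi_p$,
\begin{equation*}
\langle w_{i,r}^{(t+1)}, \xi_p\rangle = \langle w_{i,r}^{(t)}, \xi_p\rangle - \eta\,\langle\nabla_{w_{i,r}} L_{\calA_1}^{(t)}, \xi_p\rangle,
\end{equation*}
and expanding the gradient via \cref{app:grad_cal}, I get a sum over training samples $(X',y')\in \tcalZ_1$ of terms of the form $\trelu'(\langle w_{i,r}^{(t)}, x_{p'}\rangle)\,\langle x_{p'},\xi_p\rangle$, each weighted by either $(1-\logit_i(F^{(t)},X'))$ or $\logit_i(F^{(t)},X')$. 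The inner products $\langle x_{p'},\xi_p\rangle$ decompose into (i) feature–noise pieces $z_{p'}\langle v,\xi_p\rangle$ bounded by $\tilde O(\sigma_p\sqrt d)$ using Gaussian concentration against a fixed unit vector; (ii) cross-noise pieces $\langle\xi_{p'},\xi_p\rangle$ which, for $p'\neq p$, are $\tilde O(\sigma_p^2\sqrt d)$ (or the noisy-patch analogue), again by concentration over the $NP$ independent Gaussian vectors; and (iii) the diagonal piece $\langle\xi_p,\xi_p\rangle = \|\xi_p\|_2^2$ that only contributes when $p'=p$ in the sample containing $\xi_p$ itself. These are exactly the bounds that appear inside $\tilde O(\sigma_p P)$ in \cref{claim:pos,claim:neg}.

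\textbf{Two-phase accumulation and main obstacle.} For $t\le T_0$, \cref{claim:logit} gives $\logit_i(F^{(t)},X')=O(1/k)$ for all $i$, so each step contributes at most $\tilde O(\eta/k)\cdot\tilde O(\sigma_p P)$ along $\xi_p$; summing over $T_0 = \tilde\Theta(k/(\eta\sigma_0^{q-2}))$ iterations and using $\sqrt d \ge \eta T\polyk$ keeps the total in $\tilde o(\sigma_0)$ (case (a), (b)) or $\tilde O(\sigma_0\gamma k)$ (case (c)). For $t>T_0$, I absorb the $(1-\logit_y(F^{(t)},X'))$ factor using the telescoped bound \eqref{eq:single_multi_error}, namely $\sum_{t=T_0}^T \bbE_{\tcalZ_1}[1-\logit_y]\le \tilde O(k/\eta)$, so the second-phase contribution along $\xi_p$ is at most $\tilde O(\eta\cdot\sigma_p P)\cdot\tilde O(k/\eta)/k$, again within budget. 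The hardest case is (a): when the sample $(X,y)$ containing $\xi_p$ has $y=i$ and $p'=p$, the diagonal term $\|\xi_p\|_2^2 \approx \sigma_p^2 d = \tilde O(1/\polylogk^2)$ is the single largest contribution, multiplied by $\trelu'(\langle w_{i,r}^{(t)}, x_p\rangle)$ which becomes $\Theta(1)$ after $T_0$; to keep this below $\tilde o(\sigma_0)$ I must use the $1/N$ dilution from $(X,y)$ being a single training sample in the expectation over $\tcalZ_1$, combined with the parameter assumption $\sqrt d\ge \eta T\polyk$, which guarantees $\eta T\sigma_p^2 d \le \tilde o(\sigma_0)$. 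This is the routine but delicate step analogous to Lemma D.14 of \citet{allen-zhu2023towards}, and it is where the augmented distribution $\tcalZ_1$ enters only through the weighted sum over its $1-\pi_1$ and $\pi_1$ components, each of which satisfies the same logit and error bounds established in \cref{claim:logit,claim:multi_error}.
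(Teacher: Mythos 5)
Your overall approach — expand the gradient step, isolate the diagonal term (the one where $X'=X$ and $p'=p$) from the cross‐noise terms absorbed into $\pm\eta/\sqrt d$, and track both over the two phases $t\le T_0$ and $t>T_0$ — matches the paper's proof in structure. However, there are two genuine gaps in how you close the argument.

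First, the claim that ``the parameter assumption $\sqrt d\ge\eta T\polyk$ guarantees $\eta T\sigma_p^2 d\le\tilde o(\sigma_0)$'' is false and also points to the wrong mechanism. Since $\sigma_p=\frac{1}{\sqrt d\,\polylogk}$, the quantity $\sigma_p^2 d=\frac{1}{\polylogk^2}$ is independent of $d$, so enlarging $d$ cannot shrink $\eta T\sigma_p^2 d$. The assumption $\sqrt d\ge\eta T\polyk$ controls the accumulated \emph{cross-term} contribution $T\eta/\sqrt d$, not the diagonal one. What actually closes the diagonal term for $t>T_0$ in the paper is a per-sample (not in-expectation) bound on the accumulated logit error: one first upgrades $\sum_{t}\bbE_{\tcalZ_1}[1-\logit_y]\le\tilde O(k/\eta)$ to $\sum_t[1-\logit_y(F^{(t)},X)]\le\tilde O\!\left(\frac{k^5}{\pi_1 s^2\eta}\right)$ for each individual sample via the multi-view/single-view individual error argument (cf.\ \cref{claim:individual} and equations \eqref{eq:noise1}–\eqref{eq:noise2}), and only then multiplies by the per-step weight $\tilde\Theta(\eta/N)\trelu'(\cdot)$ and invokes the largeness of $N$. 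Your proposal only cites the aggregate expectation bound, which by itself does not control the logit error along the \emph{specific} sample $(X,y)$ that carries $\xi_p$.

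Second, you do not use the inductive bounds on $|\langle w_{i,r}^{(t)},x_p\rangle|$ from \cref{hyp}, which is what differentiates the three target magnitudes. The proof hinges on first assuming $|\langle w_{i,r}^{(t)},x_p\rangle|\le A$ (with $A=\tilde O(1)$, $\tilde O(\sigma_0)$, or $\tilde O(\sigma_0\gamma k)$ for cases (a), (b), (c) respectively), deducing $\trelu'(\langle w_{i,r}^{(t)},x_p\rangle)\le\tilde O(A^{q-1})$, and then propagating $A^{q-1}$ through the whole accumulation. Your observation that ``$\trelu'$ becomes $\Theta(1)$ after $T_0$'' applies only to case (a); without the $A^{q-1}$ scaling, there is no way to land on the much smaller bounds $\tilde O(\sigma_0)$ and $\tilde O(\sigma_0\gamma k)$ in cases (b) and (c). Both gaps would need to be repaired before the proof is complete.
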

\begin{proof}
Based on gradient calculation Fact \ref{app:grad_cal} and \(|\langle x'_{p'}, \xi_{p} \rangle| \leq \tilde{O}(\sigma_p) \leq o( \frac{1}{\sqrt{d}})
\) if \(X' \neq X\) or \(p' \neq p\), we have that for every \((X,y)\sim \tcalZ_1\) and \(p\in [P]\), if \(i=y\)
\begin{equation}
\langle w_{i,r}^{(t+1)}, \xi_p \rangle = \langle w_{i,r}^{(t)}, \xi_p \rangle + \tilde{\Theta} ( \frac{\eta}{N} ) \trelu' ( \langle w_{i,r}^{(t)}, x_p \rangle ) ( 1 - \logit_i(F^{(t)}, X) ) \pm \frac{\eta}{\sqrt{d}}.
\end{equation}

Else if $i \neq y$, 
\begin{equation}
\langle w_{i,r}^{(t+1)}, \xi_p \rangle = \langle w_{i,r}^{(t)}, \xi_p \rangle - \tilde{\Theta} ( \frac{\eta}{N} ) \trelu' ( \langle w_{i,r}^{(t)}, x_p \rangle ) \logit_i(F^{(t)}, X) \pm \frac{\eta}{\sqrt{d}}.
\end{equation}

Suppose that it satisfies that $|\langle w_{i,r}^{(t)}, x_p\rangle |\leq A$ for every $t<t_0$ where $t_0$ is any iteration $t_0\leq T$. Then we have \(\trelu' (\langle w_{i,r}^{(t)}, x_p\rangle) \le \tilde{O}(A^{q-1})\).
For \( t \le T_0 = \tilde{\Theta}(\frac{k}{\eta \sigma_0^{q-2}}) \), we have
\begin{align*}
    |\langle w_{i,r}^{(t)}, \xi_p \rangle| \leq  |\langle w_{i,r}^{(0)}, \xi_p \rangle| + \tilde{O}\left(\frac{T_0\eta A^{q-1}}{N}\right) + \frac{T_0\eta}{\sqrt{d}} \leq \tilde{O}\left(\frac{k A^{q-1}}{N\sigma_0^{q-2}}\right) + \frac{T_0\eta}{\sqrt{d}}.
\end{align*}
When \(t > T_0\), from Claim \ref{claim:multi_error} and Claim \ref{claim:individual} we know: 

For $(X, y) \in \calZ_m'$, we have
\begin{equation}\label{eq:noise1}
\begin{aligned}
& y = i \implies \sum_{t=T_0}^{T} [ 1 - \logit_y ( F^{(t)}, X )] \leq \tilde{O} \left( \frac{k^4}{s^2} \right) \sum_{t=T_0}^{T} \mathbb{E}_{(X,y) \sim \calZ_m'}[ 1 - \logit_y( F^{(t)}, X)] \leq \tilde{O} \left( \frac{k^5}{s^2 \eta} \right), \\
& y \neq i \implies \sum_{t=T_0}^{T} \logit_i ( F^{(t)}, X) \leq \sum_{t=T_0}^{T} [1 - \logit_y(F^{(t)}, X)] \leq \tilde{O} \left( \frac{k^5}{s^2 \eta} \right).
\end{aligned}
\end{equation}

Similarly, for $(X, y) \in \tcalZ_m$, by \eqref{eq:single_error1} we also have 

\begin{equation}\label{eq:noise2}
\begin{aligned}
& y = i \implies \sum_{t=T_0}^{T} [ 1 - \logit_y ( F^{(t)}, X )] \leq \tilde{O} \left( \frac{k^4}{\pi_1 s^2} \right) \sum_{t=T_0}^{T} \mathbb{E}_{(X,y) \sim \tcalZ_m}[ 1 - \logit_y( F^{(t)}, X)] \leq \tilde{O} \left( \frac{k^5}{\pi_1 s^2 \eta} \right), \\
& y \neq i \implies \sum_{t=T_0}^{T} \logit_i ( F^{(t)}, X) \leq \sum_{t=T_0}^{T} [1 - \logit_y(F^{(t)}, X)] \leq \tilde{O} \left( \frac{k^5}{\pi_1 s^2 \eta} \right).
\end{aligned}
\end{equation}

Combining \eqref{eq:noise1}, \eqref{eq:noise2} with the bound for $t \le T_0$, we have for \( 0 \le t \le T\),
\begin{equation}\label{eq:noise3}
|\langle w_{i,r}^{(t)}, \xi_p \rangle| \leq \tilde{O} \left( \frac{k A^{q-1}}{N \sigma_0^{q-2}} + \frac{k^5 A^{q-1}}{\pi_1 s^2 N} \right) + \frac{\eta T}{\sqrt{d}}.
\end{equation}
Now, we are ready to prove our main conclusion in this lemma.

When $p\in\calP_{v_{i,l}}(X)$, we have $|\langle w_{i,r}^{(t)}, x_p\rangle |\leq \tilde{O}(1)$ from Induction Hypothesis~\ref{hyp}. Then plugging in $A=\tilde{O}(1), N\geq \tilde{\Omega}\left(\frac{k}{\sigma_0^{q-1}}\right)$ and $N\geq \tilde{\Omega}\left(\frac{k^5}{\sigma_0}\right)$, we can obtain that $|\langle w_{i,r}^{(t)}, \xi_p \rangle| \leq \tilde{o}(\sigma_0)$. 

When $p\in\calP(X)\setminus(\calP_{v_{i,1}}(X)\cup \calP_{v_{i,2}}(X))$, we have $|\langle w_{i,r}^{(t)}, x_p\rangle |\leq \tilde{O}(\sigma_0)$ from the Induction Hypothesis~\ref{hyp}. Then plugging in $A=\tilde{O}(\sigma_0), N\geq \tilde{\Omega}(k^5)$, we can obtain that $|\langle w_{i,r}^{(t)}, \xi_p \rangle| \leq \tilde{O}(\sigma_0)$.

When $p\in[P]\setminus\calP(X)$, we have $|\langle w_{i,r}^{(t)}, x_p\rangle |\leq \tilde{O}(\sigma_0\gamma k)$ from Induction Hypothesis~\ref{hyp}. Then plugging in $A=\tilde{O}(\sigma_0\gamma k), N\geq \tilde{\Omega}(k^5)$, we can obtain that $|\langle w_{i,r}^{(t)}, \xi_p \rangle| \leq \tilde{O}(\sigma_0\gamma k)$.
\end{proof}

\subsubsection{Diagonal Correlations are Nearly Non-Negative}
\begin{lemma}
\label{lemma:diag_nonneg}
Suppose Parameter~\ref{assp:para} holds and suppose Induction Hypothesis~\ref{hyp} holds for all iteration $<t$. Then,  
\begin{align*}
    \forall i\in[k], \quad \forall r\in[m], \quad \forall l\in[2], \quad \langle w_{i,r}^{(t)}, v_{i,l}\rangle  \geq -\tilde{O}(\sigma_0). 
\end{align*}
\end{lemma}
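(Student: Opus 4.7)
We aim to show $\langle w_{i,r}^{(t)}, v_{i,l}\rangle \geq -\tilde{O}(\sigma_0)$ for all $t \le T$. Since the Gaussian initialization $\mathcal{N}(0, \sigma_0^2 \mathbf{I})$ yields $|\langle w_{i,r}^{(0)}, v_{i,l}\rangle| \le \tilde{O}(\sigma_0)$ with high probability, it suffices to bound the cumulative negative drift of $\langle w_{i,r}^{(t)}, v_{i,l}\rangle$ by $\tilde{O}(\sigma_0)$.

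The plan is to apply \cref{claim:pos}(a) for $y = i$ samples and \cref{claim:neg}(a) for $y \neq i$ samples to extract only the negative pieces of the one-step change $\Delta^{(t)} := \langle w_{i,r}^{(t+1)} - w_{i,r}^{(t)}, v_{i,l}\rangle$. These give
\[
-\Delta^{(t)}_{\mathrm{neg}} \le \eta\, \mathbb{E}_{(X,y)\sim\tcalZ_1}\!\left[\bbI_{y=i}\,\tilde{O}(\sigma_p P)(1-\logit_i) + \bbI_{y \neq i}\, \logit_i(F^{(t)}, X)\,(\calE_1 + \calE_3 + V_{i,r,l}(X))\right].
\]
The $\tilde{O}(\sigma_p P)$ and $\calE_1 + \calE_3$ contributions sum over $t \le T$ to $\tilde{O}(\eta T \sigma_p P + \eta T \sigma_0^{q-1}\gamma s + \eta T (\sigma_0\gamma k)^{q-1}\gamma P)$, which is $\ll \sigma_0$ by the Parameter Assumption $\sqrt{d} \ge \eta T \polyk$ together with the choices $\gamma \le \tilde{O}(\sigma_0/k)$ and $\sigma_0^{q-2}=1/k$. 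The delicate piece is $\sum_t \eta\,\mathbb{E}[\bbI_{y \neq i}\,\logit_i\, V_{i,r,l}(X)]$, restricted to the event $v_{i,l} \in \calV(X)$ since $V_{i,r,l}(X)=0$ otherwise.

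I close this piece by a self-consistent bound on $V_{i,r,l}(X)$. Whenever the iterate lies in the ``danger zone'' $\langle w_{i,r}^{(t)}, v_{i,l}\rangle \le \tilde{O}(\sigma_0)$, \cref{hyp}(a) gives $\langle w_{i,r}^{(t)}, x_p\rangle = \langle w_{i,r}^{(t)}, v_{i,l}\rangle z_p \pm \tilde{o}(\sigma_0)$ for each $p \in \calP_{v_{i,l}}(X)$, so every patch argument is $\tilde{O}(\sigma_0) \ll \varrho$, giving $\trelu'(\langle w_{i,r}^{(t)}, x_p\rangle) \le \tilde{O}(\sigma_0^{q-1})$ and hence $V_{i,r,l}(X) \le \tilde{O}(\sigma_0^{q-1})$. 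Splitting $t \le T_0$ and $t > T_0$: for $t \le T_0$ combine $\logit_i \le O(1/k)$ (\cref{claim:logit}) and $\Pr(v_{i,l}\in\calV(X)\mid y\neq i)=s/k$ to get cumulative contribution $\le \tilde{O}(\eta T_0 \sigma_0^{q-1} s/k^2) \ll \sigma_0$; for $t > T_0$ use $\sum_t \mathbb{E}_{(X,y)\sim\tcalZ_1}[\bbI_{y\neq i}\logit_i] \le \tilde{O}(k/(\eta\pi_1))$, which follows from \cref{claim:multi_error} (and analogous single-view bounds) via $\bbI_{y\neq i}\logit_i \le \bbI_{y\neq i}(1-\logit_y)$, yielding $\tilde{O}(s\sigma_0^{q-1}/\pi_1) = \tilde{O}(s\sigma_0/(k\pi_1)) \ll \sigma_0$ by $s=\polylogk$ and $\pi_1 \ge 1/\polylogk$. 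Outside the danger zone the claim is vacuous, and a stopping-time argument at the first iteration where the inner product would cross $-\tilde{O}(\sigma_0)$ makes the piecewise bound rigorous.

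The main obstacle is precisely this self-consistent $V_{i,r,l}$ bound. A naive $V_{i,r,l} \le O(1)$ (from \cref{lemma:correlation}) would give only $\tilde{O}(s/\pi_1)$ cumulative drift, vastly exceeding $\sigma_0$; the refinement exploits the same small-$\trelu'$ self-regulation that controls $|\langle w_{i,r}, \xi_p\rangle|$ in \cref{lemma:noise}, except now triggered by $\langle w_{i,r}, v_{i,l}\rangle$ being small rather than by $w_{i,r}$ being nearly orthogonal to noise. This parallels how an analogous $\gamma$-factor suppresses the off-diagonal error in \cref{lemma:off_diagonal}.
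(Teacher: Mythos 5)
Your proof is correct and in the same spirit as the paper's, but it handles the key $V_{i,r,l}$ term in a noticeably different way.  The paper considers only iterations at which $\langle w_{i,r}^{(t)}, v_{i,l}\rangle \le -\tilde{\Omega}(\sigma_0)$ (WLOG it stays there afterwards), and observes that then every patch argument $\langle w_{i,r}^{(t)}, v_{i,l}\rangle z_p \pm \tilde{o}(\sigma_0)$ is negative, so $\trelu'$ vanishes and $V_{i,r,l}(X)=0$ exactly. This kills the $V$ contribution entirely, and the only remaining negative drift comes from $\tilde{O}(\sigma_p P)$ and $\calE_1+\calE_3$, which the paper sums over $t\le T_0$ via $\logit_i = O(1/k)$ and over $t>T_0$ via \cref{claim:multi_error}, exactly as you do. You instead track a wider \emph{danger zone} $\langle w_{i,r}^{(t)}, v_{i,l}\rangle \le \tilde{O}(\sigma_0)$ and use the polynomial suppression $\trelu'(\tilde{O}(\sigma_0)) \le \tilde{O}(\sigma_0^{q-1})$ rather than exact vanishing, then sum the nonzero $V_{i,r,l}$ contribution over time with the same two-phase split. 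Your route costs a bit more bookkeeping — you have to carry the $\tilde{O}(\sigma_0^{q-1})$ bound through the $y\neq i$ telescoping, whereas the paper discards $V$ immediately — but it is arguably tighter reasoning: the paper's flat claim ``$V_{i,r,l}(X)=0$'' is technically imprecise when some $z_p$ is so small that $\langle w_{i,r}^{(t)}, v_{i,l}\rangle z_p$ fails to dominate the $\tilde{o}(\sigma_0)$ additive error, whereas your bound $V_{i,r,l}\le\tilde{O}(\sigma_0^{q-1})$ covers this uniformly and still lands at $\tilde{O}(\sigma_0)$ cumulative drift.  One small imprecision in your write-up: the cumulative bound $\sum_t \mathbb{E}_{\tcalZ_1}[\bbI_{y\neq i}\logit_i]\le \tilde{O}(k/(\eta\pi_1))$ is obtained by reweighting $\tcalZ_1 = \calZ_m'\cup\tcalZ_m\cup\calZ_s'\cup\tcalZ_s$; \cref{claim:multi_error} covers only $\calZ_m'$ and $\tcalZ_m$, and the single-view pieces do not telescope — they are negligible because $N_s/N = 1/\polyk$ under \cref{assp:para}, not because of ``analogous single-view bounds,'' which the paper does not supply.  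With that phrasing tightened, your argument is a valid alternative proof.
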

\begin{proof}
Consider any iteration $t$ so that $\langle w_{i,r}^{(t)}, v_{i,l} \rangle \leq -\widetilde{\Omega}(\sigma_0)$. We start from this iteration to see how negative the next iterations can be. Without loss of generality, we consider the case when $\langle w_{i,r}^{(t')}, v_{i,l} \rangle \leq -\widetilde{\Omega}(\sigma_0)$ holds for every $t' \geq t$.
By Claim \ref{claim:pos} and Claim \ref{claim:neg},
\begin{align*}
\langle w_{i,r}^{(t+1)}, v_{i,l} \rangle \geq & \langle w_{i,r}^{(t)}, v_{i,l} \rangle + \eta \mathbb{E}_{(X,y) \sim \tcalZ_1} \Big[ \mathbb{I}_{y=i} ( V_{i,r,l}(X) - \tilde{O}(\sigma_p P)) (1 - \logit_i(F^{(t)}, X)) \\
& - \mathbb{I}_{y \neq i} \left( \mathcal{E}_1 + \mathcal{E}_3 + V_{i,r,l}(X) \right) \logit_i (F^{(t)}, X) \Big]
\end{align*}

Recall by Induction Hypothesis \ref{hyp}(a),
\[
V_{i,r,l}(X) = \sum\nolimits_{p \in P_{v_{i,l}}(X)} \trelu' ( \langle w_{i,r}^{(t)}, x_p \rangle) z_p =  \sum\nolimits_{p \in P_{v_{i,l}}(X)} \trelu' \left( \langle w_{i,r}, v_{i,l} \rangle z_p \pm \tilde{o}(\sigma_0) \right) z_p.
\]
Since we have assumed $\langle w_{i,r}^{(t)}, v_{i,l} \rangle \leq -\widetilde{\Omega}(\sigma_0)$, so \(V_{i,r,l}(X)=0\), and we have
\begin{equation}\label{eq:diag_nonneg1}
\begin{aligned}
\langle w_{i,r}^{(t+1)}, v_{i,l} \rangle \geq & \langle w_{i,r}^{(t)}, v_{i,l} \rangle - \eta \mathbb{E}_{(X,y) \sim \tcalZ_1} \Big[ \mathbb{I}_{y=i} \tilde{O}(\sigma_p P) ( 1 - \logit_i(F^{(t)}, X) ) \\
& + \mathbb{I}_{y \neq i} \left( \mathcal{E}_1 + \mathcal{E}_3 \right) \logit_i(F^{(t)}, X) \Big].
\end{aligned}
\end{equation}
We first consider every $t \leq T_0 = \tilde{\Theta} \left( \frac{k}{\eta \sigma_0^{q-2}} \right)$. Using Claim \ref{claim:logit} we have \(\logit_i(F^{(t)}, X) = O\left( \frac{1}{k} \right)\), which implies
\[
\langle w_{i,r}^{(t)}, v_{i,l} \rangle \geq -\tilde{O}(\sigma_0) - O\left(\frac{\eta T_0}{k}\right)(\mathcal{E}_1 + \mathcal{E}_3) \geq -\tilde{O}(\sigma_0).
\]
As for $t > T_0$, combining with Claim \ref{claim:multi_error} and the fact that \(\logit_i(F^{(t)}, X) \leq 1 - \logit_y(F^{(t)}, X)\) for \(i \neq y\), we have
\[
\langle w_{i,r}^{(t)}, v_{i,l} \rangle \geq \langle w_{i,r}^{(T_0)}, v_{i,l} \rangle - \tilde{O}(k) (\mathcal{E}_1 + \mathcal{E}_3) \ge \langle w_{i,r}^{(T_0)}, v_{i,l} \rangle - \tilde{O} \left(\sigma_0\right) \geq -\tilde{O}(\sigma_0).
\]
\end{proof}

\subsection{Proof of Induction Hypothesis \ref{hyp}} \label{appsec:hyp_proof}
Now we are ready to prove our Induction Induction Hypothesis \ref{hyp}, the proof is similar to Theorem D.2 in \citet{allen-zhu2023towards}.
\begin{lemma}\label{lemma:hyp}
Under Parameter Assumption \ref{assp:para}, for any $m=\polylogk$ and sufficiently small $\eta \leq \frac{1}{\polyk}$, our Induction Hypothesis \ref{hyp} holds for all iterations $t = 0, 1, \dots, T$.
\end{lemma}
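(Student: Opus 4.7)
The plan is to prove Induction Hypothesis~\ref{hyp} by strong induction on $t$, fusing the high-probability events from initialization with a single conditioning of measure $1-e^{-\Omega(\log^2 k)}$. The base case $t=0$ reduces to standard Gaussian concentration on $w_{i,r}^{(0)}\sim\calN(0,\sigma_0^2\bI)$: a union bound over $(i,l,r)$ and all training patches gives $|\langle w_{i,r}^{(0)},v\rangle|\le\tilde{O}(\sigma_0)$ for every $v\in\calV$, and $|\langle w_{i,r}^{(0)},\xi_p\rangle|\le\tilde{O}(\sigma_0\|\xi_p\|)$ for every Gaussian patch-noise vector, which in turn gives parts (a)(b)(c) after the feature decomposition below. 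For the lower bound in (d), $m=\polylogk$ independent draws of $\calN(0,\sigma_0^2)$ have maximum at least $\tilde{\Omega}(\sigma_0)$, so $\Phi_{i,l}^{(0)}\ge[\max_{r}\langle w_{i,r}^{(0)},v_{i,l}\rangle]^{+}\ge\tilde{\Omega}(\sigma_0)$.

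For the inductive step, assume Hypothesis~\ref{hyp} holds at every iteration $<t$. Then the preconditions of Lemmas~\ref{lemma:correlation},~\ref{lemma:off_diagonal},~\ref{lemma:noise}, and~\ref{lemma:diag_nonneg} are in force at $t$, and these four lemmas directly supply: the upper bound $\Phi_{i,l}^{(t)}\le\tilde{O}(1)$ in (d); the off-diagonal control $|\langle w_{i,r}^{(t)},v_{j,l}\rangle|\le\tilde{O}(\sigma_0)$ for every $j\ne i$; the three regime-specific noise bounds $|\langle w_{i,r}^{(t)},\xi_p\rangle|\le\tilde{o}(\sigma_0),\tilde{O}(\sigma_0),\tilde{O}(\sigma_0\gamma k)$; and part (e). For the lower bound in (d), I would invoke \cref{claim:growth} at iteration $t-1$: under the inductive hypothesis at $t-1$ together with $\Phi_{i,l}^{(t-1)}\le\tilde{O}(1)$, the update reads $\Phi_{i,l}^{(t)}=\Phi_{i,l}^{(t-1)}+\tilde{\Theta}(\eta/k)\,\trelu'(\Phi_{i,l}^{(t-1)})\ge\Phi_{i,l}^{(t-1)}$ since $\trelu'\ge 0$, so monotonicity propagates $\Phi_{i,l}^{(0)}\ge\tilde{\Omega}(\sigma_0)$ to all $t$.

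What remains is to derive the patch-level statements (a), (b), (c) at iteration $t$ from these ingredients. The single calculation driving all three cases is the linear decomposition
\begin{equation*}
\langle w_{i,r}^{(t)},x_p\rangle = z_p\langle w_{i,r}^{(t)},v\rangle + \sum_{v'\in\calV}\alpha_{p,v'}\langle w_{i,r}^{(t)},v'\rangle + \langle w_{i,r}^{(t)},\xi_p\rangle,
\end{equation*}
where $v$ is the feature attached to patch $p$ (and $z_p=0$ on pure-noise patches). The feature-noise sum splits into two diagonal terms $v'\in\{v_{i,1},v_{i,2}\}$ each bounded by $\gamma\cdot\tilde{O}(1)$ via (d), plus $2(k-1)$ off-diagonal terms each bounded by $\gamma\cdot\tilde{O}(\sigma_0)$ via \cref{lemma:off_diagonal}; under \cref{assp:para}, which gives $\gamma\le\tilde{O}(\sigma_0/k)$, the aggregate is $\tilde{o}(\sigma_0)$. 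Case (a) then follows by isolating the main term $z_p\langle w_{i,r}^{(t)},v_{i,l}\rangle$ and combining the residual with \cref{lemma:noise}(a); case (b), where $v=v_{j,l'}$ with $j\ne i$, absorbs even the main term through \cref{lemma:off_diagonal} and \cref{lemma:noise}(b); case (c) drops the main term entirely and combines the feature-noise bound with \cref{lemma:noise}(c), using $\sigma_0 k\ge 1$ (which follows from $\sigma_0^{q-2}=1/k$ with $q\ge 3$) to absorb the $\gamma$ piece into $\tilde{O}(\sigma_0\gamma k)$.

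The main subtlety is logical rather than computational: one must verify non-circularity, namely that each of the four supporting lemmas depends only on Hypothesis~\ref{hyp} at strictly earlier iterations, which they do by design, so the induction closes cleanly. The assumption $\eta\le 1/\polyk$ enters only to keep the per-step error accumulations inside those four lemmas within their stated $\tilde{O}$ targets; no additional smallness on $\eta$ or tightening of any intermediate constant is required to make the induction go through.
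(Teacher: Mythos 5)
Your proposal is correct and follows essentially the same route as the paper: establish the base case at $t=0$ by Gaussian concentration, then for the inductive step invoke \cref{lemma:correlation,lemma:off_diagonal,lemma:noise,lemma:diag_nonneg} (which are stated to depend only on the hypothesis at iterations $<t$) together with \cref{claim:growth} for the lower bound in (d), and finally derive (a)--(c) from the linear decomposition of $\langle w_{i,r}^{(t)},x_p\rangle$ into the main feature term, the $|\calV|=2k$ feature-noise terms controlled by $\alpha_{p,v'}\le\gamma\le\tilde O(\sigma_0/k)$, and the $\xi_p$ term from \cref{lemma:noise}. The only point to flag is that your monotonicity argument for $\Phi_{i,l}^{(t)}\ge\tilde\Omega(\sigma_0)$ invokes \cref{claim:growth}, which is stated under the condition $\Phi_{i,l}^{(t)}\le O(1)$; this matches the paper's own (equally terse) appeal to that claim, but strictly one should also observe that whenever $\Phi_{i,l}^{(t)}$ exceeds $O(1)$ the lower bound $\tilde\Omega(\sigma_0)$ holds trivially, so the case split is harmless.
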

\begin{proof}
At iteration $t$, we first calculate
\begin{align}
\forall p \in P_{v_{j,l}}(X):& \quad \langle w_{i,r}^{(t)}, x_p \rangle = \langle w_{i,r}^{(t)}, v_{j,l} \rangle z_p + \sum_{v' \in \mathcal{V}} \alpha_{p,v'} \langle w_{i,r}^{(t)}, v' \rangle + \langle w_{i,r}^{(t)}, \xi_p \rangle, \label{eq:hyp1} \\
\forall p \in [P] \setminus P(X):& \quad \langle w_{i,r}^{(t)}, x_p \rangle = \sum_{v' \in \mathcal{V}} \alpha_{p,v'} \langle w_{i,r}^{(t)}, v' \rangle + \langle w_{i,r}^{(t)}, \xi_p \rangle. \label{eq:hyp2}
\end{align}

It is easy to verify Induction Hypothesis \ref{hyp} holds at iteration $t = 0$ (by some high probability bounds on Gaussian random variables). Suppose Induction Hypothesis \ref{hyp} holds for all iterations $< t$. We have established several lemmas:
\begin{align}
\text{Lemma \ref{lemma:correlation}} &\implies \forall i \in [k], \forall r \in [m], \forall l \in [2]: \langle w_{i,r}^{(t)}, v_{i,l} \rangle \le \tilde{O}(1), \label{eq:hyp3} \\
\text{Lemma \ref{lemma:off_diagonal}} &\implies \forall i \in [k], \forall r \in [m], \forall j \in [k] \setminus \{i\}: |\langle w_{i,r}^{(t)}, v_{j,l} \rangle| \leq \tilde{O}(\sigma_0), \label{eq:hyp4} \\
\text{Lemma \ref{lemma:diag_nonneg}} &\implies \forall i \in [k], \forall r \in [m], \forall l \in [2]: \langle w_{i,r}^{(t)}, v_{i,l} \rangle \geq -\tilde{O}(\sigma_0). \label{eq:hyp5}
\end{align}
\begin{itemize}
\item To prove \ref{hyp}(a), it suffices to plug \eqref{eq:hyp4}, \eqref{eq:hyp5} into \eqref{eq:hyp1}, use $\alpha_{p,v'} \in [0, \gamma]$, use $|\mathcal{V}| = 2k$, and use $|\langle w_{i,r}^{(t)}, \xi_p \rangle| \leq \tilde{o}(\sigma_0)$ from Lemma \ref{lemma:noise}.
\item To prove \ref{hyp}(b), it suffices to plug \eqref{eq:hyp3}, \eqref{eq:hyp4} into \eqref{eq:hyp1}, use $\alpha_{p,v'} \in [0, \gamma]$, use $|\mathcal{V}| = 2k$, and use $|\langle w_{i,r}^{(t)}, \xi_p \rangle| \leq \tilde{O}(\sigma_0)$ from Lemma \ref{lemma:noise}.
\item To prove \ref{hyp}(c), it suffices to plug \eqref{eq:hyp3}, \eqref{eq:hyp4} into \eqref{eq:hyp2}, use $\alpha_{p,v'} \in [0, \gamma]$, use $|\mathcal{V}| = 2k$, and use $|\langle w_{i,r}^{(t)}, \xi_p \rangle| \leq \tilde{O}(\sigma_0 \gamma k)$ from Lemma \ref{lemma:noise}.
\item To prove \ref{hyp}(d), it suffices to note that \eqref{eq:hyp3} implies $\Phi_{i,l}^{(t)} \leq \tilde{O}(1)$, and note that Claim \ref{claim:growth} implies $\Phi_{i,l}^{(t)} \geq \Omega(\Phi_{i,l}^{(0)}) \geq \tilde{\Omega}(\sigma_0)$.
\item To prove \ref{hyp}(e), it suffices to invoke \eqref{eq:hyp5}.
\end{itemize}
\end{proof}

\subsection{Proof of \cref{main:thm_a1}}
As Lemma \ref{lemma:hyp} implies the Induction Hypothesis \ref{hyp} holds for every $t \leq T$, we have according to Claim \ref{claim:multi_error} that
\begin{align*}
\sum\nolimits_{t=T_0}^{T} \bbE_{(X,y)\sim\calZ_{m}'}[1-\logit_y(F^{(t)}, X)] &\leq \tilde{O}\left(\frac{k}{\eta}\right),\\
\sum\nolimits_{t=T_0}^{T} \bbE_{(X,y)\sim\tilde{\calZ}_{m}}[1-\logit_y(F^{(t)}, X)] &\leq \tilde{O}\left(\frac{k}{\eta\pi_1}\right).
\end{align*}

Then, we know when $T \geq \frac{\polyk}{\eta \pi_1}$,
\[
\frac{1}{T} \sum\nolimits_{t=T_0}^{T} \bbE_{(X,y)\sim\tilde{\calZ}_{m}} [1-\logit_y(F^{(t)}, X)] \leq \frac{1}{\polyk}.
\]

Moreover, since we are using full gradient descent and the objective function is $O(1)$-Lipschitz continuous, the objective value decreases monotonically. Specifically, this implies that
\[
\mathbb{E}_{(X,y) \sim \tcalZ_m} [1 - \logit_y(F^{(T)}, X)] \leq \frac{1}{\polyk}.
\]
for the last iteration $T$. Then recall from Claim \ref{claim:individual}, we have 
\[
0.8 \Phi_{i,l}^{(T)} - (1+C_1) \Phi_{j,l'}^{(T)} \le -\Omega(\log(k)), \quad (0.2+C_1) \Phi_{i,l}^{(T)} \ge \Omega(\log(k)),
\]
for any \(i,j \in [k]\) and \(l,l' \in [2]\). This combined with the function approximation Claim \ref{claim:func} shows that with high probability $F_y^{(T)}(X) \geq \max_{j \neq y} F_j^{(T)}(X) + \Omega(\log k)$ for every $(X, y) \in \calD_m, \calD_s$, which implies that the test accuracy on both multi-view data and single-view data is perfect.

\section{Proof for SL with $\calA_2$}
\label{appsec:proof_a2}
Here, we provide the proof for SL with data augmentation \(\mathcal{A}_2\). Since the proof follows the same framework as the proof for vanilla SL in \citet{allen-zhu2023towards} and adheres to the same Induction Hypothesis C.3 of \citet{allen-zhu2023towards}, we present only the key differences to avoid redundancy.

\subsection{Key Claims}

Recall that for SL with data augmentation \(\mathcal{A}_2\), we use \(\tilde{\mathcal{Z}}_2 = \mathcal{Z}_m' \cup \tilde{\mathcal{Z}}_m \cup \mathcal{Z}_s' \cup \tilde{\mathcal{Z}}_s\) to denote the training dataset after data augmentation. Here, \(\mathcal{Z}_m'\) and \(\tilde{\mathcal{Z}}_m\) originate from the multi-view training dataset \(\mathcal{Z}_m\) and represent samples with only noise mixing or feature mixing effects, respectively, as described in \cref{assum:a2}. Similarly, \(\mathcal{Z}_s'\) and \(\tilde{\mathcal{Z}}_s\) are derived from the single-view training dataset \(\mathcal{Z}_s\) and contain samples with only noise mixing or feature mixing effects, respectively.

The first key difference in the proof lies in the convergence of the error on multi-view data from \(T_0\) until the end of training.

\begin{claim}[multi-view error till the end]
\label{claim:a2_multi_error}
    Suppose that Induction Hypothesis C.3 in \citet{allen-zhu2023towards} holds for every iteration $t\le T$, then 
    \begin{align*}
         \sum_{t=T_0}^{T} \bbE_{(X,y)\sim\calZ_{m}'}[1-\logit_y(F^{(t)}, X)] &\leq \tilde{O}\left(\frac{k}{\eta}\right),\\
         \sum_{t=T_0}^{T} \bbE_{(X,y)\sim\tilde{\calZ}_{m}}[1-\logit_y(F^{(t)}, X)] &\leq \tilde{O}\left(\frac{k}{\eta\pi_2}\right).
    \end{align*}
\end{claim}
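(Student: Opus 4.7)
The plan is to mirror the proof of Claim~\ref{claim:multi_error} with the feature mixing effect replacing the partial-removal effect. First I would expand
\[
\langle w_{i,r}^{(t+1)}, v_{i,l}\rangle = \langle w_{i,r}^{(t)}, v_{i,l}\rangle - \eta\, \bbE_{(X,y)\sim\tcalZ_2}\bigl[\langle \nabla_{w_{i,r}} L(F^{(t)};X,y), v_{i,l}\rangle\bigr]
\]
and split the expectation into the four summands corresponding to $\calZ_m'$, $\tcalZ_m$, $\calZ_s'$, $\tcalZ_s$, weighted by $\tfrac{(N-N_s)(1-\pi_2)}{N}$, $\tfrac{(N-N_s)\pi_2}{N}$, $\tfrac{N_s(1-\pi_2)}{N}$, $\tfrac{N_s\pi_2}{N}$, in direct analogy with \eqref{appeq:a2_loss}. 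Claims \ref{claim:pos}--\ref{claim:neg} applied to $(X,y)\in\tcalZ_2$ then express each contribution in terms of $V_{i,r,l}(X)$, $\calE_1$, $\calE_3$, and $\sigma_p P$, and Claim~\ref{claim:growth} guarantees that for $t\geq T_0$ and the argmax index $r$ one has $\langle w_{i,r}^{(t)}, v_{i,l}\rangle \gg \varrho$, so $\trelu'$ sits in its linear regime on the relevant patches.

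The $\calZ_m'$ contribution is identical to the bound in Claim~\ref{claim:multi_error}, because Assumption~\ref{assum:a2} leaves the feature scales on $\calZ_m'$ unchanged. The new ingredient is the analysis on $\tcalZ_m$: by Assumption~\ref{assum:a2}, when $y=i$ the semantic scale is $\sum_{p\in\calP_{v_{i,l}}(X)} z_p \in [1-C_2, O(1)]$, giving $V_{i,r,l}(X)\geq 0.9(1-C_2)$; when $y\neq i$ with $v_{i,l}\in\calV(X)$ the noisy scale can reach $0.4+C_3$, giving $V_{i,r,l}(X)\leq 0.4+C_3$. Using $\Pr[v_{i,l}\in\calV(X)\mid i\neq y]=s/k = o(1)$ together with Claim~\ref{claim:logit} (which yields $\logit_i(F^{(t)},X)=O(1/k)$ whenever $\Phi_i^{(t)}\leq O(1)$), the constraint $C_2+C_3<0.6$ from Assumption~\ref{assum:a2} ensures that the positive term $(1-C_2)\cdot\tilde{\Omega}(1/k)(1-\logit_i)$ strictly dominates the negative term $(0.4+C_3)\cdot(s/k)\cdot\logit_i$, so that
\[
-\bbE_{(X,y)\sim\tcalZ_m}\bigl[\langle \nabla_{w_{i,r}} L(F^{(t)};X,y), v_{i,l}\rangle\bigr] \geq \tilde{\Omega}\bigl(1/k\bigr)\,\bbE_{(X,y)\sim\tcalZ_m}\bigl[1-\logit_i(F^{(t)},X)\bigr].
\]
The $\calZ_s'$ and $\tcalZ_s$ contributions are non-negative by the same derivation as in \eqref{eq:multi_error4}--\eqref{eq:multi_error4_1}, since $\Gamma s/k = o(1)$ dominates the tiny confusion term, and can simply be dropped.

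Summing over $r\in[m]$ with $m=\polylogk$, using $\pi_2\geq 1/\polylogk$ and $N\geq N_s\polyk$ from \cref{assp:para}, yields the recursion
\[
\Phi_{i,l}^{(t+1)} \geq \Phi_{i,l}^{(t)} + \tilde{\Omega}\bigl(\eta/k\bigr)\,\bbE_{\calZ_m'}[1-\logit_y] + \tilde{\Omega}\bigl(\eta\pi_2/k\bigr)\,\bbE_{\tcalZ_m}[1-\logit_y],
\]
and telescoping from $T_0$ to $T$, combined with $\Phi_{i,l}^{(T)}\leq \tilde{O}(1)$ supplied by vanilla SL's Induction Hypothesis~C.3 of \citet{allen-zhu2023towards} (which this subsection postulates), gives the two stated bounds $\tilde{O}(k/\eta)$ and $\tilde{O}(k/(\eta\pi_2))$. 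The main obstacle is the absence of the symmetry exploited in the $\calA_1$ case: whereas the partial-removal effect there only shrank one feature per sample and one could restrict attention to $\tcalZ_m^l$ to recover a full-scale $V_{i,r,l}\geq 0.9$ signal, the feature mixing of $\calA_2$ uniformly shrinks both semantic features by $C_2$ while simultaneously inflating the cross-class noisy features by $C_3$. Verifying that the net gradient still has the right sign hinges precisely on $C_2+C_3<0.6$ and on absorbing the $C_3$ inflation into the $s/k=o(1)$ sparsity factor; once this margin is secured the telescoping mechanism carries through essentially verbatim.
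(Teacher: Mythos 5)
Your proposal matches the paper's proof essentially step for step: the same decomposition of $L_{\mathcal{A}_2}^{(t)}$ into the four weighted expectations, the same use of Claims~\ref{claim:pos}--\ref{claim:neg} and the argmax-over-$r$ device to place $\trelu'$ in its linear regime, the same $V_{i,r,l}$ bounds ($\geq 0.9(1-C_2)$ when $y=i$ on $\tcalZ_m$, $\leq 0.4+C_3$ otherwise), dominance via the $s/k$ sparsity, and the telescoping via $\Phi_{i,l}\leq\tilde{O}(1)$. Your observation that the $\tcalZ_m^l/\tcalZ_m^{3-l}$ symmetry used for $\calA_1$ is unnecessary here because $\calA_2$ shrinks both semantic features uniformly is exactly what the paper exploits. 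One small inaccuracy worth noting: you cite Claim~\ref{claim:logit} to get $\logit_i=O(1/k)$, but that claim is stated under Induction Hypothesis~\ref{hyp} (the $\calA_1/\calA_3$ setting), whereas this claim operates under Allen-Zhu's vanilla Induction Hypothesis~C.3; moreover for $t\geq T_0$ one may have $\Phi_i^{(t)}\gg O(1)$ so that bound wouldn't apply. The paper does not in fact invoke that logit bound here — the dominance argument rests purely on $s/k=o(1)$ and $0.89-C_2>0$ (guaranteed by $C_2+C_3<0.6$) — so dropping your Claim~\ref{claim:logit} appeal would align the argument more cleanly with the paper.
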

\begin{proof}
Fix \(i\in[k]\) and \(l\in [2]\), we know for any \(w_{i,r}\) (\(r\in[m]\)), we have
\begin{equation}\label{eq:a2_multi_error1}
    \begin{aligned}
        \langle w_{i,r}^{(t+1)}, v_{i,l}\rangle 
        =& \langle w_{i,r}^{(t)}, v_{i,l}\rangle - \eta \bbE_{(X,y)\sim\tcalZ_2}\big[\langle \nabla_{w_{i,r}} L(F^{(t)};X,y) , v_{i,l}\rangle\big] \\
        \\=& \langle w_{i,r}^{(t)}, v_{i,l}\rangle - \frac{\eta(N-N_s)(1-\pi_2)}{N} \bbE_{(X,y)\sim\calZ_m'}[\langle \nabla_{w_{i,r}} L(F^{(t)};X,y), v_{i,l}\rangle]\\ 
        &- \frac{\eta(N-N_s)\pi_2}{N} \bbE_{(X,y)\sim\tcalZ_m}[\langle \nabla_{w_{i,r}} L(F^{(t)};X,y), v_{i,l}\rangle] \\
        &- \frac{\eta N_s(1-\pi_2)}{N} \bbE_{(X,y)\sim\calZ_s'}[\langle \nabla_{w_{i,r}} L(F^{(t)};X,y), v_{i,l}\rangle] \\
        &- \frac{\eta N_s\pi_2}{N} \bbE_{(X,y)\sim\tcalZ_s}[\langle \nabla_{w_{i,r}} L(F^{(t)};X,y), v_{i,l}\rangle].
    \end{aligned}
\end{equation}
    Take \(r = \argmax_{r\in [m]} \{\langle w_{i,r}^{(t)}, v_{i,l}\rangle\}\), then by \(m=\polylogk\) we know \(\langle w_{i,r}^{(t)}, v_{i,l}\rangle \ge \tilde{\Omega}(\Phi_{i,l}^{(t)})=\tilde{\Omega}(1)\) for \(t\ge T_0\). Same as in the proof in \cref{claim:multi_error}, we know when \(t\ge T_0\), \(\langle w_{i,r}^{(t)}, v_{i,l}\rangle \ge \tilde{\Omega}(1) \gg \varrho\), and \(|\calP_{v_{i,l}}(X)|\leq O(1)\), for most of \( p\in \calP_{v_{i,l}}(X) \) must be already in the linear regime of \(\trelu\), which means we have
    \begin{align*}
        0.9 \sum\nolimits_{p\in\calP_{v_{i,l}}(X) }z_p \leq V_{i,r,l}(X) \leq \sum\nolimits_{p\in\calP_{v_{i,l}}(X) }z_p.
    \end{align*}
    
    Thus, for $(X,y)\sim\calZ_m'$, when $y=i$, we have $V_{i,r,l}(X)\geq 0.9$, when $y\neq i$ and $v_{i,l}\in\calV(X)$, we have $V_{i,r,l}(X)\leq 0.4$. For $(X,y)\sim\tcalZ_m$, when $y=i$, we have $V_{i,r,l}(X)\geq 0.9 \cdot (1-C_2)$, when $y\neq i$ and $v_{i,l} \in \calV(X)$, we have $V_{i,r,l}(X)\leq 0.4+C_3$. For $(X,y)\sim\calZ_s'$, when $y=i$, we have $V_{i,r,l}(X)\ge 0.9$ if \(v_{i,l}\) is the class-specific feature contained in \(X\) else $V_{i,r,l}(X)\le O(\rho) \ll o(1)$, when $y\neq i$ and $v_{i,l}\in\calP(X)$, we have $V_{i,r,l}(X)\leq O(\Gamma) \ll o(1)$. For $(X,y)\sim\tcalZ_s$, when $y=i$, we have $V_{i,r,l}(X)\geq 0.9 \cdot (1-C_2)$ if \(v_{i,l}\) is the class-specific feature contained in \(X\) else $V_{i,r,l}(X)\le O(\rho) \ll o(1)$, when $y\neq i$ and $v_{i,l}\in\calP(X)$, we have $V_{i,r,l}(X)\leq C_3$.

    Recall that $\Pr(v_{i,l}\in\calP(X)|i\neq y)=\frac{s}{k}\ll o(1)$, using \cref{claim:pos} and \cref{claim:neg}, we can derive that for \(\calZ_m'\)
\begin{equation}\label{eq:a2_multi_error2}
    \begin{aligned}
    & -\bbE_{(X,y)\sim\calZ_m'}[\langle \nabla_{w_{i,r}} L(F^{(t)};X,y), v_{i,l}\rangle] \\ =& \bbE_{(X,y)\sim\calZ_m'}\big[\bbI_{y=i}(1-\logit_i(F^{(t)},X)) \sum\nolimits_{p\in[P]} \trelu'(\langle w_{i,r}, x_p\rangle )\langle x_p, v_{i,l}\rangle \\ & \qquad \qquad \quad - \bbI_{y\neq i} \logit_i(F^{(t)},X) \sum\nolimits_{p\in[P]} \trelu'(\langle w_{i,r}, x_p\rangle ) \langle x_p, v_{i,l}\rangle\big] \\
    \ge&\mathbb{E}_{(X,y) \sim \calZ^*} [ \mathbb{I}_{y=i} ( V_{i,r,l}(X) - \tilde{O}(\sigma_p P) ) ( 1 - \logit_i(F^{(t)}, X) )\\ &\qquad \qquad \quad - \mathbb{I}_{y \neq i} \left( \mathcal{E}_1 + \mathcal{E}_3 + V_{i,r,l}(X) \right) \logit_i(F^{(t)}, X)] \\
    \ge&\mathbb{E}_{(X,y) \sim \calZ_m'} [ 0.89 \cdot \mathbb{I}_{y=i} ( 1 - \logit_i(F^{(t)}, X) ) - 0.41 \cdot \frac{s}{k} \mathbb{I}_{y \neq i}  \logit_i(F^{(t)}, X)] \\
    \ge& \tilde{\Omega}(\frac{1}{k}) \mathbb{E}_{(X,y) \sim \calZ_m'} [ 1 - \logit_i(F^{(t)}, X)].
    \end{aligned}
\end{equation}
    Similarly, for \(\tcalZ_m\), we have
\begin{equation}\label{eq:a2_multi_error3}
    \begin{aligned}
    & -\bbE_{(X,y)\sim\tcalZ_m}[\langle \nabla_{w_{i,r}} L(F^{(t)};X,y), v_{i,l}\rangle]\\
    \ge&\mathbb{E}_{(X,y) \sim \calZ_m'} [ (0.89-C_2) \cdot \mathbb{I}_{y=i} ( 1 - \logit_i(F^{(t)}, X) ) - (0.41+C_3) \cdot \frac{s}{k} \mathbb{I}_{y \neq i}  \logit_i(F^{(t)}, X)] \\
    \ge& \tilde{\Omega}(\frac{1}{k}) \mathbb{E}_{(X,y) \sim \calZ_m'} [ 1 - \logit_i(F^{(t)}, X)].
    \end{aligned}
\end{equation}

    Then, based on the fact that single-view data has equal probability to contain either of the two semantic features, we can derive that for \(\calZ_s'\) and \(\tcalZ_s\), we have
\begin{equation}\label{eq:a2_multi_error4}
    \begin{aligned}
    & -\bbE_{(X,y)\sim\calZ_s'}[\langle \nabla_{w_{i,r}} L(F^{(t)};X,y), v_{i,l}\rangle]\\
    \ge& \frac{1}{2} \mathbb{E}_{(X,y) \sim {\calZ_s'}^l} [ 0.89 \cdot \mathbb{I}_{y=i} ( 1 - \logit_i(F^{(t)}, X) ) - O(\Gamma) \frac{s}{k} \mathbb{I}_{y \neq i}  \logit_i(F^{(t)}, X)] \\
    & + \frac{1}{2}\mathbb{E}_{(X,y) \sim {\calZ_s'}^{3-l}} [ - \tilde{O}(\sigma_p P) \mathbb{I}_{y=i} ( 1 - \logit_i(F^{(t)}, X) ) - O(\Gamma) \frac{s}{k} \mathbb{I}_{y \neq i}  \logit_i(F^{(t)}, X)] \\
    \ge& \mathbb{E}_{(X,y) \sim {\calZ_s'}^l} [ 0.44 \cdot \mathbb{I}_{y=i} ( 1 - \logit_i(F^{(t)}, X) ) - O(\Gamma) \frac{s}{k} \mathbb{I}_{y \neq i}  \logit_i(F^{(t)}, X)] \\ 
    \ge& \tilde{\Omega}(\frac{1}{k}) \mathbb{E}_{(X,y) \sim {\calZ_s'}^l} [ 1 - \logit_i(F^{(t)}, X)] \ge 0, 
    \end{aligned}
\end{equation}
and
\begin{equation}\label{eq:a2_multi_error4_1}
    \begin{aligned}
    & -\bbE_{(X,y)\sim\tcalZ_s}[\langle \nabla_{w_{i,r}} L(F^{(t)};X,y), v_{i,l}\rangle]\\
    \ge& \frac{1}{2} \mathbb{E}_{(X,y) \sim \tcalZ_s^l} [ (0.89-C_2) \cdot \mathbb{I}_{y=i} ( 1 - \logit_i(F^{(t)}, X) ) - C_3 \frac{s}{k} \mathbb{I}_{y \neq i}  \logit_i(F^{(t)}, X)] \\
    & + \frac{1}{2}\mathbb{E}_{(X,y) \sim \tcalZ_s^{3-l}} [ - \tilde{O}(\sigma_p P) \mathbb{I}_{y=i} ( 1 - \logit_i(F^{(t)}, X) ) - C_3 \frac{s}{k} \mathbb{I}_{y \neq i}  \logit_i(F^{(t)}, X)] \\
    \ge&\mathbb{E}_{(X,y) \sim \tcalZ_s^l} [ \Omega(1) \cdot \mathbb{I}_{y=i} ( 1 - \logit_i(F^{(t)}, X) ) - C_3 \frac{s}{k} \mathbb{I}_{y \neq i}  \logit_i(F^{(t)}, X)] \\ 
    \ge& \tilde{\Omega}(\frac{1}{k}) \mathbb{E}_{(X,y) \sim {\tcalZ_s}^l} [ 1 - \logit_i(F^{(t)}, X)] \ge 0.
    \end{aligned}
\end{equation}
    Again, here we use \({\calZ_s'}^l\) (\(\tcalZ_s^l\)) to denote \((X,y) \sim \calZ_s'\) (\(\tcalZ_s\)) that has \(v_{y,l}\) as the only semantic feature, and \({\calZ_s'}^{3-l}\) (\(\tcalZ_s^{3-l}\)) to denote \((X,y) \sim \calZ_s'\) (\(\tcalZ_s\)) that has \(v_{y,3-l}\) as the only semantic feature. Now we take \eqref{eq:a2_multi_error2}, \eqref{eq:a2_multi_error3}, \eqref{eq:a2_multi_error4}, \eqref{eq:a2_multi_error4_1} into \eqref{eq:a2_multi_error1}, we have 
\begin{equation}\label{eq:a2_multi_error5}
    \begin{aligned}
        \langle w_{i,r}^{(t+1)}, v_{i,l}\rangle 
        \ge& \langle w_{i,r}^{(t)}, v_{i,l}\rangle + \frac{(N-N_s)(1-\pi_2)}{N} \tilde{\Omega}(\frac{\eta}{k}) \mathbb{E}_{(X,y) \sim \calZ_m'} [ 1 - \logit_i(F^{(t)}, X)] \\ 
        &+ \frac{(N-N_s)\pi_2}{N} \tilde{\Omega}(\frac{\eta}{k}) \mathbb{E}_{(X,y) \sim \tcalZ_m} [ 1 - \logit_i(F^{(t)}, X)].
    \end{aligned}
\end{equation}
Since we have \(\pi_2\ge \frac{1}{\polylogk}\), \(m=\polylogk\), and \(N \ge N_s \polyk\), when summing up all \(r\in [m]\), we have 
\begin{equation}\label{eq:a2_multi_error6}
    \begin{aligned}
    \Phi_{i,l}^{(t+1)} \ge & \Phi_{i,l}^{(t)} + \tilde{\Omega}(\frac{\eta}{k}) \mathbb{E}_{(X,y) \sim \calZ_m'} [ 1 - \logit_i(F^{(t)}, X)] \\ 
    &+ \tilde{\Omega}(\frac{\eta\pi_2}{k}) \mathbb{E}_{(X,y) \sim \tcalZ_m} [ 1 - \logit_i(F^{(t)}, X)].
    \end{aligned}
\end{equation}
After telescoping \eqref{eq:a2_multi_error6} from \(T_0\) to \(T\) and using \(\Phi_{i,l}\le \tilde{O}(1)\) from the Induction Hypothesis C.3(g) in \citet{allen-zhu2023towards}, we have
\begin{align*}
\sum_{t=T_0}^{T} \bbE_{(X,y)\sim\calZ_{m}'}[1-\logit_y(F^{(t)}, X)] &\leq \tilde{O}\left(\frac{k}{\eta}\right),\\
\sum_{t=T_0}^{T} \bbE_{(X,y)\sim\tilde{\calZ}_{m}}[1-\logit_y(F^{(t)}, X)] &\leq \tilde{O}\left(\frac{k}{\eta\pi_2}\right). 
\end{align*}
\end{proof}

The second core difference of the proof lies in the claim of multi-view individual error, which states that when training error on \(\tcalZ_m\) is small enough, the model has high probability to correctly classify any individual mutli-view data.

\begin{claim}[multi-view individual error]
\label{claim:a2_individual}
When $\mathbb{E}_{(X,y) \sim \tcalZ_m} \left[1 - \logit_y \left( F^{(t)}, X \right) \right] \leq \frac{1}{k^4}$ is sufficiently small, we have for any \(i,j\in[k]\),
\[
(0.4+C_3) \Phi_{i}^{(t)} - (1-C_2)\Phi_{j}^{(t)} \le -\Omega(\log(k)), \quad (0.6-C_2-C_3) \Phi_{i}^{(t)} \ge \Omega(\log(k)),
\]
and therefore for every \((X,y)\in \calZ_m',\tcalZ_m\) (and every \((X,y)\in \calD_m\) w.p. \(1 - e^{-\Omega(\log^2(k))}\)), 
\[
F_y^{(t)}(X)\geq \max_{j\neq y}F_j^{(t)}(X)+\Omega(\log k).
\]
\end{claim}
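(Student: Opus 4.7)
My plan is to port the argument of Claim~\ref{claim:individual} to the $\calA_2$ setting, replacing the coefficient pair $(1,C_1)$ arising from partial semantic feature removal with the pair $(1-C_2,\,0.4+C_3)$ dictated by Assumption~\ref{assum:a2}. Concretely, I would first isolate a ``typical'' subset $\tcalZ_*\subseteq\tcalZ_m$ consisting of samples on which (i) both semantic coordinate sums satisfy $\sum_{p\in\calP_{v_{y,l}}(X)}z_p\in[1-C_2,\,1-C_2+\tfrac{1}{100\log k}]$ for each $l\in[2]$, and (ii) the set $\calH(X)\subseteq[k]\setminus\{y\}$ of strong ``competing'' classes, defined by $\sum_{l\in[2]}\sum_{p\in\calP_{v_{i,l}}(X)}z_p\ge(0.4+C_3)-\tfrac{1}{100\log k}$, is non-trivial. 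Since $\calA_2$ only rescales the $z_p$ coefficients of Definition~\ref{def:data2} while preserving the patch-selection combinatorics and the left-/right-close concentration, the estimates $\Pr[\tcalZ_*]\ge\Omega(1)$ and $\Pr[i\in\calH(X)\mid y=j]\ge\tilde{\Omega}(s^2/k^2)$ for each fixed $i\ne j$ carry over verbatim.

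Second, Claim~\ref{claim:func} applied on $\tcalZ_*$ gives $F_y^{(t)}(X)\le(1-C_2)\Phi_y^{(t)}+O(\tfrac{1}{\polylogk})$ and $F_i^{(t)}(X)\ge(0.4+C_3)\Phi_i^{(t)}-O(\tfrac{1}{\polylogk})$ for every $i\in\calH(X)$. Combined with the elementary inequality $\min(1,\beta)\le 2(1-\tfrac{1}{1+\beta})$, this upgrades $1-\logit_y(F^{(t)},X)$ to a bound of the form
\begin{equation*}
\min\!\Bigl(1,\,\sum\nolimits_{i\in\calH(X)} e^{(0.4+C_3)\Phi_i^{(t)}-(1-C_2)\Phi_y^{(t)}}\Bigr)\le 4\bigl(1-\logit_y(F^{(t)},X)\bigr),
\end{equation*}
after which taking expectations over $\tcalZ_*\subseteq\tcalZ_m$ and reshuffling the indices $(y,i)\to(j,i)$ exactly as in Claim~\ref{claim:individual} produces
\begin{equation*}
\sum\nolimits_{i\ne j}\min\!\Bigl(\tfrac{1}{k},\,e^{(0.4+C_3)\Phi_i^{(t)}-(1-C_2)\Phi_j^{(t)}}\Bigr)\le \tilde{O}\!\Bigl(\tfrac{k^3}{s^2}\Bigr)\psi,
\end{equation*}
with $\psi:=\mathbb{E}_{(X,y)\sim\tcalZ_m}[1-\logit_y(F^{(t)},X)]\le 1/k^4$. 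The hypothesis $\psi\le 1/k^4$ then forces $(0.4+C_3)\Phi_i^{(t)}-(1-C_2)\Phi_j^{(t)}\le -\Omega(\log k)$ for every $i,j\in[k]$; specialising $j=i$ and invoking $\Phi_i^{(t)}\ge 0$ delivers $(0.6-C_2-C_3)\Phi_i^{(t)}\ge\Omega(\log k)$.

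Finally, the margin statement follows by feeding these two scalar inequalities back into Claim~\ref{claim:func}. On $\tcalZ_m$ one obtains $F_y^{(t)}(X)-F_j^{(t)}(X)\ge(1-C_2)\Phi_y^{(t)}-(0.4+C_3)\Phi_j^{(t)}-O(\tfrac{1}{\polylogk})\ge\Omega(\log k)$ directly; on $\calZ_m'$ the semantic coefficients are at least $1\ge 1-C_2$ and the competing coefficients at most $0.4\le 0.4+C_3$, so the same margin applies a fortiori; and for $(X,y)\sim\calD_m$ the concentration of the left-/right-close marginals in Definition~\ref{def:data2} guarantees the same coefficient envelopes with probability $1-e^{-\Omega(\log^2 k)}$. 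The only non-routine step I anticipate is verifying that the typical-set probabilities $\Pr[\tcalZ_*]=\Omega(1)$ and $\Pr[i\in\calH(X)\mid y=j]=\tilde{\Omega}(s^2/k^2)$ survive under $\calA_2$; this reduces to a short concentration check at the rescaled extremes $1-C_2$ and $0.4+C_3$, and is the structurally essential ingredient that legitimises the coefficient swap $(1,C_1)\mapsto(1-C_2,\,0.4+C_3)$ driving the entire argument.
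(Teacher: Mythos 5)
Your proposal follows the paper's proof of Claim~\ref{claim:a2_individual} step for step --- the typical-set restriction $\tcalZ_*\subseteq\tcalZ_m$, the competing-class set $\calH(X)$, the inequality $\min(1,\beta)\le 2(1-\tfrac{1}{1+\beta})$, the $\tilde{\Omega}(s^2/k^2)$ sampling-probability estimate, the index reshuffling, and the final symmetrisation --- and the conceptual replacement $(1,C_1)\mapsto(1-C_2,\,0.4+C_3)$ is exactly the right one. However, there is one numerical slip that would cause a step to fail as written: the membership threshold for $\calH(X)$ on the total $\sum_{l\in[2]}\sum_{p\in\calP_{v_{i,l}}(X)}z_p$ should be $0.8+2C_3-\tfrac{1}{100\log k}$, not $(0.4+C_3)-\tfrac{1}{100\log k}$. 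Under Assumption~\ref{assum:a2}(b) each noisy feature satisfies $\sum_{p\in\calP_{v_{i,l}}(X)}z_p\le 0.4+C_3$, so your weaker threshold on the total permits, say, $Z_{i,2}^{(t)}(X)=0$ while $Z_{i,1}^{(t)}(X)\approx 0.4+C_3$; in that case $F_i^{(t)}(X)\approx(0.4+C_3)\Phi_{i,1}^{(t)}$, which is \emph{not} bounded below by $(0.4+C_3)\Phi_i^{(t)}=(0.4+C_3)(\Phi_{i,1}^{(t)}+\Phi_{i,2}^{(t)})$, and the exponent $(0.4+C_3)\Phi_i^{(t)}-(1-C_2)\Phi_y^{(t)}$ that you then sum is unjustified. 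With the doubled threshold $0.8+2C_3-o(1)$, the per-feature cap $0.4+C_3$ forces both $Z_{i,1}^{(t)}(X)$ and $Z_{i,2}^{(t)}(X)$ to be within $o(1)$ of $0.4+C_3$, which does yield $F_i^{(t)}(X)\ge(0.4+C_3)\Phi_i^{(t)}-O(\tfrac{1}{\polylogk})$, and it is also the event that naturally carries the $\tilde{\Omega}(s^2/k^2)$ probability (both $v_{i,1}$ and $v_{i,2}$ must be drawn into $\calV(X)$ and land near their maximum scale). Everything else in your proposal is correct and coincides with the paper's argument.
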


\begin{proof}
Denote by \(\tcalZ_*\) for the set of training samples $(X,y) \in \tcalZ_m$ such that \(\sum_{l \in [2]} \sum_{p \in P_{v_{y,l}}(X)} z_p \leq 2 - 2C_2+ \frac{1}{100 \log(k)}\). For a sample \((X,y)\in \tcalZ_*\), denote by \(\calH(X)\) as the set of all \(i \in [k]\setminus \{y\}\) such that \(\sum_{l \in [2]} \sum_{p \in P_{v_{i,l}}(X)} z_p \geq 0.8 + 2C_3 - \frac{1}{100 \log(k)}\).

Now, suppose $1 - \logit_y( F^{(t)}, X) = \mathcal{E}(X)$, with $\min(1,\beta) \leq 2(1 - \frac{1}{1 + \beta})$, we have
\[
\min ( 1, \sum\nolimits_{i \in [k] \setminus \{y\}} e^{F_i^{(t)}(X) - F_y^{(t)}(X)} ) \leq 2 \mathcal{E}(X).
\]

By Claim \ref{claim:func}, \cref{claim:logit} and our definition of $\mathcal{H}(X)$, this implies that
\[
\min ( 1, \sum\nolimits_{i \in \mathcal{H}(X)} e^{(0.4+C_3) \Phi_i^{(t)} - (1-C_2)\Phi_{y}^{(t)}}) \leq 4 \mathcal{E}(X).
\]
If we denote by $\psi = \mathbb{E}_{(X,y) \sim \tcalZ_m} [ 1 - \logit_y \left( F^{(t)}, X \right) ]$, then
\begin{align*}
&\mathbb{E}_{(X,y) \sim \tcalZ_m} \left[ \min ( 1, \sum\nolimits_{i \in \mathcal{H}(X)} e^{(0.4+C_3) \Phi_i^{(t)} - (1-C_2)\Phi_{y}^{(t)}} ) \right] \leq O(\psi), \\
\implies &\mathbb{E}_{(X,y) \sim \tcalZ_m} \left[ \sum\nolimits_{i \in \mathcal{H}(X)} \min (\frac{1}{k}, e^{(0.4+C_3) \Phi_i^{(t)} - (1-C_2)\Phi_{y}^{(t)}}) \right] \leq O(\psi).
\end{align*}

Notice that we can rewrite the LHS so that
\begin{align*}
&\mathbb{E}_{(X,y) \sim \tcalZ_m} \left[ \sum\nolimits_{j \in [k]} \bbI_{j=y} \sum\nolimits_{i \in [k]}  \bbI_{i \in \mathcal{H}(X)} \min ( \frac{1}{k}, e^{(0.4+C_3) \Phi_i^{(t)} - (1-C_2)\Phi_j^{(t)}} ) \right] \leq O (\psi), \\
\implies &\sum\nolimits_{j \in [k]} \sum\nolimits_{i\in [k]} \bbI_{i\neq y} \mathbb{E}_{(X,y) \sim \tcalZ_m} \left[ \bbI_{j=y} \bbI_{i \in \mathcal{H}(X)}\right] \min ( \frac{1}{k}, e^{(0.4+C_3) \Phi_i^{(t)} - (1-C_2)\Phi_j^{(t)}} )  \leq O(\psi).
\end{align*}
Note for every $i \neq j \in [k]$, the probability of generating a sample $(X,y) \in \tcalZ_*$ with $y = j$ and $i \in \mathcal{H}(X)$ is at least $\tilde{\Omega}(\frac{1}{k} \cdot \frac{s^2}{k^2})$. This implies
\[
 \sum\nolimits_{i \in [k] \setminus \{j\}} \min ( \frac{1}{k}, e^{(0.4+C_3) \Phi_i^{(t)} - (1-C_2)\Phi_j^{(t)}} ) \leq \tilde{O} \left( \frac{k^3}{s^2} \psi \right).
\]
Then, with $1 - \frac{1}{1+\beta} \leq \min(1,\beta)$, we have for every $(X,y) \in \tcalZ_m$,

\begin{equation}\label{eq:a2_single_error1}
\begin{aligned}
1 - \logit_y ( F^{(t)}, X ) \le & \min(1, \sum\nolimits_{i \in [k] \setminus \{y\}} 2 e^{(0.4+C_3) \Phi_i^{(t)} - (1-C_2)\Phi_y^{(t)}}) \\ \leq &  k \cdot \sum\nolimits_{i \in [k] \setminus \{y\}} \min( \frac{1}{k}, e^{(0.4+C_3) \Phi_i^{(t)} - (1-C_2)\Phi_y^{(t)}}) \leq \tilde{O} \left( \frac{k^4}{s^2} \psi \right).
\end{aligned}
\end{equation}

Thus, we can see that when \(\psi \le \frac{1}{k^4}\) is sufficiently small, we have for any \(i\in [k]\setminus \{y\}\)
\[
e^{(0.4+C_3) \Phi_i^{(t)} - (1-C_2)\Phi_y^{(t)}} \le \frac{1}{k} \implies (0.4+C_3) \Phi_i^{(t)} - (1-C_2)\Phi_y^{(t)} \le -\Omega(\log(k)).
\]
By symmetry and non-negativity of \(\Phi_{i}^{(t)}\), we know for any \(i,j\in[k]\), we have:
\begin{equation}\label{eq:a2_single_error2}
(0.4+C_3) \Phi_i^{(t)} - (1-C_2)\Phi_j^{(t)}  \le -\Omega(\log(k)), \quad (0.6-C_2-C_3) \Phi_{i}^{(t)} \ge \Omega(\log(k)).
\end{equation}
Since \eqref{eq:a2_single_error2} holds for any \(i\in[k]\) at iteration \(t\) such that $\mathbb{E}_{(X,y) \sim \tcalZ_m} \left[1 - \logit_y \left( F^{(t)}, X \right) \right] \leq \frac{1}{k^4}$, for \((X,y) \sim \calZ_m'\), by Claim \ref{claim:func} we have
\begin{equation}\label{eq:a2_single_error4}
\begin{aligned}
F_y^{(t)}(X) &\ge 1 \cdot \Phi_{y}^{(t)} - O\left(\frac{1}{\polylogk}\right) , \\
F_j^{(t)}(X) &\le 0.4 \cdot \Phi_{j}^{(t)} + O\left(\frac{1}{\polylogk}\right) \text{ for } j\neq y.
\end{aligned}
\end{equation}
Similarly, for \((X,y) \sim \tcalZ_m\), by \cref{claim:func} we have
\begin{equation}\label{eq:a2_single_error5}
\begin{aligned}
F_y^{(t)}(X) &\ge (1-C_2) \cdot \Phi_{y}^{(t)} - O\left(\frac{1}{\polylogk}\right) , \\
F_j^{(t)}(X) &\le (0.4+C_3) \cdot \Phi_{j}^{(t)} + O\left(\frac{1}{\polylogk}\right) \text{ for any } j\neq y.
\end{aligned}
\end{equation}
Therefore, we have for \((X,y) \sim \calZ_m', \tcalZ_m\)
\begin{equation}\label{eq:a2_single_error6}
\begin{aligned}
F_y^{(t)}(X)\geq \max_{j\neq y}F_j^{(t)}(X)+\Omega(\log k).
\end{aligned}
\end{equation}
\end{proof}

\subsection{Proof of \cref{main:thm_a2}}
First recall by definition and Induction Hypothesis C.3 in \citet{allen-zhu2023towards}, the prediction function is
\begin{equation}\label{appeq:a2_f}
\begin{aligned}
F_i^{(T)}(X) =& \sum_{r\in [m]}\sum_{p\in [P]}\trelu(\langle w_{i,r}^{(T)}, x_p \rangle)\\
=& \sum_{r \in [m]} \sum_{p \in \mathcal{P}_{v_{i,1}}(X) \cup \mathcal{P}_{v_{i,2}}(X)} \trelu(\langle w_{i,r}^{(T)}, x_p \rangle) + \sum_{r \in [m]} \sum_{p \in \mathcal{P}(X) \setminus (\mathcal{P}_{v_{i,1}}(X) \cup \mathcal{P}_{v_{i,2}}(X))} \trelu(\langle w_{i,r}^{(T)}, x_p \rangle) \\
& + \sum_{r \in [m]} \sum_{p \in [P] \setminus \calP(X)} \trelu(\langle w_{i,r}^{(T)}, x_p \rangle) \\
=& \sum_{r \in [m]} \sum_{p \in \mathcal{P}_{v_{i,1}}(X) \cup \mathcal{P}_{v_{i,2}}(X)} \trelu(\langle w_{i,r}^{(T)}, v_{i,l} \rangle z_p \pm \tilde{o}(\sigma_0)) + \sum_{r \in [m]} \sum_{p \in \mathcal{P}(X) \setminus \mathcal{P}_{v_{i,1}}(X) \cup \mathcal{P}_{v_{i,2}}(X)} \trelu(\tilde{O}(\sigma_0)) \\
& + \sum_{r \in [m]} \sum_{p \in [P] \setminus \calP(X)} \trelu(\langle w_{i,r}^{(T)}, x_p \rangle) \\
=& \sum_{l \in [2]} \sum_{r \in [m]} [\langle w_{i,r}^{(T)}, v_{i,l}\rangle]^{+} \cdot \sum_{p \in \calP_{v_{i,l}}(X)} z_p
\pm \tilde{O}(\sigma_0 m) + \tilde{O}(\sigma_0^q sm) + \sum_{r \in [m]} \sum_{p \in [P] \setminus \calP(X)} \trelu(\langle w_{i,r}^{(T)}, x_p \rangle) 
\\
=& \sum_{l \in [2]} \big( \Phi_{i, l}^{(T)} \cdot Z_{i, l}^{(T)}(X) \big) + \sum_{r \in [m]} \sum_{p \in [P] \setminus \calP(X)} \trelu(\langle w_{i,r}^{(T)}, x_p \rangle) 
\pm O\left(\frac{1}{\polylogk}\right).
\end{aligned}
\end{equation}

According to \citet{allen-zhu2023towards}, for vanilla SL, after training for \(T=\frac{\polyk}{\eta}\) iterations, we have
\begin{equation} \label{appeq:a2_vsl}
0.4 \Phi_{i}^{\text{SL}} - \Phi_{j}^{\text{SL}} \le -\Omega(\log(k)), \quad 0.6 \Phi_{i}^{\text{SL}} \ge \Omega(\log(k)).
\end{equation}
By \cref{claim:a2_individual}, for SL with \(\calA_2\), after training for \(T=\frac{\polyk}{\eta}\) iterations, we have
\begin{equation} \label{appeq:a2_res}
(0.4+C_3) \Phi_{i}^{\calA_2} - (1-C_2)\Phi_{j}^{\calA_2} \le -\Omega(\log(k)), \quad (0.6-C_2-C_3) \Phi_{i}^{\calA_2} \ge \Omega(\log(k)).
\end{equation}
Now we consider the noisy data distribution \(\calD_{\text{noisy}}\) defined in \cref{def:noisy_data}, which shares the same distribution for semantic and noisy feature patches with \cref{def:data2} (\(x_p \text{ for } p \in \calP(X)\)), but with a larger level of noise for purely noise patches (\(x_p \text{ for } p \in [P]\setminus\calP(X)\)).
Next, we analysis the value of \(\sum_{r \in [m]} \sum_{p \in [P] \setminus \calP(X)} \trelu(\langle w_{i,r}^{(T)}, x_p \rangle)\) under \(\calD_{\text{noisy}}\) as follows:
\begin{equation}
\begin{aligned}
& \sum_{r \in [m]} \sum_{p \in [P] \setminus \calP(X)} \trelu(\langle w_{i,r}^{(T)}, x_p \rangle) \\
=& \sum_{r \in [m]} \sum_{p \in [P] \setminus \calP(X)} \trelu(\langle w_{i,r}^{(T)}, \sum_{v' \in \mathcal{V}} \alpha_{p,v'} v' + \xi_p \rangle) \\
=& \sum_{r \in [m]} \sum_{p \in [P] \setminus \calP(X)} \trelu(\langle w_{i,r}^{(T)}, \xi_p \rangle) + \tilde{O}(\gamma) + \tilde{O}((\sigma_0\gamma k )^q  m P).
\end{aligned}
\end{equation}
Recall \(\xi_p \sim \mathcal{N}(0, \sigma_n \mathbf{I})\) with \(\sigma_n=\frac{\polylogk}{d}\), we know
\begin{equation} \label{appeq:a2_low}
\Pr_{(X,y)\sim\calD_{\text{noisy}}}\left[ \sum_{r \in [m]} \sum_{p \in [P] \setminus \calP(X)} \trelu(\langle w_{i,r}^{(T)}, \xi_p \rangle) \geq \Omega(\log k)\right]\geq 1 - o(1).
\end{equation}
On the other hand, we also have that
\begin{equation} \label{appeq:a2_high}
\Pr_{(X,y)\sim\calD_{\text{noisy}}}\left[ \sum_{r \in [m]} \sum_{p \in [P] \setminus \calP(X)} \trelu(\langle w_{i,r}^{(T)}, \xi_p \rangle) \leq \polylogk \right]\geq  1 - o(1).
\end{equation}
Therefore, taking \eqref{appeq:a2_vsl}, \eqref{appeq:a2_res}, \eqref{appeq:a2_low}, \eqref{appeq:a2_high} into \eqref{appeq:a2_f},  we have for any \(C_2+C_3 \in (0,0.6)\), the following holds:
\begin{equation}
\Pr_{(X,y)\sim\calD_{\text{noisy}}}\left[ F^{\calA_2}_y(X)\geq \max_{j\neq y}F^{\calA_2}_j(X)\right] > \Pr_{(X,y)\sim\calD_{\text{noisy}}}\left[ F^{\text{SL}}_y(X)\geq \max_{j\neq y}F^{\text{SL}}_j(X)\right].
\end{equation}
Furthermore, if \(0.6-C_2-C_3 = O(\frac{1}{\polylogk})\), which means after feature mixing, the scale of semantic features and noisy features are very close. In this case, we have the following result:
\begin{equation}
\Pr_{(X,y)\sim \calD_{\text{noisy}}}\left[ F^{\calA_2}_y(X)\geq \max_{j\neq y}F^{\calA_2}_j(X)+\Omega(\log k)\right]\geq 1-e^{-\Omega(\log^2k)}
\end{equation}
and
\begin{equation}
\Pr_{(X,y)\sim \calD_{\text{noisy}}}\left[ F^{\text{SL}}_y(X)\geq \max_{j\neq y}F^{\text{SL}}_j(X)-\Omega(\log k)\right]\leq o(1).
\end{equation}

\section{Proof for SL with $\calA_3$}
\label{appsec:proof_a3}
Here, we provide the proof for SL with data augmentation \(\mathcal{A}_3\). Since the proof follows the same framework as the proof for SL with \(\mathcal{A}_1\) and adheres to the same \cref{hyp}, we present only the key differences to avoid redundancy.

\subsection{Key Claims}
Recall that for SL with data augmentation \(\mathcal{A}_3\), we use \(\tilde{\mathcal{Z}}_3 = \mathcal{Z}_m' \cup \tilde{\mathcal{Z}}_m \cup \mathcal{Z}_s' \cup \tilde{\mathcal{Z}}_s\) to denote the training dataset after data augmentation. Here, \(\mathcal{Z}_m'\) and \(\tilde{\mathcal{Z}}_m = \tilde{\mathcal{Z}}_m^1 \cup \tilde{\mathcal{Z}}_m^2\) originate from the multi-view training dataset \(\mathcal{Z}_m\) and represent samples with only noise manipulation or combining feature mixing with partial removal effects, respectively, as described in \cref{assum:a3}. Similarly, \(\mathcal{Z}_s'\) and \(\tilde{\mathcal{Z}}_s\) are derived from the single-view training dataset \(\mathcal{Z}_s\) and contain samples with only noise variation or combined feature manipulation, respectively.

The first key difference in the proof lies in the convergence of the error on multi-view data from \(T_0\) until the end of training.
\begin{claim}[multi-view error till the end]
\label{claim:a3_multi_error}
    Suppose that Induction Hypothesis~\ref{hyp} holds for every iteration $t\le T$, then 
    \begin{align*}
         \sum_{t=T_0}^{T} \bbE_{(X,y)\sim\calZ_{m}'}[1-\logit_y(F^{(t)}, X)] &\leq \tilde{O}\left(\frac{k}{\eta}\right),\\
         \sum_{t=T_0}^{T} \bbE_{(X,y)\sim\tilde{\calZ}_{m}}[1-\logit_y(F^{(t)}, X)] &\leq \tilde{O}\left(\frac{k}{\eta\pi_3}\right).
    \end{align*}
\end{claim}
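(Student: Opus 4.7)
The plan is to follow the same template used in the proofs of Claim~\ref{claim:multi_error} and Claim~\ref{claim:a2_multi_error}, merging the two kinds of accounting. First I would fix $i\in[k]$, $l\in[2]$, pick $r=\argmax_{r\in[m]}\langle w_{i,r}^{(t)},v_{i,l}\rangle$ (so that $\langle w_{i,r}^{(t)},v_{i,l}\rangle\geq\tilde{\Omega}(\Phi_{i,l}^{(t)})$ by $m=\polylogk$), and decompose $-\bbE_{(X,y)\sim\tcalZ_3}[\langle\nabla_{w_{i,r}}L(F^{(t)};X,y),v_{i,l}\rangle]$ into the four pieces coming from $\calZ_m'$, $\tcalZ_m=\tcalZ_m^1\cup\tcalZ_m^2$, $\calZ_s'$, and $\tcalZ_s$, weighted by $\frac{(N-N_s)(1-\pi_3)}{N}$, $\frac{(N-N_s)\pi_3}{N}$, $\frac{N_s(1-\pi_3)}{N}$, and $\frac{N_s\pi_3}{N}$ respectively.

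Next I would bound $V_{i,r,l}(X)$ on each piece using Induction Hypothesis~\ref{hyp}(a) together with Assumption~\ref{appassum:a3}. For $t\geq T_0$, the argument $\langle w_{i,r}^{(t)},v_{i,l}\rangle$ lies in the linear regime of $\trelu$, so $V_{i,r,l}(X)\in[0.9,1]\cdot\sum_{p\in\calP_{v_{i,l}}(X)}z_p$. On $\calZ_m'$ this gives $V\geq 0.9$ when $y=i$ and $V\leq 0.4$ when $y\ne i$, $v_{i,l}\in\calV(X)$. On $\tcalZ_m^l$ it gives $V\geq 0.9(C_1-C_2)$ when $y=i$ (feature removed on $v_{y,l}$ and feature-mixed), while on the symmetric $\tcalZ_m^{3-l}$ it gives $V\geq 0.9(1-C_2)$; for $y\neq i$ we have $V\leq 0.4+C_3$. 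Similarly on $\tcalZ_s$ the surviving class-specific contribution is $\geq 0.9(C_1-C_2)$ or $\leq O(\rho)$ depending on which single-view feature survives, and the off-class contribution is $\leq C_3$. Combining $\logit_i(F^{(t)},X)=O(1)$ for $y=i$ and $\logit_i(F^{(t)},X)=O(1/k)$ for $y\neq i$ (Claim~\ref{claim:logit}), with $\Pr(v_{i,l}\in\calV(X)\mid i\neq y)=s/k$, Claims~\ref{claim:pos} and \ref{claim:neg} yield
\begin{align*}
-\bbE_{(X,y)\sim\calZ_m'}[\langle\nabla_{w_{i,r}}L,v_{i,l}\rangle] &\geq \tilde{\Omega}(1/k)\,\bbE_{(X,y)\sim\calZ_m'}[1-\logit_y(F^{(t)},X)],\\
-\bbE_{(X,y)\sim\tcalZ_m}[\langle\nabla_{w_{i,r}}L,v_{i,l}\rangle] &\geq \tilde{\Omega}(1/k)\,\bbE_{(X,y)\sim\tcalZ_m}[1-\logit_y(F^{(t)},X)],
\end{align*}
where the symmetrization between $\tcalZ_m^l$ and $\tcalZ_m^{3-l}$ (and between the two single-view sub-classes) is what removes the issue that $C_1-C_2$ could be small: the $(1-C_2)$ term from the untouched feature dominates after averaging. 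The single-view terms for $\calZ_s'$ and $\tcalZ_s$ are checked to be non-negative, exactly as in Claim~\ref{claim:multi_error}.

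Plugging these back, summing over $r\in[m]$, and using $\pi_3\geq 1/\polylogk$, $m=\polylogk$, $N\geq N_s\polyk$, I obtain the one-step growth
\begin{equation*}
\Phi_{i,l}^{(t+1)}\geq \Phi_{i,l}^{(t)}+\tilde{\Omega}\!\left(\tfrac{\eta}{k}\right)\bbE_{\calZ_m'}[1-\logit_y]+\tilde{\Omega}\!\left(\tfrac{\eta\pi_3}{k}\right)\bbE_{\tcalZ_m}[1-\logit_y].
\end{equation*}
Telescoping from $T_0$ to $T$ and using $\Phi_{i,l}^{(T)}\leq\tilde{O}(1)$ from Induction Hypothesis~\ref{hyp}(d) then gives the two stated bounds $\tilde{O}(k/\eta)$ and $\tilde{O}(k/(\eta\pi_3))$.

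The main obstacle is the bookkeeping in step two: in $\tcalZ_m^l$ the partially removed feature $v_{y,l}$ contributes only $\Omega(C_1-C_2)$, which under Assumption~\ref{appassum:a3} need only satisfy $C_1>C_2+C_3$ and may be tiny, so a naive per-subset bound fails. The fix is to average $\tcalZ_m^l$ and $\tcalZ_m^{3-l}$ together (they occur with equal probability under $\calA_3$) so that the $(1-C_2)$-scale contribution from the intact feature on one side dominates, giving an aggregate $\tilde{\Omega}(1/k)$ factor on $\tcalZ_m$ regardless of how small $C_1-C_2$ is; the same symmetrization trick handles $\tcalZ_s$. Everything else parallels the proofs of Claim~\ref{claim:multi_error} and Claim~\ref{claim:a2_multi_error} and requires only routine verification.
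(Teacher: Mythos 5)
Your proposal reproduces the paper's proof of this claim essentially step for step: the four-way gradient decomposition over $\calZ_m',\tcalZ_m,\calZ_s',\tcalZ_s$, the $V_{i,r,l}$ bounds obtained from Assumption~\ref{appassum:a3} in the linear regime of $\trelu$, the symmetrization of $\tcalZ_m^l$ with $\tcalZ_m^{3-l}$ so that the intact feature's $\Omega(1-C_2)$ contribution dominates even when $C_1-C_2$ is small, the resulting one-step growth inequality, and the telescope from $T_0$ to $T$ using $\Phi_{i,l}\leq\tilde O(1)$. The only differences are cosmetic --- your labeling of $\tcalZ_m^l$ versus $\tcalZ_m^{3-l}$ is swapped relative to the paper's convention, and the paper suppresses the off-class term primarily via the $s/k$ probability that $v_{i,l}\in\calV(X)$ rather than by asserting $\logit_i=O(1/k)$ --- and neither changes the structure of the argument.
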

\begin{proof}
Fix \(i\in[k]\) and \(l\in [2]\), we know for any \(w_{i,r}\) (\(r\in[m]\)), we have
\begin{equation}\label{eq:a3_multi_error1}
    \begin{aligned}
        \langle w_{i,r}^{(t+1)}, v_{i,l}\rangle 
        =& \langle w_{i,r}^{(t)}, v_{i,l}\rangle - \eta \bbE_{(X,y)\sim\tcalZ_3}\big[\langle \nabla_{w_{i,r}} L(F^{(t)};X,y) , v_{i,l}\rangle\big] \\
        \\=& \langle w_{i,r}^{(t)}, v_{i,l}\rangle - \frac{\eta(N-N_s)(1-\pi_3)}{N} \bbE_{(X,y)\sim\calZ_m'}[\langle \nabla_{w_{i,r}} L(F^{(t)};X,y), v_{i,l}\rangle]\\ 
        &- \frac{\eta(N-N_s)\pi_3}{N} \bbE_{(X,y)\sim\tcalZ_m}[\langle \nabla_{w_{i,r}} L(F^{(t)};X,y), v_{i,l}\rangle] \\
        &- \frac{\eta N_s(1-\pi_3)}{N} \bbE_{(X,y)\sim\calZ_s'}[\langle \nabla_{w_{i,r}} L(F^{(t)};X,y), v_{i,l}\rangle] \\
        &- \frac{\eta N_s\pi_3}{N} \bbE_{(X,y)\sim\tcalZ_s}[\langle \nabla_{w_{i,r}} L(F^{(t)};X,y), v_{i,l}\rangle]
    \end{aligned}
\end{equation}
    Take \(r = \argmax_{r\in [m]} \{\langle w_{i,r}^{(t)}, v_{i,l}\rangle\}\), then by \(m=\polylogk\) we know \(\langle w_{i,r}^{(t)}, v_{i,l}\rangle \ge \tilde{\Omega}(\Phi_{i,l}^{(t)})=\tilde{\Omega}(1)\) for \(t\ge T_0\). Same as in the proof in \cref{claim:multi_error}, we know when \(t\ge T_0\), \(\langle w_{i,r}^{(t)}, v_{i,l}\rangle \ge \tilde{\Omega}(1) \gg \varrho\), and \(|\calP_{v_{i,l}}(X)|\leq O(1)\), for most of \( p\in \calP_{v_{i,l}}(X) \) must be already in the linear regime of \(\trelu\), which means we have
    \begin{align*}
        0.9 \sum\nolimits_{p\in\calP_{v_{i,l}}(X) }z_p \leq V_{i,r,l}(X) \leq \sum\nolimits_{p\in\calP_{v_{i,l}}(X) }z_p.
    \end{align*}
    
    Thus, for $(X,y)\sim\calZ_m'$, when $y=i$, we have $V_{i,r,l}(X)\geq 0.9$, when $y\neq i$ and $v_{i,l}\in\calV(X)$, we have $V_{i,r,l}(X)\leq 0.4$. For $(X,y)\sim\tcalZ_m^l$, when $y=i$, we have $V_{i,r,l}(X)\geq 0.9-C_2$, when $y\neq i$ and $v_{i,l} \in \calV(X)$, we have $V_{i,r,l}(X)\leq 0.4 + C_3$. For $(X,y)\sim\tcalZ_m^{3-l}$, when $y=i$, we have $V_{i,r,l}(X) \ge 0.9 \cdot (C_1-C_2)$, when $y\neq i$ and $v_{i,l}\in\calV(X)$, we have $V_{i,r,l}(X)\leq 0.4 + C_3$. For $(X,y)\sim\calZ_s'$, when $y=i$, we have $V_{i,r,l}(X)\ge 0.9$ if \(v_{i,l}\) is the class-specific feature contained in \(X\) else $V_{i,r,l}(X)\le O(\rho) \ll o(1)$, when $y\neq i$ and $v_{i,l}\in\calP(X)$, we have $V_{i,r,l}(X)\leq O(\Gamma) \ll o(1)$. For $(X,y)\sim\tcalZ_s$, when $y=i$, we have $V_{i,r,l}(X)\geq 0.9 \cdot (C_1-C_2)$ if \(v_{i,l}\) is the class-specific feature contained in \(X\) else $V_{i,r,l}(X)\le O(\rho) \ll o(1)$, when $y\neq i$ and $v_{i,l}\in\calP(X)$, we have $V_{i,r,l}(X)\leq C_3$.

    Recall that $\Pr(v_{i,l}\in\calP(X)|i\neq y)=\frac{s}{k}\ll o(1)$, using \cref{claim:pos} and \cref{claim:neg}, we can derive that for \(\calZ_m'\), same as \cref{claim:multi_error}, we have \eqref{eq:multi_error2}.
    Then for \(\tcalZ_m\), we have
\begin{equation}\label{eq:a3_multi_error3}
    \begin{aligned}
    & -\bbE_{(X,y)\sim\tcalZ_m}[\langle \nabla_{w_{i,r}} L(F^{(t)};X,y), v_{i,l}\rangle]\\
    =& -\frac{1}{2}\bbE_{(X,y)\sim\tcalZ_m^l}[\langle \nabla_{w_{i,r}} L(F^{(t)};X,y), v_{i,l}\rangle] -\frac{1}{2}\bbE_{(X,y)\sim\tcalZ_m^{3-l}}[\langle \nabla_{w_{i,r}} L(F^{(t)};X,y), v_{i,l}\rangle]\\
    \ge&\frac{1}{2}\mathbb{E}_{(X,y) \sim \tcalZ_m^l} [ (0.89-C_2) \cdot \mathbb{I}_{y=i} ( 1 - \logit_i(F^{(t)}, X) ) - (0.41+C_3) \cdot \frac{s}{k} \mathbb{I}_{y \neq i}  \logit_i(F^{(t)}, X)] \\ 
    & + \frac{1}{2}\mathbb{E}_{(X,y) \sim \tcalZ_m^{3-l}} [ 0.89(C_1-C_2) \cdot \mathbb{I}_{y=i} ( 1 - \logit_i(F^{(t)}, X) ) - (0.41+C_3) \cdot \frac{s}{k} \mathbb{I}_{y \neq i}  \logit_i(F^{(t)}, X)] \\
    \ge&\mathbb{E}_{(X,y) \sim \tcalZ_m^l} [ (0.44-C_2) \cdot \mathbb{I}_{y=i} ( 1 - \logit_i(F^{(t)}, X) ) - (0.41+C_3) \cdot \frac{s}{k} \mathbb{I}_{y \neq i}  \logit_i(F^{(t)}, X)] \\ 
    \ge& \tilde{\Omega}(\frac{1}{k}) \mathbb{E}_{(X,y) \sim \tcalZ_m^l} [ 1 - \logit_i(F^{(t)}, X)].
    \end{aligned}
\end{equation}
    The second last step is due to \(\calA_1\) has equal probability to partial remove feature \(v_{i,l}\) or \(v_{i,3-l}\) from \((X,y)\sim \calZ_m\) to generate a sample in \(\tcalZ_m^l\) or \(\tcalZ_m^{3-l}\), so there is symmetry between \(\tcalZ_m^l\) and \(\tcalZ_m^{3-l}\).

    Similarly, we can derive that for \(\calZ_s'\) we have \eqref{eq:multi_error4} same as \cref{claim:multi_error}, and for \(\tcalZ_s\) we have
\begin{equation}\label{eq:a3_multi_error4_1}
    \begin{aligned}
    & -\bbE_{(X,y)\sim\tcalZ_s}[\langle \nabla_{w_{i,r}} L(F^{(t)};X,y), v_{i,l}\rangle]\\
    \ge& \frac{1}{2} \mathbb{E}_{(X,y) \sim \tcalZ_s^l} [ 0.89 (C_1-C_2) \cdot \mathbb{I}_{y=i} ( 1 - \logit_i(F^{(t)}, X) ) - C_3 \frac{s}{k} \mathbb{I}_{y \neq i}  \logit_i(F^{(t)}, X)] \\
    & + \frac{1}{2}\mathbb{E}_{(X,y) \sim \tcalZ_s^{3-l}} [ - \tilde{O}(\sigma_p P) \mathbb{I}_{y=i} ( 1 - \logit_i(F^{(t)}, X) ) - C_3 \frac{s}{k} \mathbb{I}_{y \neq i}  \logit_i(F^{(t)}, X)] \\
    \ge&\mathbb{E}_{(X,y) \sim \tcalZ_s^l} [ \Omega(1) \cdot \mathbb{I}_{y=i} ( 1 - \logit_i(F^{(t)}, X) ) - C_3 \frac{s}{k} \mathbb{I}_{y \neq i}  \logit_i(F^{(t)}, X)] \\ 
    \ge& \tilde{\Omega}(\frac{1}{k}) \mathbb{E}_{(X,y) \sim {\tcalZ_s}^l} [ 1 - \logit_i(F^{(t)}, X)] \ge 0.
    \end{aligned}
\end{equation}
    Here we use \({\calZ_s'}^l\) (\(\tcalZ_s^l\)) to denote \((X,y) \sim \calZ_s'\) (\(\tcalZ_s\)) that has \(v_{y,l}\) as the only semantic feature, and \({\calZ_s'}^{3-l}\) (\(\tcalZ_s^{3-l}\)) to denote \((X,y) \sim \calZ_s'\) (\(\tcalZ_s\)) that has \(v_{y,3-l}\) as the only semantic feature. Now we take \eqref{eq:multi_error2}, \eqref{eq:a3_multi_error3}, \eqref{eq:multi_error4}, \eqref{eq:a3_multi_error4_1} into \eqref{eq:a3_multi_error1}, we have 
\begin{equation}\label{eq:a3_multi_error5}
    \begin{aligned}
        \langle w_{i,r}^{(t+1)}, v_{i,l}\rangle 
        \ge& \langle w_{i,r}^{(t)}, v_{i,l}\rangle + \frac{(N-N_s)(1-\pi_3)}{N} \tilde{\Omega}(\frac{\eta}{k}) \mathbb{E}_{(X,y) \sim \calZ_m'} [ 1 - \logit_i(F^{(t)}, X)] \\ 
        &+ \frac{(N-N_s)\pi_3}{N} \tilde{\Omega}(\frac{\eta}{k}) \mathbb{E}_{(X,y) \sim \tcalZ_m} [ 1 - \logit_i(F^{(t)}, X)].
    \end{aligned}
\end{equation}
Since we have \(\pi_3\ge \frac{1}{\polylogk}\), \(m=\polylogk\), and \(N \ge N_s \polyk\), when summing up all \(r\in [m]\), we have 
\begin{equation}\label{eq:a3_multi_error6}
    \begin{aligned}
    \Phi_{i,l}^{(t+1)} \ge & \Phi_{i,l}^{(t)} + \tilde{\Omega}(\frac{\eta}{k}) \mathbb{E}_{(X,y) \sim \calZ_m'} [ 1 - \logit_i(F^{(t)}, X)] \\ 
    &+ \tilde{\Omega}(\frac{\eta\pi_3}{k}) \mathbb{E}_{(X,y) \sim \tcalZ_m} [ 1 - \logit_i(F^{(t)}, X)].
    \end{aligned}
\end{equation}
By telescoping \eqref{eq:a3_multi_error6} from \(T_0\) to \(T\) and applying \(\Phi_{i,l} \leq \tilde{O}(1)\) from the Induction Hypothesis~\ref{hyp}(d), we obtain the result stated in this claim.
\end{proof}

Next, we present a claim that complements to the multi-view individual error bound established in Claim D.16 of \citet{allen-zhu2023towards}. The following claim states that when training error on \(\tcalZ_m\) is small enough, the model has high probability to correctly classify any individual single-view data.
\begin{claim}[single-view individual error]
\label{claim:a3_individual}
When $\mathbb{E}_{(X,y) \sim \tcalZ_m} \left[1 - \logit_y \left( F^{(t)}, X \right) \right] \leq \frac{1}{k^4}$ is sufficiently small, we have for any \(i,j\in[k],l,l'\in [2]\),
\[
\begin{aligned}
& (0.8+2C_3) \Phi_{i,l}^{(t)} - (1+C_1-2C_2)\Phi_{j,l'}^{(t)} \le -\Omega(\log(k)), \\
& (0.1+C_1/2-C_2-C_3) \Phi_{i}^{(t)} \ge \Omega(\log(k)) \quad \text{and} \quad \Phi_{i,l}^{(t)} \ge \Omega(\log(k)),
\end{aligned}
\]
and therefore for every \((X,y)\in \calZ_s',\tcalZ_s\), and every \((X,y) \sim \calD_s\), 
\[
F_y^{(t)}(X)\geq \max_{j\neq y}F_j^{(t)}(X)+\Omega(\log k).
\]
\end{claim}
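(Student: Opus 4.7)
The proposal is to mimic the templates of Claims \ref{claim:individual} and \ref{claim:a2_individual}, but with the $\calA_3$-specific coefficients that couple partial semantic removal ($C_1$) with feature mixing ($C_2,C_3$). First, I define $\tcalZ_*^l\subseteq\tcalZ_m^l$ as the subset where the random scales are near the tops of their intervals from \cref{appassum:a3}: $\sum_{p\in\calP_{v_{y,l}}(X)}z_p\le(C_1-C_2)+\frac{1}{100\log k}$ (the partially removed feature) and $\sum_{p\in\calP_{v_{y,3-l}}(X)}z_p\le(1-C_2)+\frac{1}{100\log k}$ (the preserved but scale-reduced feature). For $(X,y)\in\tcalZ_*^l$, define $\calH(X)$ as the set of $i\neq y$ with $\sum_{l'\in[2]}\sum_{p\in\calP_{v_{i,l'}}(X)}z_p\ge(0.8+2C_3)-\frac{1}{100\log k}$; since each noisy-feature scale is at most $0.4+C_3$, membership in $\calH(X)$ forces both noisy scales on class $i$ to be close to $0.4+C_3$, which supplies the lower bound $F_i^{(t)}(X)\ge(0.4+C_3-o(1))\Phi_i^{(t)}-O(1/\polylogk)$ via Claim \ref{claim:func}.

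Using the same Claim \ref{claim:func} I upper bound $F_y^{(t)}(X)\le(1-C_2+o(1))\Phi_{y,3-l}^{(t)}+(C_1-C_2+o(1))\Phi_{y,l}^{(t)}+O(1/\polylogk)$. Combining with $\min(1,\beta)\le 2(1-\frac{1}{1+\beta})$ yields, per individual $(X,y)\in\tcalZ_*^l$, the exponential bound $\min\!\bigl(1,\sum_{i\in\calH(X)}e^{(0.4+C_3)\Phi_i^{(t)}-(1-C_2)\Phi_{y,3-l}^{(t)}-(C_1-C_2)\Phi_{y,l}^{(t)}}\bigr)\le O(\calE(X))$ where $\calE(X)=1-\logit_y(F^{(t)},X)$. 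Taking expectation over $\tcalZ_m$, swapping sums, and using that for every pair $i\neq j$ the event $\{y=j,\,i\in\calH(X)\}$ has probability $\tilde\Omega(s^2/k^3)$ gives $\sum_{i\neq j}\min\bigl(1/k,e^{(0.4+C_3)\Phi_i^{(t)}-(1-C_2)\Phi_{j,3-l}^{(t)}-(C_1-C_2)\Phi_{j,l}^{(t)}}\bigr)\le\tilde O(k^3\psi/s^2)$, where $\psi=\mathbb{E}_{\tcalZ_m}[1-\logit_y]$. Multiplying by $k$ turns this back into a bound $1-\logit_y(F^{(t)},X)\le\tilde O(k^4\psi/s^2)$ valid for every individual $(X,y)\in\tcalZ_m^l$; with $\psi\le1/k^4$ each summand is at most $1/k$, so $(0.4+C_3)\Phi_i^{(t)}-(1-C_2)\Phi_{y,3-l}^{(t)}-(C_1-C_2)\Phi_{y,l}^{(t)}\le-\Omega(\log k)$ for every $i\neq y$.

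Running the same argument on $\tcalZ_m^{3-l}$ swaps the roles of $l$ and $3-l$; adding both inequalities (using the non-negativity guarantee from Induction Hypothesis \ref{hyp}(e)) produces $2(0.4+C_3)\Phi_i^{(t)}-(1+C_1-2C_2)\Phi_y^{(t)}\le-\Omega(\log k)$, from which $\Phi_{i,l}^{(t)}\le\Phi_i^{(t)}$ and $\Phi_{j,l'}^{(t)}\le\Phi_j^{(t)}$ together with the assumption $C_1>C_2+C_3$ (so that the coefficient on the right-hand side dominates) yield the stated form $(0.8+2C_3)\Phi_{i,l}^{(t)}-(1+C_1-2C_2)\Phi_{j,l'}^{(t)}\le-\Omega(\log k)$ for all $i,j\in[k]$, $l,l'\in[2]$. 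Specializing to $j=i$, $l'=l$ gives $(0.2+C_1-2C_2-2C_3)\Phi_{i,l}^{(t)}\ge\Omega(\log k)$, which under $C_2+C_3<0.1+C_1/2$ implies both $\Phi_{i,l}^{(t)}\ge\Omega(\log k)$ and $(0.1+C_1/2-C_2-C_3)\Phi_i^{(t)}\ge\Omega(\log k)$. To finish, for $(X,y)\in\calZ_s'$ Claim \ref{claim:func} gives $F_y^{(t)}(X)\ge\Phi_{y,l^*}^{(t)}-O(1/\polylogk)\ge\Omega(\log k)$ while $F_j^{(t)}(X)\le O(\Gamma)\Phi_j^{(t)}+O(1/\polylogk)\le O(1)$ for $j\neq y$; the $\tcalZ_s$ case is identical up to a $C_1-C_2$ factor on $F_y$, still dominated by $\Omega(\log k)$. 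The $\calD_s$ case follows from the high-probability branch of Claim \ref{claim:func}, completing the separation $F_y^{(t)}(X)\ge\max_{j\neq y}F_j^{(t)}(X)+\Omega(\log k)$.

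\textbf{Main obstacle.} The delicate part is Step 3: turning the raw per-$l$ inequality (which involves three terms with coefficients $0.4+C_3$, $1-C_2$, $C_1-C_2$) into the clean symmetric form $(0.8+2C_3)\Phi_{i,l}^{(t)}-(1+C_1-2C_2)\Phi_{j,l'}^{(t)}\le-\Omega(\log k)$ requires both the $l\leftrightarrow 3-l$ symmetrization and careful use of the sign constraints from \cref{appassum:a3}. In particular, verifying that $C_1>C_2+C_3$ keeps the coefficient $C_1-C_2$ strictly positive (so it behaves like a real semantic contribution rather than collapsing below the noisy-feature baseline $C_3$), and that $C_2+C_3<0.1+C_1/2$ is exactly the condition needed to extract a positive lower bound on $\Phi_{i,l}^{(t)}$ in Step 4, is the content of the argument; this is also why Eq.~\eqref{eq:a3_feat_r} in \cref{main:thm_a3}(a) can be tightened to its $\frac{0.6}{0.1+C_1/2-C_2-C_3}$ denominator.
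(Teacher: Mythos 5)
Your overall template matches the paper's: define the "hard" subset $\tcalZ_*^l \subseteq \tcalZ_m^l$ where the semantic scales are near the low ends of their intervals, define $\calH(X)$ as the adversarial classes with noisy scales near $0.4+C_3$, pass through the softmax bound $\min(1,\beta)\le 2(1-\tfrac{1}{1+\beta})$, swap expectation and sum, invoke the $\tilde\Omega(s^2/k^3)$ sampling probability, and convert the per-pair bound back to a per-sample bound with a factor $k$. That much is faithful. Two things need attention, one cosmetic and one substantive.

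Cosmetic: in the paper's convention (see the definition of $\tcalZ_m^1,\tcalZ_m^2$ in Appendix~\ref{appsec:aug_loss}), $\tcalZ_m^l$ is the subset where $v_{y,3-l}$ is the feature that gets largely removed and $v_{y,l}$ is preserved; accordingly in the paper's $\tcalZ_*^l$ it is $v_{y,l}$ whose scale is near $1-C_2$ and $v_{y,3-l}$ near $C_1-C_2$. You flip this, putting $v_{y,l}$ near $C_1-C_2$. This is internally consistent but collides with the definitions used in \cref{claim:a3_multi_error}, so you should either adopt the paper's convention or state explicitly that you are relabeling.

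Substantive: your Step~3 does not go through as written. From the symmetrized bound $(0.8+2C_3)\Phi_i^{(t)}-(1+C_1-2C_2)\Phi_y^{(t)}\le-\Omega(\log k)$, the inequality $\Phi_{i,l}^{(t)}\le\Phi_i^{(t)}$ helps on the first term, but $\Phi_{j,l'}^{(t)}\le\Phi_j^{(t)}$ moves the \emph{subtracted} term the wrong way: replacing $\Phi_j^{(t)}$ by a smaller $\Phi_{j,l'}^{(t)}$ makes $-(1+C_1-2C_2)\Phi_{j,l'}^{(t)}\ge-(1+C_1-2C_2)\Phi_j^{(t)}$, so you cannot conclude the per-component inequality from the aggregated one this way; invoking $C_1>C_2+C_3$ does not repair the sign. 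This matters downstream because you then extract $\Phi_{i,l}^{(t)}\ge\Omega(\log k)$ by specializing that (underivable) per-component inequality to $j=i,\,l'=l$. The paper instead obtains $\Phi_{i,l}^{(t)}\ge\Omega(\log k)$ directly from the raw per-view bound
\[
(0.4+C_3)\Phi_i^{(t)}-(1-C_2)\Phi_{j,l}^{(t)}-(C_1-C_2)\Phi_{j,3-l}^{(t)}\le-\Omega(\log k)
\]
by setting $j=i$, expanding $\Phi_i^{(t)}=\Phi_{i,l}^{(t)}+\Phi_{i,3-l}^{(t)}$, and using $C_1<0.4+C_2+C_3$ so that the resulting coefficient $(0.4+C_2+C_3-C_1)$ on $\Phi_{i,3-l}^{(t)}$ is non-negative and can be dropped, giving $(0.6-C_2-C_3)\Phi_{i,l}^{(t)}\ge\Omega(\log k)$. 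Your "Main obstacle" paragraph identifies $C_2+C_3<0.1+C_1/2$ as the key constraint for the $\Phi_{i,l}^{(t)}$ lower bound, but that condition governs the $\Phi_i^{(t)}$ bound; the $\Phi_{i,l}^{(t)}$ bound actually uses $C_1<0.4+C_2+C_3$. Relatedly, your last line for the $\tcalZ_s$ case is too quick: there $F_j^{(t)}(X)\le C_3\,\Phi_{j,l}^{(t)}+O(1/\polylogk)$ can itself be $\Theta(\log k)$, so the separation requires more than "$F_y$ dominated by $\Omega(\log k)$" and must lean on the derived $\Phi$ inequalities.
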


\begin{proof}
Denote by \(\tcalZ_*^l\) for the set of sample $(X,y) \in \tcalZ_m^l$ such that \(\sum_{p \in P_{v_{y,l}}(X)} z_p \leq 1 - C_2 + \frac{1}{100 \log(k)}\) and \(\sum_{p \in P_{v_{y,3-l}}(X)} z_p \leq C_1 - C_2 + \frac{1}{100 \log(k)}\). For a sample $(X,y) \in \tcalZ_*^l$, denote by $\mathcal{H}(X)$ as the set of all $i \in [k] \setminus \{y\}$ such that
\(\sum_{l \in [2]} \sum_{p \in P_{v_{i,l}}(X)} z_p \geq 0.8 + 2C_3 - \frac{1}{100 \log(k)}\).

Now, suppose $1 - \logit_y( F^{(t)}, X) = \mathcal{E}(X)$, with $\min(1,\beta) \leq 2(1 - \frac{1}{1 + \beta})$, we have
\[
\min ( 1, \sum\nolimits_{i \in [k] \setminus \{y\}} e^{F_i^{(t)}(X) - F_y^{(t)}(X)} ) \leq 2 \mathcal{E}(X)
\]

By Claim \ref{claim:func}, \cref{claim:logit} and our definition of $\mathcal{H}(X)$, this implies that
\[
\min ( 1, \sum\nolimits_{i \in \mathcal{H}(X)} e^{(0.4 + C_3) \Phi_i^{(t)} - (1-C_2) \Phi_{y,l}^{(t)} - (C_1-C_2) \Phi_{y,3-l}^{(t)}}) \leq 4 \mathcal{E}(X).
\]
If we denote by $\psi = \mathbb{E}_{(X,y) \sim \tcalZ_m} [ 1 - \logit_y \left( F^{(t)}, X \right) ]$, then
\begin{align*}
&\mathbb{E}_{(X,y) \sim \tcalZ_m^l} \left[ \min ( 1, \sum\nolimits_{i \in \mathcal{H}(X)} e^{(0.4 + C_3) \Phi_i^{(t)} - (1-C_2) \Phi_{y,l}^{(t)} - (C_1-C_2) \Phi_{y,3-l}^{(t)}} ) \right] \leq O(\psi), \\
\implies &\mathbb{E}_{(X,y) \sim \tcalZ_m^l} \left[ \sum\nolimits_{i \in \mathcal{H}(X)} \min (\frac{1}{k}, e^{(0.4 + C_3) \Phi_i^{(t)} - (1-C_2) \Phi_{y,l}^{(t)} - (C_1-C_2) \Phi_{y,3-l}^{(t)}}) \right] \leq O(\psi).
\end{align*}

Notice that we can rewrite the LHS so that
\begin{align*}
&\mathbb{E}_{(X,y) \sim \tcalZ_m^l} \left[ \sum\nolimits_{j \in [k]} \bbI_{j=y} \sum\nolimits_{i \in [k]}  \bbI_{i \in \mathcal{H}(X)} \min ( \frac{1}{k}, e^{(0.4 + C_3) \Phi_i^{(t)} - (1-C_2) \Phi_{j,l}^{(t)} - (C_1-C_2) \Phi_{j,3-l}^{(t)}} ) \right] \leq O (\psi), \\
\implies &\sum\nolimits_{j \in [k]} \sum\nolimits_{i\in [k]} \bbI_{i\neq y} \mathbb{E}_{(X,y) \sim \tcalZ_m^l} \left[ \bbI_{j=y} \bbI_{i \in \mathcal{H}(X)}\right] \min ( \frac{1}{k}, e^{(0.4 + C_3) \Phi_i^{(t)} - (1-C_2) \Phi_{j,l}^{(t)} - (C_1-C_2) \Phi_{j,3-l}^{(t)}} )  \leq O(\psi).
\end{align*}
Note for every $i \neq j \in [k]$, the probability of generating a sample $(X,y) \in \tcalZ_*^l$ with $y = j$ and $i \in \mathcal{H}(X)$ is at least $\tilde{\Omega}(\frac{1}{k} \cdot \frac{s^2}{k^2})$. This implies
\[
 \sum\nolimits_{i \in [k] \setminus \{j\}} \min ( \frac{1}{k}, e^{(0.4 + C_3) \Phi_i^{(t)} - (1-C_2) \Phi_{j,l}^{(t)} - (C_1-C_2) \Phi_{j,3-l}^{(t)}} ) \leq \tilde{O} \left( \frac{k^3}{s^2} \psi \right).
\]
Then, with $1 - \frac{1}{1+\beta} \leq \min(1,\beta)$, we have for every $(X,y) \in \tcalZ_m^l$ ($l\in [2]$),

\begin{equation}\label{eq:a3_single_error1}
\begin{aligned}
1 - \logit_y ( F^{(t)}, X ) \le & \min(1, \sum\nolimits_{i \in [k] \setminus \{y\}} 2 e^{(0.4 + C_3) \Phi_i^{(t)} - (1-C_2) \Phi_{y,l}^{(t)} - (C_1-C_2) \Phi_{y,3-l}^{(t)}}) \\ \leq &  k \cdot \sum\nolimits_{i \in [k] \setminus \{y\}} \min( \frac{1}{k}, e^{(0.4 + C_3) \Phi_i^{(t)} - (1-C_2) \Phi_{y,l}^{(t)} - (C_1-C_2) \Phi_{y,3-l}^{(t)}}) \leq \tilde{O} \left( \frac{k^4}{s^2} \psi \right).
\end{aligned}
\end{equation}

Thus, we can see that when \(\psi \le \frac{1}{k^4}\) is sufficiently small, we have for any \(i\in [k]\setminus \{y\} \text{ and } l\in [2]\),
\[
e^{(0.4 + C_3) \Phi_i^{(t)} - (1-C_2) \Phi_{y,l}^{(t)} - (C_1-C_2) \Phi_{y,3-l}^{(t)}} \le \frac{1}{k} \implies {(0.4 + C_3) \Phi_i^{(t)} - (1-C_2) \Phi_{y,l}^{(t)} - (C_1-C_2) \Phi_{y,3-l}^{(t)}} \le -\Omega(\log(k)).
\]
By symmetry and non-negativity of \(\Phi_{i,l}^{(t)}\), we know for any \(i,j\in[k],l\in [2]\), we have:
\begin{equation}\label{eq:a3_single_error2}
(0.4+C_3) \Phi_{i,1}^{(t)} + (0.4+C_3) \Phi_{i,2}^{(t)} - (1-C_2) \Phi_{j,l}^{(t)} - (C_1-C_2) \Phi_{j,3-l}^{(t)} \le -\Omega(\log(k)).
\end{equation}
Since we have \(C_1 < 0.4+C_2+C_3\), this implies for any \(i,j\in[k],l,l'\in [2]\):
\begin{equation}\label{eq:a3_single_error3}
\begin{aligned}
& (0.8+2C_3) \Phi_{i,l}^{(t)} - (1+C_1-2C_2)\Phi_{j,l'}^{(t)} \le -\Omega(\log(k)) \\
& (0.1+C_1/2-C_2-C_3) \Phi_{i}^{(t)} \ge \Omega(\log(k)) \quad \text{and} \quad \Phi_{i,l}^{(t)} \ge \Omega(\log(k)).
\end{aligned}
\end{equation}
Since \eqref{eq:a3_single_error3} holds for any \(i\in[k],l\in [2]\) at any iteration \(t\) such that $\mathbb{E}_{(X,y) \sim \tcalZ_m} \left[1 - \logit_y \left( F^{(t)}, X \right) \right] \leq \frac{1}{k^4}$, for \((X,y) \sim \calZ_s'\) (suppose \(v_{y,l^*}\) is its only semantic feature), by Claim \ref{claim:func} we have
\begin{equation}\label{eq:a3_single_error4}
\begin{aligned}
F_y^{(t)}(X) &\ge 1 \cdot \Phi_{y,l^*}^{(t)} - O\left(\frac{1}{\polylogk}\right) \ge \Omega(\log(k)), \\
F_j^{(t)}(X) &\le O(\Gamma) \cdot \Phi_{j,l}^{(t)} + O\left(\frac{1}{\polylogk}\right) \le O(1) \text{ for } j\neq y.
\end{aligned}
\end{equation}
Similarly, for \((X,y) \sim \tcalZ_s\) (suppose \(v_{y,l^*}\) as the only semantic feature), by \cref{claim:func} we have
\begin{equation}\label{eq:a3_single_error5}
\begin{aligned}
F_y^{(t)}(X) &\ge (C_1-C_2) \cdot \Phi_{y,l^*}^{(t)} - O\left(\frac{1}{\polylogk}\right), \\
F_j^{(t)}(X) &\le C_3 \cdot \Phi_{j,l}^{(t)} + O\left(\frac{1}{\polylogk}\right) \text{ for } j\neq y.
\end{aligned}
\end{equation}

Therefore, we have for \((X,y) \sim \calZ_s', \tcalZ_s\)
\begin{equation}\label{eq:a3_single_error6}
\begin{aligned}
F_y^{(t)}(X)\geq \max_{j\neq y}F_j^{(t)}(X)+\Omega(\log k).
\end{aligned}
\end{equation}
\end{proof}

\subsection{Proof of \cref{main:thm_a3}}
First, we prove the diverse feature learning of SL with \(\calA_3\) and its perfect test accuracy on clean dataet \(\calD\). Similar to \cref{appsec:hyp_proof}, we have \cref{hyp} holds for every \(t\le T\). Then, according to \cref{claim:a3_multi_error}, we have
\begin{align*}
\sum\nolimits_{t=T_0}^{T} \bbE_{(X,y)\sim\calZ_{m}'}[1-\logit_y(F^{(t)}, X)] &\leq \tilde{O}\left(\frac{k}{\eta}\right),\\
\sum\nolimits_{t=T_0}^{T} \bbE_{(X,y)\sim\tilde{\calZ}_{m}}[1-\logit_y(F^{(t)}, X)] &\leq \tilde{O}\left(\frac{k}{\eta\pi_3}\right).
\end{align*}

Then, we know when $T \geq \frac{\polyk}{\eta \pi_3}$,
\[
\frac{1}{T} \sum\nolimits_{t=T_0}^{T} \bbE_{(X,y)\sim\tilde{\calZ}_{m}} [1-\logit_y(F^{(t)}, X)] \leq \frac{1}{\polyk}.
\]

Moreover, since we are using full gradient descent and the objective function is $O(1)$-Lipschitz continuous, the objective value decreases monotonically. Specifically, this implies that
\[
\mathbb{E}_{(X,y) \sim \tcalZ_m} [1 - \logit_y(F^{(T)}, X)] \leq \frac{1}{\polyk}.
\]
for the last iteration $T$. Then for the test accuracy on clean dataset \(\calD\), recall from Claim \ref{claim:individual}, we have 
\[
(0.8+2C_3) \Phi_{i,l}^{(t)} - (1+C_1-2C_2)\Phi_{j,l'}^{(t)} \le -\Omega(\log(k)),  \quad \Phi_{i,l}^{(t)} \ge \Omega(\log(k)),
\]
for any \(i,j \in [k]\) and \(l,l' \in [2]\). This combined with the function approximation Claim \ref{claim:func} shows that with high probability $F_y^{(T)}(X) \geq \max_{j \neq y} F_j^{(T)}(X) + \Omega(\log k)$ for every $(X, y) \in \calD_m, \calD_s$, which implies that the test accuracy on both multi-view data and single-view data is perfect.

Now, we prove the robust feature learning of SL with \(\calA_3\) and its improved generalization on noisy dataset \(\calD_{\text{noisy}}\). Same as \eqref{appeq:a2_f}, we have the prediction function as 
\begin{equation}
\label{appeq:a3_f}
F_i^{(T)}(X) = \sum_{l \in [2]} \big( \Phi_{i, l}^{(T)} \cdot Z_{i, l}^{(T)}(X) \big) + \sum_{r \in [m]} \sum_{p \in [P] \setminus \calP(X)} \trelu(\langle w_{i,r}^{(T)}, x_p \rangle) 
\pm O\left(\frac{1}{\polylogk}\right).
\end{equation}
For vanilla SL \citep{allen-zhu2023towards}, after training for \(T=\frac{\polyk}{\eta}\) iterations, we have for any \(i,j\in [k]\):
\begin{equation} \label{appeq:a3_vsl}
0.4 \Phi_{i}^{\text{SL}} - \Phi_{j}^{\text{SL}} \le -\Omega(\log(k)), \quad 0.6 \Phi_{i}^{\text{SL}} \ge \Omega(\log(k)).
\end{equation}
By \cref{claim:a3_individual}, for SL with \(\calA_3\), after training for \(T=\frac{\polyk}{\eta}\) iterations, we have for any \(i,j\in [k]\) and \(l\in [2]\):
\begin{equation} \label{appeq:a3_res}
(0.8+2C_3) \Phi_{i,l}^{\calA_3} - (1+C_1-2C_2)\Phi_{j,l'}^{\calA_3} \le -\Omega(\log(k)), \quad
(0.1+C_1/2-C_2-C_3) \Phi_{i}^{\calA_3} \ge \Omega(\log(k)).
\end{equation}

Recall for \((X,y)\sim\calD_{\text{noisy}}\) and \(p\in [P]\setminus \calP(X)\), \(\xi_p \sim \mathcal{N}(0, \sigma_n \mathbf{I})\) with \(\sigma_n=\frac{\polylogk}{d}\), we know with probability \(1-o(1)\):
\begin{equation} \label{appeq:a3_low_high}
\sum_{r \in [m]} \sum_{p \in [P] \setminus \calP(X)} \trelu(\langle w_{i,r}^{(T)}, \xi_p \rangle) \geq \Omega(\log k), \quad
\sum_{r \in [m]} \sum_{p \in [P] \setminus \calP(X)} \trelu(\langle w_{i,r}^{(T)}, \xi_p \rangle) \leq \polylogk.
\end{equation}
Therefore, taking \eqref{appeq:a3_vsl}, \eqref{appeq:a3_res}, \eqref{appeq:a3_low_high} into \eqref{appeq:a3_f},  we have for any \(C_1,C_2,C_3\) satisfies conditions in \cref{appassum:a3}, the following holds:
\[
\Pr_{(X,y)\sim\calD_{\text{noisy}}}\left[ F^{\calA_3}_y(X)\geq \max_{j\neq y}F^{\calA_3}_j(X)\right] > \Pr_{(X,y)\sim\calD_{\text{noisy}}}\left[ F^{\text{SL}}_y(X)\geq \max_{j\neq y}F^{\text{SL}}_j(X)\right].
\]
Furthermore, if \(0.1+C_1/2-C_2-C_3 = O(\frac{1}{\polylogk})\), we have the following result:
\[
\Pr_{(X,y)\sim \calD_{\text{noisy}}}\left[ F^{\calA_3}_y(X)\geq \max_{j\neq y}F^{\calA_3}_j(X)+\Omega(\log k)\right]\geq 1-e^{-\Omega(\log^2k)}
\]
and
\[
\Pr_{(X,y)\sim \calD_{\text{noisy}}}\left[ F^{\text{SL}}_y(X)\geq \max_{j\neq y}F^{\text{SL}}_j(X)-\Omega(\log k)\right]\leq o(1).
\]

\end{document}